\definecolor{bgcolor}{rgb}{0.8,1,1}
\definecolor{bgcolor2}{rgb}{0.8,1,0.8}
\definecolor{niceblue}{rgb}{0.0,0.19,0.56}
\definecolor{PineGreen}{RGB}{1,121,111}
\definecolor{BrickRed}{RGB}{140,55,62}
\newcommand{\cmark}{{\color{PineGreen}\ding{51}}}%
\newcommand{\xmark}{{\color{BrickRed}\ding{55}}}%
\newcommand{\R}{\mathbb{R}}
\newcommand{\eqdef}{\stackrel{\text{def}}{=}}
\def\<#1,#2>{\left\langle #1,#2\right\rangle}
\newtheorem{example}{Example}[section]
\newtheorem{lemma}{Lemma}[section]
\newtheorem{theorem}{Theorem}[section]
\newtheorem{definition}{Definition}[section]
\newtheorem{assumption}{Assumption}[section]
\newtheorem{corollary}{Corollary}[section]
\theoremstyle{plain}
\newcommand{\algname}[1]{{\sf  #1}\xspace}
\newcommand{\argmin}{\mathop{\arg\!\min}}
\def\revision#1{{\color{black}#1}}
\newcommand{\cB}{{\cal B}}
\newcommand{\cD}{{\cal D}}
\newcommand{\cG}{{\cal G}}
\newcommand{\cL}{{\cal L}}
\newcommand{\cO}{{\cal O}}
\newcommand{\cQ}{{\cal Q}}
\newcommand{\EE}{\mathbb{E}}
\newcommand{\NN}{\mathbb{N}}
\newcommand{\PP}{\mathbb{P}}
\newcommand{\hx}{\widehat{x}}
\newlength{\dhatheight}
\title{Variance Reduction is an Antidote\\ to \revision{Byzantine Workers}: Better Rates, Weaker Assumptions and Communication Compression as a Cherry on the Top}
\author{Eduard Gorbunov\thanks{Corresponding author:  eduard.gorbunov@mbzuai.ac.ae. Part of the work was done when E.~Gorbunov was a researcher at MIPT and Mila \& UdeM and also a visiting researcher at KAUST, in the Optimization and Machine Learning Lab of P.~Richt\'arik.}\\
MBZUAI, MIPT\\
Mila \& UdeM, KAUST \\
\And
Samuel Horv\'ath\thanks{Part of the work was done when S.~Horv\'ath was a Ph.D.\ student at KAUST.} \\
MBZUAI, KAUST\\
\And
Peter Richt\'arik \\
KAUST\\
\And
Gauthier Gidel\\
Mila \& UdeM\\
Canada CIFAR AI Chair
}
\begin{document}

\maketitle

\begin{abstract}
  \revision{Byzantine}-robustness has been gaining a lot of attention due to the growth of the interest in collaborative and federated learning. However, many fruitful directions, such as the usage of variance reduction for achieving robustness and communication compression for reducing communication costs, remain weakly explored in the field. This work addresses this gap and proposes \algname{Byz-VR-MARINA}--a new \revision{Byzantine}-tolerant method with variance reduction and compression. A key message of our paper is that variance reduction is key to fighting \revision{Byzantine} workers more effectively. At the same time, communication compression is a bonus that makes the process more communication efficient. We derive theoretical convergence guarantees for \algname{Byz-VR-MARINA} outperforming previous state-of-the-art for general non-convex and Polyak-{\L}ojasiewicz loss functions. Unlike the concurrent \revision{Byzantine}-robust methods with variance reduction and/or compression, our complexity results are tight and do not rely on restrictive assumptions such as boundedness of the gradients or limited compression. Moreover, we provide the first analysis of a \revision{Byzantine}-tolerant method supporting non-uniform sampling of stochastic gradients. Numerical experiments corroborate our theoretical findings.
\end{abstract}

\section{Introduction}

Distributed optimization algorithms play a vital role in the training of the modern machine learning models. In particular, some tasks require training of deep neural networks having billions of parameters on large datasets \citep{brown2020language, kolesnikov2020big}. Such problems may take years of computations to be solved if executed on a single yet powerful machine \citep{gpt3costlambda}. To circumvent this issue, it is natural to use distributed optimization algorithms allowing to tremendously reduce the training time \citep{goyal2017accurate, you2020large}. In the context of speeding up the training, distributed methods are usually applied in data centers \citep{mikami2018massively}. More recently, similar ideas have been applied to train models using open collaborations \citep{kijsipongse2018hybrid, diskin2021distributed}, where each participant (e.g., a small company/university or an individual) has very limited computing power but can donate it to jointly solve computationally-hard problems. Moreover, in Federated Learning (FL) applications \citep{mcmahan2017communication, FEDLEARN, kairouz2021advances}, distributed algorithms are natural and the only possible choice since in such problems, the data is \emph{privately} distributed across multiple devices.

In the optimization problems arising in collaborative and federated learning, there is a high risk that some participants deviate from the prescribed protocol either on purpose or not. \revision{In this paper, we call such participants as Byzantine workers\footnote{This term is standard for distributed learning literature \citep{lamport1982byzantine, su2016fault, lyu2020privacy}. Using this term, we follow standard terminology and do not want to offend any group. It would be great if the community found and agreed on a more neutral term to denote such workers.}} For example, \revision{such} peers can maliciously send incorrect gradients to slow down or even destroy the training. Indeed, these attacks can break the convergence of na\"ive methods such as \algname{Parallel-SGD} \citep{zinkevich2010parallelized}. Therefore, it is crucial to use secure (a.k.a.\ \revision{Byzantine}-robust/\revision{Byzantine}-tolerant) distributed methods for solving such problems.

However, designing distributed methods with provable \revision{Byzantine}-robustness is not an easy task. The non-triviality of this problem comes from the fact that the stochastic gradients of good/honest/regular workers are naturally different due to their stochasticity and possible data heterogeneity. At the same time, malicious workers can send the vectors looking like the stochastic gradients of good peers or create small but time-coupled shifts. Therefore, as it is shown in \citep{baruch2019little, xie2020fall, karimireddy2021learning}, \revision{Byzantine workers} can circumvent popular defences based on applying robust aggregation rules~\citep{blanchard2017machine, yin2018byzantine, damaskinos2019aggregathor, guerraoui2018hidden, pillutla2022robust} with \algname{Parallel-SGD}. Moreover, in a broad class of problems with heterogeneous data, it is provably impossible to achieve any predefined accuracy of the solution \citep{karimireddy2020byzantine, el2021collaborative}.

Nevertheless, as it becomes evident from the further discussion, several works have provable \revision{Byzantine} tolerance and rigorous theoretical analysis. In particular, \citet{wu2020federated} propose a natural yet elegant solution to the problem of \revision{Byzantine}-robustness based on the usage of variance-reduced methods \citep{gower2020variance} and design the first variance-reduced \revision{Byzantine}-robust method called \algname{Byrd-SAGA}, which combines the celebrated \algname{SAGA} method~\citep{defazio2014saga} with geometric median aggregation rule. As a result, reducing the stochastic noise of estimators used by good workers makes it easier to filter out \revision{Byzantine workers} (especially in the case of homogeneous data). However, \citet{wu2020federated} derive their results only for the strongly convex objectives, and the obtained convergence guarantees are significantly worse than the best-known convergence rates for \algname{SAGA}, i.e., their results are not tight, even when there are no \revision{Byzantine} workers and all peers have homogeneous data. It is crucial to bypass these limitations since the majority of the modern, practically interesting problems are non-convex. Furthermore, it is hard to develop the field without tight convergence guarantees. All in all, the above leads to the following question:
\begin{gather*}
	\textbf{{\color{blue}Q1}: }\textit{Is it possible to design variance-reduced methods with provable \revision{Byzantine}-robustness}\\
	\textit{and tight theoretical guarantees for general non-convex optimization problems?}
\end{gather*}

In addition to \revision{Byzantine}-robustness, one has to take into account that na\"ive distributed algorithms suffer from the so-called \emph{communication bottleneck}---a situation when communication is much more expensive than local computations on the devices. This issue is especially evident in the training of models with a vast number of parameters (e.g., millions or trillions) or when the number of workers is large (which is often the case in FL). One of the most popular approaches to reducing the communication bottleneck is to use \emph{communication compression}~\citep{seide20141, FEDLEARN, suresh2017distributed}, i.e., instead of transmitting dense vectors (stochastic gradients/Hessians/higher-order tensors) workers apply some compression/sparsification operator to these vectors and send the compressed results to the server. Distributed learning with compression is a relatively well-developed field, e.g., see \citep{vogels2019powersgd, gorbunov2020linearly, richtarik2021ef21, philippenko2021preserved} and references therein for the recent advances.

Perhaps surprisingly, there are not many methods with compressed communication in the context of \revision{Byzantine}-robust learning. In particular, we are only aware of the following works~\citep{bernstein2018signsgd,ghosh2020distributed,ghosh2021communication,zhu2021broadcast},. \citet{bernstein2018signsgd} propose \algname{signSGD} to reduce communication cost and and study the majority vote to cope with the \revision{Byzantine workers} under some additional assumptions about adversaries. However, it is known that \algname{signSGD} is not guaranteed to converge \citep{karimireddy2019error}. Next, \citet{ghosh2020distributed,ghosh2021communication} apply aggregation based on the selection of the norms of the update vectors. In this case, \revision{Byzantine workers} can successfully hide in the noise applying SOTA attacks \citep{baruch2019little}. \citet{zhu2021broadcast} study \revision{Byzantine}-robust versions of compressed \algname{SGD} (\algname{BR-CSGD}) and \algname{SAGA} (\algname{BR-CSAGA}) and also propose a combination of \algname{DIANA}~\citep{mishchenko2019distributed, horvath2019stochastic} with \algname{BR-CSAGA} called \algname{BROADCAST}. However, the derived convergence results for these methods have several limitations. First of all, the analysis is given only for strongly convex problems. In addition, it relies on restrictive assumptions. Namely, \citet{zhu2021broadcast} assume uniform boundedness of the second moment of the stochastic gradient in the analysis of \algname{BR-CSGD} and \algname{BR-CSAGA}. This assumption rarely holds in practice, and it also implies the boundedness of the gradients, which contradicts the strong convexity assumption. Next, although the bounded second-moment assumption is not used in the analysis of \algname{BROADCAST}, \citet{zhu2021broadcast} derive the rates of \algname{BROADCAST} under the assumption that the compression operator is very accurate, which implies that in theory workers apply almost no compression to the communicated messages (see remark~{\color{blue} (5)} under Table~\ref{tab:comparison_of_rates}). Finally, even if there are no \revision{Byzantine workers} and no compression, similar to the guarantees for \algname{Byrd-SAGA}, the rates obtained for \algname{BR-CSGD}, \algname{BR-CSAGA}, and \algname{BROADCAST} are outperformed with a large margin by the known rates for \algname{SGD} and \algname{SAGA}. All of these limitations lead to the following question:
\begin{gather*}
	\textbf{{\color{blue}Q2}: }\textit{Is it possible to design distributed methods with compression, provable \revision{Byzantine}-robustness}\\
	\textit{and tight theoretical guarantees without making strong assumptions?}
\end{gather*}

In this paper, we give confirmatory answers to \textbf{{\color{blue}Q1}} and \textbf{{\color{blue}Q2}} by proposing and rigorously analyzing a new \revision{Byzantine}-tolerant variance-reduced method with compression called \algname{Byz-VR-MARINA}. Detailed related work overview is deferred to Appendix~\ref{app:detailed_related_work}.

\begin{table*}[t]
    \centering
    \scriptsize
    \caption{\small Comparison of the state-of-the-art (in theory) \revision{Byzantine}-tolerant distributed methods. Columns: ``NC'' = does the theory works for general smooth non-convex functions?; ``PL'' = does the theory works for functions satysfying P{\L}-condition (As.~\ref{as:PL_condition})?; ``Tight?'' = does the theory recover tight best-known results for the version of the method with $\delta = 0$ (no \revision{Byzantines})?; ``Compr.?'' = does the method use communication compression?; ``VR?'' = is the method variance-reduced?; ``No UBV?'' = does the theory work without assuming uniformly bounded variance of the stochastic gradients\revision{, i.e., without the assumption that for all $x \in \R^d$ the good workers have an access to the unbiased estimators $g_i(x)$ of $\nabla f_i(x)$ such that $\EE\|g_i(x) - \nabla f_i(x)\|^2 \leq \sigma^2$ for all $i \in \cG$ and $\sigma \geq 0$}?; ``No BG?'' = does the theory work without assuming uniformly bounded second moment of the stochastic gradients\revision{, i.e., without the assumption that for all $x \in \R^d$ the good workers have an access to the unbiased estimators $g_i(x)$ of $\nabla f_i(x)$ such that $\EE\|g_i(x)\|^2 \leq D^2$ for all $i \in \cG$ and $\sigma > 0$}?; ``Non-US?'' = does the theory support non-uniform sampling of the stochastic gradients; ``Het.?'' = does the theory work under $\zeta^2$-heterogeneity assumption (As.~\ref{as:bounded_heterogeneity_simplified})?}
    \label{tab:checkmark_comparison}
    \begin{threeparttable}
        \begin{tabular}{|c|c c c c c c c c c|}
        \hline
        Method & NC & PL & Tight? & Compr.? & VR? & No UBV? & No BG? & Non-US? & Het.? \\
        \hline\hline
        \begin{tabular}{c}
			\algname{BR-SGDm} \\
			{\tiny\citep{karimireddy2021learning, karimireddy2020byzantine}}
		\end{tabular}         & \cmark & \xmark & \cmark & \xmark & \xmark & \xmark & \cmark & \xmark & \cmark\\
		\begin{tabular}{c}
			\algname{BTARD-SGD} \\
			{\tiny\citep{gorbunov2021secure}}
		\end{tabular}         & \cmark & \xmark\cmark\tnote{\color{blue}(1)} & \cmark & \xmark & \xmark & \xmark & \cmark & \xmark & \xmark\\
		\begin{tabular}{c}
			\algname{Byrd-SAGA} \\
			{\tiny\citep{wu2020federated}}
		\end{tabular}         & \xmark & \xmark\cmark\tnote{\color{blue}(1)} & \xmark & \xmark & \cmark & \cmark & \cmark & \xmark & \cmark\\
		\begin{tabular}{c}
			\algname{BR-MVR} \\
			{\tiny\citep{karimireddy2021learning}}
		\end{tabular}         & \cmark & \xmark & \cmark & \xmark & \cmark & \xmark & \cmark & \xmark & \xmark\\	
		\begin{tabular}{c}
			\algname{BR-CSGD} \\
			{\tiny\citep{zhu2021broadcast}}
		\end{tabular}         & \xmark & \xmark\cmark\tnote{\color{blue}(1)} & \xmark & \cmark & \xmark & \xmark & \xmark & \xmark & \cmark\\
		\begin{tabular}{c}
			\algname{BR-CSAGA} \\
			{\tiny\citep{zhu2021broadcast}}
		\end{tabular}         & \xmark & \xmark\cmark\tnote{\color{blue}(1)} & \xmark & \cmark & \cmark & \xmark & \xmark & \xmark & \cmark\\
		\begin{tabular}{c}
			\algname{BROADCAST} \\
			{\tiny\citep{zhu2021broadcast}}
		\end{tabular}         & \xmark & \xmark\cmark\tnote{\color{blue}(1)} & \xmark & \cmark & \cmark & \cmark & \cmark & \xmark & \cmark\\
		\hline
		\rowcolor{bgcolor2}\begin{tabular}{c}
			\algname{Byz-VR-MARINA} \\
			{\tiny\textbf{[This work]}}
		\end{tabular}         & \cmark & \cmark & \cmark & \cmark & \cmark & \cmark & \cmark & \cmark & \cmark\\
        \hline
    \end{tabular}
    \begin{tablenotes}
        {\scriptsize\item [{\color{blue}(1)}] Strong convexity of $f$ is assumed.}
    \end{tablenotes}
    \end{threeparttable}
\end{table*}

\begin{table*}[t]
    \centering
    \scriptsize
    \caption{\scriptsize Comparison of the state-of-the-art complexity results for \revision{Byzantine}-tolerant distributed methods. Columns: ``Assumptions'' = additional assumptions to smoothness of all $f_i(x)$, $i\in \cG$ (although our results require more refined As.~\ref{as:hessian_variance}); ``Complexity (NC)'' and ``Complexity (P\L)'' = number of communication rounds required to find such $x$ that $\EE\|\nabla f(x)\|^2 \leq \varepsilon^2$ in the general non-convex case and such $x$ that $\EE[f(x) - f(x^*)] \leq \varepsilon$ in P{\L} case respectively. Dependencies on numerical constants (and logarithms in P{\L} setting), smoothness constants, and initial suboptimality are omitted in the complexity bounds. Although \algname{BR-SGDm}, \algname{BR-MVR}, \algname{BTARD-SGD}, \algname{Byrd-SAGA}, \algname{BR-CSGD}, \algname{BR-CSAGA}, \algname{BROADCAST} are analyzed for unit batchsize only ($b=1$), one can easily generalize them to the case of $b > 1$ and we show these generalizations in the table. Notation: $\varepsilon$ = desired accuracy; $\delta$ = ratio of \revision{Byzantine workers}; $c$ = parameter of the robust aggregator; $n$ = total number of workers; $b$ = batchsize; $\sigma^2$ = uniform bound on the variance of stochastic gradients; $D^2$ = uniform bound on the second moment of stochastic gradients; $C$ = the number of workers used by \algname{BTARD-SGD} for the checks of computations after each step; $\mu$ = parameter from As.~\ref{as:PL_condition} (strong convexity parameter in the case of \algname{BTARD-SGD}, \algname{Byrd-SAGA}, \algname{BR-CSGD}, \algname{BR-CSAGA}, \algname{BROADCAST}); $m$ = size of the local dataset on workers; $p = \min\left\{\nicefrac{b}{m}, \nicefrac{1}{(1+\omega)}\right\}$ = probability of communication in \algname{Byz-VR-MARINA}.}
    \label{tab:comparison_of_rates}
    \begin{threeparttable}
        \begin{tabular}{|c|c c c c|}
        \hline
        Setup & Method & Assumptions & Complexity (NC) & Complexity (P{\L})\\
        \hline\hline
        \multirow{10}{*}{\makecell{Hom.\ data,\\ no compr.}} & \begin{tabular}{c}\algname{BR-SGDm} \\ {\tiny\citep{karimireddy2021learning, karimireddy2020byzantine}}\end{tabular} & UBV & $\frac{1}{\varepsilon^2} + \frac{\sigma^2(c\delta + \nicefrac{1}{n})}{b\varepsilon^4}$ & \xmark\\
        & \begin{tabular}{c} \algname{BR-MVR} \\  {\tiny\citep{karimireddy2021learning}}\end{tabular} & UBV & $\frac{1}{\varepsilon^2} + \frac{\sigma\sqrt{c\delta + \nicefrac{1}{n}}}{\sqrt{b}\varepsilon^3}$ & \xmark\\
        & \begin{tabular}{c}\algname{BTARD-SGD} \\  {\tiny\citep{gorbunov2021secure}}\end{tabular} & UBV\tnote{\color{blue}(1)} & $\frac{1}{\varepsilon^2} + \frac{n^2\delta\sigma^2}{C b\varepsilon^2} + \frac{\sigma^2}{nb\varepsilon^4}$ & $\frac{1}{\mu} + \frac{\sigma^2}{nb\mu\varepsilon} + \frac{n^2\delta\sigma}{C\sqrt{b\mu\varepsilon}}$\tnote{\color{blue}(7)}\\
        & \begin{tabular}{c} \algname{Byrd-SAGA}\tnote{\color{blue}(2)} \\  {\tiny\citep{wu2020federated}} \end{tabular} & Smooth $f_{i,j}$ & \xmark & $\frac{m^2}{b^2(1-2\delta)\mu^2}$\tnote{\color{blue}(7)}\\
		\cline{2-5}       
        & \cellcolor{bgcolor2} \begin{tabular}{c} \algname{Byz-VR-MARINA} \\ {\tiny \textbf{ Cor.~\ref{cor:main_result_hom_dat_no_compression} \& Cor.~\ref{cor:main_result_PL_no_compression_hom_dat}}} \end{tabular} & \cellcolor{bgcolor2} As.~\ref{as:hessian_variance_local} & \cellcolor{bgcolor2} $\frac{1 + \sqrt{\frac{c\delta m^2}{b^3} + \frac{m}{b^2n}}}{\varepsilon^2}$ & \cellcolor{bgcolor2} \begin{tabular}{c}$\frac{1 + \sqrt{\frac{c\delta m^2}{b^3} + \frac{m}{b^2n}}}{\mu}$\\ $ + \frac{m}{b}$\end{tabular}\\
        \hline\hline
		\multirow{6}{*}{\makecell{Het.\ data,\\ no compr.}} & \begin{tabular}{c}\algname{BR-SGDm}\tnote{\color{blue}(3)} \\ {\tiny\citep{karimireddy2020byzantine}}\end{tabular} & UBV & $\frac{1}{\varepsilon^2} + \frac{\sigma^2(c\delta + \nicefrac{1}{n})}{b\varepsilon^4}$ & \xmark\\
        & \begin{tabular}{c} \algname{Byrd-SAGA}\tnote{\color{blue}(2),(3)} \\  {\tiny\citep{wu2020federated}} \end{tabular} & \begin{tabular}{c} Smooth $f_{i,j}$ \end{tabular} & \xmark & $\frac{m^2}{b^2(1-2\delta)\mu^2}$\tnote{\color{blue}(7)}\\
		\cline{2-5}        
        & \cellcolor{bgcolor2} \begin{tabular}{c} \algname{Byz-VR-MARINA}\tnote{\color{blue}(3),(4)} \\ {\tiny \textbf{ Cor.~\ref{cor:main_result_no_compression} \& Cor.~\ref{cor:main_result_PL_no_compression}}} \end{tabular} & \cellcolor{bgcolor2} As.~\ref{as:hessian_variance_local} & \cellcolor{bgcolor2} $\frac{1 + \sqrt{\frac{c\delta m^2}{b^2}(1+\frac{1}{b}) + \frac{m}{b^2n}}}{\varepsilon^2}$ & \cellcolor{bgcolor2} \begin{tabular}{c} $\frac{1 + \sqrt{\frac{c\delta m^2}{b^2}(1+\frac{1}{b})+ \frac{m}{b^2n}}}{\mu}$\\ $ + \frac{m}{b}$ \end{tabular}\\        
        \hline\hline
		\multirow{9}{*}{\makecell{Het.\ data,\\ compr.}} & \begin{tabular}{c}\algname{BR-CSGD}\tnote{\color{blue}(2),(3)} \\ {\tiny\citep{zhu2021broadcast}}\end{tabular} & \begin{tabular}{c} UBV, BG \end{tabular} & \xmark & $\frac{1}{\mu^2}$\tnote{\color{blue}(7)} \\
		& \begin{tabular}{c}\algname{BR-CSAGA}\tnote{\color{blue}(2),(3)} \\  {\tiny\citep{zhu2021broadcast}}\end{tabular} & \begin{tabular}{c} Smooth $f_{i,j}$\\ UBV, BG \end{tabular} &\xmark & $\frac{m^2}{b^2\mu^2(1-2\delta)^2}$ \tnote{\color{blue}(7)}\\
		& \begin{tabular}{c}\algname{BROADCAST}\tnote{\color{blue}(2),(3),(5)} \\ {\tiny\citep{zhu2021broadcast}} \end{tabular} & \begin{tabular}{c} Smooth $f_{i,j}$\end{tabular} & \xmark & $\frac{m^2(1+\omega)^{\nicefrac{3}{2}}}{b^2\mu^2(1-2\delta)}$ \tnote{\color{blue}(7)}\\
		\cline{2-5}
        & \cellcolor{bgcolor2} \begin{tabular}{c} \algname{Byz-VR-MARINA}\tnote{\color{blue}(3),(6)} \\ {\tiny \textbf{ Cor.~\ref{cor:main_result} \& Cor.~\ref{cor:main_result_PL}}} \end{tabular} & \cellcolor{bgcolor2} As.~\ref{as:hessian_variance_local} & \cellcolor{bgcolor2} \begin{tabular}{c} $\frac{1+\sqrt{c\delta(1+\omega)(1+\frac{1}{b})}}{p\varepsilon^2}$ \\ $+\frac{\sqrt{(1+\omega)(1+\frac{1}{b})}}{\sqrt{pn}\varepsilon^2}$ \end{tabular} & \cellcolor{bgcolor2} \begin{tabular}{c} $\frac{1+\sqrt{c\delta(1+\omega)(1+\frac{1}{b})}}{p\mu}$ \\ $+\frac{\sqrt{(1+\omega)(1+\frac{1}{b})}}{\sqrt{pn}\mu}$\\ $ + \frac{m}{b} + \omega$\end{tabular}\\ 
        \hline
    \end{tabular}
    \begin{tablenotes}
        {\scriptsize \item [{\color{blue}(1)}] \citet{gorbunov2021secure} assume additionally that the tails of the noise distribution in stochastic gradients are sub-quadratic.
        \item [{\color{blue}(2)}] Although the analyses by \citet{wu2020federated,zhu2021broadcast} support inexact geometric median computation, for simplicity of presentation, we assume that geometric median is computed exactly.
        \item [{\color{blue}(3)}] \algname{BR-SGDm}: $\varepsilon^2 = \Omega(c\delta\zeta^2)$; \algname{Byrd-SAGA}: $\varepsilon = \Omega(\nicefrac{\zeta^2}{(\mu^2(1-2\delta)^2)})$; \algname{Byz-VR-MARINA}: $\varepsilon^2 = \Omega(\max\{\nicefrac{m}{b}, 1+\omega\}c\delta\zeta^2)$ for general non-convex case and $\varepsilon = \Omega(\max\{\nicefrac{m}{b}, 1+\omega\}\nicefrac{c\delta\zeta^2}{\mu})$ for the case of P{\L} functions (with $\omega = 0$, where there is no compression); \algname{BR-CSGD}: $\varepsilon = \Omega(\nicefrac{(\sigma^2+\zeta^2+\omega D^2)}{(\mu^2(1-2\delta)^2)})$ (positive even when $\zeta^2 = 0$); \algname{BR-CSAGA}: $\varepsilon = \Omega(\nicefrac{(\zeta^2+\omega D^2)}{(\mu^2(1-2\delta)^2)})$ (positive even when $\zeta^2 = 0$); \algname{BROADCAST}: $\varepsilon = \Omega(\nicefrac{(1+\omega)\zeta^2}{(\mu^2(1-2\delta)^2)})$.
        \item [{\color{blue}(4)}] The term $\frac{m\sqrt{c\delta}}{b\varepsilon^2}$ is proportional to much smaller Lipschitz constant than the term $\frac{m\sqrt{c\delta}}{b^{\nicefrac{3}{2}}\varepsilon^2}$ does. A similar statement holds in P{\L} case as well.
        \item [{\color{blue}(5)}] For this result \citet{zhu2021broadcast} assume that $\omega \leq \frac{\mu^2(1-2\delta)^2}{56L^2(2-2\delta^2)}$, which is a very restrictive assumption even when $\delta = 0$. For example, even for well-conditioned problems with $\nicefrac{\mu}{L} \sim 10^{-3}$ and $\delta = 0$ (no \revision{Byzantine workers}), this bound implies that $\omega$ should be not larger than $10^{-7}$. Such a value of $\omega$ corresponds to almost non-compressed communications.
        \item [{\color{blue}(6)}] The term $\frac{1+\sqrt{c\delta(1+\omega)}}{p\varepsilon^2}+\frac{\sqrt{1+\omega}}{\sqrt{pn}\varepsilon^2}$ is proportional to much smaller Lipschitz constant than the term $\frac{1+\sqrt{c\delta(1+\omega)}}{\sqrt{b}p\varepsilon^2}+\frac{\sqrt{1+\omega}}{\sqrt{pnb}\varepsilon^2}$ does. A similar statement holds in P{\L} case as well.
        \item [{\color{blue}(7)}] \revision{The rate is derived under the strong convexity assumption. Strong convexity implies P\L-condition but not vice versa: there exist non-convex P\L functions \citep{karimi2016linear}.}
        }
    \end{tablenotes}
    \end{threeparttable}
\end{table*}

\paragraph{Our Contributions.} Before we proceed, we need to specify the targetted problem. We consider a centralized distributed learning in the possible presence of malicious or so-called \emph{\revision{Byzantine}} peers. We assume that there are $n$ clients consisting of the two groups: $[n] = \cG \sqcup \cB$, where $\cG$ denotes the set of good clients and $\cB$ is the set of bad/malicious/\revision{Byzantine} workers. The goal is to solve the following optimization problem
\begin{equation}
	\min\limits_{x \in \R^d} \left\{f(x) = \frac{1}{G}\sum\limits_{i \in \cG} f_i(x)\right\},\quad f_i(x) = \frac{1}{m}\sum\limits_{j=1}^m f_{i,j}(x)\quad \forall i \in \cG, \label{eq:fin_sum_problem}
\end{equation}
where $G = |\cG|$ and functions $f_{i,j}(x)$ are assumed to be smooth, but not necessarily convex. Here each good client has its dataset of the size $m$, $f_{i,j}(x)$ is the loss of the model, parameterized by vector $x \in \R^d$, on the $j$-th sample from the dataset on the $i$-th client. Following the classical convention \citep{lyu2020privacy}, we make no assumptions on the malicious workers $\cB$, i.e., \revision{Byzantine workers} are allowed to be \emph{omniscient}. \textbf{Our main contributions are summarized below.}

\textbf{$\diamond$ New method: \algname{Byz-VR-MARINA}.} We propose a new \revision{Byzantine}-robust variance-reduced method with compression called \algname{Byz-VR-MARINA} (Alg.~\ref{alg:byz_vr_marina}). In particular, we make \algname{VR-MARINA} \citep{gorbunov2021marina}, which is a variance-reduced method with compression, applicable to the context of \revision{Byzantine}-tolerant distributed learning via using the recent tool of robust agnostic aggregation of \citet{karimireddy2020byzantine}. As Tbl.~\ref{tab:checkmark_comparison} shows, \algname{Byz-VR-MARINA} and our analysis of the method leads to several important improvements upon the previously best-known methods. 

\textbf{$\diamond$ New SOTA results.} Under quite general assumptions listed in Section~\ref{sec:main_results}, we prove theoretical convergence results for \algname{Byz-VR-MARINA} in the cases of smooth non-convex (Thm.~\ref{thm:main_result}) and Polyak-{\L}ojasiewicz (Thm.~\ref{thm:main_result_PL}) functions. As Tbl.~\ref{tab:comparison_of_rates} shows, our complexity bounds in the non-convex case are always better than previously known ones when the target accuracy $\varepsilon$ is small enough. In the P{\L} case, our results improve upon previously known guarantees when the problem has bad conditioning or when $\varepsilon$ is small enough. Moreover, we provide the first theoretical convergence guarantees for \revision{Byzantine}-tolerant methods with compression in the non-convex case for arbitrary adversaries.

\textbf{$\diamond$ \revision{Byzantine}-tolerant variance-reduced method with tight rates.} Our results are tight, i.e., when there are no \revision{Byzantine workers}, our rates recover the rates of \algname{VR-MARINA}, and when additionally no compression is applied, we recover the optimal rates of \algname{Geom-SARAH}~\citep{geomsarah}/\algname{PAGE}~\citep{li2021page}. In contrast, this is not the case for previously known variance-reduced \revision{Byzantine}-robust methods such as \algname{Byrd-SAGA}, \algname{BR-CSAGA}, and \algname{BROADCAST} that in the homogeneous data scenario have worse rates than single-machine \algname{SAGA}.

\textbf{$\diamond$ Support of the compression without strong assumptions.} As we point out in Tbl.~\ref{tab:comparison_of_rates}, the analysis of \algname{BR-CSGD} and \algname{BR-CSAGA} relies on the bounded second-moment assumption, which contradicts strong convexity, and the rates for \algname{BROADCAST} are derived under the assumption that the compression operator almost coincides with the identity operator, meaning that in practice workers essentially do not use any compression. In contrast, our analysis does not have such substantial limitations. 
	
\textbf{$\diamond$ Enabling non-uniform sampling.} In contrast to the existing works on \revision{Byzantine}-robustness, our analysis supports non-uniform sampling of stochastic gradients. Considering the dependencies on smoothness constants, one can quickly notice our rates' even more significant superiority compared to the previous SOTA results.

\section{\algname{Byz-VR-MARINA}: Byzantine-Tolerant Variance Reduction with Communication Compression}\label{sec:main_results}

 We start by introducing necessary definitions and assumptions.

\textbf{Robust aggregation.} One of the main building blocks of our method relies on the notion of \emph{$(\delta, c)$-Robust Aggregator} introduced in \citep{karimireddy2021learning, karimireddy2020byzantine}.

\begin{definition}[$(\delta, c)$-Robust Aggregator]\label{def:RAgg_def}
	Assume that $\{x_1,x_2,\ldots,x_n\}$ is such that there exists a subset $\cG \subseteq [n]$ of size $|\cG| = G \geq (1-\delta)n$ for $\delta < 0.5$ and there exists $\sigma \ge 0$ such that $\frac{1}{G(G-1)}\sum_{i,l\in \cG}\EE[\|x_i - x_l\|^2] \leq \sigma^2$ where the expectation is taken w.r.t.\ the randomness of $\{x_i\}_{i\in \cG}$. We say that the quantity $\hx$ is $(\delta, c)$-Robust Aggregator ($(\delta, c)$-\texttt{RAgg}) and write $\hx = \texttt{RAgg}(x_1,\ldots,x_n)$ for some $c> 0$, if the following inequality holds:
	\begin{equation}
		\EE\left[\|\hx - \overline{x}\|^2\right] \leq c\delta \sigma^2, \label{eq:RAgg_def}
	\end{equation}
	where $\overline{x} = \tfrac{1}{|\cG|}\sum_{i\in \cG} x_i$. If additionally $\hx$ is computed without the knowledge of $\sigma^2$, we say that $\hx$ is $(\delta, c)$-Agnostic Robust Aggregator ($(\delta, c)$-\texttt{ARAgg}) and write $\hx = \texttt{ARAgg}(x_1,\ldots,x_n)$.
\end{definition}

In fact, \citet{karimireddy2021learning, karimireddy2020byzantine} propose slightly different definition, where they assume that $\EE\|x_i - x_l\|^2 \leq \sigma^2$ for all fixed good workers $i,l \in \cG$, which is marginally stronger than what we assume. \citet{karimireddy2021learning} prove tightness of their definition, i.e., up to the constant $c$ one cannot improve bound \eqref{eq:RAgg_def}, and prove that popular ``middle-seekers'' such as \texttt{Krum}~\citep{blanchard2017machine}, \texttt{Robust Federated Averaging}~(\texttt{RFA}) \citep{pillutla2022robust}, and \texttt{Coordinate-wise Median} (\texttt{CM})~\citep{chen2017distributed} do not satisfy their definition. However, there is a trick called \emph{bucketing}~\citep{karimireddy2020byzantine} that provably robustifies \texttt{Krum}/\texttt{RFA}/\texttt{CM}. Nevertheless, the difference between our definition and the original one from \citep{karimireddy2021learning, karimireddy2020byzantine} is very subtle and it turns out that \texttt{Krum}/\texttt{RFA}/\texttt{CM} with bucketing fit Definition~\ref{def:RAgg_def} as well (see Appendix~\ref{app:robustness}).

\textbf{Compression.} We consider unbiased compression operators, i.e., \emph{quantizations}.

\begin{definition}[Unbiased compression \citep{horvath2019stochastic}]\label{def:quantization}
	Stochastic mapping $\cQ:\R^d \to \R^d$ is called unbiased compressor/compression operator if there exists  $\omega \geq 0$ such that for any $x\in\R^d$
	\begin{equation}
		\EE\left[\cQ(x)\right] = x,\quad \EE\left[\|\cQ(x) - x\|^2\right] \le \omega\|x\|^2. \label{eq:quantization_def}
	\end{equation}
	For the given unbiased compressor $\cQ(x)$, one can define the expected density as $\zeta_{\cQ} = \sup_{x\in\R^d}\EE\left[\left\|\cQ(x)\right\|_0\right],$ where $\|y\|_0$ is the number of non-zero components of $y\in\R^d$.
\end{definition}

The above definition covers many popular compression operators such as RandK sparsification \citep{stich2018sparsified}, random dithering \citep{goodall1951television, roberts1962picture}, and natural compression \citep{horvath2019natural} (see also the summary of various compression operators in \citep{beznosikov2020biased}). There exist also other classes of compression operators such as $\delta$-contractive compressors \citep{stich2018sparsified} and absolute compressors \citep{tang2019doublesqueeze, sahu2021rethinking}. However, these types of compressors are out of the scope of this work.

\textbf{Assumptions.} The first assumption is quite standard in the literature on non-convex optimization.

\begin{assumption}\label{as:smoothness}
	We assume that function $f:\R^d \to \R$ is $L$-smooth, i.e., for all $x,y \in \R^d$ we have $\|\nabla f(x) - \nabla f(y)\| \leq L\|x - y\|$. Moreover, we assume that $f$ is uniformly lower bounded by $f_* \in \R$, i.e., $f_* = \inf_{x\in\R^d}f(x)$.
\end{assumption}

Next, we need to restrict the data heterogeneity of regular workers. Indeed, in arbitrarily heterogeneous scenario, it is impossible to distinguish regular workers and \revision{Byzantine workers}. Therefore, we use a quite standard assumption about the heterogeneity of the local loss functions.
\begin{assumption}[$\zeta^2$-heterogeneity]\label{as:bounded_heterogeneity_simplified}
	We assume that good clients have $\zeta^2$-heterogeneous local loss functions for some $\zeta \geq 0$, i.e.,
	\begin{equation}
		\frac{1}{G}\sum\limits_{i\in \cG} \|\nabla f_i(x) - \nabla f(x)\|^2 \leq \zeta^2 \quad \forall x\in \R^d. \label{eq:bounded_heterogeneity_simplified}	
	\end{equation}	 
\end{assumption}
We emphasize here that the homogeneous data case ($\zeta = 0$) is realistic in collaborative learning. This typically means that the workers have an access to the entire data. For example, this can be implemented using so-called \emph{dataset streaming} when the data is received just in time in chunks \citep{diskin2021distributed, kijsipongse2018hybrid} (this can also be implemented without using the server via special protocols similar to BitTorrent).

The following assumption is a refinement of a standard assumption that $f_i$ is $L_i$-smooth for all $i\in \cG$.
\begin{assumption}[Global Hessian variance assumption \citep{szlendak2021permutation}]\label{as:hessian_variance}
	We assume that there exists $L_{\pm} \ge 0$ such that for all $x,y \in \R^d$
	\begin{equation}
		\frac{1}{G}\sum\limits_{i\in \cG} \|\nabla f_i(x) - \nabla f_i(y)\|^2 - \|\nabla f(x) - \nabla f(y)\|^2 \leq L_{\pm}^2\|x - y\|^2. \label{eq:hessian_variance}
	\end{equation}
\end{assumption}
If $f_i$ is $L_i$-smooth for all $i\in \cG$, then the above assumption is always valid for some $L_{\pm} \geq 0$ such that $L_{\text{avg}}^2 - L^2 \leq L_{\pm}^2 \leq L_{\text{avg}}^2$, where $L_{\text{avg}}^2 = \frac{1}{G}\sum_{i\in \cG} L_i^2$ \citep{szlendak2021permutation}. Moreover, \citet{szlendak2021permutation} show that there exist problems with heterogeneous functions on workers such that \eqref{eq:hessian_variance} holds with $L_{\pm} = 0$, while $L_{\text{avg}} > 0$.

We propose a generalization of the above assumption for samplings of stochastic gradients.

\begin{assumption}[Local Hessian variance assumption]\label{as:hessian_variance_local}
	We assume that there exists $\cL_{\pm} \ge 0$ such that for all $x,y \in \R^d$
	\begin{equation}
		\frac{1}{G}\sum\limits_{i\in \cG} \EE\|\widehat{\Delta}_i(x,y) - \Delta_i(x,y)\|^2 \leq \frac{\cL_{\pm}^2}{b}\|x - y\|^2, \label{eq:hessian_variance_local}
	\end{equation}
	where $\Delta_i(x,y) = \nabla f_i(x) - \nabla f_i(y)$ and $\widehat{\Delta}_i(x,y)$ is an unbiased mini-batched estimator of $\Delta_i(x,y)$ with batch size $b$.
\end{assumption}

We notice that the above assumption covers a wide range of samplings of mini-batched stochastic gradient differences, e.g., standard uniform sampling or importance sampling. We provide the examples in Appendix~\ref{appendix:samplings}. We notice that all previous works on \revision{Byzantine}-robustness focus on the standard uniform sampling only. However, uniform sampling can give $m$ times worse constant $\cL_{\pm}^2$ than importance sampling. This difference significantly affect the complexity bounds.

%\subsection{New Method: \algname{Byz-VR-MARINA}}

\textbf{New Method: \algname{Byz-VR-MARINA}.} Now we are ready to present our new method---\revision{Byzantine}-tolerant Variance-Reduced \algname{MARINA} (\algname{Byz-VR-MARINA}). Our algorithm is based on the recently proposed variance-reduced method with compression (\algname{VR-MARINA}) from \citep{gorbunov2021marina}. At each iteration of \algname{Byz-VR-MARINA}, good workers update their parameters $x^{k+1} = x^k - \gamma g^k$ using estimator $g^k$ received from the parameter-server (line 7). Next (line 8), with (typically small) probability $p$ each good worker $i \in \cG$ computes its full gradient, and with (typically large) probability $1-p$ this worker computes compressed mini-batched stochastic gradient difference $\cQ(\widehat{\Delta}_i(x^{k+1}, x^k))$, where $\widehat{\Delta}_i(x^{k+1}, x^k)$ satisfies Assumption~\ref{as:hessian_variance_local}. After that, the server gathers the results of computations from the workers and applies $(\delta,c)$-\texttt{ARAgg} to compute the next estimator $g^{k+1}$ (line 10).

Let us elaborate on several important parts of the proposed algorithm. First, we point out that with large probability $1-p$ good workers need to send just compressed vectors $\cQ(\widehat{\Delta}_i(x^{k+1}, x^k))$, $i\in \cG$. Indeed, since the server knows when workers compute full gradients and when they compute compressed stochastic gradients, it needs just to add $g^k$ to all received vectors to perform robust aggregation from line 10. Moreover, since the server knows the type of compression operator that good workers apply, it can typically easily filter out those \revision{Byzantine workers} who try to slow down the training via sending dense vectors instead of compressed ones (e.g., if the compression operator is RandK sparsification, then \revision{Byzantine workers} cannot send more than $K$ components; otherwise they will be easily detected and can be banned). Next, the right choice of probability $p$ allows equalizing the communication cost of all steps when good workers send dense gradients and compressed gradient differences. The same is true for oracle complexity: if $p \leq \nicefrac{b}{m}$, then the computational cost of full-batch computations is not bigger than that of stochastic gradients.

\begin{algorithm}[h]
   \caption{\algname{Byz-VR-MARINA}: \revision{Byzantine}-tolerant \algname{VR-MARINA}}\label{alg:byz_vr_marina}
\begin{algorithmic}[1]
   \STATE {\bfseries Input:} starting point $x^0$, stepsize $\gamma$, minibatch size $b$, probability $p\in(0,1]$, number of iterations $K$, $(\delta,c)$-\texttt{ARAgg}
   \STATE Initialize $g^0 = \nabla f(x^0)$ %, where $I_0$ is the set of the indices in the minibatch, $|I_0| = b$.
   \FOR{$k=0,1,\ldots,K-1$}
   \STATE Get a sample from Bernoulli distribution with parameter $p$: $c_k \sim \text{Be}(p)$
   \STATE Broadcast $g^k$, $c_k$ to all workers
   \FOR{$i \in \cG$ in parallel} 
   \STATE $x^{k+1} = x^k - \gamma g^k$
   \STATE  Set $g_i^{k+1} = \begin{cases} \nabla f_i(x^{k+1}),& \text{if } c_k = 1,\\ g^k + \cQ\left(\widehat{\Delta}_i(x^{k+1}, x^k)\right),& \text{otherwise,}\end{cases}$, where minibatched estimator $\widehat{\Delta}_i(x^{k+1}, x^k)$ of $\nabla f_i(x^{k+1}) - \nabla f_i(x^k)$; $\cQ(\cdot)$ for $i\in\cG$ are computed independently
   \ENDFOR
   \STATE $g^{k+1} = \texttt{ARAgg}(g_1^{k+1},\ldots, g_n^{k+1})$
   \ENDFOR
   \STATE {\bfseries Return:} $\hat x^K$ chosen uniformly at random from $\{x^k\}_{k=0}^{K-1}$
\end{algorithmic}
\end{algorithm}

\textbf{Challenges in designing variance-reduced algorithm with tight rates and provable \revision{Byzantine}-robustness.} In the introduction, we explain why variance reduction is a natural way to handle \revision{Byzantine} attacks (see the discussion before \textbf{{\color{blue}Q1}}). At first glance, it seems that one can take any variance-reduced method and combine it with some robust aggregation rule to get the result. However, this is not as straightforward as it may appear. As one can see from Table~\ref{tab:comparison_of_rates}, combination of \algname{SAGA} with geometric median estimator (\algname{Byrd-SAGA}) gives the rate $\tilde{\cO}(\tfrac{m^2}{b^2(1-2\delta)\mu^2})$ (smoothness constant and logarithmic factors are omitted) in the smooth strongly convex case --- this rate is in fact $\cO(\tfrac{m^2}{b^2\mu^2})$ times worse than the rate of \algname{SAGA} even when $\delta = 0$. Therefore, it becomes clear that the full potential of variance reduction in \revision{Byzantine}-robust learning is not revealed via \algname{Byrd-SAGA}.

The key reason for that is the sensitivity of \algname{SAGA} (and \algname{SAGA}-based methods) to the unbiasedness of the stochastic estimator in the analysis. Since \algname{Byrd-SAGA} uses the geometric median for the aggregation, which is necessarily biased, it is natural that it has a much worse convergence rate than \algname{SAGA} even in the $\delta = 0$ case. Moreover, one can't solve such an issue by simply changing one robust estimator for another since all known robust estimators are generally biased.

To circumvent this issue, we consider \algname{Geom-SARAH}/\algname{PAGE}-based estimator \citep{horvath2019nonconvex, li2021page} and study how it interacts with the robust aggregation. In particular, we observe that the averaged pair-wise variance for the stochastic gradients of good workers could be upper bounded by a constant multiplied by $\EE\|x^{k+1} - x^k\|^2$ plus some additional terms appearing due to heterogeneity (see Lemma~\ref{lem:pairwise_variance}). Then, we notice that the robust aggregation only leads to the additional term proportional to $\EE\|x^{k+1} - x^k\|^2$ (plus additional terms due to heterogeneity). We show that this term can be directly controlled using another term proportional to $-\EE\|x^{k+1} - x^k\|^2$, which appears in the original analysis of \algname{PAGE}/\algname{VR-MARINA}. 

These facts imply that although the difference between \algname{Byz-VR-MARINA} and \algname{VR-MARINA} is only in the choice of the aggregation rule, it is not straightforward beforehand that such a combination should be considered and that it will lead to better rates. Moreover, as we show next, we obtain vast improvements upon the previously best-known theoretical results for \revision{Byzantine}-tolerant learning.

%\subsection{General Non-Convex Functions}

\textbf{General Non-Convex Functions.} Our main convergence result for general non-convex functions follows. All proofs are deferred to Appendix~\ref{app:proofs}.

\begin{theorem}\label{thm:main_result}
	Let Assumptions~\ref{as:smoothness}, \ref{as:bounded_heterogeneity_simplified},  \ref{as:hessian_variance}, \ref{as:hessian_variance_local} hold. Assume that $0 < \gamma \leq \frac{1}{L + \sqrt{A}}$,  
%	\begin{equation}
%		0 < \gamma \leq \frac{1}{L + \sqrt{A}},\label{eq:gamma_condition}
%	\end{equation}
	where $A = \frac{6(1-p)}{p}\left(\frac{4c\delta}{p} + \frac{1}{2G}\right)\left(\omega L^2 + \frac{(1+\omega)\cL_{\pm}^2}{b}\right)+ \frac{6(1-p)}{p}\left(\frac{4c\delta(1+\omega)}{p} + \frac{\omega}{2G}\right)L_{\pm}^2$. Then for all $K \geq 0$ the point $\hx^K$ choosen uniformly at random from the iterates $x^0, x^1, \ldots, x^K$ produced by \algname{Byz-VR-MARINA} satisfies
	\begin{equation}
		\EE\left[\|\nabla f(\hx^K)\|^2\right] \leq \frac{2\Phi_0}{\gamma (K+1)} + \frac{24c\delta\zeta^2}{p}, \label{eq:main_result} 
	\end{equation}
	where $\Phi_0 = f(x^0) - f_* + \tfrac{\gamma}{p}\|g^0 - \nabla f(x^0)\|^2$ \revision{and $\EE[\cdot]$ denotes the full expectation.\footnote{\revision{In all the results and the proofs, $\EE[\cdot]$ denotes the full expectation if the opposite is not specified.}}}
\end{theorem}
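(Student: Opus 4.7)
The plan is to combine the standard smoothness-based descent analysis of \algname{VR-MARINA} with the $(\delta,c)$-\texttt{ARAgg} inequality, with the latter absorbing the bias introduced by the robust aggregator. I work with the Lyapunov function
\begin{equation*}
    \Phi_k \;\eqdef\; f(x^k) - f_* + \frac{\gamma}{p}\,\|g^k - \nabla f(x^k)\|^2,
\end{equation*}
which matches the $\Phi_0$ in the theorem statement. Applying $L$-smoothness to $x^{k+1}=x^k-\gamma g^k$ and rewriting the cross term via $2\langle\nabla f(x^k),g^k\rangle = \|\nabla f(x^k)\|^2 + \|g^k\|^2 - \|g^k-\nabla f(x^k)\|^2$ yields
\begin{equation*}
    f(x^{k+1}) \le f(x^k) - \tfrac{\gamma}{2}\|\nabla f(x^k)\|^2 - \tfrac{\gamma(1-\gamma L)}{2}\|g^k\|^2 + \tfrac{\gamma}{2}\|g^k - \nabla f(x^k)\|^2.
\end{equation*}
The heart of the proof is then to bound $\EE\|g^{k+1}-\nabla f(x^{k+1})\|^2$ in a form that, after weighting by $\gamma/p$ and combining with the descent inequality, gives a one-step recursion on $\Phi_k$ with a negative $\|\nabla f(x^k)\|^2$ term.

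Let $\bar g^{k+1}\eqdef\tfrac{1}{G}\sum_{i\in\cG}g_i^{k+1}$ be the mean of honest workers' messages. I split
\begin{equation*}
    \|g^{k+1} - \nabla f(x^{k+1})\|^2 \le 2\|g^{k+1} - \bar g^{k+1}\|^2 + 2\|\bar g^{k+1} - \nabla f(x^{k+1})\|^2 .
\end{equation*}
By Definition \ref{def:RAgg_def}, $\EE\|g^{k+1}-\bar g^{k+1}\|^2 \le c\delta\, V_k$, where $V_k$ is the averaged pairwise variance of $\{g_i^{k+1}\}_{i\in\cG}$. The pairwise-variance lemma (Lemma \ref{lem:pairwise_variance} of the paper) then bounds $V_k$ by a multiple of $\gamma^2\|g^k\|^2$ with coefficients involving $\omega L^2$, $(1+\omega)\cL_\pm^2/b$, and $(1+\omega)L_\pm^2$, plus a $\zeta^2$ term originating solely from the $c_k=1$ branch where honest workers transmit the true local gradients $\nabla f_i(x^{k+1})$. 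For the second piece, observe that conditional on $c_k=1$ one has $\bar g^{k+1}=\nabla f(x^{k+1})$, while conditional on $c_k=0$ the mean equals $(g^k-\nabla f(x^k)) + \nabla f(x^{k+1}) + \tfrac{1}{G}\sum_i \eta_i$ with zero-mean independent noises $\eta_i=\cQ(\widehat{\Delta}_i)-\Delta_i(x^{k+1},x^k)$. Independence across good workers (which contributes the $1/G$ factor), Assumption \ref{as:hessian_variance_local} for the subsampling, the $\omega$-quantization inequality, and Assumption \ref{as:hessian_variance} for the residual $\|\Delta_i\|^2$ terms then yield
\begin{equation*}
    \EE\|\bar g^{k+1} - \nabla f(x^{k+1})\|^2 \le (1-p)\|g^k-\nabla f(x^k)\|^2 + \tfrac{1-p}{G}\!\left[\omega L^2 + \tfrac{(1+\omega)\cL_\pm^2}{b} + \omega L_\pm^2\right]\!\gamma^2\|g^k\|^2.
\end{equation*}

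Combining the two pieces, multiplying by $\gamma/p$, and adding to the descent inequality produces a recursion of the form
\begin{equation*}
    \EE[\Phi_{k+1}] \le \EE[\Phi_k] - \tfrac{\gamma}{2}\EE\|\nabla f(x^k)\|^2 - \tfrac{\gamma}{2}\bigl(1 - \gamma L - \gamma^2 A\bigr)\EE\|g^k\|^2 + \tfrac{12\,c\delta\,\gamma\,\zeta^2}{p},
\end{equation*}
where $A$ is precisely the constant appearing in the theorem statement: the $(4c\delta/p + 1/(2G))$ prefactor multiplies the contribution of the compressed stochastic difference (hence $\omega L^2 + (1+\omega)\cL_\pm^2/b$), the $(4c\delta(1+\omega)/p + \omega/(2G))$ prefactor multiplies the $L_\pm^2$ contribution, and the overall $6(1-p)/p$ tracks the $2$'s from the Cauchy--Schwarz splitting together with the $\gamma/p$ weighting of the Lyapunov function. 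The step-size condition $\gamma \le 1/(L+\sqrt A)$ enforces $1-\gamma L - \gamma^2 A \ge 0$, eliminating the $\|g^k\|^2$ term. Telescoping from $k=0$ to $K$, dividing by $\gamma(K+1)/2$, and using the uniform random choice of $\hx^K$ produces exactly \eqref{eq:main_result}.

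The main obstacle I anticipate is the careful bookkeeping required to match the precise coefficient $A$: one must track (i) the $c\delta$-weighted pairwise-variance contribution, (ii) the $1/G$-weighted VR-MARINA variance at the mean, and (iii) the splitting between the global Hessian-variance constant $L_\pm^2$ and the minibatch constant $\cL_\pm^2/b$ arising from Assumptions \ref{as:hessian_variance} and \ref{as:hessian_variance_local}. Conceptually, the subtle point is that the $\|g^k\|^2$ coefficients produced by both the aggregation noise and the VR noise scale with $\gamma^2$, so the single negative $\|g^k\|^2$ term from the smoothness descent must simultaneously absorb all of them; this is exactly the role of the composite bound $\gamma \le 1/(L+\sqrt A)$, rather than two separate constraints on $\gamma L$ and $\gamma^2 A$.
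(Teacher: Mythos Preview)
Your argument has a genuine gap at the splitting step. You decompose
\[
\|g^{k+1} - \nabla f(x^{k+1})\|^2 \le 2\|g^{k+1} - \bar g^{k+1}\|^2 + 2\|\bar g^{k+1} - \nabla f(x^{k+1})\|^2,
\]
but the factor $2$ in front of the second term destroys the recursion. Since $\EE\|\bar g^{k+1}-\nabla f(x^{k+1})\|^2 \le (1-p)\,\EE\|g^k-\nabla f(x^k)\|^2 + \cdots$, your bound produces a coefficient $2(1-p)$ on $\|g^k-\nabla f(x^k)\|^2$. After weighting by $\gamma/p$ in the Lyapunov function and adding the $\gamma/2$ from the descent inequality, the total coefficient of $\|g^k-\nabla f(x^k)\|^2$ on the right-hand side is $\tfrac{\gamma}{2} + \tfrac{2\gamma(1-p)}{p}$, which must not exceed $\tfrac{\gamma}{p}$ for the claimed inequality $\EE[\Phi_{k+1}]\le\EE[\Phi_k]-\cdots$ to hold. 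This forces $p\ge 2/3$, whereas the theorem is stated for all $p\in(0,1]$ and the relevant regime is $p = \min\{b/m,\,1/(1+\omega)\}\ll 1$. No rescaling of the Lyapunov weight can repair this: writing the weight as $C\gamma/p$, the requirement becomes $p/2 \le C(2p-1)$, which is impossible for $p<1/2$.

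The paper avoids this by using the \emph{asymmetric} Young inequality
\[
\|g^{k+1}-\nabla f(x^{k+1})\|^2 \le \Bigl(1+\tfrac{p}{2}\Bigr)\|\bar g^{k+1}-\nabla f(x^{k+1})\|^2 + \Bigl(1+\tfrac{2}{p}\Bigr)\|g^{k+1}-\bar g^{k+1}\|^2.
\]
The first coefficient gives $(1+p/2)(1-p)\le 1-p/2$, so contractivity of the variance term is preserved for every $p$. The price is the large factor $1+2/p\le 3/p$ on the aggregation error; this extra $1/p$ is precisely why the robust-aggregation contribution to $A$ carries $4c\delta/p$ rather than a bare $c\delta$, so your claim that the stated $A$ emerges ``precisely'' from the $2$'s of the Cauchy--Schwarz splitting cannot be right. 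Once you replace the symmetric split by the asymmetric one, the rest of your outline (Lemma~\ref{lem:pairwise_variance} for the pairwise variance, the step-size condition via $1-\gamma L-\gamma^2 A\ge 0$, telescoping and averaging) matches the paper's proof.
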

%\begin{proof}
%	The proof is deferred to Appendix~\ref{app:general_non_cvx}.
%\end{proof}

We highlight here several important properties of the derived result. First of all, this is the first theoretical result for the convergence of \revision{Byzantine}-tolerant methods with compression in the non-convex case with arbitrary adversaries. Next, when $\zeta > 0$ the theorem above does not guarantee that $\EE[\|\nabla f(\hx^K)\|^2]$ can be made arbitrarily small. However, this is not a drawback of our analysis but rather an inevitable limitation of all algorithms in heterogeneous case. This is due to \citet{karimireddy2020byzantine} who proved a lower bound showing that in the presence of \revision{Byzantine workers}, all algorithms satisfy $\EE[\|\nabla f(\hx^K)\|^2] = \Omega(\delta\zeta^2)$, i.e., the constant term from \eqref{eq:main_result} is tight up to the factor of $\nicefrac{1}{p}$. However, when $\zeta = 0$, \algname{Byz-VR-MARINA} can achieve any predefined accuracy of the solution, if $\delta$ is such that \texttt{ARAgg} is $(\delta,c)$-robust (see Theorem~\ref{thm:middle_seekers_with_bucketing}). Finally, as Table~\ref{tab:comparison_of_rates} shows\footnote{To have a fair comparison, we take $p = \min\{\nicefrac{b}{m},\nicefrac{1}{(1+\omega)}\}$ since in this case, at each iteration each worker sends $\cO(\zeta_{\cQ})$ components, when $\omega+1 = \Theta(\nicefrac{d}{\zeta_{\cQ}})$ (which is the case for RandK sparsification and $\ell_2$-quantization, see \citep{beznosikov2020biased}), and makes $\cO(b)$ oracle calls in expectation (computations of $\nabla f_{i,j}(x)$). With such choice of $p$, the total expected (communication and oracle) cost of steps with full gradients computations/uncompressed communications coincides with the total cost of the rest of iterations.}\label{footnote:about_p}, \algname{Byz-VR-MARINA} achieves $\EE[\|\nabla f(\hx^K)\|^2] \leq \varepsilon^2$ faster than all previously known \revision{Byzantine}-tolerant methods for  small enough $\varepsilon$. Moreover, unlike virtually all other results in the non-convex case, Theorem~\ref{thm:main_result} does not rely on the uniformly bounded variance assumption, which is known to be very restrictive~\citep{nguyen2018sgd}. \revision{For further discussion we refer to Appendix~\ref{appendix:further_comparison}.}

%\subsection{Functions Satisfying Polyak-{\L}ojasiewicz (P\L) Condition}

\textbf{Functions Satisfying Polyak-{\L}ojasiewicz (P\L) Condition.} We extend our theory to the functions satisfying \emph{Polyak-{\L}ojasiewicz condition}~\citep{polyak1963gradient, lojasiewicz1963topological}. This assumption generalizes regular strong convexity and holds for several non-convex problems~\citep{karimi2016linear}. Moreover, a very similar assumption appears in over-parameterized deep learning \citep{liu2022loss}.

\begin{assumption}[P{\L} condition]\label{as:PL_condition}
	We assume that function $f$ satisfies Polyak-{\L}ojasiewicz (P\L) condition with parameter $\mu$, i.e., for all $x\in \R^d$ there exists $x^* \in \argmin_{x\in \R^d} f(x)$ such that 
	\begin{equation}
		\|\nabla f(x)\|^2 \geq 2\mu\left(f(x) - f(x^*)\right). \label{eq:PL_def}
	\end{equation}
\end{assumption}

Under this and previously introduced assumptions, we derive the following result.

\begin{theorem}\label{thm:main_result_PL}
	Let Assumptions~\ref{as:smoothness}, \ref{as:bounded_heterogeneity_simplified}, \ref{as:hessian_variance}, \ref{as:hessian_variance_local}, \ref{as:PL_condition} hold. Assume that $0 < \gamma \leq \min\left\{\frac{1}{L + \sqrt{2A}},\frac{p}{4\mu}\right\}$, where $A = \frac{6(1-p)}{p}\left(\frac{4c\delta}{p} + \frac{1}{2G}\right)\left(\omega L^2 + \frac{(1+\omega)\cL_{\pm}^2}{b}\right)+ \frac{6(1-p)}{p}\left(\frac{4c\delta(1+\omega)}{p} + \frac{\omega}{2G}\right)L_{\pm}^2$. Then for all $K \geq 0$ the iterates produced by \algname{Byz-VR-MARINA} satisfy
	\begin{equation}
		\EE\left[f(x^K) - f(x^*)\right] \leq \left(1 - \gamma\mu\right)^K \Phi_0 + \frac{24c\delta\zeta^2}{\mu}, \label{eq:main_result_PL} 
	\end{equation}
	where $\Phi_0 = f(x^0) - f_* + \tfrac{2\gamma}{p}\|g^0 - \nabla f(x^0)\|^2$.
\end{theorem}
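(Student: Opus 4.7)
The strategy is to mirror the analysis of Theorem~\ref{thm:main_result} and inject the P\L condition (As.~\ref{as:PL_condition}) to convert the gradient-norm term from the descent step into a linear contraction of $f(x^k)-f_*$. Concretely, I will work with the Lyapunov function
\begin{equation*}
\Phi^k \;=\; f(x^k) - f_* \;+\; \frac{2\gamma}{p}\|g^k - \nabla f(x^k)\|^2
\end{equation*}
and establish a one-step inequality of the form $\EE\Phi^{k+1} \leq (1-\gamma\mu)\,\EE\Phi^k + 24\gamma c\delta\zeta^2$. Iterating this, summing the geometric series $\sum_{j=0}^{K-1}(1-\gamma\mu)^j \leq 1/(\gamma\mu)$, and using $f(x^K) - f_* \leq \Phi^K$ will then deliver the claimed bound.

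The two basic building blocks come from the proof of Theorem~\ref{thm:main_result}. First, $L$-smoothness of $f$ combined with $x^{k+1} = x^k - \gamma g^k$ and the polarization identity for $\langle \nabla f(x^k), g^k\rangle$ give the descent inequality
\begin{equation*}
f(x^{k+1}) \leq f(x^k) - \tfrac{\gamma}{2}\|\nabla f(x^k)\|^2 + \tfrac{\gamma}{2}\|g^k - \nabla f(x^k)\|^2 - \tfrac{1-\gamma L}{2\gamma}\|x^{k+1} - x^k\|^2.
\end{equation*}
Second, combining the $(\delta,c)$-\texttt{ARAgg} guarantee (Def.~\ref{def:RAgg_def}) with the pairwise-variance lemma (Lem.~\ref{lem:pairwise_variance}) for the \algname{PAGE}/\algname{MARINA}-style estimator, together with Assumptions~\ref{as:bounded_heterogeneity_simplified}, \ref{as:hessian_variance}, and~\ref{as:hessian_variance_local}, yields a one-step variance recursion of the schematic form
\begin{equation*}
\EE\|g^{k+1} - \nabla f(x^{k+1})\|^2 \leq (1-p)\|g^k - \nabla f(x^k)\|^2 + B\|x^{k+1} - x^k\|^2 + \Theta(p)\cdot c\delta\zeta^2,
\end{equation*}
where the $\Theta(p)$ noise factor arises because the heterogeneity-driven pair-wise variance of good workers' messages is only triggered on communication rounds (probability $p$), and $B$ is calibrated so that $4B/p$ matches $2A$ from the theorem statement.

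Plugging both bounds into $\EE\Phi^{k+1}$ and applying P\L to upgrade $-\tfrac{\gamma}{2}\|\nabla f(x^k)\|^2 \leq -\gamma\mu(f(x^k)-f_*)$ leaves three groups of terms. The coefficient multiplying $\|x^{k+1}-x^k\|^2$ is $-\tfrac{1-\gamma L}{2\gamma} + \tfrac{2\gamma B}{p}$, which is nonpositive exactly when $\gamma \leq 1/(L+\sqrt{2A})$ — the first half of the stepsize condition — allowing this term to be dropped. The coefficient multiplying $\|g^k - \nabla f(x^k)\|^2$ is $\tfrac{\gamma}{2} + \tfrac{2\gamma(1-p)}{p}$, which under $\gamma \leq p/(4\mu)$ is dominated by $(1-\gamma\mu)\cdot\tfrac{2\gamma}{p}$, producing the desired $(1-\gamma\mu)$ contraction of the variance component of $\Phi^k$. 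Finally, the residual $\tfrac{2\gamma}{p}\cdot\Theta(p)\cdot c\delta\zeta^2$ collapses to $O(\gamma\,c\delta\zeta^2)$: the $1/p$ from the Lyapunov coefficient is cancelled by the $p$ inside the noise constant, which is exactly what turns the steady-state floor into $24c\delta\zeta^2/\mu$ rather than $24c\delta\zeta^2/(p\mu)$.

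The main technical obstacle is calibrating the Lyapunov coefficient. A coefficient $\gamma/p$ (as used in Theorem~\ref{thm:main_result}) would be too small to dominate the $\tfrac{\gamma}{2}\|g^k - \nabla f(x^k)\|^2$ term produced by the descent inequality under a $(1-\gamma\mu)$ contraction alongside the $(1-p)$ persistence of the VR estimator; doubling it to $2\gamma/p$ fixes this but also doubles the amount of $\|x^{k+1}-x^k\|^2$ variance that has to be absorbed by the dissipation term $-\tfrac{1-\gamma L}{2\gamma}\|x^{k+1}-x^k\|^2$, which is precisely why the stepsize threshold tightens from $1/(L+\sqrt{A})$ to $1/(L+\sqrt{2A})$, and why the extra restriction $\gamma \leq p/(4\mu)$ must also be imposed for the $\|g^k - \nabla f(x^k)\|^2$ bookkeeping to close.
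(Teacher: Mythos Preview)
Your overall plan---Lyapunov function $\Phi^k=f(x^k)-f_*+\tfrac{2\gamma}{p}\|g^k-\nabla f(x^k)\|^2$, descent lemma plus distortion recursion plus P\L, then unroll---is exactly the paper's approach, and your treatment of the $\|x^{k+1}-x^k\|^2$ coefficient correctly produces the threshold $\gamma\le 1/(L+\sqrt{2A})$.

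However, the schematic variance recursion you wrote down is off in two coupled places, and this breaks the claimed $p$-cancellation in the noise floor. To pass from $\|g^{k+1}-\nabla f(x^{k+1})\|^2$ to the clean \algname{MARINA} term $\|\bar g^{k+1}-\nabla f(x^{k+1})\|^2$ plus the aggregation error $\|g^{k+1}-\bar g^{k+1}\|^2$, the paper (Lemma~\ref{lem:distortion}) uses Young's inequality with parameter $p/2$:
\begin{equation*}
\|g^{k+1}-\nabla f(x^{k+1})\|^2 \le \Big(1+\tfrac{p}{2}\Big)\|\bar g^{k+1}-\nabla f(x^{k+1})\|^2 + \Big(1+\tfrac{2}{p}\Big)\|g^{k+1}-\bar g^{k+1}\|^2.
\end{equation*}
This has two consequences. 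First, $(1+\tfrac{p}{2})(1-p)\le 1-\tfrac{p}{2}$, so the contraction factor on $\|g^k-\nabla f(x^k)\|^2$ is $1-\tfrac{p}{2}$, not $1-p$; under $\gamma\le p/(4\mu)$ this is precisely what gives $\tfrac{\gamma}{2}+\tfrac{2\gamma}{p}\big(1-\tfrac{p}{2}\big)=\tfrac{2\gamma}{p}\big(1-\tfrac{p}{4}\big)\le(1-\gamma\mu)\tfrac{2\gamma}{p}$. Second, the aggregation error acquires a $(1+\tfrac{2}{p})\le\tfrac{3}{p}$ prefactor. When you then insert the pairwise-variance bound from Lemma~\ref{lem:pairwise_variance}, which does carry the $4p\zeta^2$ you noticed, the $\zeta^2$ contribution becomes $(1+\tfrac{2}{p})\cdot c\delta\cdot 4p\zeta^2\le 12c\delta\zeta^2$: the factor $p$ is eaten by the Young prefactor, and the distortion recursion's noise constant is $\Theta(1)\cdot c\delta\zeta^2$, not $\Theta(p)\cdot c\delta\zeta^2$. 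Consequently the per-step residual in the $\Phi^k$ recursion is $\tfrac{2\gamma}{p}\cdot 12c\delta\zeta^2=\tfrac{24\gamma c\delta\zeta^2}{p}$, and after summing the geometric series the steady-state floor is $\tfrac{24c\delta\zeta^2}{p\mu}$---exactly what the paper's own proof (the generalized Theorem~\ref{thm:main_result_PL_app} specialized to $B=0$) produces. The $p$-free floor in the main-body statement is a typographical slip; your attempted cancellation to justify it does not go through.
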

%\begin{proof}
%	The proof is deferred to Appendix~\ref{app:PL}.
%\end{proof}

Similarly to the general non-convex case, in the P{\L}-setting \algname{Byz-VR-MARINA} is able to achieve $\EE[f(x^K) - f(x^*)] = \cO(\nicefrac{c\delta\zeta^2}{\mu})$ accuracy, which matches (up to the factor of $\nicefrac{1}{p}$) the lower bound from \citet{karimireddy2020byzantine} derived for \emph{$\mu$-strongly convex} objectives. Next, when $\zeta = 0$, \algname{Byz-VR-MARINA} converges linearly asymptotically to the exact solution. Moreover, as Table~\ref{tab:comparison_of_rates} shows, our convergence result in the P{\L}-setting outperforms the known rates in more restrictive strongly-convex setting. In particular, when $\varepsilon$ is small enough, \algname{Byz-VR-MARINA} has better complexity than \algname{BTARD-SGD}. When the conditioning of the problem is bad (i.e., $\nicefrac{L}{\mu} \gg 1$) our rate dominates results of \algname{BR-CSGD}, \algname{BR-CSAGA}, and \algname{BROADCAST}. Furthermore, both \algname{BR-CSGD} and \algname{BR-CSAGA} rely on the uniformly bounded second moment assumption (contradicting the strong convexity), and the rate of the \algname{BROADCAST} algorithm is based on the assumption that $\omega = \cO(\nicefrac{\mu^2}{L^2})$ implying that $\cQ(x) \approx x$ (no compression) even for well-conditioned problems.
%\newpage

\begin{figure}
\centering
\includegraphics[width=0.195\textwidth]{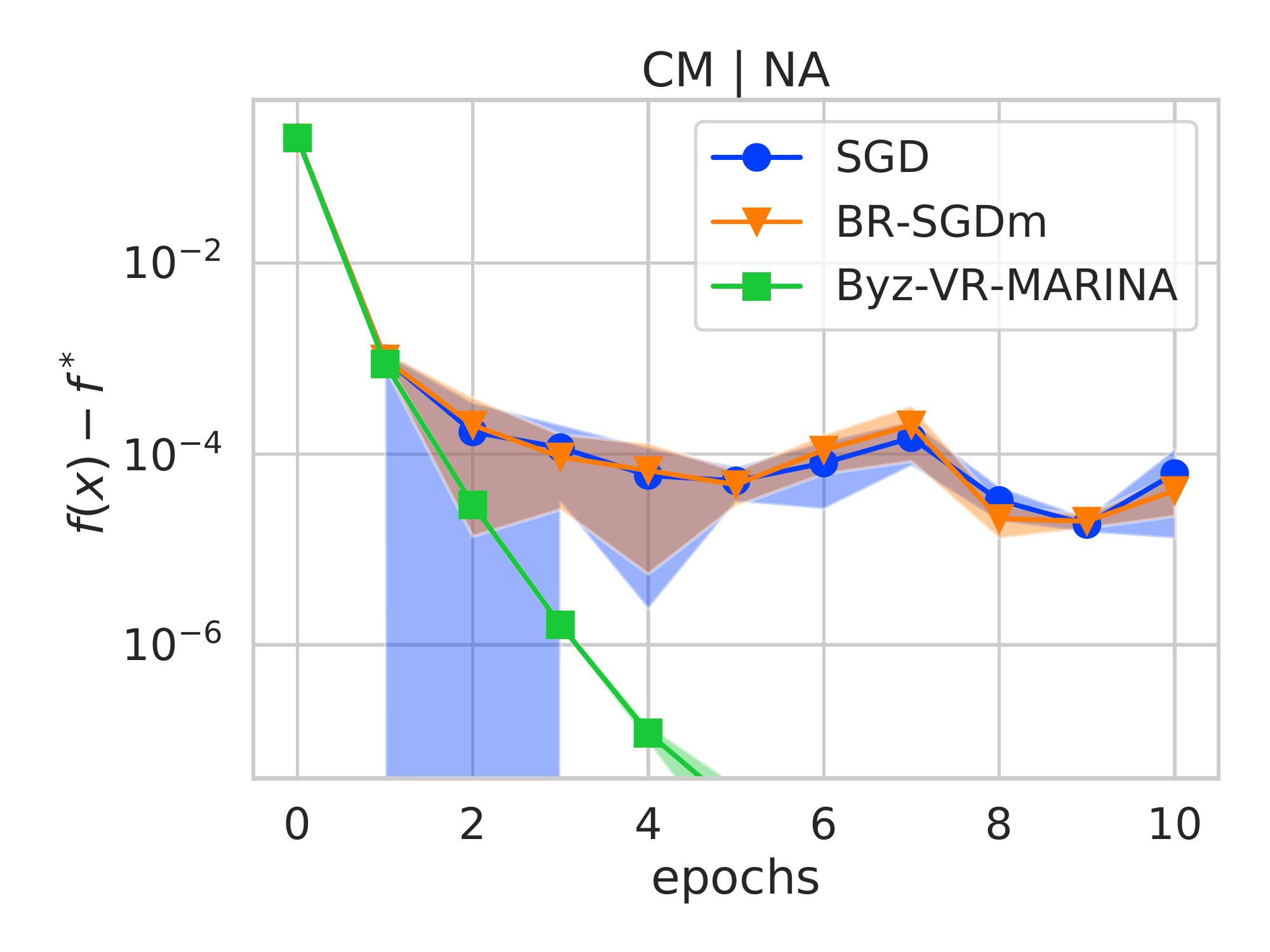}
\includegraphics[width=0.195\textwidth]{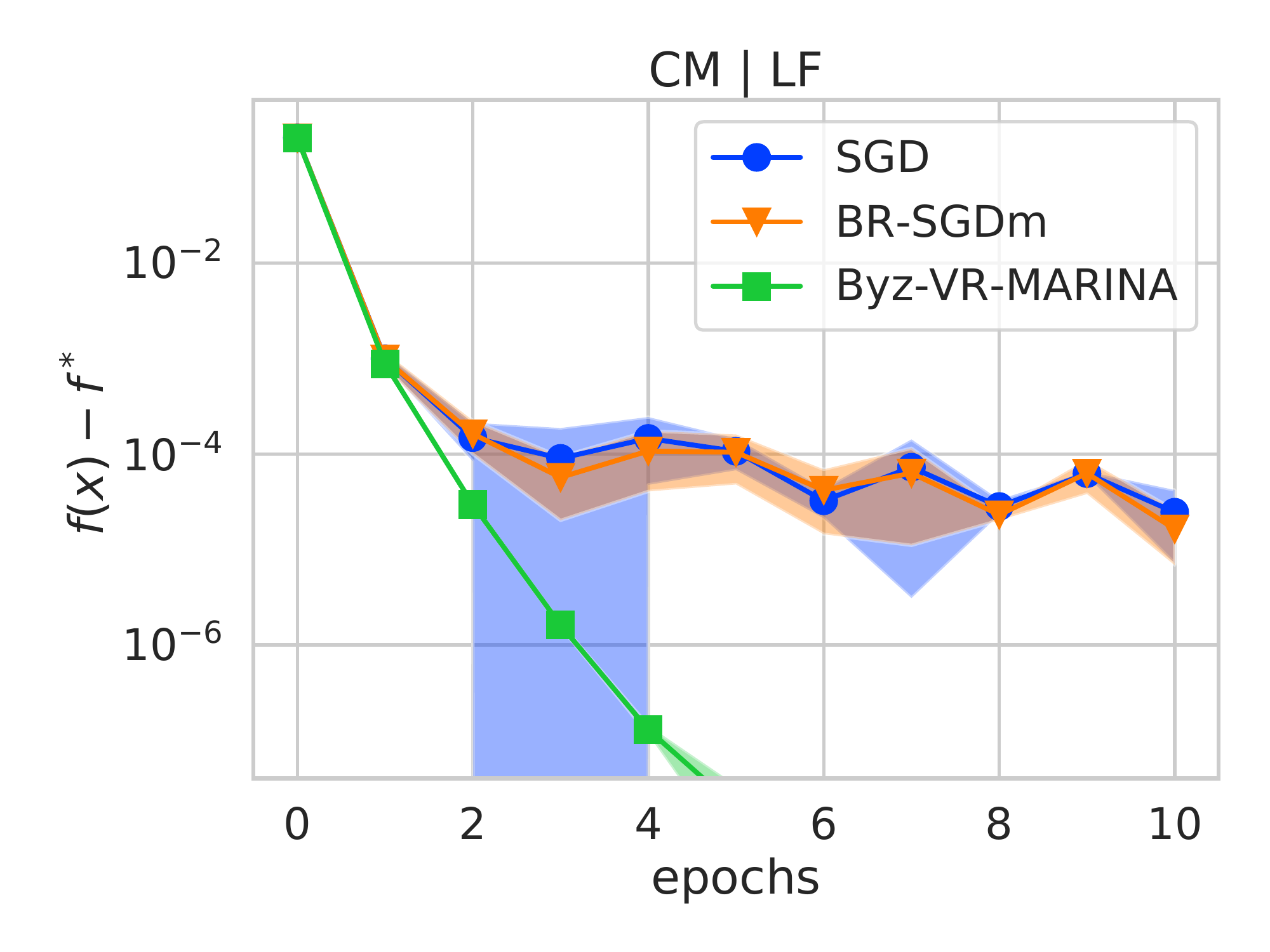}
\includegraphics[width=0.195\textwidth]{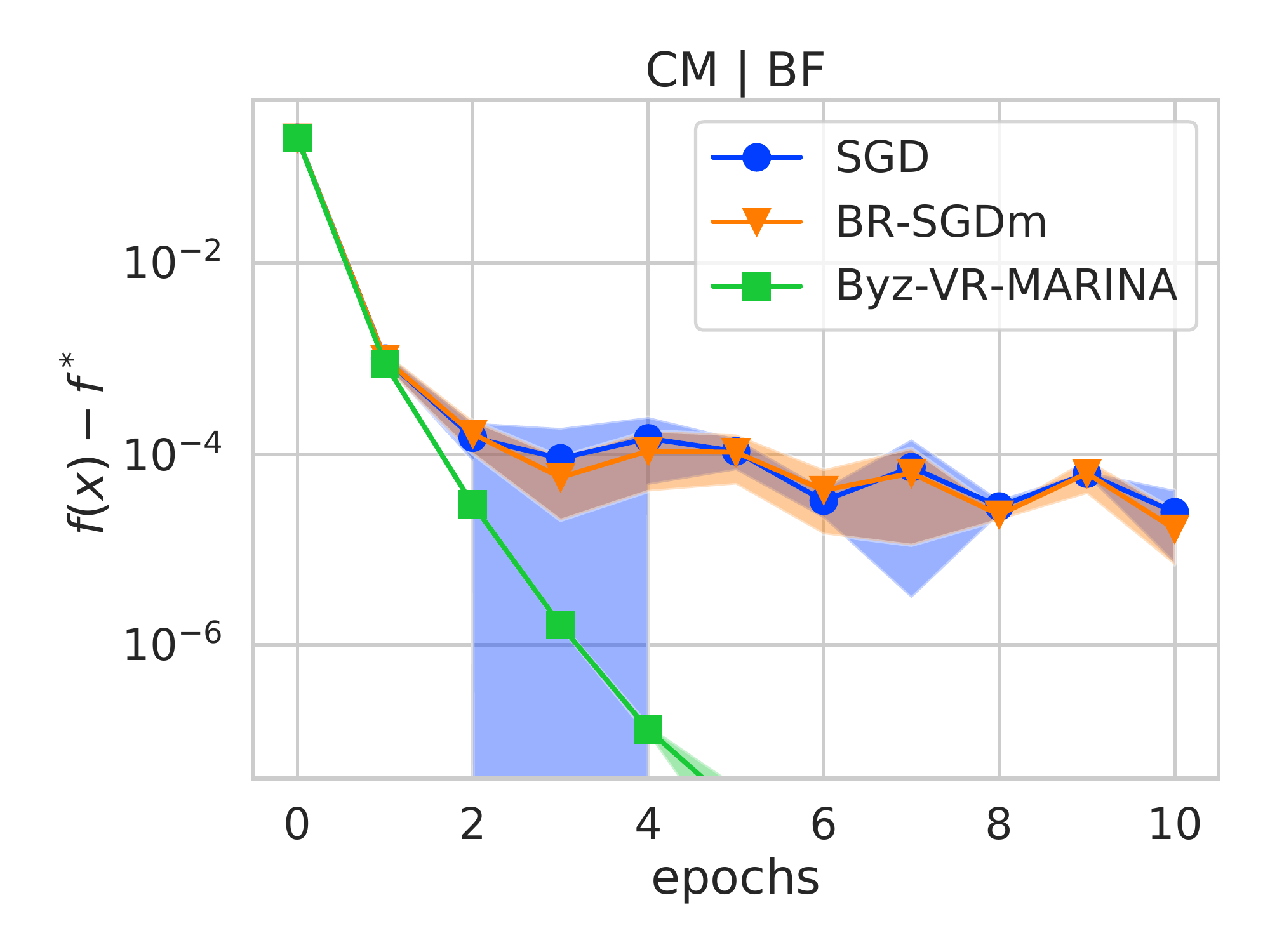}
\includegraphics[width=0.195\textwidth]{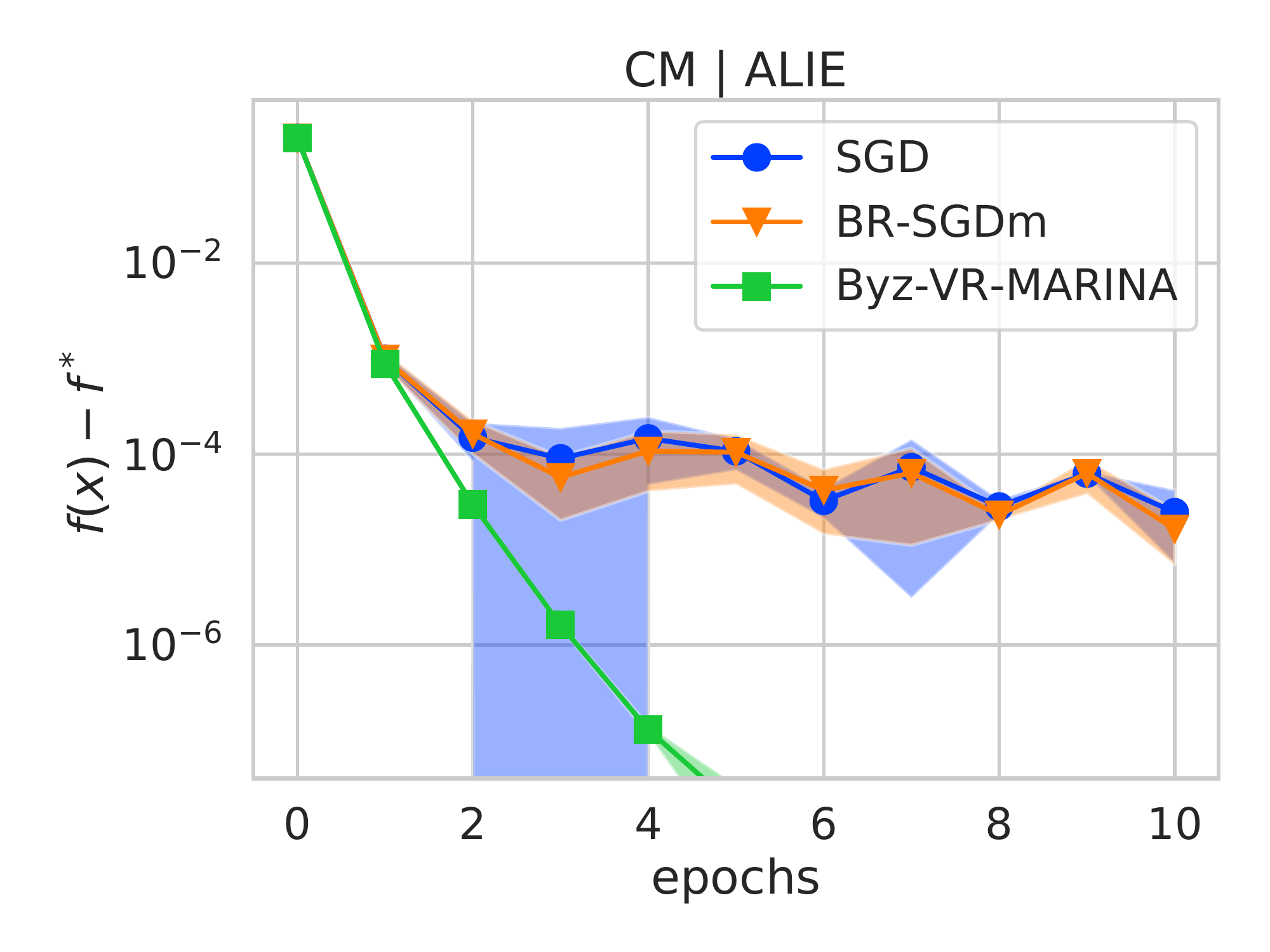}
\includegraphics[width=0.195\textwidth]{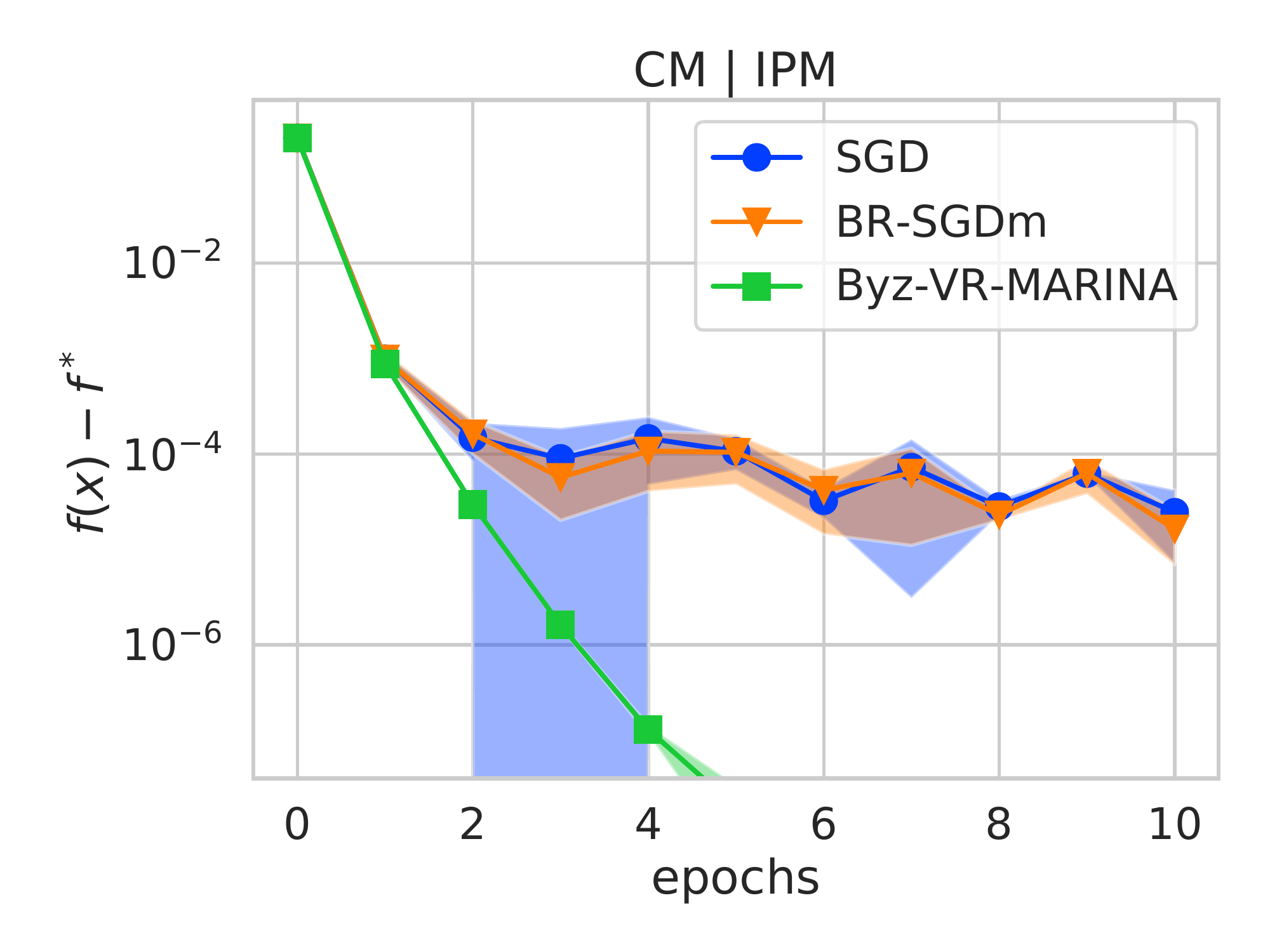}
\\
\includegraphics[width=0.195\textwidth]{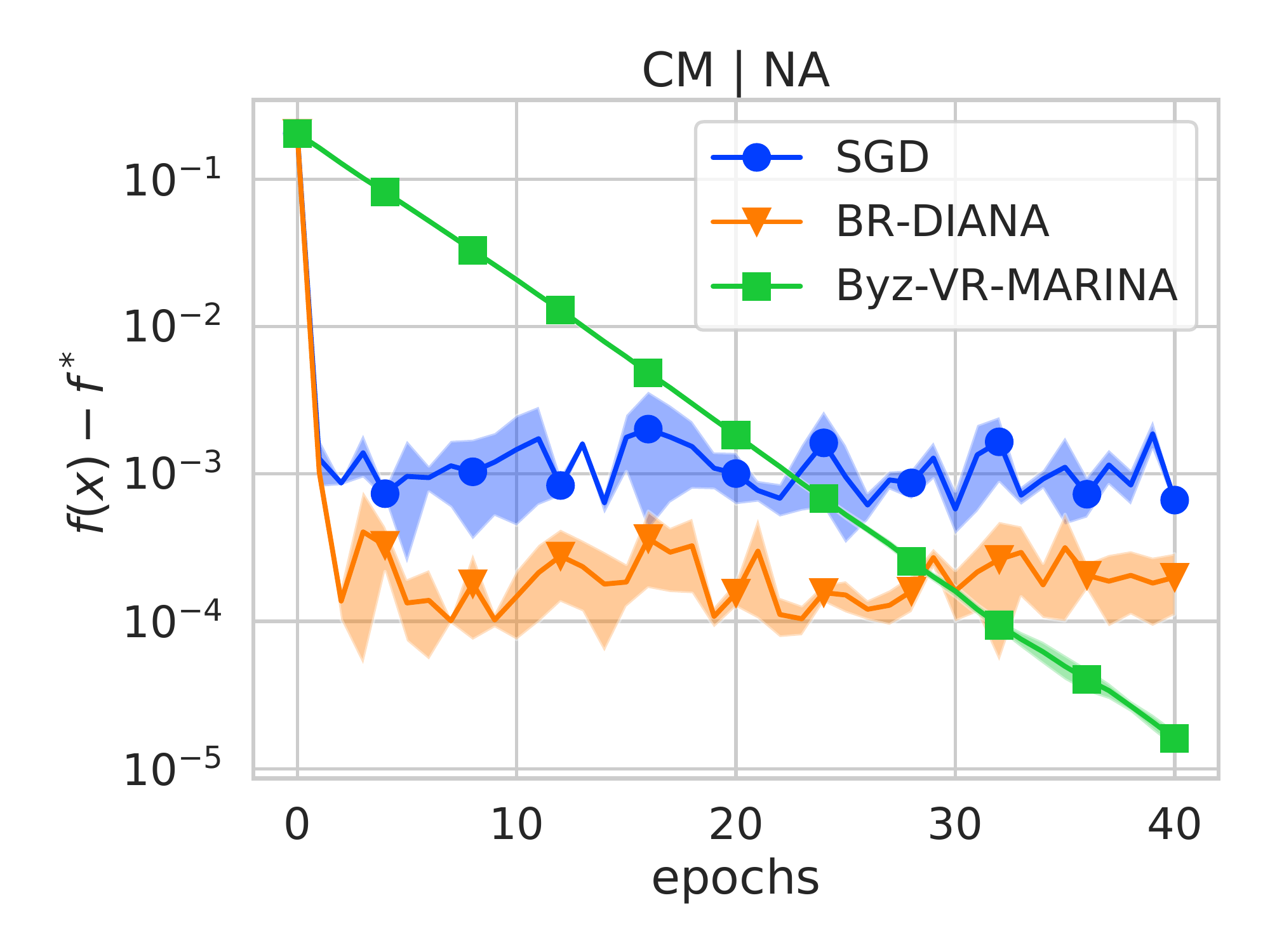}
\includegraphics[width=0.195\textwidth]{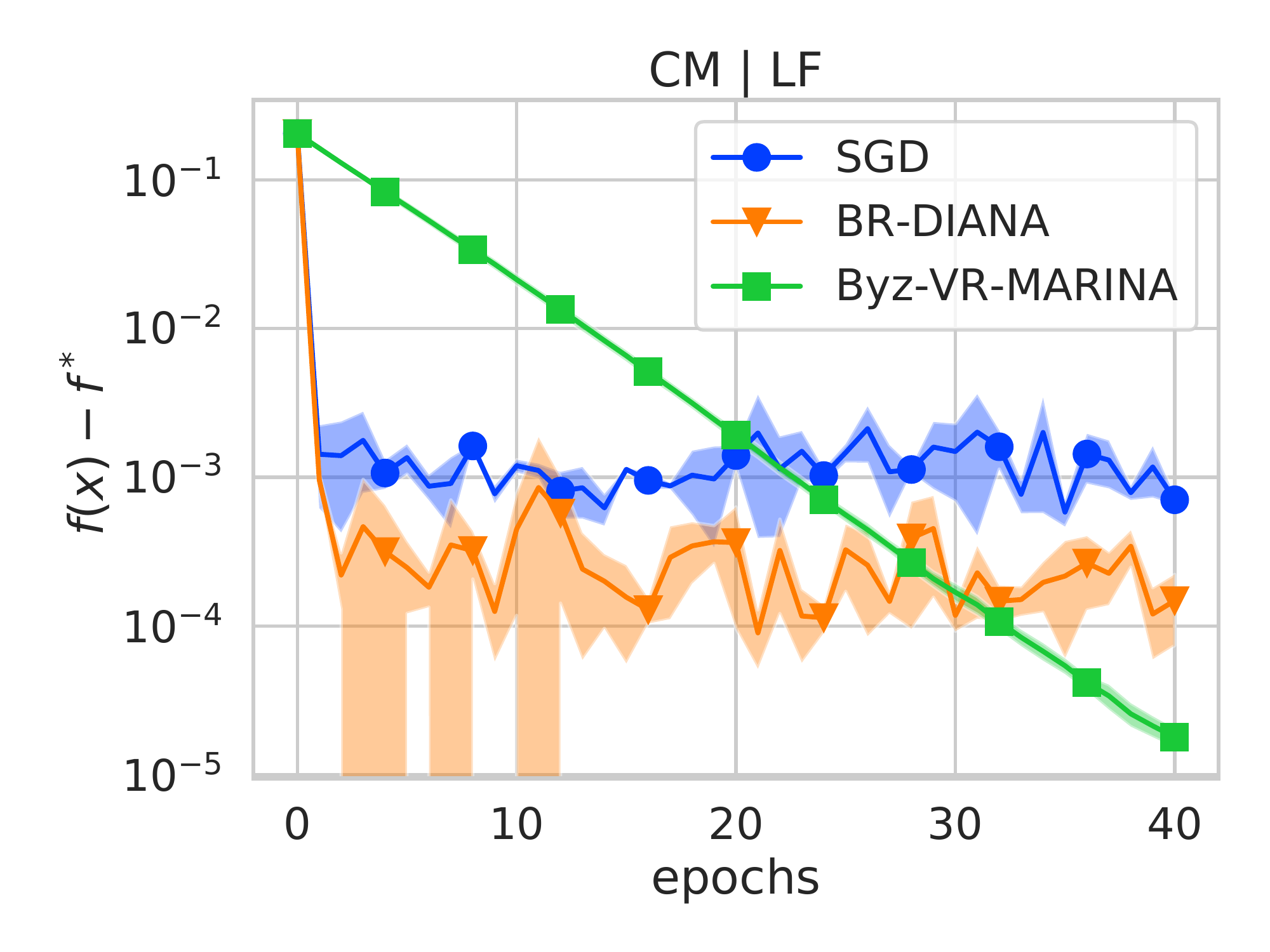}
\includegraphics[width=0.195\textwidth]{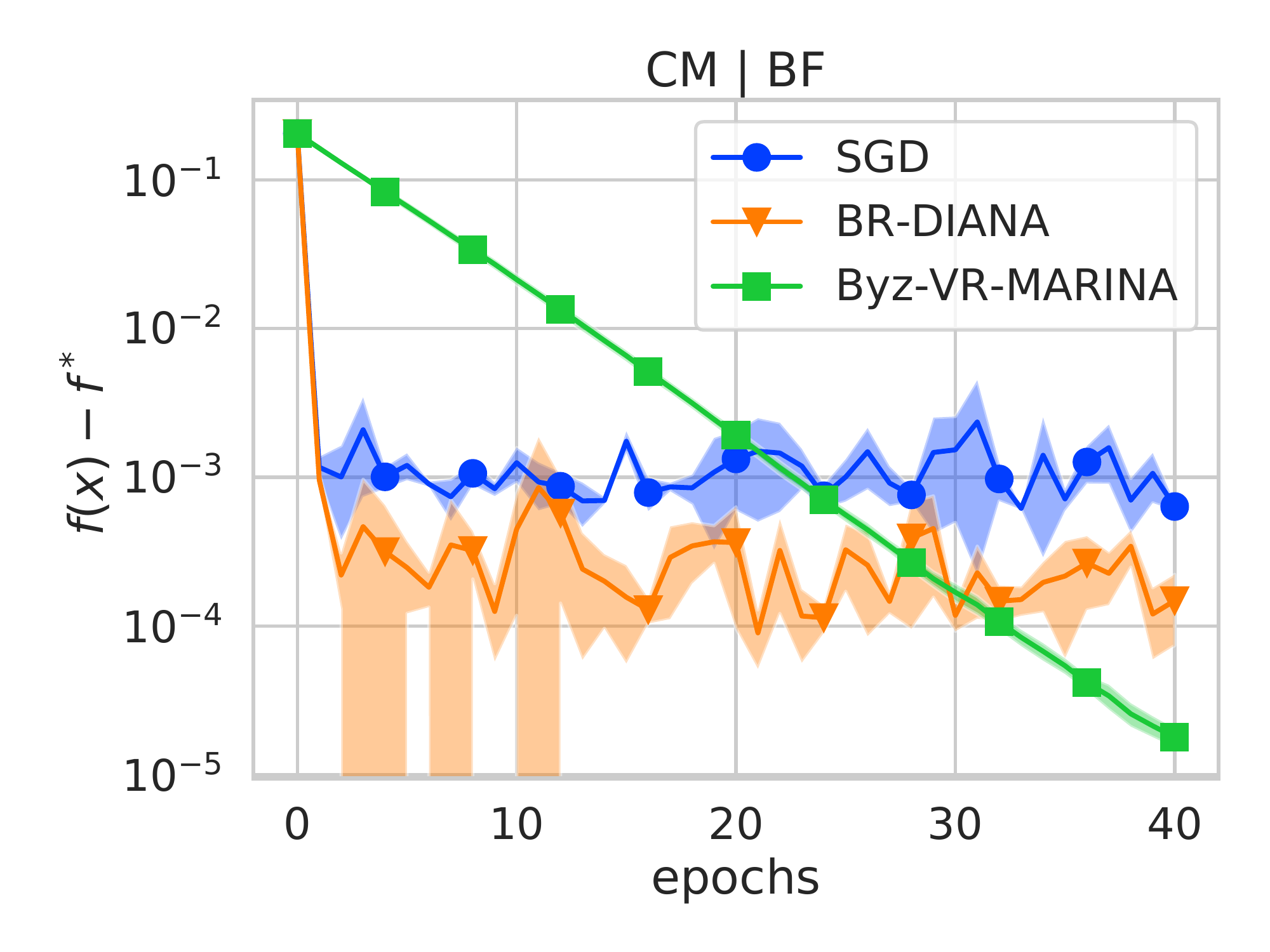}
\includegraphics[width=0.195\textwidth]{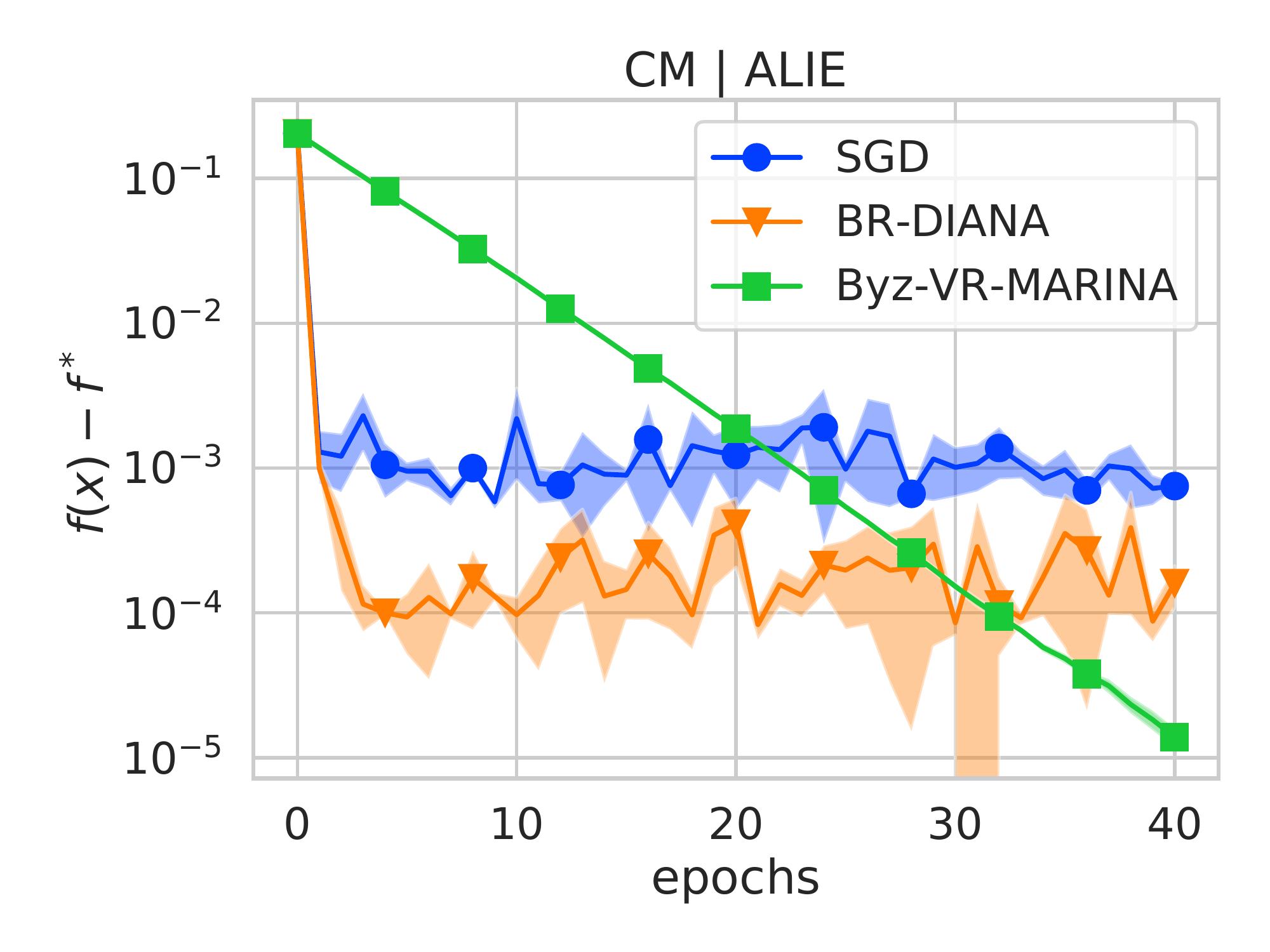}
\includegraphics[width=0.195\textwidth]{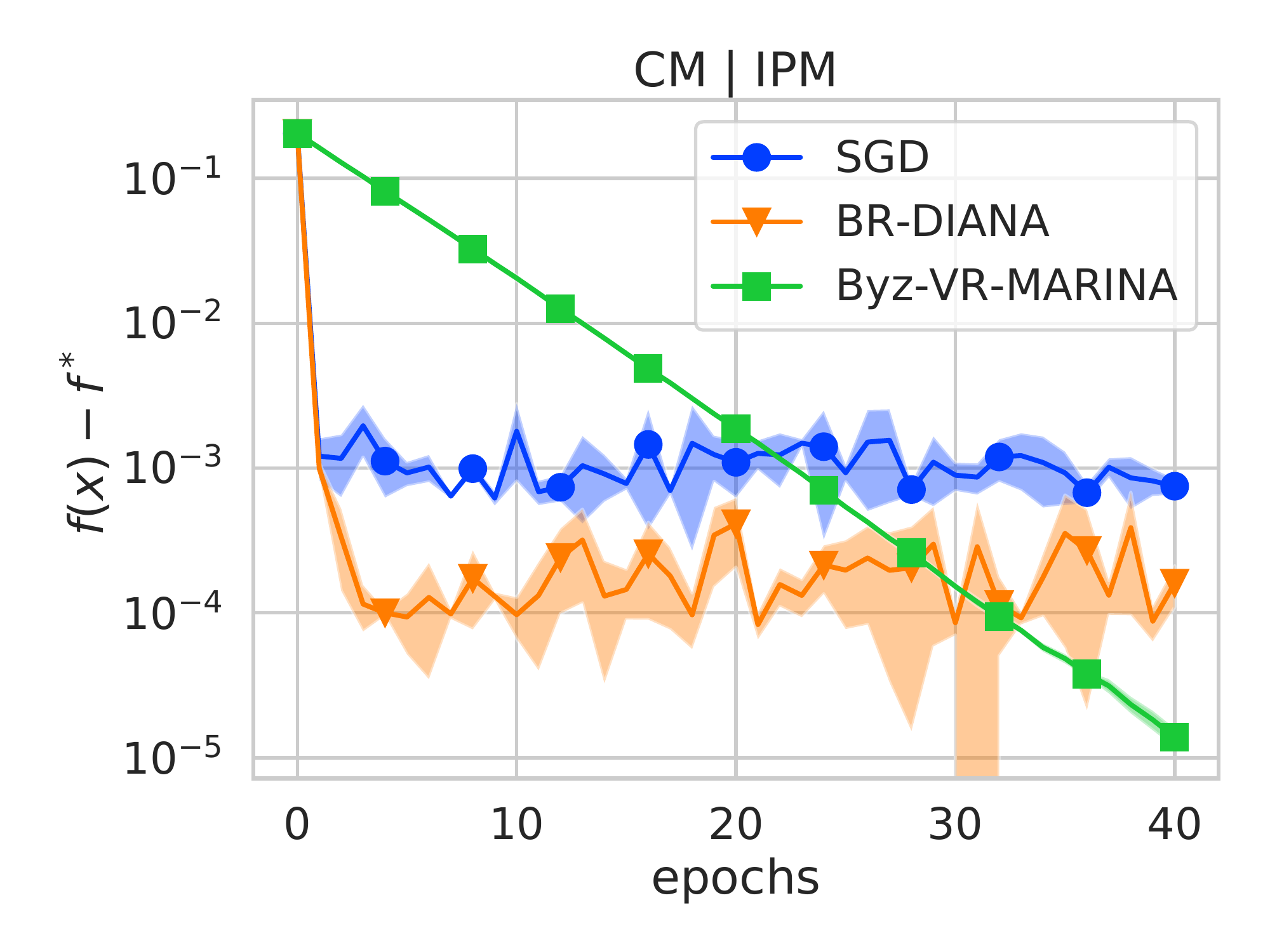}
\caption{The optimality gap $f(x^k) - f(x^*)$ of 3 aggregation rules (AVG, CM, RFA) under 5 attacks (NA, LF, BF, ALIE, IPM) on a9a dataset, where each worker has access full dataset with 4 good and 1 \revision{Byzantine} worker.  In the first row, we do not use any compression, in the second row each method uses Rand$K$ sparsification with $K = 0.1 d$.} 
\label{fig:a9a_full}
\vspace{-6mm}
\end{figure}

\vspace{-3mm}
\section{Numerical Experiments}
\vspace{-3mm}

In this section, we demonstrate the practical performance of the proposed method. The main goal of our experimental evaluation is to showcase the benefits of employing  SOTA variance reduction to remedy the presence of \revision{Byzantine} workers. For the task, we consider the standard logistic regression model  with $\ell_2$-regularization
$
f_{i,j} (x) =  - y_{i,j}\log(h(x, a_{i,j})) - (1 - y_{i,j}) \log(1 - h(x, a_{i,j})) + \lambda \|x\|^2, 
$
where $y_{i,j} \in \{0, 1\}$ is the label, $a_{i,j} \in \R^d$ represents the features vector, $\lambda$ is the regularization parameter and $h(x, a) = \nicefrac{1}{(1 + e^{-a^\top x})}$. One can show that this objective is smooth, and for $\lambda > 0$, it is also strongly convex, therefore, it satisfies P\L condition. We consider \textit{a9a} LIBSVM dataset~\citep{chang2011libsvm} and set $\lambda = 0.01$. In the experiments, we focus on an important feature of \algname{Byz-VR-MARINA}: it guarantees linear convergence for homogeneous datasets across clients even in the presence of \revision{Byzantine} workers, as shown in Theorem~\ref{thm:main_result_PL}. To demonstrate this experimentally, we consider the setup with four good workers and one \revision{Byzantine worker}, \emph{each worker can access the entire dataset}, and the server uses coordinate-wise median with bucketing as the aggregator (see the details in Appendix~\ref{app:robustness}). We consider five different attacks: $\bullet$ \textbf{No Attack (NA):} clean training; $\bullet$ \textbf{Label Flipping (LF):} labels are flipped, i.e., $y_{i,j} \rightarrow 1 - y_{i,j}$; $\bullet$ \textbf{Bit Flipping (BF):} a \revision{Byzantine} worker sends an update with flipped sign; $\bullet$ \textbf{A Little is enough (ALIE) \citep{baruch2019little}:} the \revision{Byzantine workers} estimate the mean $\mu_{\cG}$ and standard deviation $\sigma_{\cG}$ of the good updates, and send $\mu_{\cG} - z\sigma_{\cG}$ to the server where $z$ is a small constant controlling the strength of the attack; $\bullet$ \textbf{Inner Product Manipulation (IPM) \citep{xie2020fall}:} the attackers send $ - \frac{\epsilon}{G}\sum_{i \in \cG} \nabla f_i(x)$ where $\epsilon$  controls the strength of the attack. 
% $\bullet$ \textbf{No Attack (NA):} clean training; $\bullet$ \textbf{Label Flipping (LF):} labels are flipped, i.e., $y_{i,j} \rightarrow 1 - y_{i,j}$; $\bullet$ \textbf{Bit Flipping (BF):} a Byzantine worker sends an update with flipped sign; $\bullet$ \textbf{A Little is enough (ALIE) \citep{baruch2019little}:} the Byzantines estimate the mean $\mu_{\cG}$ and standard deviation $\sigma_{\cG}$ of the good updates, and send $\mu_{\cG} - z\sigma_{\cG}$ to the server where $z$ is a small constant controlling the strength of the attack;
% $\bullet$ \textbf{Inner Product Manipulation (IPM) \citep{xie2020fall}:} the attackers send $ - \frac{\epsilon}{G}\sum_{i \in \cG} \nabla f_i(x)$ where $\epsilon$  controls the strength of the attack. For bucketing, we use $s=2$, i.e., partitioning the updates into the groups of two, as recommended by \citet{karimireddy2020byzantine}.
\revision{For bucketing, we use $s=2$, i.e., partitioning the updates into the groups of two, as recommended by \citet{karimireddy2020byzantine}.} We compare our \algname{Byz-VR-MARINA} with the baselines without compression (\algname{SGD}, \algname{BR-SGDm} \citep{karimireddy2021learning}) and the baselines with random sparsification (compressed \algname{SGD}  and \algname{DIANA} (\algname{BR-DIANA}). We do not compare against \algname{Byrd-SAGA} (and \algname{BR-CSAGA}, \algname{BROADCAST} from \cite{zhu2021broadcast}), which consumes large memory that scales linearly with the number of local data points and is not well suited for memory-efficient batched gradient computation (e.g., used in PyTorch). Our implementation is based on PyTorch~\citep{paszke2019pytorch}. Figure~\ref{fig:a9a_full} showcases that, indeed, we observe linear convergence of our method while no baseline achieves this fast rate. In the first row, we display methods with no compression, and in the second row, each algorithm uses random sparsification. We defer further details and additional experiments with heterogeneous data to Appendix~\ref{app:experiments}.

\section*{Acknowledgements}
We thank anonymous reviewers for useful suggestions regarding additional experiments and discussion of the derived results. The work of E.~Gorbunov was partially supported by a grant for research centers in the field of artificial intelligence, provided by the Analytical Center for the Government of the Russian Federation in accordance with the subsidy agreement (agreement identifier 000000D730321P5Q0002) and the agreement with the Moscow Institute of Physics and Technology dated November 1, 2021 No. 70-2021-00138. The work of P. Richt\'{a}rik was partially supported by the KAUST Baseline Research Fund Scheme and by the SDAIA-KAUST Center of Excellence in Data Science and Artificial Intelligence.

\bibliography{refs}

\begin{thebibliography}{108}
\providecommand{\natexlab}[1]{#1}
\providecommand{\url}[1]{\texttt{#1}}
\expandafter\ifx\csname urlstyle\endcsname\relax
  \providecommand{\doi}[1]{doi: #1}\else
  \providecommand{\doi}{doi: \begingroup \urlstyle{rm}\Url}\fi

\bibitem[Alistarh et~al.(2017)Alistarh, Grubic, Li, Tomioka, and
  Vojnovic]{alistarh2017qsgd}
Dan Alistarh, Demjan Grubic, Jerry Li, Ryota Tomioka, and Milan Vojnovic.
\newblock {Q}{S}{G}{D}: Communication-efficient sgd via gradient quantization
  and encoding.
\newblock \emph{Advances in Neural Information Processing Systems}, 30, 2017.

\bibitem[Alistarh et~al.(2018)Alistarh, Allen-Zhu, and
  Li]{alistarh2018byzantine}
Dan Alistarh, Zeyuan Allen-Zhu, and Jerry Li.
\newblock Byzantine stochastic gradient descent.
\newblock In \emph{Proceedings of the 32nd International Conference on Neural
  Information Processing Systems}, pp.\  4618--4628, 2018.

\bibitem[Allen-Zhu(2017)]{allen2017katyusha}
Zeyuan Allen-Zhu.
\newblock Katyusha: The first direct acceleration of stochastic gradient
  methods.
\newblock \emph{The Journal of Machine Learning Research}, 18\penalty0
  (1):\penalty0 8194--8244, 2017.

\bibitem[Allen-Zhu et~al.(2021)Allen-Zhu, Ebrahimian, Li, and
  Alistarh]{allen2020byzantine}
Zeyuan Allen-Zhu, Faeze Ebrahimian, Jerry Li, and Dan Alistarh.
\newblock Byzantine-resilient non-convex stochastic gradient descent.
\newblock In \emph{International Conference on Learning Representations}, 2021.
\newblock URL \url{https://openreview.net/forum?id=PbEHqvFtcS}.

\bibitem[Baruch et~al.(2019)Baruch, Baruch, and Goldberg]{baruch2019little}
Gilad Baruch, Moran Baruch, and Yoav Goldberg.
\newblock A little is enough: Circumventing defenses for distributed learning.
\newblock \emph{Advances in Neural Information Processing Systems}, 32, 2019.

\bibitem[Basu et~al.(2019)Basu, Data, Karakus, and Diggavi]{basu2019qsparse}
Debraj Basu, Deepesh Data, Can Karakus, and Suhas Diggavi.
\newblock Qsparse-local-sgd: Distributed sgd with quantization, sparsification
  and local computations.
\newblock \emph{Advances in Neural Information Processing Systems}, 32, 2019.

\bibitem[Bernstein et~al.(2018)Bernstein, Zhao, Azizzadenesheli, and
  Anandkumar]{bernstein2018signsgd}
Jeremy Bernstein, Jiawei Zhao, Kamyar Azizzadenesheli, and Anima Anandkumar.
\newblock signsgd with majority vote is communication efficient and fault
  tolerant.
\newblock \emph{arXiv preprint arXiv:1810.05291}, 2018.

\bibitem[Beznosikov et~al.(2020)Beznosikov, Horv{\'a}th, Richt{\'a}rik, and
  Safaryan]{beznosikov2020biased}
Aleksandr Beznosikov, Samuel Horv{\'a}th, Peter Richt{\'a}rik, and Mher
  Safaryan.
\newblock On biased compression for distributed learning.
\newblock \emph{arXiv preprint arXiv:2002.12410}, 2020.

\bibitem[Beznosikov et~al.(2021)Beznosikov, Richt{\'a}rik, Diskin, Ryabinin,
  and Gasnikov]{beznosikov2021distributed}
Aleksandr Beznosikov, Peter Richt{\'a}rik, Michael Diskin, Max Ryabinin, and
  Alexander Gasnikov.
\newblock Distributed methods with compressed communication for solving
  variational inequalities, with theoretical guarantees.
\newblock \emph{arXiv preprint arXiv:2110.03313}, 2021.

\bibitem[Beznosikov et~al.(2022)Beznosikov, Gorbunov, Berard, and
  Loizou]{beznosikov2022stochastic}
Aleksandr Beznosikov, Eduard Gorbunov, Hugo Berard, and Nicolas Loizou.
\newblock Stochastic gradient descent-ascent: Unified theory and new efficient
  methods.
\newblock \emph{arXiv preprint arXiv:2202.07262}, 2022.

\bibitem[Blanchard et~al.(2017)Blanchard, El~Mhamdi, Guerraoui, and
  Stainer]{blanchard2017machine}
Peva Blanchard, El~Mahdi El~Mhamdi, Rachid Guerraoui, and Julien Stainer.
\newblock Machine learning with adversaries: Byzantine tolerant gradient
  descent.
\newblock \emph{Advances in Neural Information Processing Systems}, 30, 2017.

\bibitem[Brown et~al.(2020)Brown, Mann, Ryder, Subbiah, Kaplan, Dhariwal,
  Neelakantan, Shyam, Sastry, Askell, et~al.]{brown2020language}
Tom Brown, Benjamin Mann, Nick Ryder, Melanie Subbiah, Jared~D Kaplan, Prafulla
  Dhariwal, Arvind Neelakantan, Pranav Shyam, Girish Sastry, Amanda Askell,
  et~al.
\newblock Language models are few-shot learners.
\newblock \emph{Advances in neural information processing systems},
  33:\penalty0 1877--1901, 2020.

\bibitem[Chang \& Lin(2011)Chang and Lin]{chang2011libsvm}
Chih-Chung Chang and Chih-Jen Lin.
\newblock Libsvm: a library for support vector machines.
\newblock \emph{ACM transactions on intelligent systems and technology (TIST)},
  2\penalty0 (3):\penalty0 1--27, 2011.

\bibitem[Chen et~al.(2018)Chen, Wang, Charles, and
  Papailiopoulos]{chen2018draco}
Lingjiao Chen, Hongyi Wang, Zachary Charles, and Dimitris Papailiopoulos.
\newblock Draco: Byzantine-resilient distributed training via redundant
  gradients.
\newblock In \emph{International Conference on Machine Learning}, pp.\
  903--912. PMLR, 2018.

\bibitem[Chen et~al.(2017)Chen, Su, and Xu]{chen2017distributed}
Yudong Chen, Lili Su, and Jiaming Xu.
\newblock Distributed statistical machine learning in adversarial settings:
  Byzantine gradient descent.
\newblock \emph{Proceedings of the ACM on Measurement and Analysis of Computing
  Systems}, 1\penalty0 (2):\penalty0 1--25, 2017.

\bibitem[Cutkosky \& Orabona(2019)Cutkosky and Orabona]{cutkosky2019momentum}
Ashok Cutkosky and Francesco Orabona.
\newblock Momentum-based variance reduction in non-convex sgd.
\newblock \emph{Advances in neural information processing systems}, 32, 2019.

\bibitem[Damaskinos et~al.(2019)Damaskinos, El-Mhamdi, Guerraoui, Guirguis, and
  Rouault]{damaskinos2019aggregathor}
Georgios Damaskinos, El-Mahdi El-Mhamdi, Rachid Guerraoui, Arsany Guirguis, and
  S{\'e}bastien Rouault.
\newblock Aggregathor: Byzantine machine learning via robust gradient
  aggregation.
\newblock \emph{Proceedings of Machine Learning and Systems}, 1:\penalty0
  81--106, 2019.

\bibitem[Danilova \& Gorbunov(2022)Danilova and
  Gorbunov]{danilova2022distributed}
Marina Danilova and Eduard Gorbunov.
\newblock Distributed methods with absolute compression and error compensation.
\newblock \emph{arXiv preprint arXiv:2203.02383}, 2022.

\bibitem[Defazio et~al.(2014)Defazio, Bach, and
  Lacoste-Julien]{defazio2014saga}
Aaron Defazio, Francis Bach, and Simon Lacoste-Julien.
\newblock Saga: A fast incremental gradient method with support for
  non-strongly convex composite objectives.
\newblock \emph{Advances in neural information processing systems}, 27, 2014.

\bibitem[Diskin et~al.(2021)Diskin, Bukhtiyarov, Ryabinin, Saulnier, Sinitsin,
  Popov, Pyrkin, Kashirin, Borzunov, Villanova~del Moral,
  et~al.]{diskin2021distributed}
Michael Diskin, Alexey Bukhtiyarov, Max Ryabinin, Lucile Saulnier, Anton
  Sinitsin, Dmitry Popov, Dmitry~V Pyrkin, Maxim Kashirin, Alexander Borzunov,
  Albert Villanova~del Moral, et~al.
\newblock Distributed deep learning in open collaborations.
\newblock \emph{Advances in Neural Information Processing Systems},
  34:\penalty0 7879--7897, 2021.

\bibitem[El-Mhamdi et~al.(2021)El-Mhamdi, Farhadkhani, Guerraoui, Guirguis,
  Hoang, and Rouault]{el2021collaborative}
El~Mahdi El-Mhamdi, Sadegh Farhadkhani, Rachid Guerraoui, Arsany Guirguis,
  L{\^e}-Nguy{\^e}n Hoang, and S{\'e}bastien Rouault.
\newblock Collaborative learning in the jungle (decentralized, byzantine,
  heterogeneous, asynchronous and nonconvex learning).
\newblock \emph{Advances in Neural Information Processing Systems},
  34:\penalty0 25044--25057, 2021.

\bibitem[Faghri et~al.(2020)Faghri, Tabrizian, Markov, Alistarh, Roy, and
  Ramezani-Kebrya]{faghri2020adaptive}
Fartash Faghri, Iman Tabrizian, Ilia Markov, Dan Alistarh, Daniel~M Roy, and
  Ali Ramezani-Kebrya.
\newblock Adaptive gradient quantization for data-parallel sgd.
\newblock \emph{Advances in neural information processing systems},
  33:\penalty0 3174--3185, 2020.

\bibitem[Fang et~al.(2018)Fang, Li, Lin, and Zhang]{fang2018spider}
Cong Fang, Chris~Junchi Li, Zhouchen Lin, and Tong Zhang.
\newblock Spider: Near-optimal non-convex optimization via stochastic
  path-integrated differential estimator.
\newblock \emph{Advances in Neural Information Processing Systems}, 31, 2018.

\bibitem[Ghosh et~al.(2020)Ghosh, Maity, and Mazumdar]{ghosh2020distributed}
Avishek Ghosh, Raj~Kumar Maity, and Arya Mazumdar.
\newblock Distributed newton can communicate less and resist byzantine workers.
\newblock \emph{Advances in Neural Information Processing Systems},
  33:\penalty0 18028--18038, 2020.

\bibitem[Ghosh et~al.(2021)Ghosh, Maity, Kadhe, Mazumdar, and
  Ramchandran]{ghosh2021communication}
Avishek Ghosh, Raj~Kumar Maity, Swanand Kadhe, Arya Mazumdar, and Kannan
  Ramchandran.
\newblock Communication-efficient and byzantine-robust distributed learning
  with error feedback.
\newblock \emph{IEEE Journal on Selected Areas in Information Theory},
  2\penalty0 (3):\penalty0 942--953, 2021.

\bibitem[Goodall(1951)]{goodall1951television}
WM~Goodall.
\newblock Television by pulse code modulation.
\newblock \emph{Bell System Technical Journal}, 30\penalty0 (1):\penalty0
  33--49, 1951.

\bibitem[Gorbunov et~al.(2020{\natexlab{a}})Gorbunov, Bibi, Sener, Bergou, and
  Richt{\'a}rik]{gorbunov2020stochastic}
Eduard Gorbunov, Adel Bibi, Ozan Sener, El~Houcine Bergou, and Peter
  Richt{\'a}rik.
\newblock A stochastic derivative free optimization method with momentum.
\newblock \emph{International Conference on Learning Representations},
  2020{\natexlab{a}}.

\bibitem[Gorbunov et~al.(2020{\natexlab{b}})Gorbunov, Kovalev, Makarenko, and
  Richt{\'a}rik]{gorbunov2020linearly}
Eduard Gorbunov, Dmitry Kovalev, Dmitry Makarenko, and Peter Richt{\'a}rik.
\newblock Linearly converging error compensated sgd.
\newblock \emph{Advances in Neural Information Processing Systems},
  33:\penalty0 20889--20900, 2020{\natexlab{b}}.

\bibitem[Gorbunov et~al.(2021{\natexlab{a}})Gorbunov, Borzunov, Diskin, and
  Ryabinin]{gorbunov2021secure}
Eduard Gorbunov, Alexander Borzunov, Michael Diskin, and Max Ryabinin.
\newblock Secure distributed training at scale.
\newblock \emph{arXiv preprint arXiv:2106.11257}, 2021{\natexlab{a}}.

\bibitem[Gorbunov et~al.(2021{\natexlab{b}})Gorbunov, Burlachenko, Li, and
  Richt{\'a}rik]{gorbunov2021marina}
Eduard Gorbunov, Konstantin~P Burlachenko, Zhize Li, and Peter Richt{\'a}rik.
\newblock {M}{A}{R}{I}{N}{A}: {F}aster non-convex distributed learning with
  compression.
\newblock In \emph{International Conference on Machine Learning}, pp.\
  3788--3798. PMLR, 2021{\natexlab{b}}.

\bibitem[Gower et~al.(2020)Gower, Schmidt, Bach, and
  Richt{\'a}rik]{gower2020variance}
Robert~M Gower, Mark Schmidt, Francis Bach, and Peter Richt{\'a}rik.
\newblock Variance-reduced methods for machine learning.
\newblock \emph{Proceedings of the IEEE}, 108\penalty0 (11):\penalty0
  1968--1983, 2020.

\bibitem[Gower et~al.(2019)Gower, Loizou, Qian, Sailanbayev, Shulgin, and
  Richt{\'a}rik]{gower2019sgd}
Robert~Mansel Gower, Nicolas Loizou, Xun Qian, Alibek Sailanbayev, Egor
  Shulgin, and Peter Richt{\'a}rik.
\newblock {SGD}: General analysis and improved rates.
\newblock In \emph{International Conference on Machine Learning}, pp.\
  5200--5209. PMLR, 2019.

\bibitem[Goyal et~al.(2017)Goyal, Doll{\'a}r, Girshick, Noordhuis, Wesolowski,
  Kyrola, Tulloch, Jia, and He]{goyal2017accurate}
Priya Goyal, Piotr Doll{\'a}r, Ross Girshick, Pieter Noordhuis, Lukasz
  Wesolowski, Aapo Kyrola, Andrew Tulloch, Yangqing Jia, and Kaiming He.
\newblock Accurate, large minibatch sgd: Training imagenet in 1 hour.
\newblock \emph{arXiv preprint arXiv:1706.02677}, 2017.

\bibitem[Guerraoui et~al.(2018)Guerraoui, Rouault, et~al.]{guerraoui2018hidden}
Rachid Guerraoui, S{\'e}bastien Rouault, et~al.
\newblock The hidden vulnerability of distributed learning in byzantium.
\newblock In \emph{International Conference on Machine Learning}, pp.\
  3521--3530. PMLR, 2018.

\bibitem[Gupta \& Vaidya(2021)Gupta and Vaidya]{gupta2021byzantine2}
Nirupam Gupta and Nitin~H Vaidya.
\newblock Byzantine fault-tolerance in peer-to-peer distributed
  gradient-descent.
\newblock \emph{arXiv preprint arXiv:2101.12316}, 2021.

\bibitem[Gupta et~al.(2021)Gupta, Doan, and Vaidya]{gupta2021byzantine}
Nirupam Gupta, Thinh~T Doan, and Nitin~H Vaidya.
\newblock Byzantine fault-tolerance in decentralized optimization under
  2f-redundancy.
\newblock In \emph{2021 American Control Conference (ACC)}, pp.\  3632--3637.
  IEEE, 2021.

\bibitem[Haddadpour et~al.(2021)Haddadpour, Kamani, Mokhtari, and
  Mahdavi]{haddadpour2021federated}
Farzin Haddadpour, Mohammad~Mahdi Kamani, Aryan Mokhtari, and Mehrdad Mahdavi.
\newblock Federated learning with compression: Unified analysis and sharp
  guarantees.
\newblock In \emph{International Conference on Artificial Intelligence and
  Statistics}, pp.\  2350--2358. PMLR, 2021.

\bibitem[He et~al.(2022)He, Karimireddy, and Jaggi]{he2022byzantine}
Lie He, Sai~Praneeth Karimireddy, and Martin Jaggi.
\newblock Byzantine-robust decentralized learning via self-centered clipping.
\newblock \emph{arXiv preprint arXiv:2202.01545}, 2022.

\bibitem[Horv{\'a}th \& Richt{\'a}rik(2019)Horv{\'a}th and
  Richt{\'a}rik]{horvath2019nonconvex}
Samuel Horv{\'a}th and Peter Richt{\'a}rik.
\newblock Nonconvex variance reduced optimization with arbitrary sampling.
\newblock In \emph{International Conference on Machine Learning}, pp.\
  2781--2789. PMLR, 2019.

\bibitem[Horv{\'a}th et~al.(2019{\natexlab{a}})Horv{\'a}th, Ho, Horvath, Sahu,
  Canini, and Richt{\'a}rik]{horvath2019natural}
Samuel Horv{\'a}th, Chen-Yu Ho, Ludovit Horvath, Atal~Narayan Sahu, Marco
  Canini, and Peter Richt{\'a}rik.
\newblock Natural compression for distributed deep learning.
\newblock \emph{arXiv preprint arXiv:1905.10988}, 2019{\natexlab{a}}.

\bibitem[Horv{\'a}th et~al.(2019{\natexlab{b}})Horv{\'a}th, Kovalev,
  Mishchenko, Stich, and Richt{\'a}rik]{horvath2019stochastic}
Samuel Horv{\'a}th, Dmitry Kovalev, Konstantin Mishchenko, Sebastian Stich, and
  Peter Richt{\'a}rik.
\newblock Stochastic distributed learning with gradient quantization and
  variance reduction.
\newblock \emph{arXiv preprint arXiv:1904.05115}, 2019{\natexlab{b}}.

\bibitem[Horváth et~al.(2022)Horváth, Lei, Richtárik, and Jordan]{geomsarah}
Samuel Horváth, Lihua Lei, Peter Richtárik, and Michael~I. Jordan.
\newblock Adaptivity of stochastic gradient methods for nonconvex optimization.
\newblock \emph{SIAM Journal on Mathematics of Data Science}, 4\penalty0
  (2):\penalty0 634--648, 2022.
\newblock \doi{10.1137/21M1394308}.

\bibitem[Islamov et~al.(2021)Islamov, Qian, and
  Richt{\'a}rik]{islamov2021distributed}
Rustem Islamov, Xun Qian, and Peter Richt{\'a}rik.
\newblock Distributed second order methods with fast rates and compressed
  communication.
\newblock In \emph{International Conference on Machine Learning}, pp.\
  4617--4628. PMLR, 2021.

\bibitem[Johnson \& Zhang(2013)Johnson and Zhang]{johnson2013accelerating}
Rie Johnson and Tong Zhang.
\newblock Accelerating stochastic gradient descent using predictive variance
  reduction.
\newblock \emph{Advances in neural information processing systems}, 26, 2013.

\bibitem[Kairouz et~al.(2021)Kairouz, McMahan, Avent, Bellet, Bennis, Bhagoji,
  Bonawitz, Charles, Cormode, Cummings, et~al.]{kairouz2021advances}
Peter Kairouz, H~Brendan McMahan, Brendan Avent, Aur{\'e}lien Bellet, Mehdi
  Bennis, Arjun~Nitin Bhagoji, Kallista Bonawitz, Zachary Charles, Graham
  Cormode, Rachel Cummings, et~al.
\newblock Advances and open problems in federated learning.
\newblock \emph{Foundations and Trends{\textregistered} in Machine Learning},
  14\penalty0 (1--2):\penalty0 1--210, 2021.

\bibitem[Karimi et~al.(2016)Karimi, Nutini, and Schmidt]{karimi2016linear}
Hamed Karimi, Julie Nutini, and Mark Schmidt.
\newblock Linear convergence of gradient and proximal-gradient methods under
  the polyak-{\l}ojasiewicz condition.
\newblock In \emph{Joint European Conference on Machine Learning and Knowledge
  Discovery in Databases}, pp.\  795--811. Springer, 2016.

\bibitem[Karimireddy et~al.(2019)Karimireddy, Rebjock, Stich, and
  Jaggi]{karimireddy2019error}
Sai~Praneeth Karimireddy, Quentin Rebjock, Sebastian Stich, and Martin Jaggi.
\newblock Error feedback fixes signsgd and other gradient compression schemes.
\newblock In \emph{International Conference on Machine Learning}, pp.\
  3252--3261. PMLR, 2019.

\bibitem[Karimireddy et~al.(2021)Karimireddy, He, and
  Jaggi]{karimireddy2021learning}
Sai~Praneeth Karimireddy, Lie He, and Martin Jaggi.
\newblock Learning from history for byzantine robust optimization.
\newblock In \emph{International Conference on Machine Learning}, pp.\
  5311--5319. PMLR, 2021.

\bibitem[Karimireddy et~al.(2022)Karimireddy, He, and
  Jaggi]{karimireddy2020byzantine}
Sai~Praneeth Karimireddy, Lie He, and Martin Jaggi.
\newblock Byzantine-robust learning on heterogeneous datasets via bucketing.
\newblock \emph{International Conference on Learning Representations}, 2022.

\bibitem[Kijsipongse et~al.(2018)Kijsipongse, Piyatumrong,
  et~al.]{kijsipongse2018hybrid}
Ekasit Kijsipongse, Apivadee Piyatumrong, et~al.
\newblock A hybrid gpu cluster and volunteer computing platform for scalable
  deep learning.
\newblock \emph{The Journal of Supercomputing}, 74\penalty0 (7):\penalty0
  3236--3263, 2018.

\bibitem[Kolesnikov et~al.(2020)Kolesnikov, Beyer, Zhai, Puigcerver, Yung,
  Gelly, and Houlsby]{kolesnikov2020big}
Alexander Kolesnikov, Lucas Beyer, Xiaohua Zhai, Joan Puigcerver, Jessica Yung,
  Sylvain Gelly, and Neil Houlsby.
\newblock Big transfer (bit): General visual representation learning.
\newblock In \emph{European conference on computer vision}, pp.\  491--507.
  Springer, 2020.

\bibitem[Koloskova et~al.(2019)Koloskova, Stich, and
  Jaggi]{koloskova2019decentralized}
Anastasia Koloskova, Sebastian Stich, and Martin Jaggi.
\newblock Decentralized stochastic optimization and gossip algorithms with
  compressed communication.
\newblock In \emph{International Conference on Machine Learning}, pp.\
  3478--3487. PMLR, 2019.

\bibitem[Kone\v{c}n\'{y} et~al.(2016)Kone\v{c}n\'{y}, McMahan, Yu,
  Richt\'{a}rik, Suresh, and Bacon]{FEDLEARN}
Jakub Kone\v{c}n\'{y}, H.~Brendan McMahan, Felix Yu, Peter Richt\'{a}rik,
  Ananda~Theertha Suresh, and Dave Bacon.
\newblock Federated learning: strategies for improving communication
  efficiency.
\newblock In \emph{NIPS Private Multi-Party Machine Learning Workshop}, 2016.

\bibitem[Kovalev et~al.(2021)Kovalev, Koloskova, Jaggi, Richtarik, and
  Stich]{kovalev2021linearly}
Dmitry Kovalev, Anastasia Koloskova, Martin Jaggi, Peter Richtarik, and
  Sebastian Stich.
\newblock A linearly convergent algorithm for decentralized optimization:
  Sending less bits for free!
\newblock In \emph{International Conference on Artificial Intelligence and
  Statistics}, pp.\  4087--4095. PMLR, 2021.

\bibitem[Lamport et~al.(1982)Lamport, Shostak, and Pease]{lamport1982byzantine}
Leslie Lamport, Robert Shostak, and Marshall Pease.
\newblock The byzantine generals problem.
\newblock \emph{ACM Transactions on Programming Languages and Systems},
  4\penalty0 (3):\penalty0 382--401, 1982.

\bibitem[Lan \& Zhou(2018)Lan and Zhou]{lan2018optimal}
Guanghui Lan and Yi~Zhou.
\newblock An optimal randomized incremental gradient method.
\newblock \emph{Mathematical programming}, 171\penalty0 (1):\penalty0 167--215,
  2018.

\bibitem[Lan et~al.(2019)Lan, Li, and Zhou]{lan2019unified}
Guanghui Lan, Zhize Li, and Yi~Zhou.
\newblock A unified variance-reduced accelerated gradient method for convex
  optimization.
\newblock \emph{Advances in Neural Information Processing Systems}, 32, 2019.

\bibitem[Li(2020)]{gpt3costlambda}
Chuan Li.
\newblock Demystifying gpt-3 language model: A technical overview, 2020.
\newblock "\url{https://lambdalabs.com/blog/demystifying-gpt-3}".

\bibitem[Li \& Richt{\'a}rik(2021)Li and Richt{\'a}rik]{li2021canita}
Zhize Li and Peter Richt{\'a}rik.
\newblock Canita: Faster rates for distributed convex optimization with
  communication compression.
\newblock \emph{Advances in Neural Information Processing Systems}, 34, 2021.

\bibitem[Li et~al.(2020)Li, Kovalev, Qian, and Richtarik]{li2020acceleration}
Zhize Li, Dmitry Kovalev, Xun Qian, and Peter Richtarik.
\newblock Acceleration for compressed gradient descent in distributed and
  federated optimization.
\newblock In \emph{International Conference on Machine Learning}, pp.\
  5895--5904. PMLR, 2020.

\bibitem[Li et~al.(2021)Li, Bao, Zhang, and Richt{\'a}rik]{li2021page}
Zhize Li, Hongyan Bao, Xiangliang Zhang, and Peter Richt{\'a}rik.
\newblock {PAGE}: A simple and optimal probabilistic gradient estimator for
  nonconvex optimization.
\newblock In \emph{International Conference on Machine Learning}, pp.\
  6286--6295. PMLR, 2021.

\bibitem[Liu et~al.(2022)Liu, Zhu, and Belkin]{liu2022loss}
Chaoyue Liu, Libin Zhu, and Mikhail Belkin.
\newblock Loss landscapes and optimization in over-parameterized non-linear
  systems and neural networks.
\newblock \emph{Applied and Computational Harmonic Analysis}, 2022.

\bibitem[{\L}ojasiewicz(1963)]{lojasiewicz1963topological}
Stanislaw {\L}ojasiewicz.
\newblock A topological property of real analytic subsets.
\newblock \emph{Coll. du CNRS, Les {\'e}quations aux d{\'e}riv{\'e}es
  partielles}, 117:\penalty0 87--89, 1963.

\bibitem[Lyu et~al.(2020)Lyu, Yu, Ma, Sun, Zhao, Yang, and Yu]{lyu2020privacy}
Lingjuan Lyu, Han Yu, Xingjun Ma, Lichao Sun, Jun Zhao, Qiang Yang, and
  Philip~S Yu.
\newblock Privacy and robustness in federated learning: Attacks and defenses.
\newblock \emph{arXiv preprint arXiv:2012.06337}, 2020.

\bibitem[McMahan et~al.(2017)McMahan, Moore, Ramage, Hampson, and
  y~Arcas]{mcmahan2017communication}
Brendan McMahan, Eider Moore, Daniel Ramage, Seth Hampson, and Blaise~Aguera
  y~Arcas.
\newblock Communication-efficient learning of deep networks from decentralized
  data.
\newblock In \emph{Artificial intelligence and statistics}, pp.\  1273--1282.
  PMLR, 2017.

\bibitem[Mikami et~al.(2018)Mikami, Suganuma, Tanaka, Kageyama,
  et~al.]{mikami2018massively}
Hiroaki Mikami, Hisahiro Suganuma, Yoshiki Tanaka, Yuichi Kageyama, et~al.
\newblock Massively distributed sgd: Imagenet/resnet-50 training in a flash.
\newblock \emph{arXiv preprint arXiv:1811.05233}, 2018.

\bibitem[Mishchenko et~al.(2019)Mishchenko, Gorbunov, Tak{\'a}{\v{c}}, and
  Richt{\'a}rik]{mishchenko2019distributed}
Konstantin Mishchenko, Eduard Gorbunov, Martin Tak{\'a}{\v{c}}, and Peter
  Richt{\'a}rik.
\newblock Distributed learning with compressed gradient differences.
\newblock \emph{arXiv preprint arXiv:1901.09269}, 2019.

\bibitem[Nesterov(2012)]{nesterov2012efficiency}
Yu~Nesterov.
\newblock Efficiency of coordinate descent methods on huge-scale optimization
  problems.
\newblock \emph{SIAM Journal on Optimization}, 22\penalty0 (2):\penalty0
  341--362, 2012.

\bibitem[Nguyen et~al.(2018)Nguyen, Nguyen, Dijk, Richt{\'a}rik, Scheinberg,
  and Tak{\'a}c]{nguyen2018sgd}
Lam Nguyen, Phuong~Ha Nguyen, Marten Dijk, Peter Richt{\'a}rik, Katya
  Scheinberg, and Martin Tak{\'a}c.
\newblock Sgd and hogwild! convergence without the bounded gradients
  assumption.
\newblock In \emph{International Conference on Machine Learning}, pp.\
  3750--3758. PMLR, 2018.

\bibitem[Nguyen et~al.(2017)Nguyen, Liu, Scheinberg, and
  Tak{\'a}{\v{c}}]{nguyen2017sarah}
Lam~M Nguyen, Jie Liu, Katya Scheinberg, and Martin Tak{\'a}{\v{c}}.
\newblock Sarah: A novel method for machine learning problems using stochastic
  recursive gradient.
\newblock In \emph{International Conference on Machine Learning}, pp.\
  2613--2621. PMLR, 2017.

\bibitem[Paszke et~al.(2019)Paszke, Gross, Massa, Lerer, Bradbury, Chanan,
  Killeen, Lin, Gimelshein, Antiga, et~al.]{paszke2019pytorch}
Adam Paszke, Sam Gross, Francisco Massa, Adam Lerer, James Bradbury, Gregory
  Chanan, Trevor Killeen, Zeming Lin, Natalia Gimelshein, Luca Antiga, et~al.
\newblock Pytorch: An imperative style, high-performance deep learning library.
\newblock \emph{Advances in neural information processing systems}, 32, 2019.

\bibitem[Patarasuk \& Yuan(2009)Patarasuk and Yuan]{patarasuk2009bandwidth}
Pitch Patarasuk and Xin Yuan.
\newblock Bandwidth optimal all-reduce algorithms for clusters of workstations.
\newblock \emph{Journal of Parallel and Distributed Computing}, 69\penalty0
  (2):\penalty0 117--124, 2009.

\bibitem[Peng et~al.(2021)Peng, Li, and Ling]{peng2021byzantine}
Jie Peng, Weiyu Li, and Qing Ling.
\newblock Byzantine-robust decentralized stochastic optimization over static
  and time-varying networks.
\newblock \emph{Signal Processing}, 183:\penalty0 108020, 2021.

\bibitem[Philippenko \& Dieuleveut(2021)Philippenko and
  Dieuleveut]{philippenko2021preserved}
Constantin Philippenko and Aymeric Dieuleveut.
\newblock Preserved central model for faster bidirectional compression in
  distributed settings.
\newblock \emph{Advances in Neural Information Processing Systems}, 34, 2021.

\bibitem[Pillutla et~al.(2022)Pillutla, Kakade, and
  Harchaoui]{pillutla2022robust}
Krishna Pillutla, Sham~M Kakade, and Zaid Harchaoui.
\newblock Robust aggregation for federated learning.
\newblock \emph{IEEE Transactions on Signal Processing}, 70:\penalty0
  1142--1154, 2022.

\bibitem[Polyak(1963)]{polyak1963gradient}
Boris~T Polyak.
\newblock Gradient methods for the minimisation of functionals.
\newblock \emph{USSR Computational Mathematics and Mathematical Physics},
  3\penalty0 (4):\penalty0 864--878, 1963.

\bibitem[Polyak(1964)]{polyak1964some}
Boris~T Polyak.
\newblock Some methods of speeding up the convergence of iteration methods.
\newblock \emph{Ussr computational mathematics and mathematical physics},
  4\penalty0 (5):\penalty0 1--17, 1964.

\bibitem[Qian et~al.(2019)Qian, Qu, and Richt{\'a}rik]{qian2019saga}
Xun Qian, Zheng Qu, and Peter Richt{\'a}rik.
\newblock {SAGA} with arbitrary sampling.
\newblock In \emph{International Conference on Machine Learning}, pp.\
  5190--5199. PMLR, 2019.

\bibitem[Qian et~al.(2021{\natexlab{a}})Qian, Qu, and
  Richt{\'a}rik]{qian2021svrg}
Xun Qian, Zheng Qu, and Peter Richt{\'a}rik.
\newblock {L-SVRG} and {L-Katyusha} with arbitrary sampling.
\newblock \emph{Journal of Machine Learning Research}, 22:\penalty0 1--49,
  2021{\natexlab{a}}.

\bibitem[Qian et~al.(2021{\natexlab{b}})Qian, Richt{\'a}rik, and
  Zhang]{qian2021error}
Xun Qian, Peter Richt{\'a}rik, and Tong Zhang.
\newblock Error compensated distributed sgd can be accelerated.
\newblock \emph{Advances in Neural Information Processing Systems}, 34,
  2021{\natexlab{b}}.

\bibitem[Qu \& Richt{\'a}rik(2016)Qu and Richt{\'a}rik]{qu2016coordinate}
Zheng Qu and Peter Richt{\'a}rik.
\newblock Coordinate descent with arbitrary sampling {I}: Algorithms and
  complexity.
\newblock \emph{Optimization Methods and Software}, 31\penalty0 (5):\penalty0
  829--857, 2016.

\bibitem[Rajput et~al.(2019)Rajput, Wang, Charles, and
  Papailiopoulos]{rajput2019detox}
Shashank Rajput, Hongyi Wang, Zachary Charles, and Dimitris Papailiopoulos.
\newblock Detox: A redundancy-based framework for faster and more robust
  gradient aggregation.
\newblock \emph{Advances in Neural Information Processing Systems}, 32, 2019.

\bibitem[Regatti et~al.(2020)Regatti, Chen, and Gupta]{regatti2020bygars}
Jayanth Regatti, Hao Chen, and Abhishek Gupta.
\newblock {ByGARS}: Byzantine {SGD} with arbitrary number of attackers.
\newblock \emph{arXiv preprint arXiv:2006.13421}, 2020.

\bibitem[Richt{\'a}rik \& Tak{\'a}{\v{c}}(2016)Richt{\'a}rik and
  Tak{\'a}{\v{c}}]{richtarik2016optimal}
Peter Richt{\'a}rik and Martin Tak{\'a}{\v{c}}.
\newblock On optimal probabilities in stochastic coordinate descent methods.
\newblock \emph{Optimization Letters}, 10\penalty0 (6):\penalty0 1233--1243,
  2016.

\bibitem[Richt{\'a}rik et~al.(2021)Richt{\'a}rik, Sokolov, and
  Fatkhullin]{richtarik2021ef21}
Peter Richt{\'a}rik, Igor Sokolov, and Ilyas Fatkhullin.
\newblock {E}{F}21: A new, simpler, theoretically better, and practically
  faster error feedback.
\newblock \emph{Advances in Neural Information Processing Systems}, 34, 2021.

\bibitem[Roberts(1962)]{roberts1962picture}
Lawrence Roberts.
\newblock Picture coding using pseudo-random noise.
\newblock \emph{IRE Transactions on Information Theory}, 8\penalty0
  (2):\penalty0 145--154, 1962.

\bibitem[Rodr{\'\i}guez-Barroso et~al.(2020)Rodr{\'\i}guez-Barroso,
  Mart{\'\i}nez-C{\'a}mara, Luz{\'o}n, Seco, Veganzones, and
  Herrera]{rodriguez2020dynamic}
Nuria Rodr{\'\i}guez-Barroso, Eugenio Mart{\'\i}nez-C{\'a}mara, M~Luz{\'o}n,
  Gerardo~Gonz{\'a}lez Seco, Miguel~{\'A}ngel Veganzones, and Francisco
  Herrera.
\newblock Dynamic federated learning model for identifying adversarial clients.
\newblock \emph{arXiv preprint arXiv:2007.15030}, 2020.

\bibitem[Safaryan et~al.(2021)Safaryan, Islamov, Qian, and
  Richt{\'a}rik]{safaryan2021fednl}
Mher Safaryan, Rustem Islamov, Xun Qian, and Peter Richt{\'a}rik.
\newblock Fednl: Making newton-type methods applicable to federated learning.
\newblock \emph{arXiv preprint arXiv:2106.02969}, 2021.

\bibitem[Sahu et~al.(2021)Sahu, Dutta, M~Abdelmoniem, Banerjee, Canini, and
  Kalnis]{sahu2021rethinking}
Atal Sahu, Aritra Dutta, Ahmed M~Abdelmoniem, Trambak Banerjee, Marco Canini,
  and Panos Kalnis.
\newblock Rethinking gradient sparsification as total error minimization.
\newblock \emph{Advances in Neural Information Processing Systems}, 34, 2021.

\bibitem[Schmidt et~al.(2017)Schmidt, Le~Roux, and Bach]{schmidt2017minimizing}
Mark Schmidt, Nicolas Le~Roux, and Francis Bach.
\newblock Minimizing finite sums with the stochastic average gradient.
\newblock \emph{Mathematical Programming}, 162\penalty0 (1):\penalty0 83--112,
  2017.

\bibitem[Seide et~al.(2014)Seide, Fu, Droppo, Li, and Yu]{seide20141}
Frank Seide, Hao Fu, Jasha Droppo, Gang Li, and Dong Yu.
\newblock 1-bit stochastic gradient descent and its application to
  data-parallel distributed training of speech dnns.
\newblock In \emph{Fifteenth Annual Conference of the International Speech
  Communication Association}. Citeseer, 2014.

\bibitem[Stich et~al.(2018)Stich, Cordonnier, and Jaggi]{stich2018sparsified}
Sebastian~U Stich, Jean-Baptiste Cordonnier, and Martin Jaggi.
\newblock Sparsified sgd with memory.
\newblock \emph{Advances in Neural Information Processing Systems}, 31, 2018.

\bibitem[Su \& Vaidya(2016)Su and Vaidya]{su2016fault}
Lili Su and Nitin~H Vaidya.
\newblock Fault-tolerant multi-agent optimization: optimal iterative
  distributed algorithms.
\newblock In \emph{Proceedings of the 2016 ACM symposium on principles of
  distributed computing}, pp.\  425--434, 2016.

\bibitem[Suresh et~al.(2017)Suresh, Felix, Kumar, and
  McMahan]{suresh2017distributed}
Ananda~Theertha Suresh, X~Yu Felix, Sanjiv Kumar, and H~Brendan McMahan.
\newblock Distributed mean estimation with limited communication.
\newblock In \emph{International Conference on Machine Learning}, pp.\
  3329--3337. PMLR, 2017.

\bibitem[Szlendak et~al.(2021)Szlendak, Tyurin, and
  Richt{\'a}rik]{szlendak2021permutation}
Rafa{\l} Szlendak, Alexander Tyurin, and Peter Richt{\'a}rik.
\newblock Permutation compressors for provably faster distributed nonconvex
  optimization.
\newblock \emph{arXiv preprint arXiv:2110.03300}, 2021.

\bibitem[Tang et~al.(2019)Tang, Yu, Lian, Zhang, and
  Liu]{tang2019doublesqueeze}
Hanlin Tang, Chen Yu, Xiangru Lian, Tong Zhang, and Ji~Liu.
\newblock {D}ouble{S}queeze: Parallel stochastic gradient descent with
  double-pass error-compensated compression.
\newblock In \emph{International Conference on Machine Learning}, pp.\
  6155--6165, 2019.

\bibitem[Vogels et~al.(2019)Vogels, Karimireddy, and Jaggi]{vogels2019powersgd}
Thijs Vogels, Sai~Praneeth Karimireddy, and Martin Jaggi.
\newblock Powersgd: Practical low-rank gradient compression for distributed
  optimization.
\newblock \emph{Advances in Neural Information Processing Systems}, 32, 2019.

\bibitem[Weiszfeld(1937)]{weiszfeld1937point}
Endre Weiszfeld.
\newblock Sur le point pour lequel la somme des distances de n points
  donn{\'e}s est minimum.
\newblock \emph{Tohoku Mathematical Journal, First Series}, 43:\penalty0
  355--386, 1937.

\bibitem[Wen et~al.(2017)Wen, Xu, Yan, Wu, Wang, Chen, and Li]{wen2017terngrad}
Wei Wen, Cong Xu, Feng Yan, Chunpeng Wu, Yandan Wang, Yiran Chen, and Hai Li.
\newblock Terngrad: Ternary gradients to reduce communication in distributed
  deep learning.
\newblock \emph{Advances in neural information processing systems}, 30, 2017.

\bibitem[Wu et~al.(2020)Wu, Ling, Chen, and Giannakis]{wu2020federated}
Zhaoxian Wu, Qing Ling, Tianyi Chen, and Georgios~B Giannakis.
\newblock Federated variance-reduced stochastic gradient descent with
  robustness to byzantine attacks.
\newblock \emph{IEEE Transactions on Signal Processing}, 68:\penalty0
  4583--4596, 2020.

\bibitem[Xie et~al.(2020)Xie, Koyejo, and Gupta]{xie2020fall}
Cong Xie, Oluwasanmi Koyejo, and Indranil Gupta.
\newblock Fall of empires: Breaking byzantine-tolerant sgd by inner product
  manipulation.
\newblock In \emph{Uncertainty in Artificial Intelligence}, pp.\  261--270.
  PMLR, 2020.

\bibitem[Xu \& Lyu(2020)Xu and Lyu]{xu2020towards}
Xinyi Xu and Lingjuan Lyu.
\newblock Towards building a robust and fair federated learning system.
\newblock \emph{arXiv preprint arXiv:2011.10464}, 2020.

\bibitem[Yang \& Bajwa(2019{\natexlab{a}})Yang and Bajwa]{yang2019bridge}
Zhixiong Yang and Waheed~U Bajwa.
\newblock Bridge: Byzantine-resilient decentralized gradient descent.
\newblock \emph{arXiv preprint arXiv:1908.08098}, 2019{\natexlab{a}}.

\bibitem[Yang \& Bajwa(2019{\natexlab{b}})Yang and Bajwa]{yang2019byrdie}
Zhixiong Yang and Waheed~U Bajwa.
\newblock Byrdie: Byzantine-resilient distributed coordinate descent for
  decentralized learning.
\newblock \emph{IEEE Transactions on Signal and Information Processing over
  Networks}, 5\penalty0 (4):\penalty0 611--627, 2019{\natexlab{b}}.

\bibitem[Yin et~al.(2018)Yin, Chen, Kannan, and Bartlett]{yin2018byzantine}
Dong Yin, Yudong Chen, Ramchandran Kannan, and Peter Bartlett.
\newblock Byzantine-robust distributed learning: Towards optimal statistical
  rates.
\newblock In \emph{International Conference on Machine Learning}, pp.\
  5650--5659. PMLR, 2018.

\bibitem[You et~al.(2020)You, Li, Reddi, Hseu, Kumar, Bhojanapalli, Song,
  Demmel, Keutzer, and Hsieh]{you2020large}
Yang You, Jing Li, Sashank Reddi, Jonathan Hseu, Sanjiv Kumar, Srinadh
  Bhojanapalli, Xiaodan Song, James Demmel, Kurt Keutzer, and Cho-Jui Hsieh.
\newblock Large batch optimization for deep learning: Training bert in 76
  minutes.
\newblock \emph{International Conference on Learning Representations}, 2020.

\bibitem[Zhu \& Ling(2021)Zhu and Ling]{zhu2021broadcast}
Heng Zhu and Qing Ling.
\newblock Broadcast: Reducing both stochastic and compression noise to
  robustify communication-efficient federated learning.
\newblock \emph{arXiv preprint arXiv:2104.06685}, 2021.

\bibitem[Zinkevich et~al.(2010)Zinkevich, Weimer, Li, and
  Smola]{zinkevich2010parallelized}
Martin Zinkevich, Markus Weimer, Lihong Li, and Alex Smola.
\newblock Parallelized stochastic gradient descent.
\newblock \emph{Advances in neural information processing systems}, 23, 2010.

\end{thebibliography}
\bibliographystyle{iclr2023_conference}

\newpage

\appendix

\tableofcontents

\clearpage

\section{Detailed Related Work}\label{app:detailed_related_work}

\paragraph{\revision{Byzantine}-robustness.} Classical approaches to \revision{Byzantine}-tolerant optimization are based on applying special aggregation rules to \algname{Parallel-SGD}~\citep{blanchard2017machine, chen2017distributed, yin2018byzantine, damaskinos2019aggregathor, guerraoui2018hidden, pillutla2022robust}. It turns out that such defences are vulnerable to the special type of attacks~\citep{baruch2019little, xie2020fall}. Moreover, \citet{karimireddy2021learning} propose a reasonable formalism for describing robust aggregation rules (see Def.~\ref{def:RAgg_def}) and show that almost all previously known defences are not robust according to this formalism. In addition, they propose and analyze new \revision{Byzantine}-tolerant methods based on the usage of Polyak's momentum~\citep{polyak1964some} (\algname{BR-SDGm}) and momentum variance reduction~\citep{cutkosky2019momentum} (\algname{BR-MVR}). This approach is extended to the case of heterogeneous data and aggregators agnostic to the noise level by \citet{karimireddy2020byzantine}, and \citet{he2022byzantine} propose an extension to the decentralized optimization over fixed networks. \citet{gorbunov2021secure} propose an alternative approach based on the usage of AllReduce~\citep{patarasuk2009bandwidth} with additional verifications of correctness and show that their algorithm has complexity not worse than \algname{Parallel-SGD} when the target accuracy is small enough. \citet{wu2020federated} are the first who applied variance reduction mechanism to tolerate \revision{Byzantine} attacks (see the discussion above {\color{blue} \textbf{Q1}}). We also refer reader to \citep{chen2018draco, rajput2019detox, rodriguez2020dynamic, xu2020towards, alistarh2018byzantine, allen2020byzantine, regatti2020bygars, yang2019bridge, yang2019byrdie, gupta2021byzantine, gupta2021byzantine2, peng2021byzantine} for other advances in \revision{Byzantine}-robustness (see the detailed summaries in \citep{lyu2020privacy, gorbunov2021secure}). We further progress the field by obtaining new theoretical SOTA convergence results in our work.

\paragraph{Compressed communications.} Methods with compression are relatively well studied in the literature. The first theoretical results were derived in \citep{alistarh2017qsgd, wen2017terngrad, stich2018sparsified, mishchenko2019distributed}. During the last several years the field has been significantly developed. In particular, compressed methods are analyzed in the conjuction with variance reduction~\citep{horvath2019stochastic, gorbunov2020linearly, danilova2022distributed}, acceleration \citep{li2020acceleration, li2021canita, qian2021error}, decentralized communications~\citep{koloskova2019decentralized, kovalev2021linearly}, local steps~\citep{basu2019qsparse, haddadpour2021federated}, adaptive compression~\citep{faghri2020adaptive}, second-order methods~\citep{islamov2021distributed, safaryan2021fednl}, and min-max optimization~\citep{beznosikov2021distributed, beznosikov2022stochastic}. However, to our knowledge, only one work studies communication compression in the context of \revision{Byzantine}-robustness~\citep{zhu2021broadcast} (see the discussion above {\color{blue} \textbf{Q2}}). Our work makes a further step towards closing this significant gap in the literature.

\paragraph{Variance reduction} is a powerful tool allowing to speed up the convergence of stochastic methods (especially when one needs to achieve a good approximation of the solution). The first variance-reduced methods were proposed by \citet{schmidt2017minimizing, johnson2013accelerating, defazio2014saga}. Optimal variance-reduced methods for (strongly) convex problems are proposed in \citep{lan2018optimal, allen2017katyusha, lan2019unified} and for non-convex optimization in \citep{nguyen2017sarah, fang2018spider, li2021page}. Despite the noticeable attention to these kinds of methods~\citep{gower2020variance}, only a few papers study \revision{Byzantine}-robustness in conjunction with variance reduction \citep{wu2020federated, zhu2021broadcast, karimireddy2021learning}. Moreover, as we mentioned before, the results from \citet{wu2020federated, zhu2021broadcast} are not better than the known ones for non-parallel variance-reduced methods, and \citet{karimireddy2021learning} rely on the uniformly bounded variance assumption, which is hard to achieve in practice. In our work, we circumvent these limitations.

\paragraph{Non-uniform sampling.} Originally proposed for randomized coordinate methods~\citep{nesterov2012efficiency, richtarik2016optimal, qu2016coordinate}, non-uniform sampling is extended in multiple ways to stochastic optimization, e.g., see \citep{horvath2019nonconvex, gower2019sgd, qian2019saga, gorbunov2020linearly, gorbunov2020stochastic, qian2021svrg}. Typically, non-uniform sampling of stochastic gradients allows better dependence on smoothness constants in the theoretical results. Inspired by these advances, we propose the first \revision{Byzantine}-robust optimization method supporting non-uniform sampling of stochastic gradients.

\newpage

\section{Extra Experiments and Experimental details}
\label{app:experiments}

\begin{figure}
\centering
\includegraphics[width=0.195\textwidth]{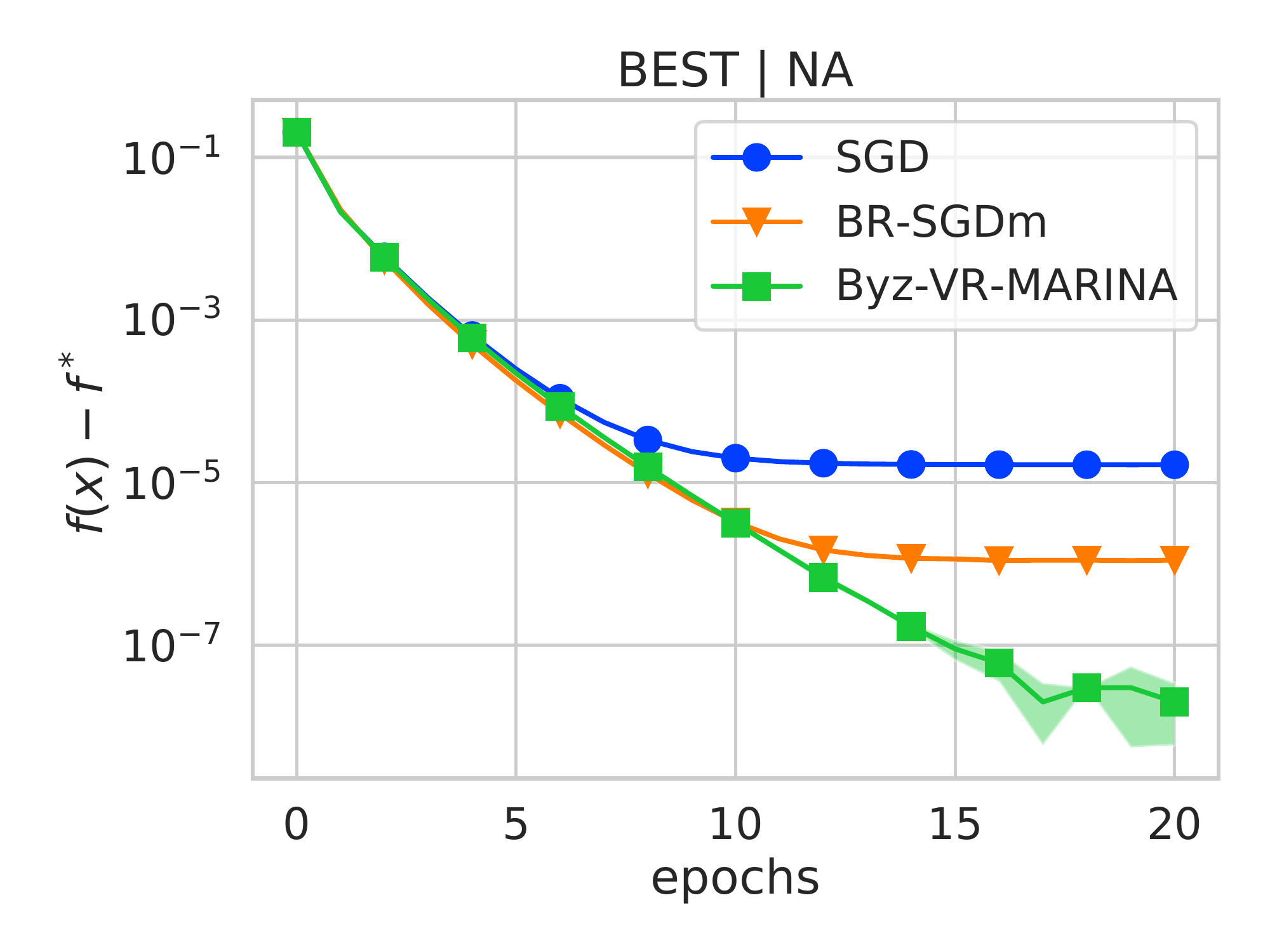}
\includegraphics[width=0.195\textwidth]{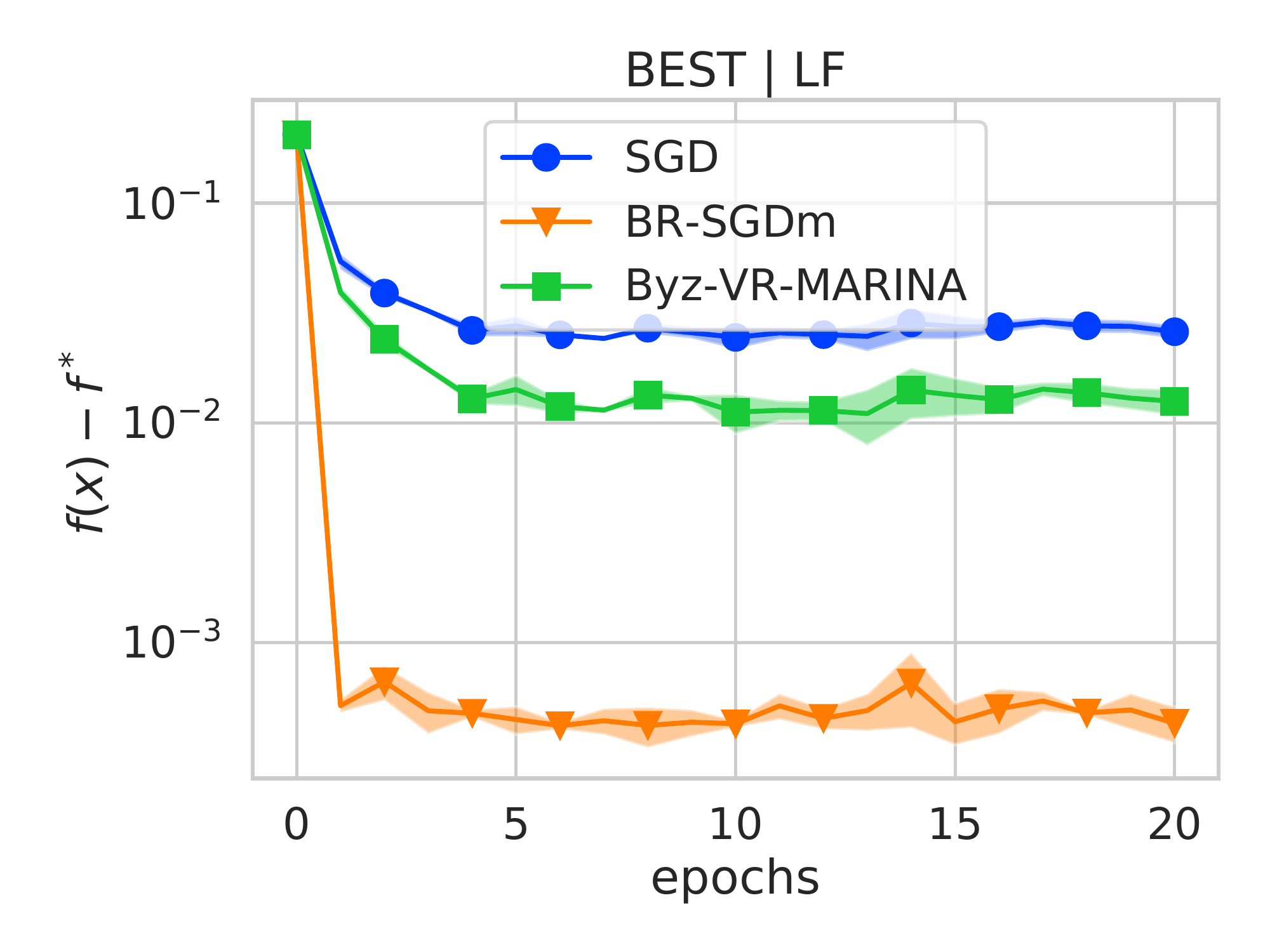}
\includegraphics[width=0.195\textwidth]{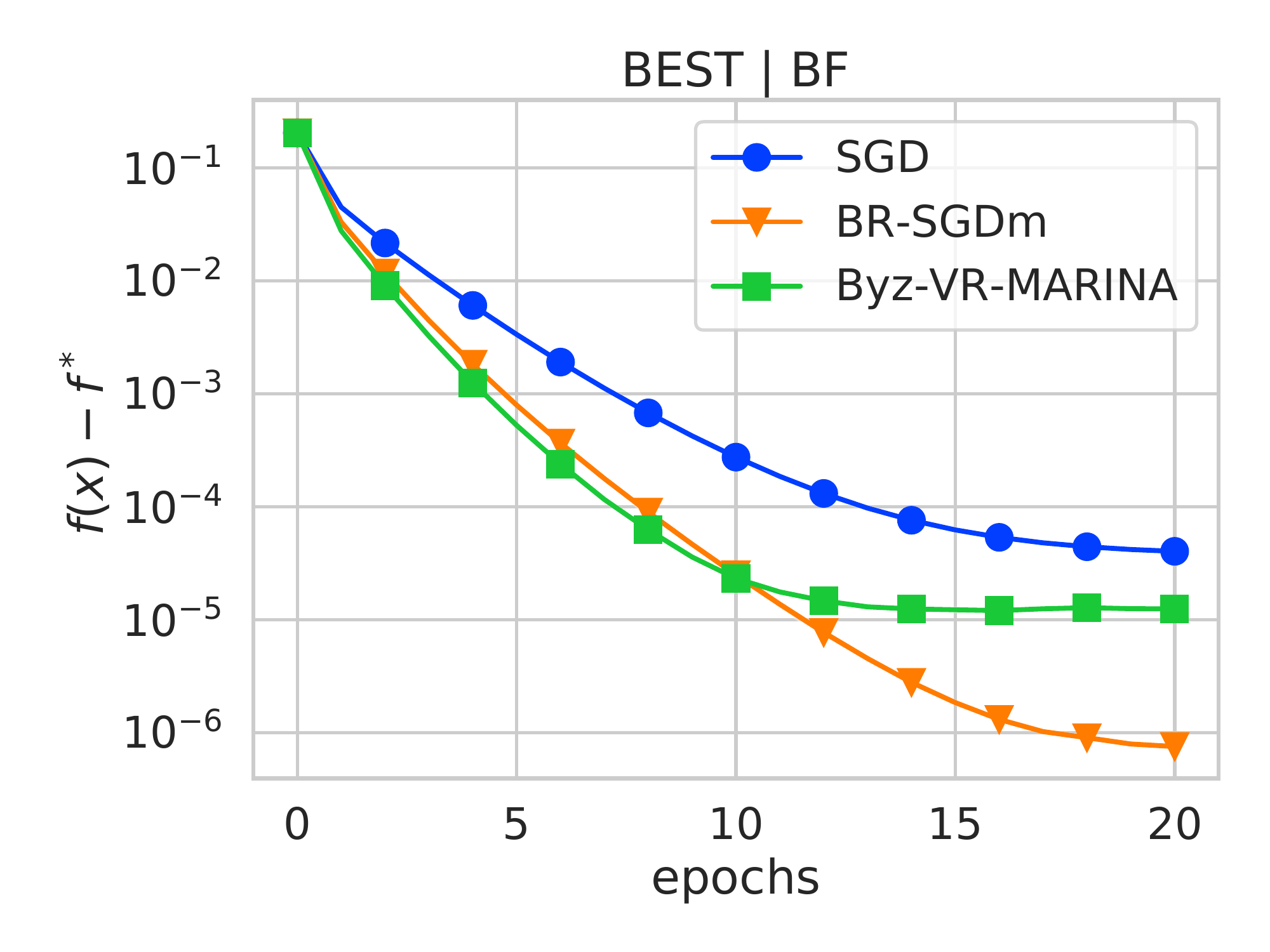}
\includegraphics[width=0.195\textwidth]{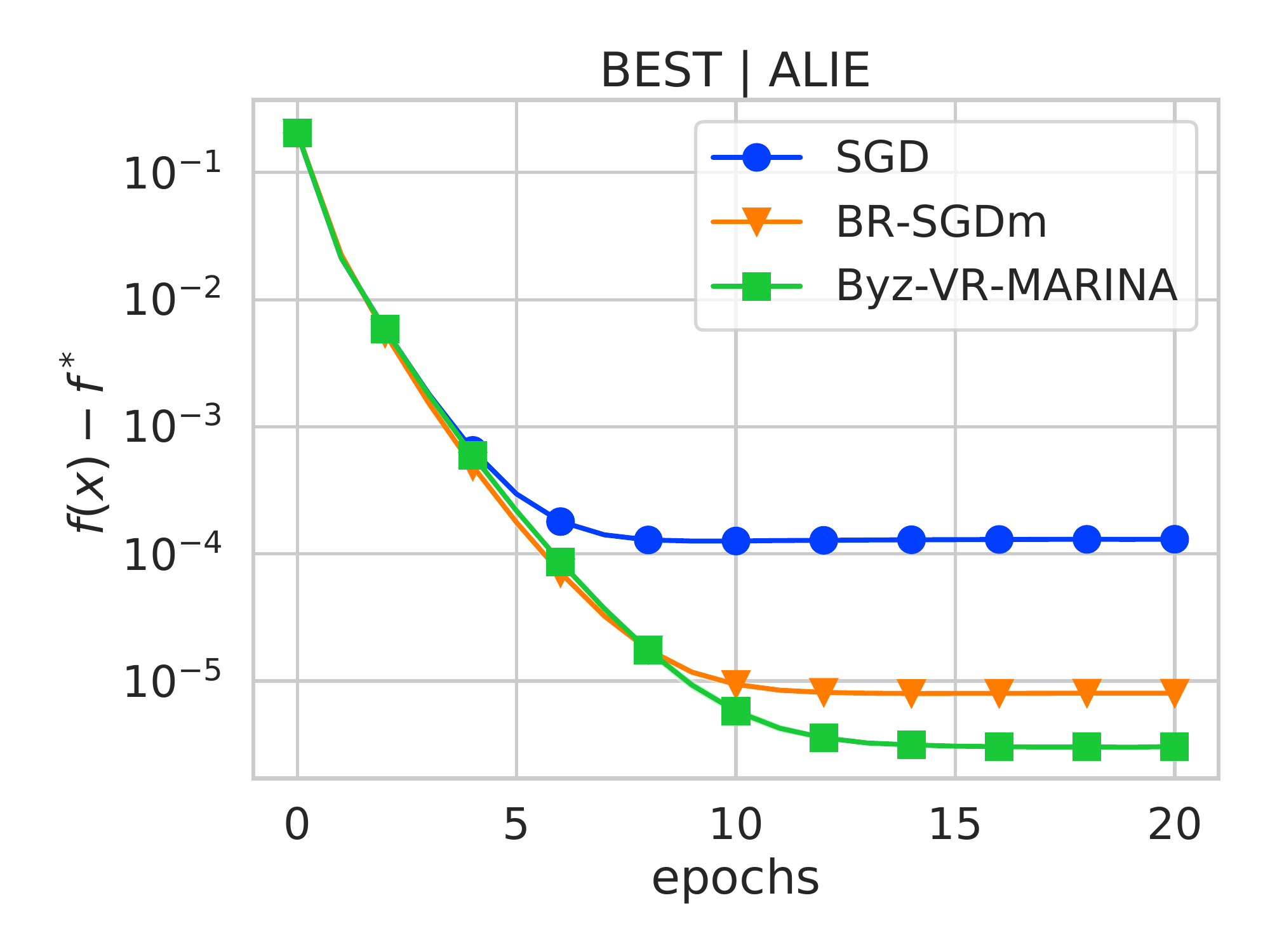}
\includegraphics[width=0.195\textwidth]{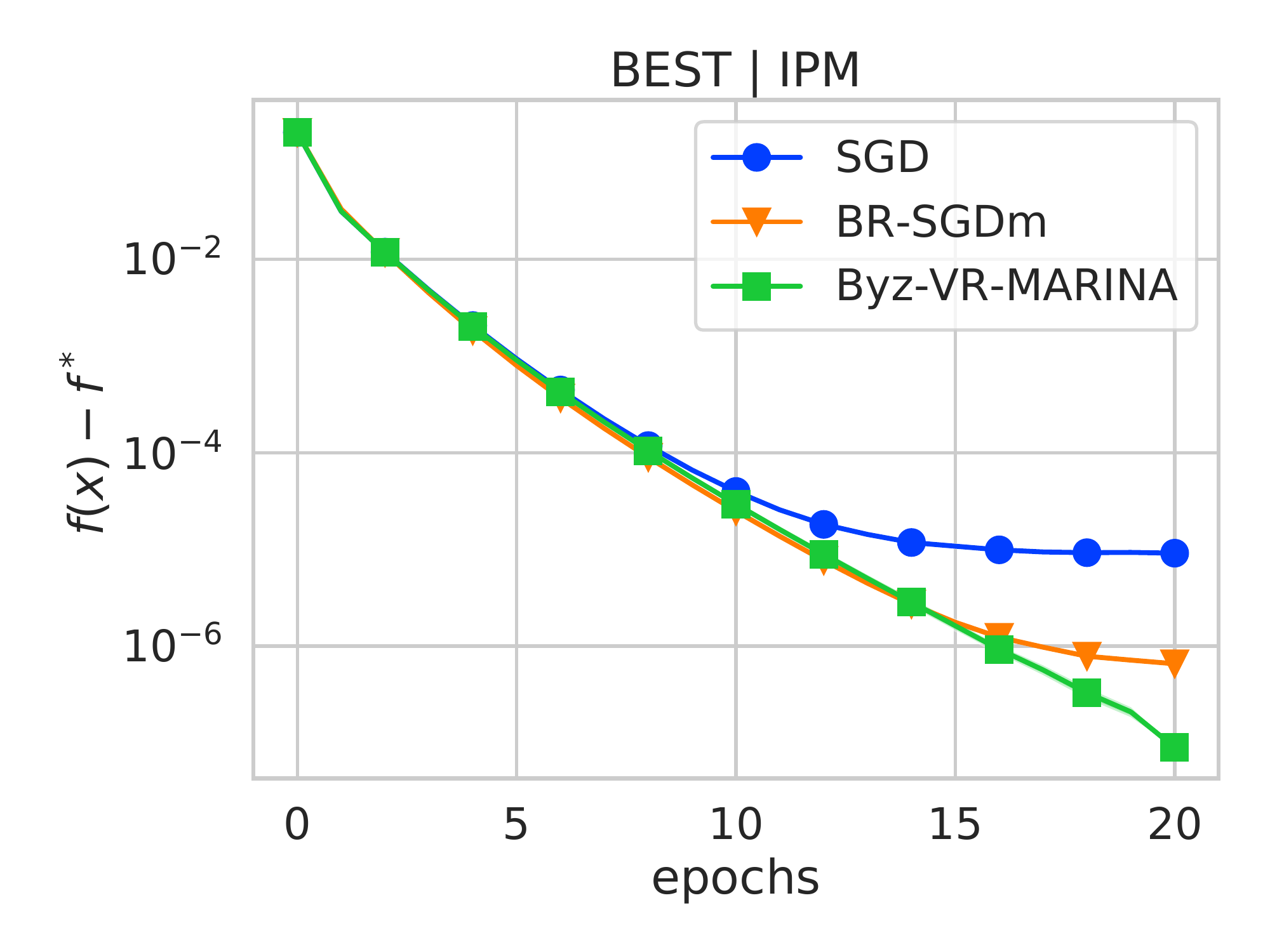}
\\
\includegraphics[width=0.195\textwidth]{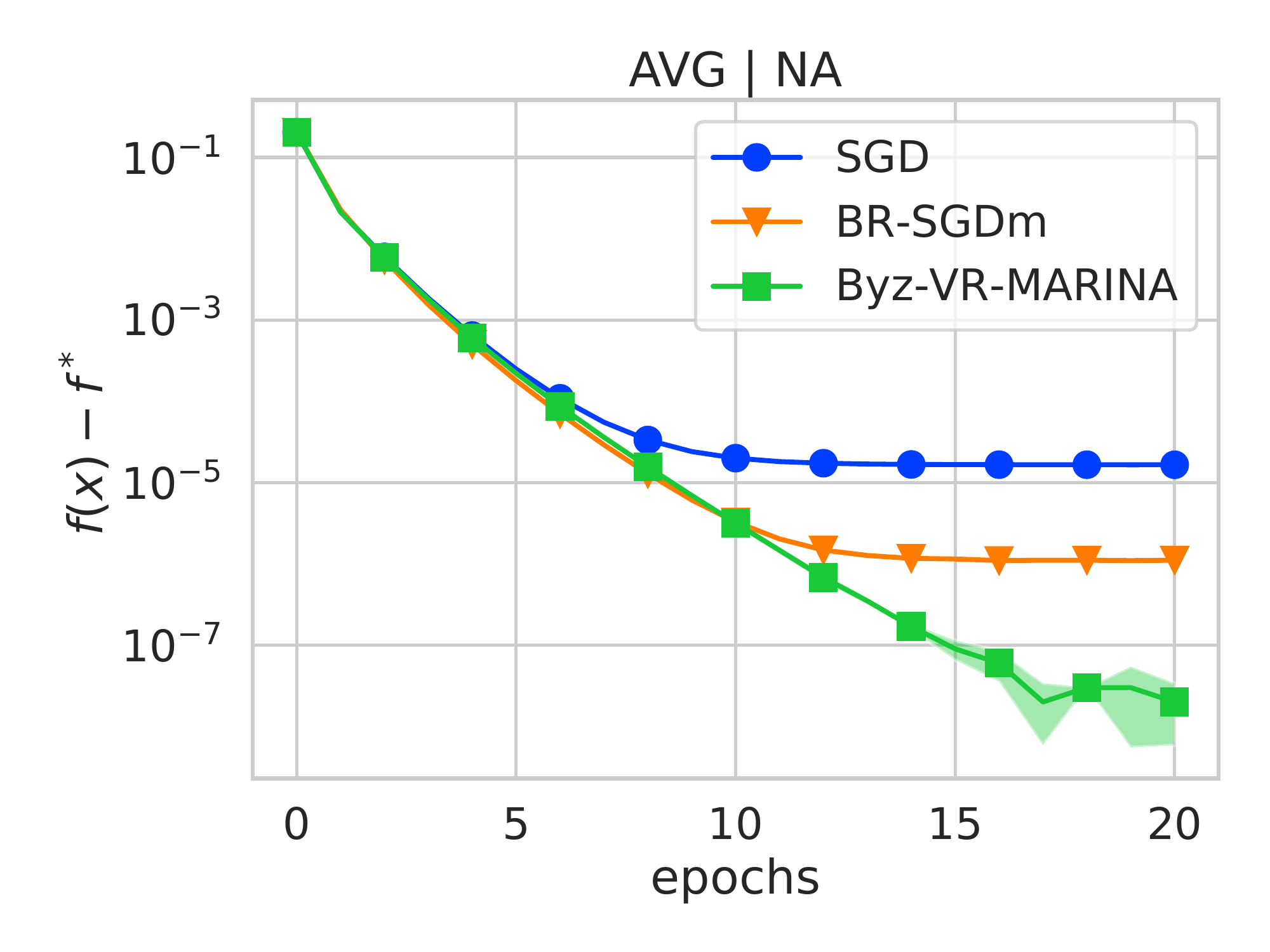}
\includegraphics[width=0.195\textwidth]{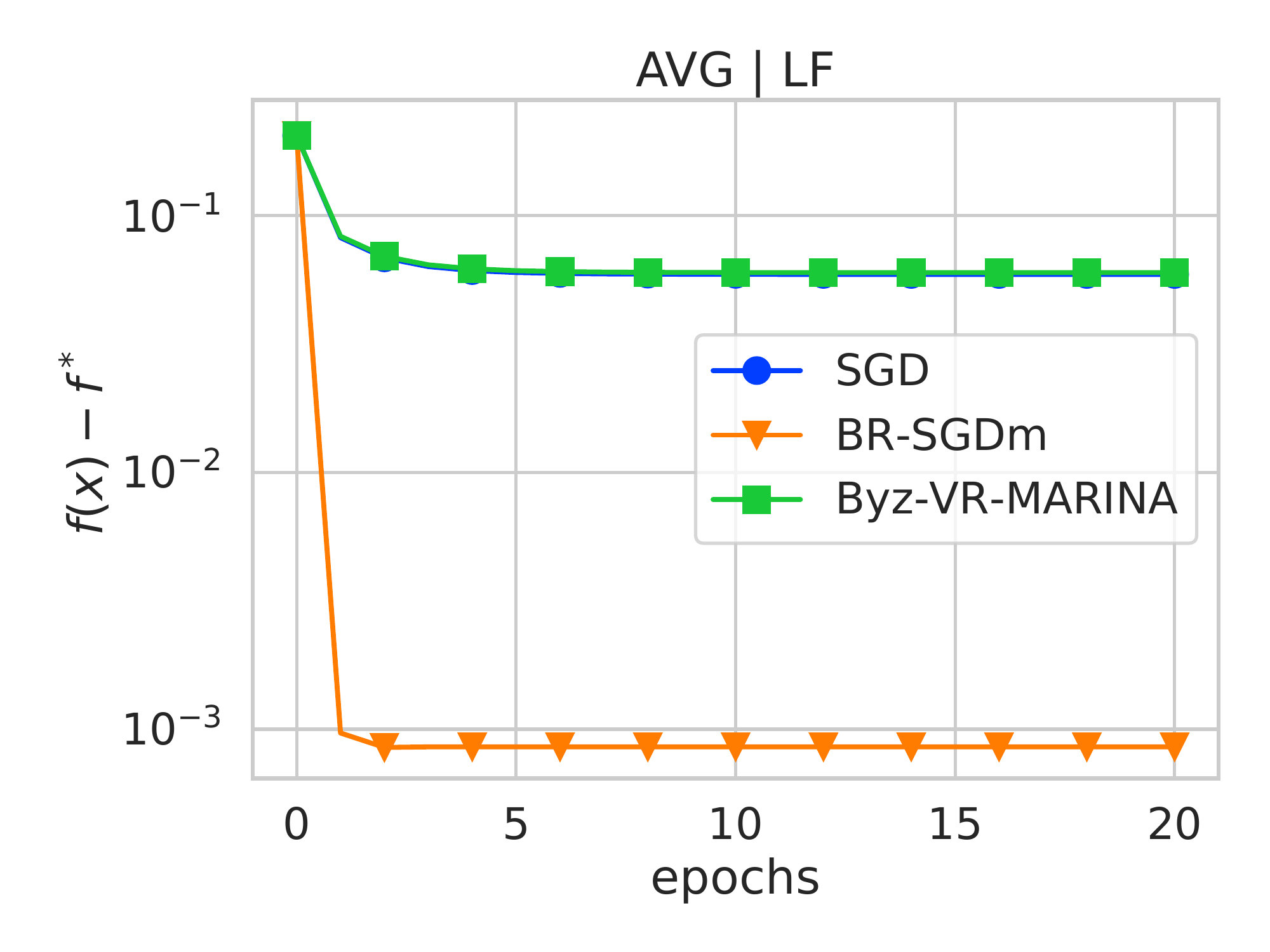}
\includegraphics[width=0.195\textwidth]{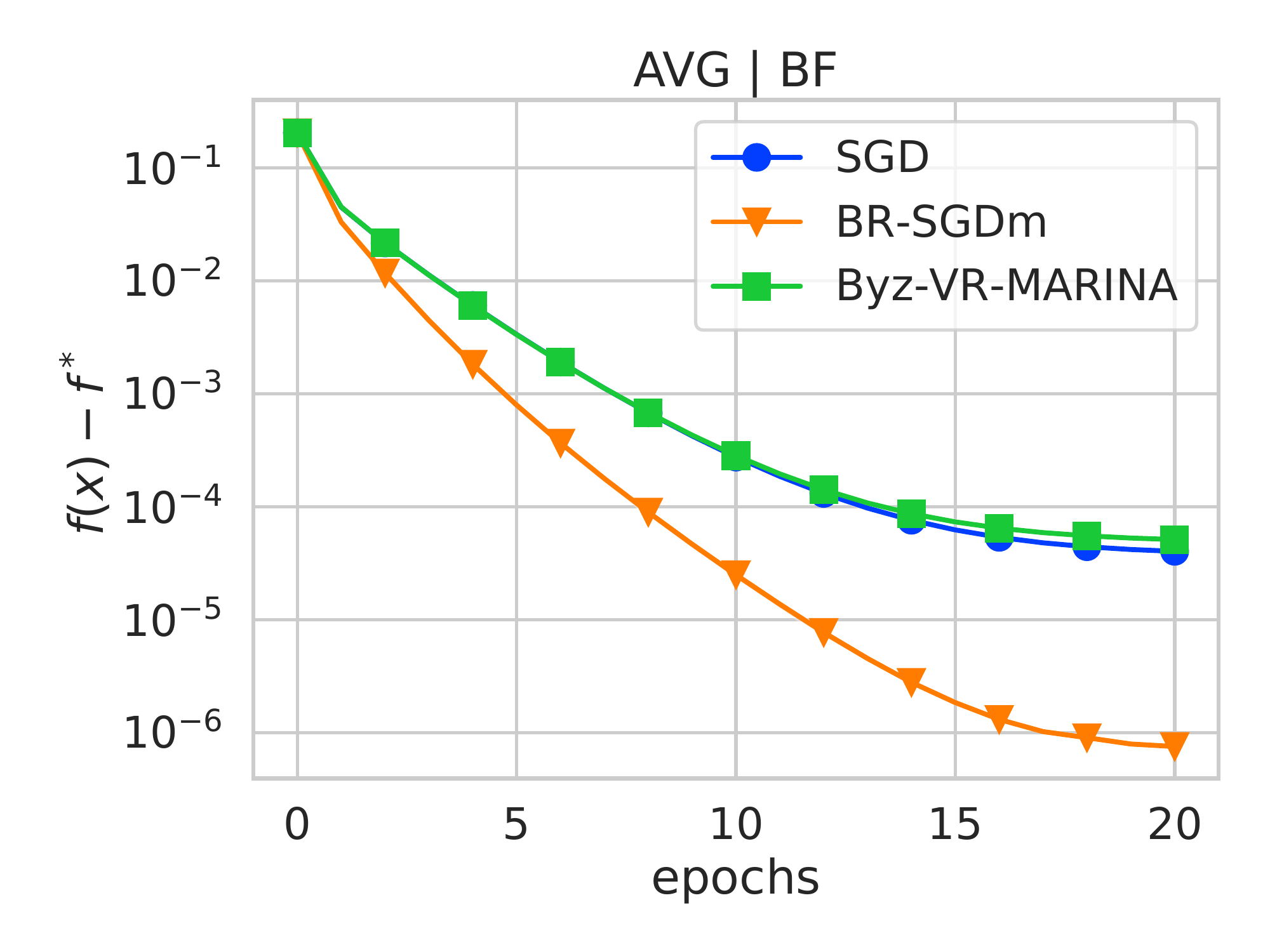}
\includegraphics[width=0.195\textwidth]{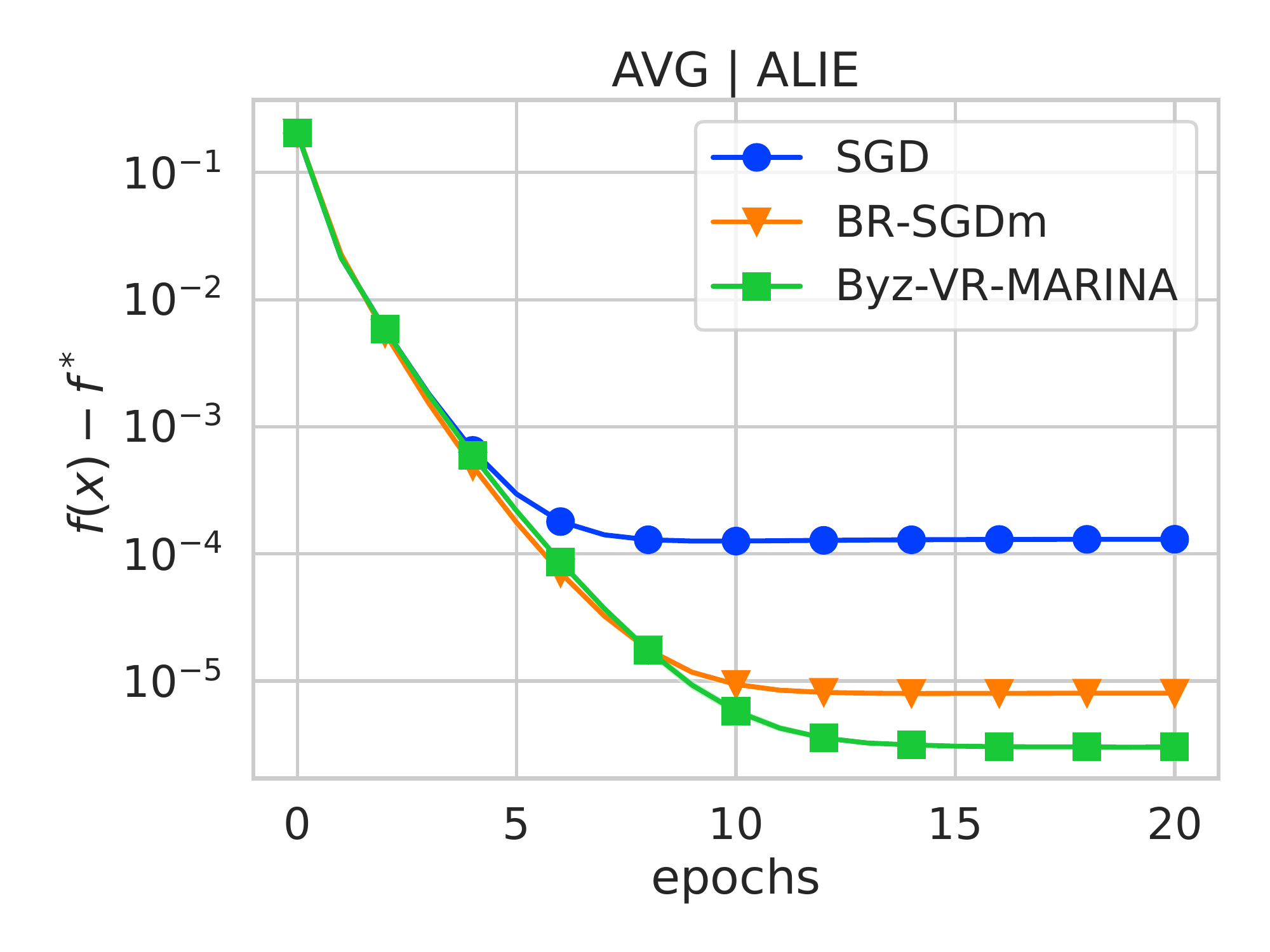}
\includegraphics[width=0.195\textwidth]{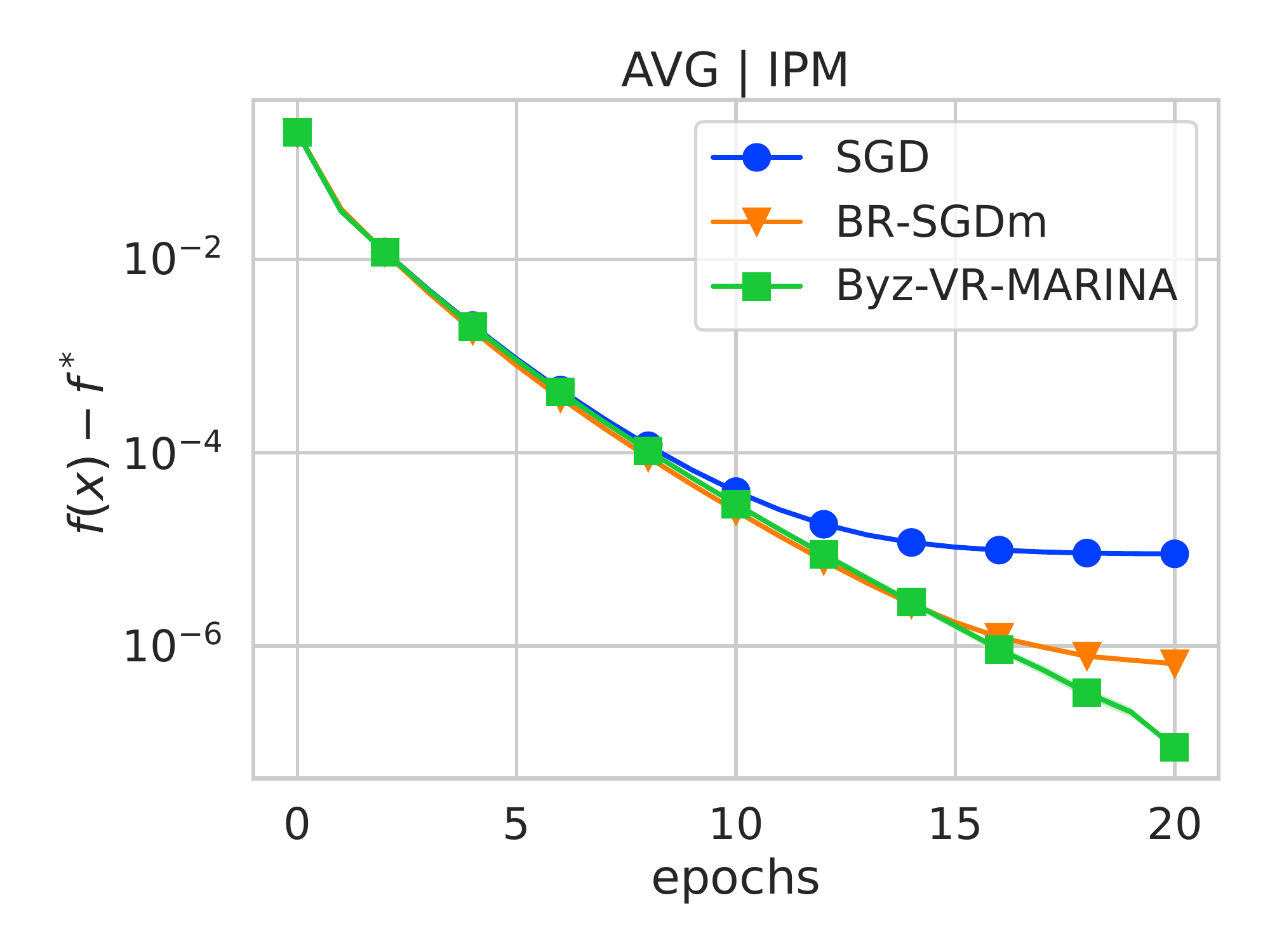}
\\
\includegraphics[width=0.195\textwidth]{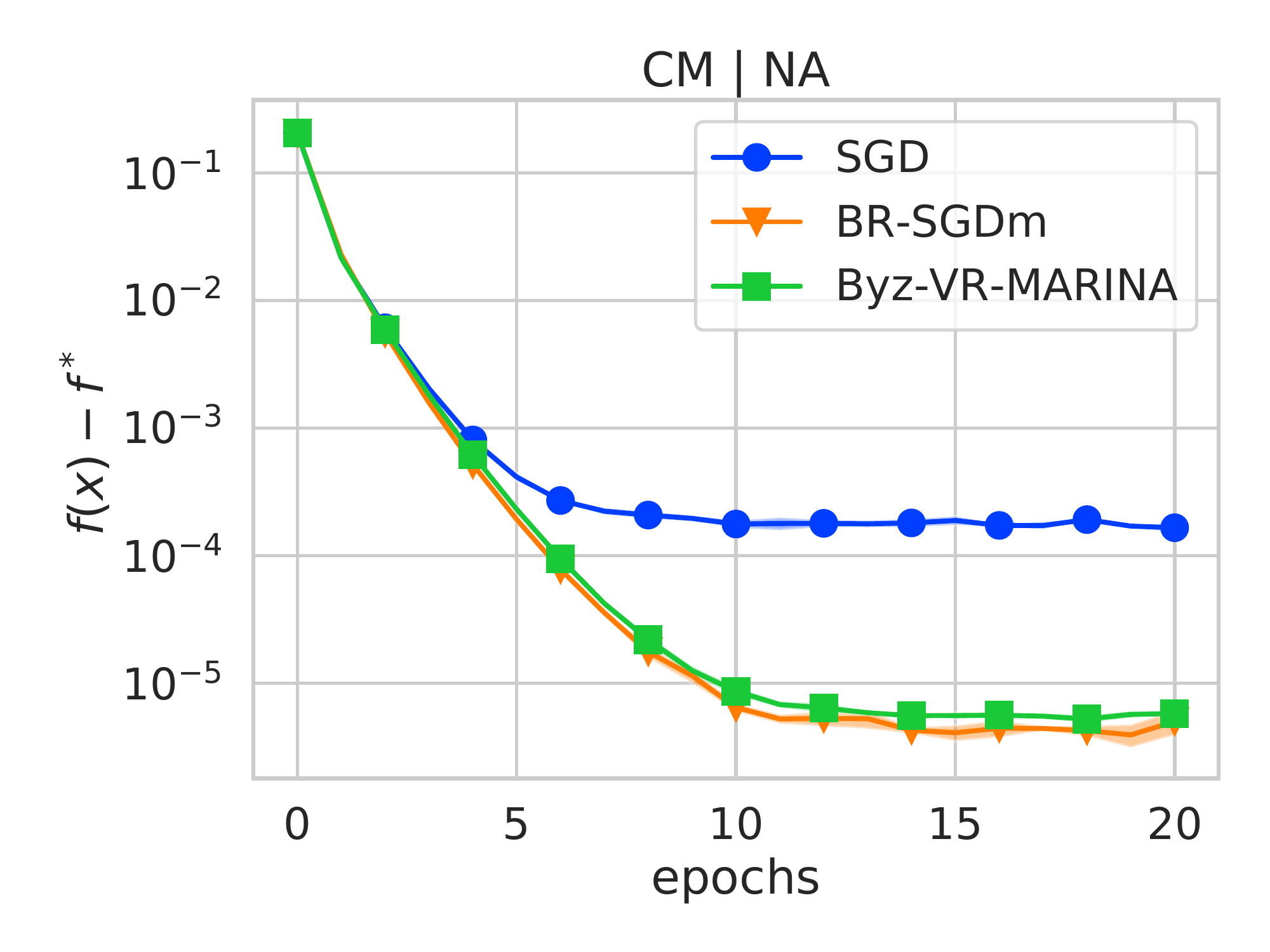}
\includegraphics[width=0.195\textwidth]{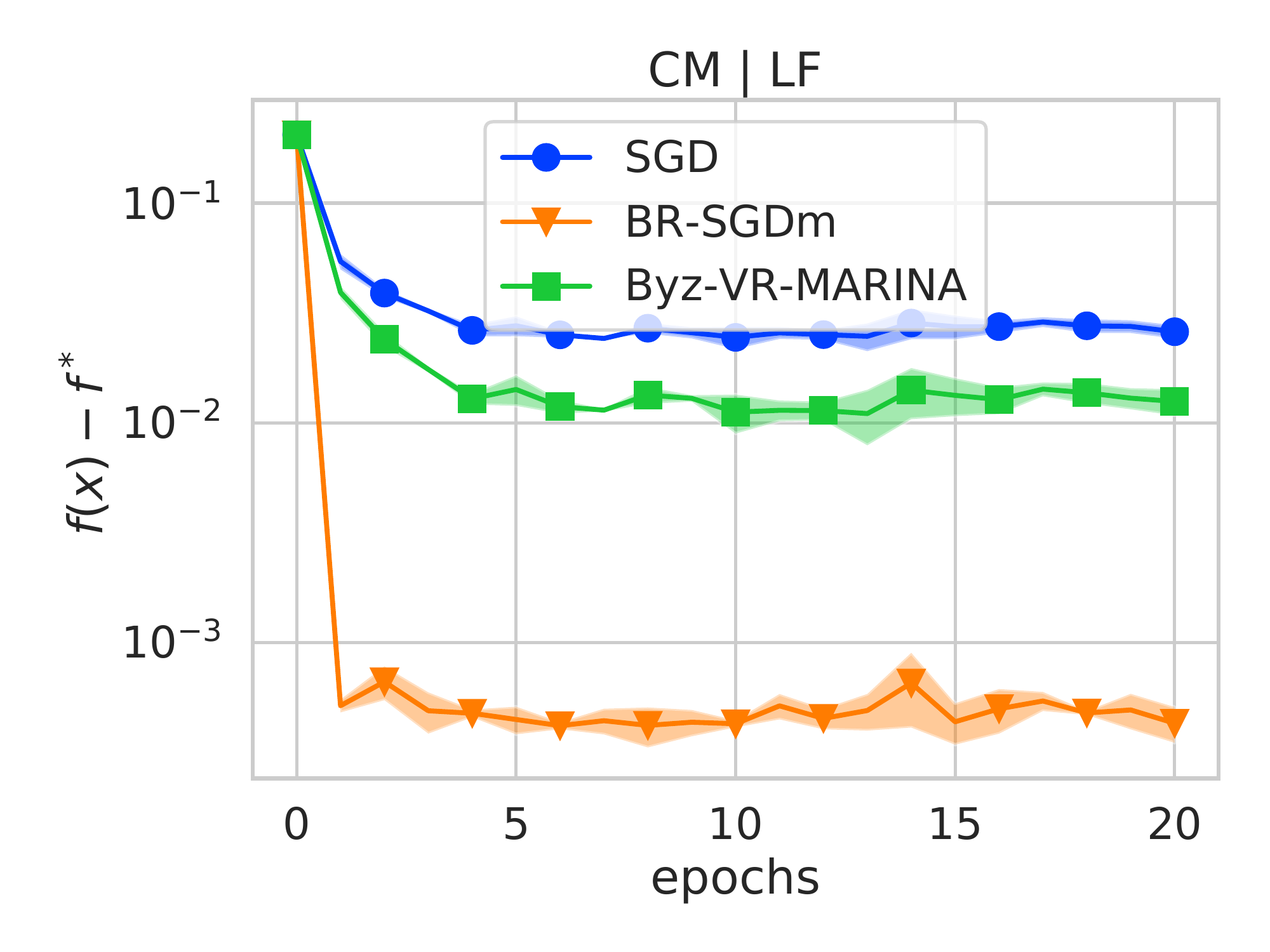}
\includegraphics[width=0.195\textwidth]{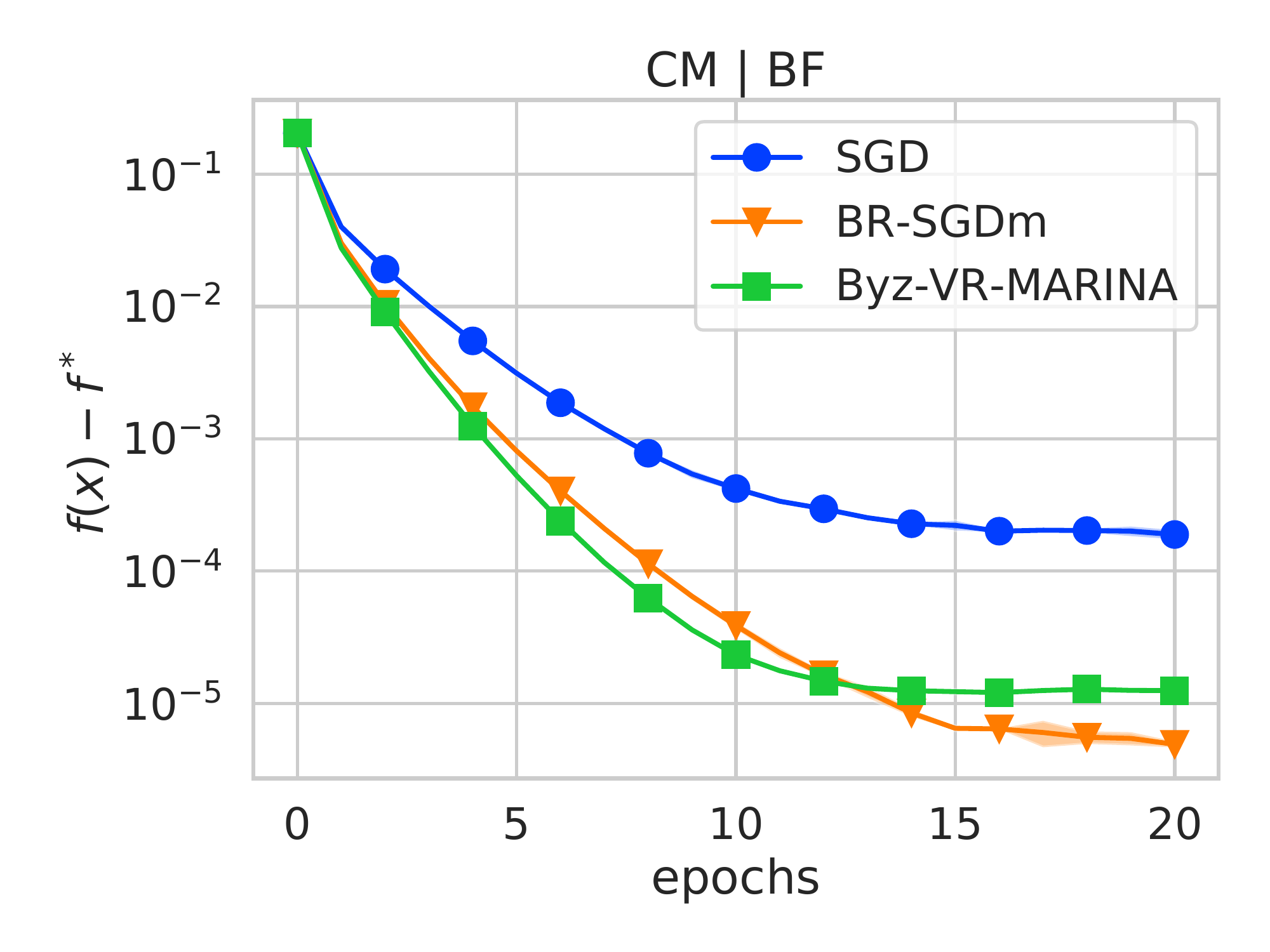}
\includegraphics[width=0.195\textwidth]{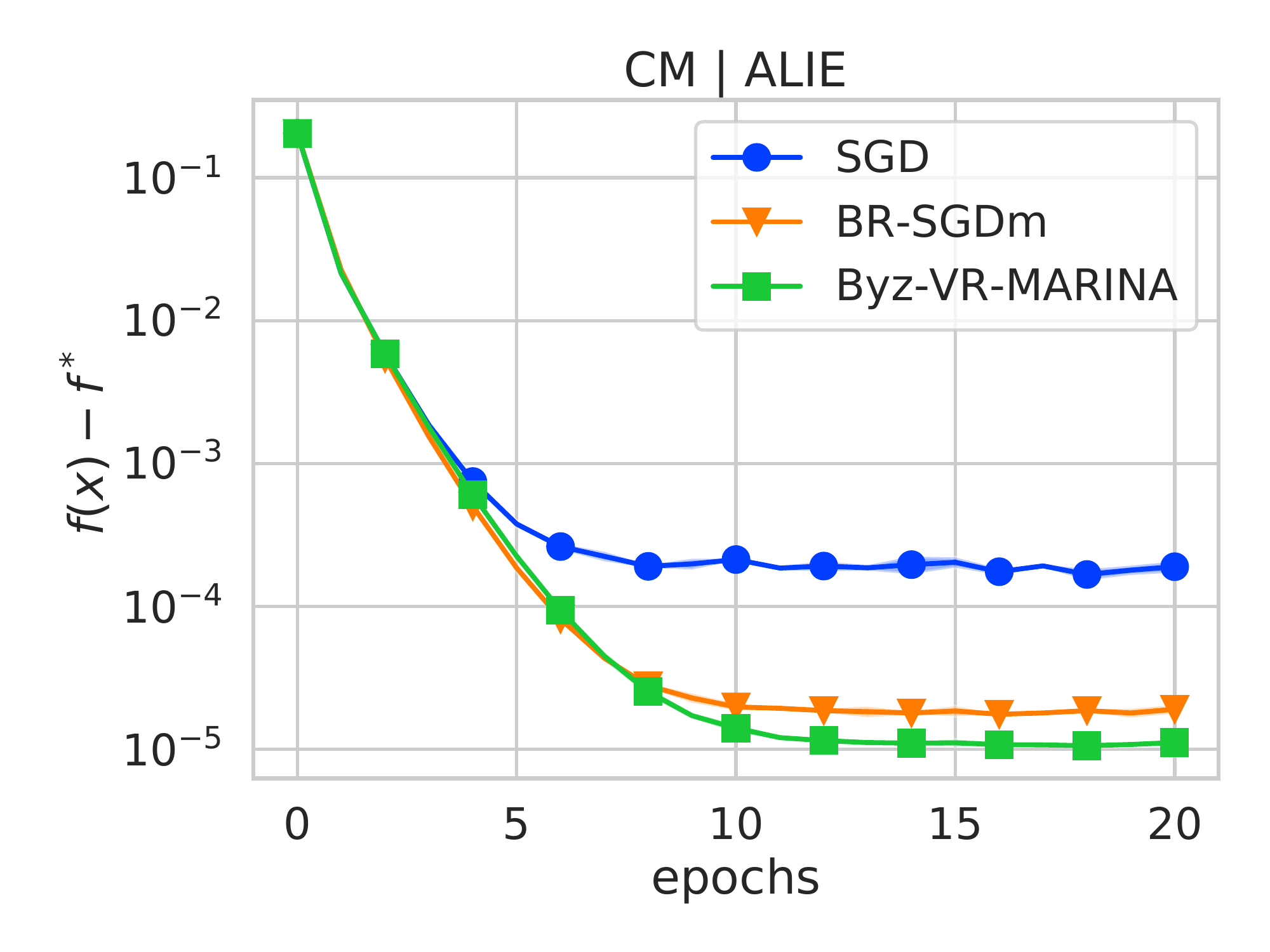}
\includegraphics[width=0.195\textwidth]{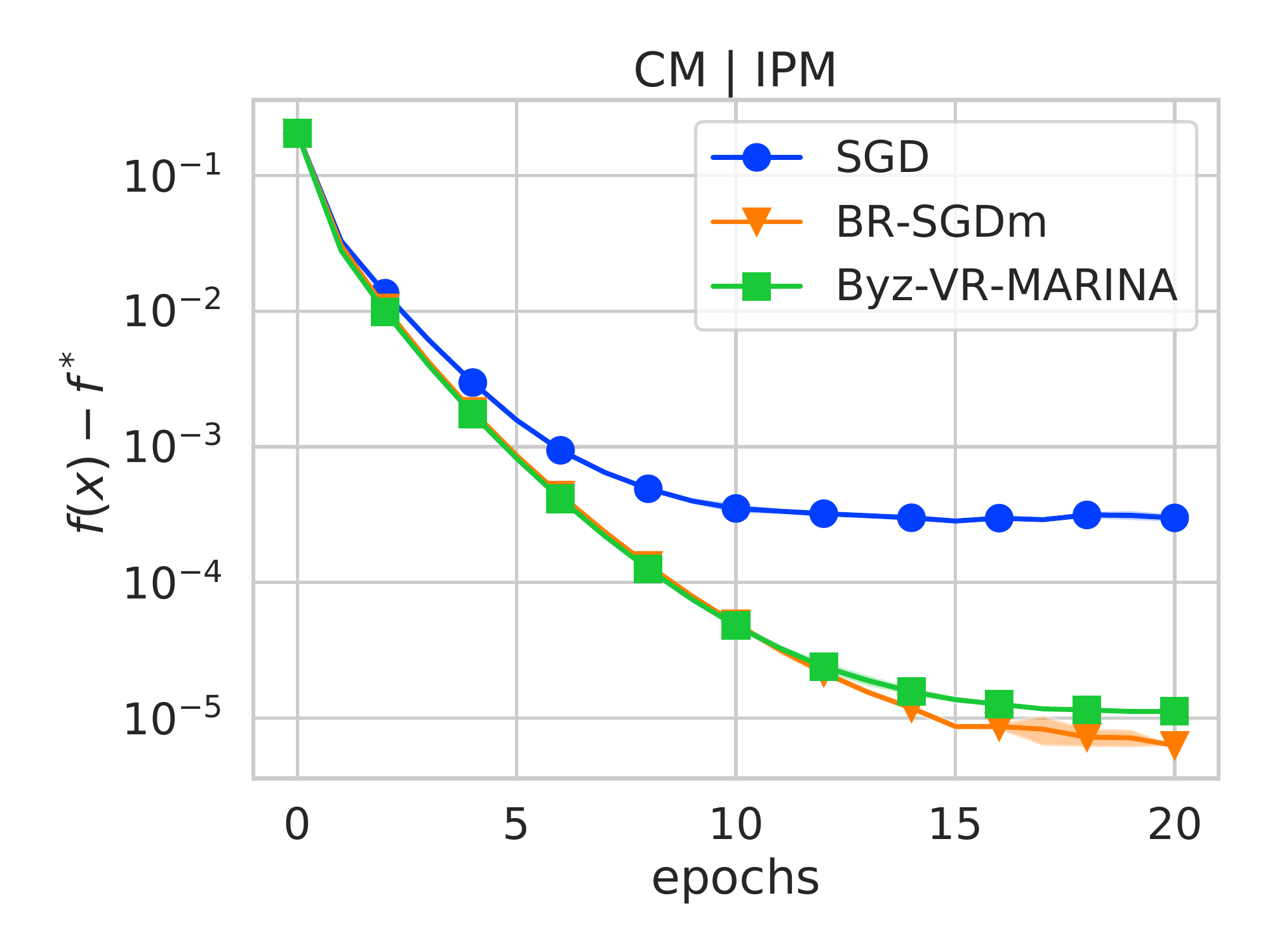}
\\
\includegraphics[width=0.195\textwidth]{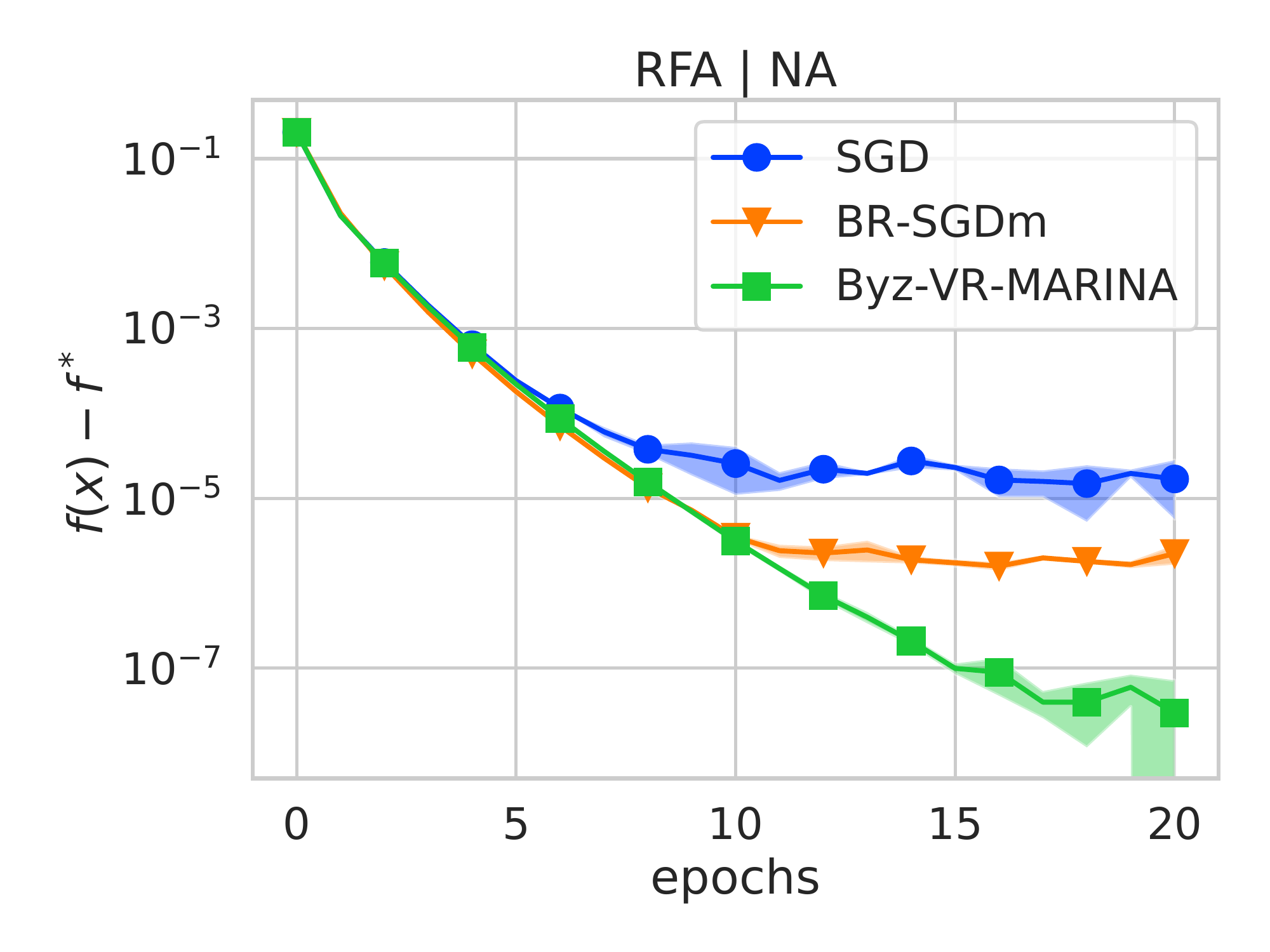}
\includegraphics[width=0.195\textwidth]{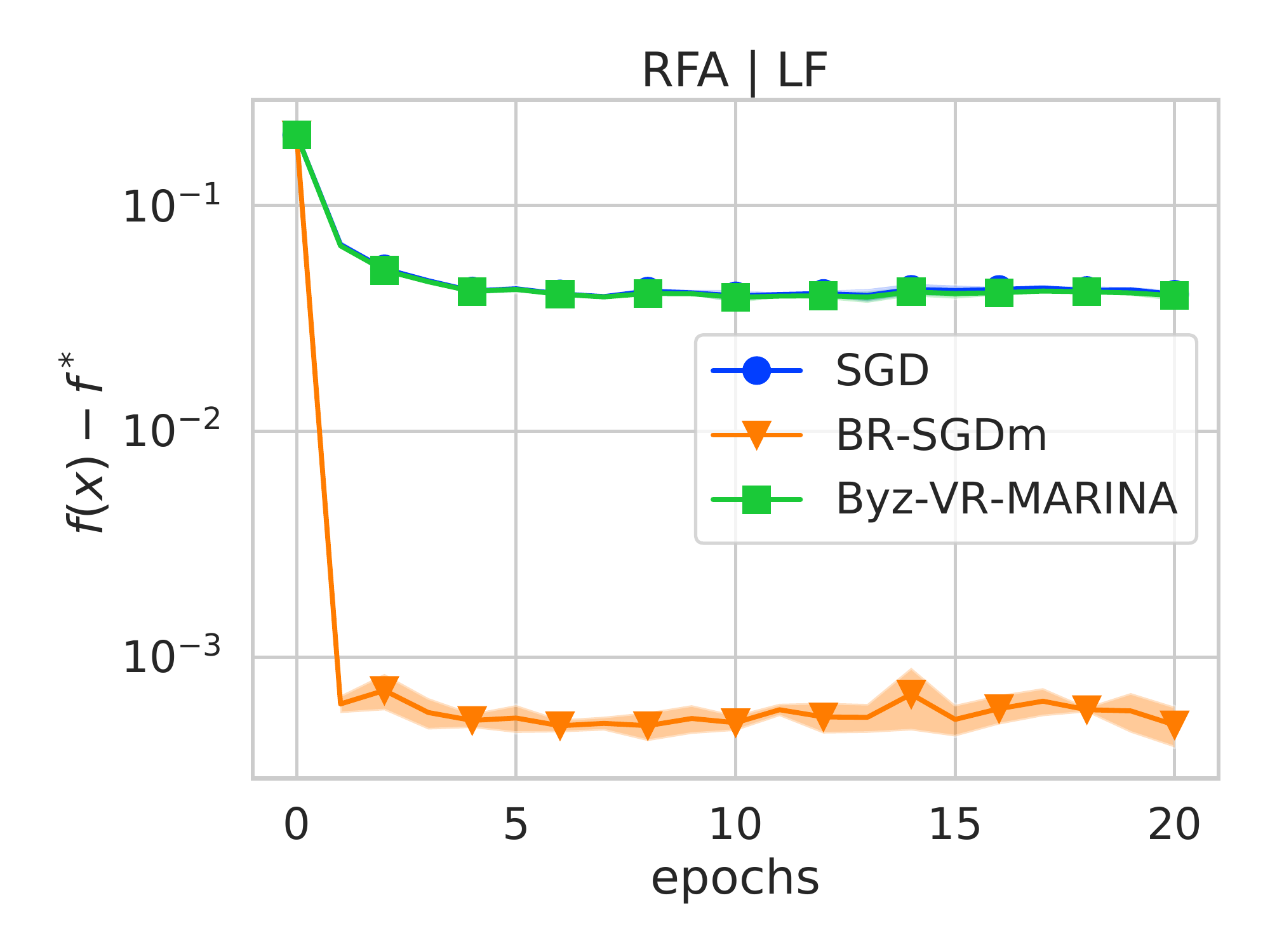}
\includegraphics[width=0.195\textwidth]{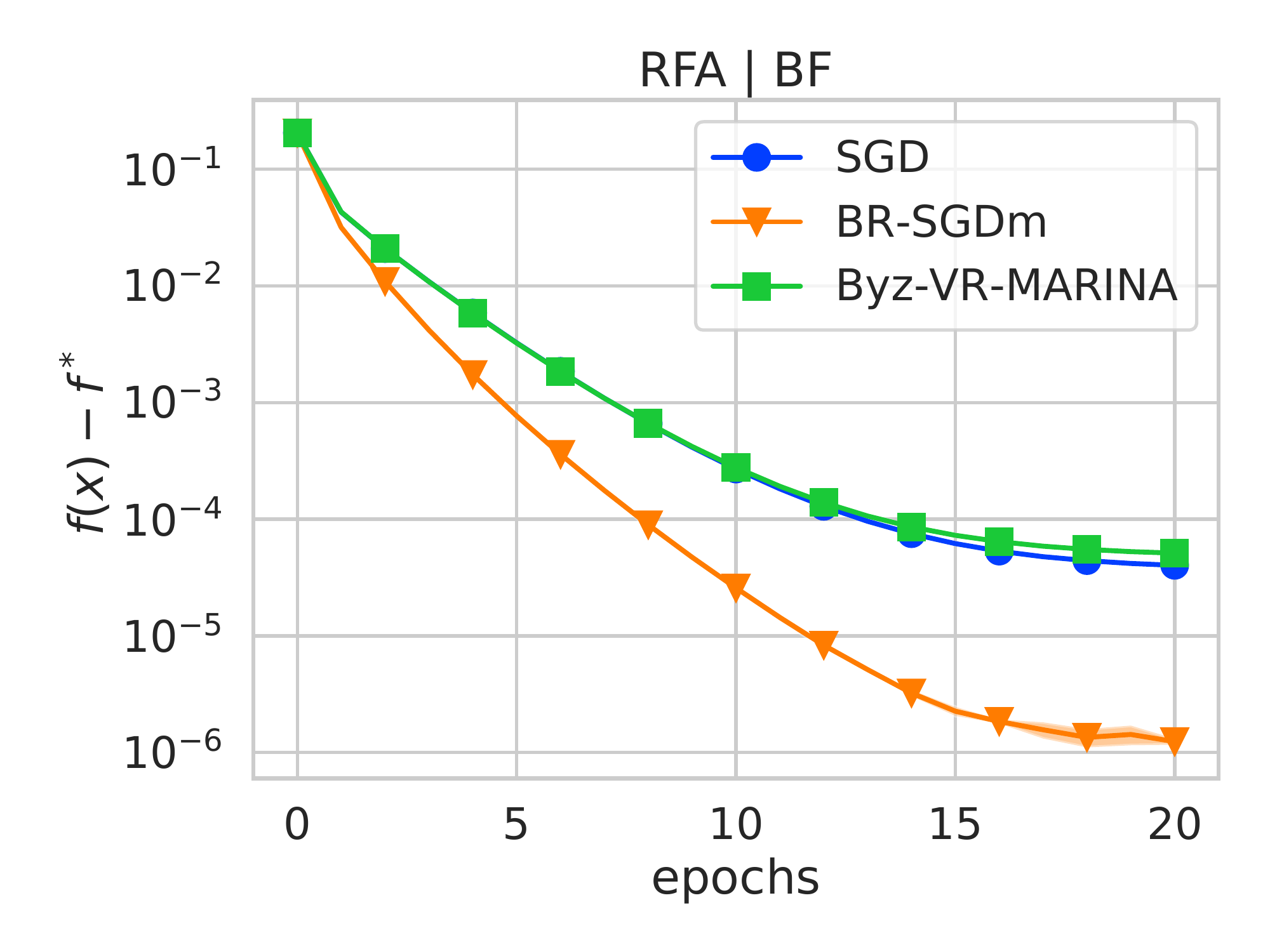}
\includegraphics[width=0.195\textwidth]{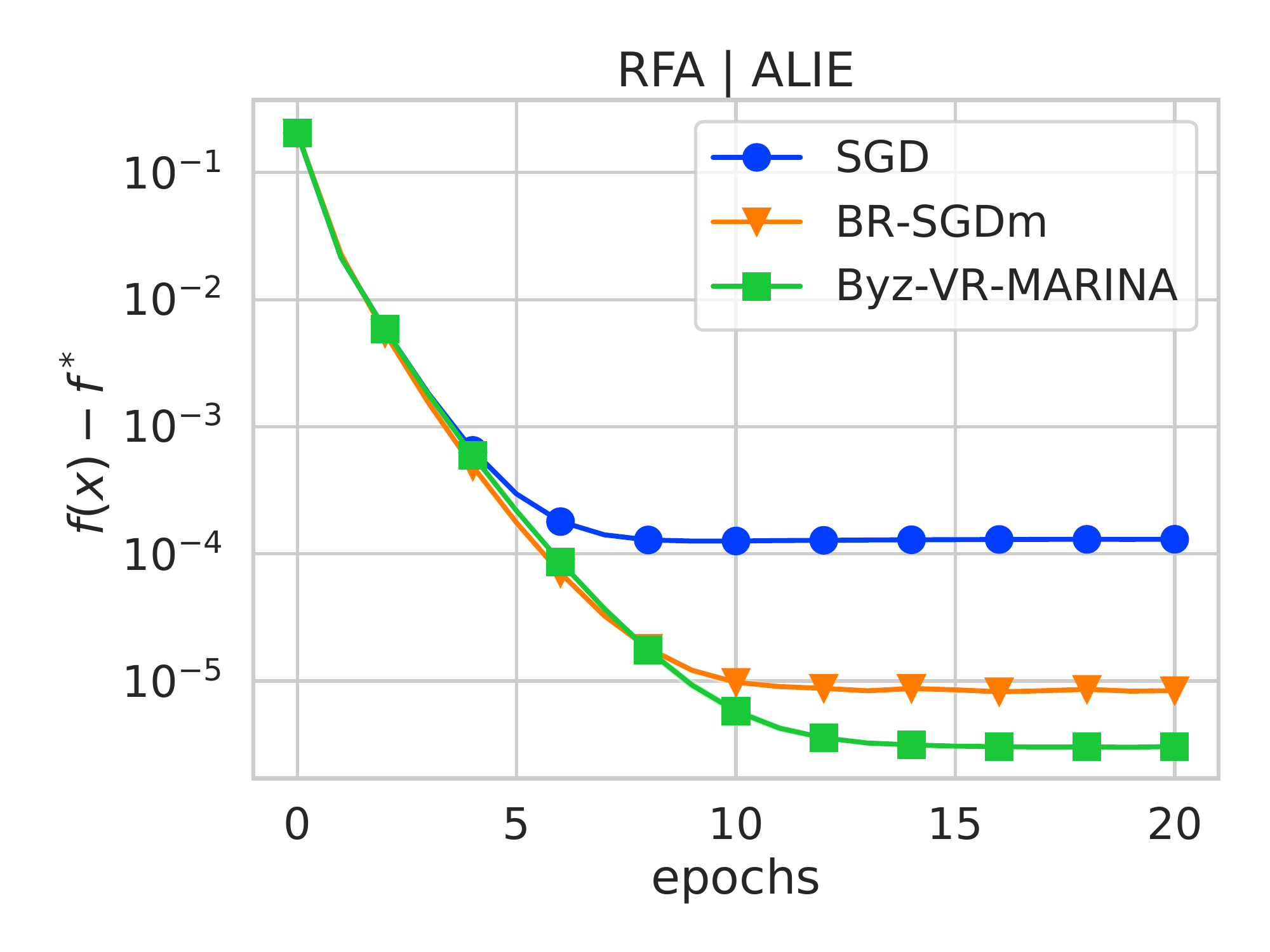}
\includegraphics[width=0.195\textwidth]{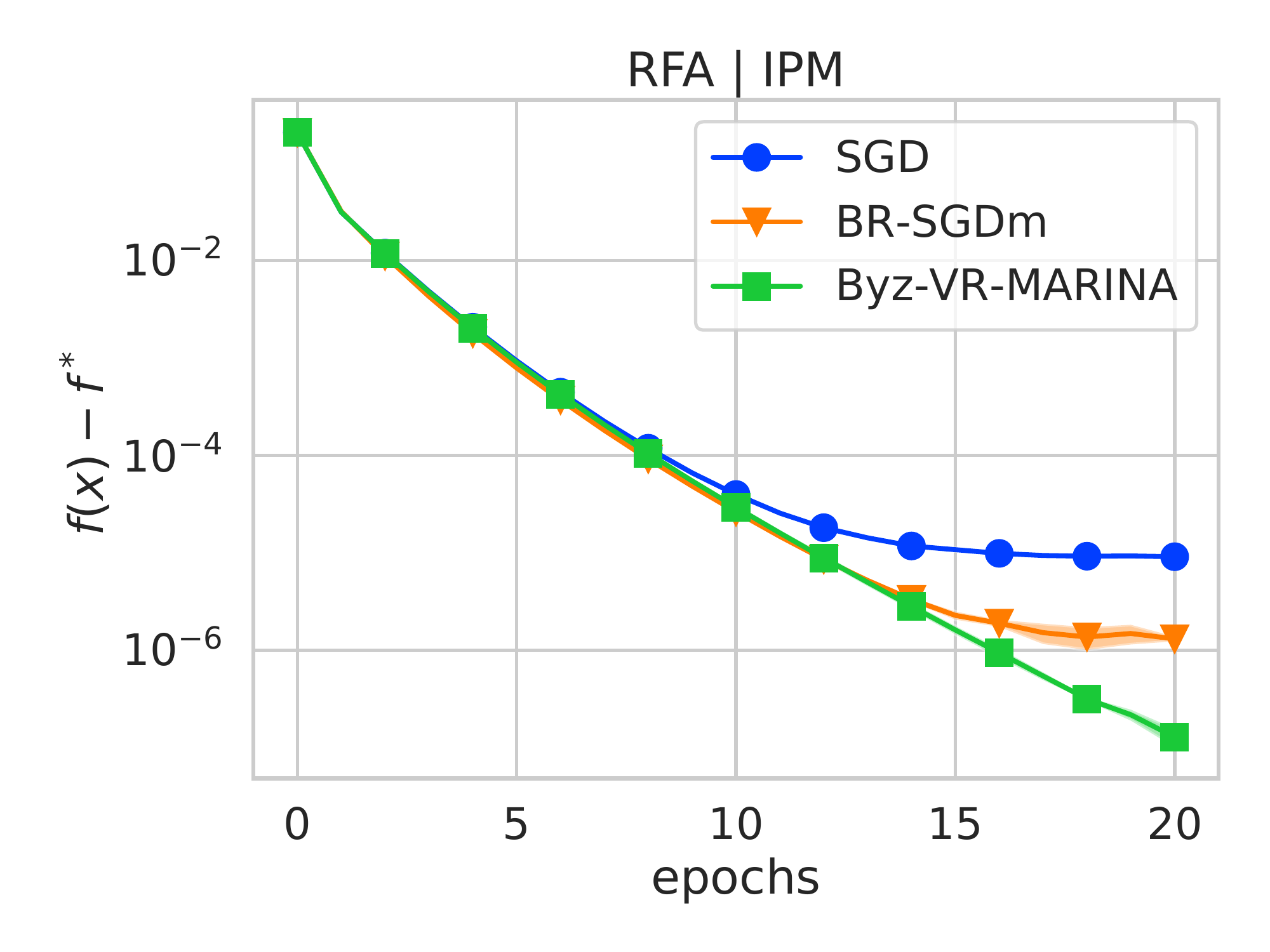}
\caption{The optimality gap $f(x^k) - f(x^*)$ of 3 aggregation rules (AVG, CM, RFA) under 5 attacks (NA, LF, BF, ALIE, IPM) on a9a dataset with uniform split over 15 workers with 5 \revision{Byzantine workers}. The top row displays the best performance in hindsight for a given attack. } 
\label{fig:a9a_dense}
\end{figure}

\begin{figure}
\centering
\includegraphics[width=0.195\textwidth]{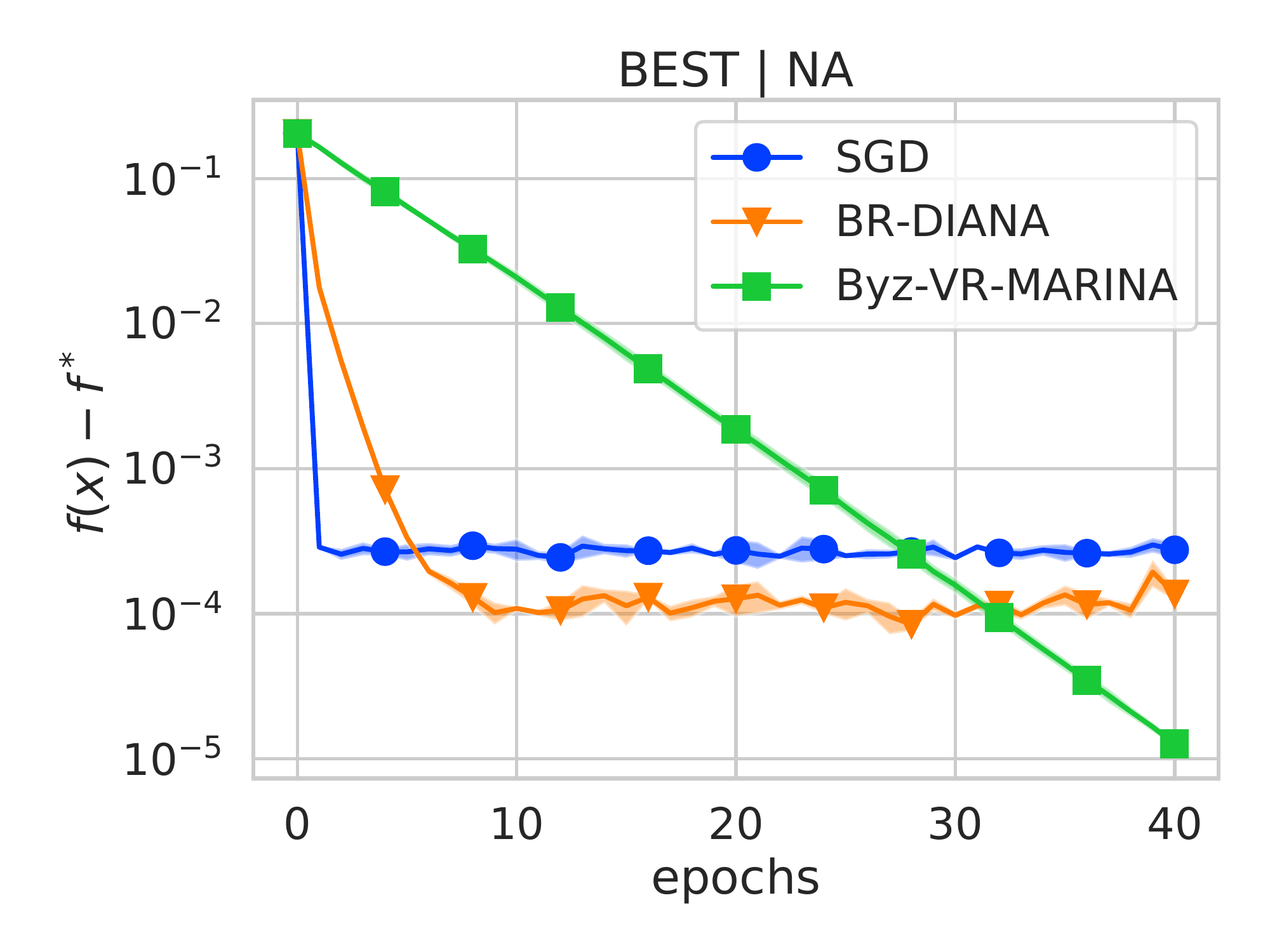}
\includegraphics[width=0.195\textwidth]{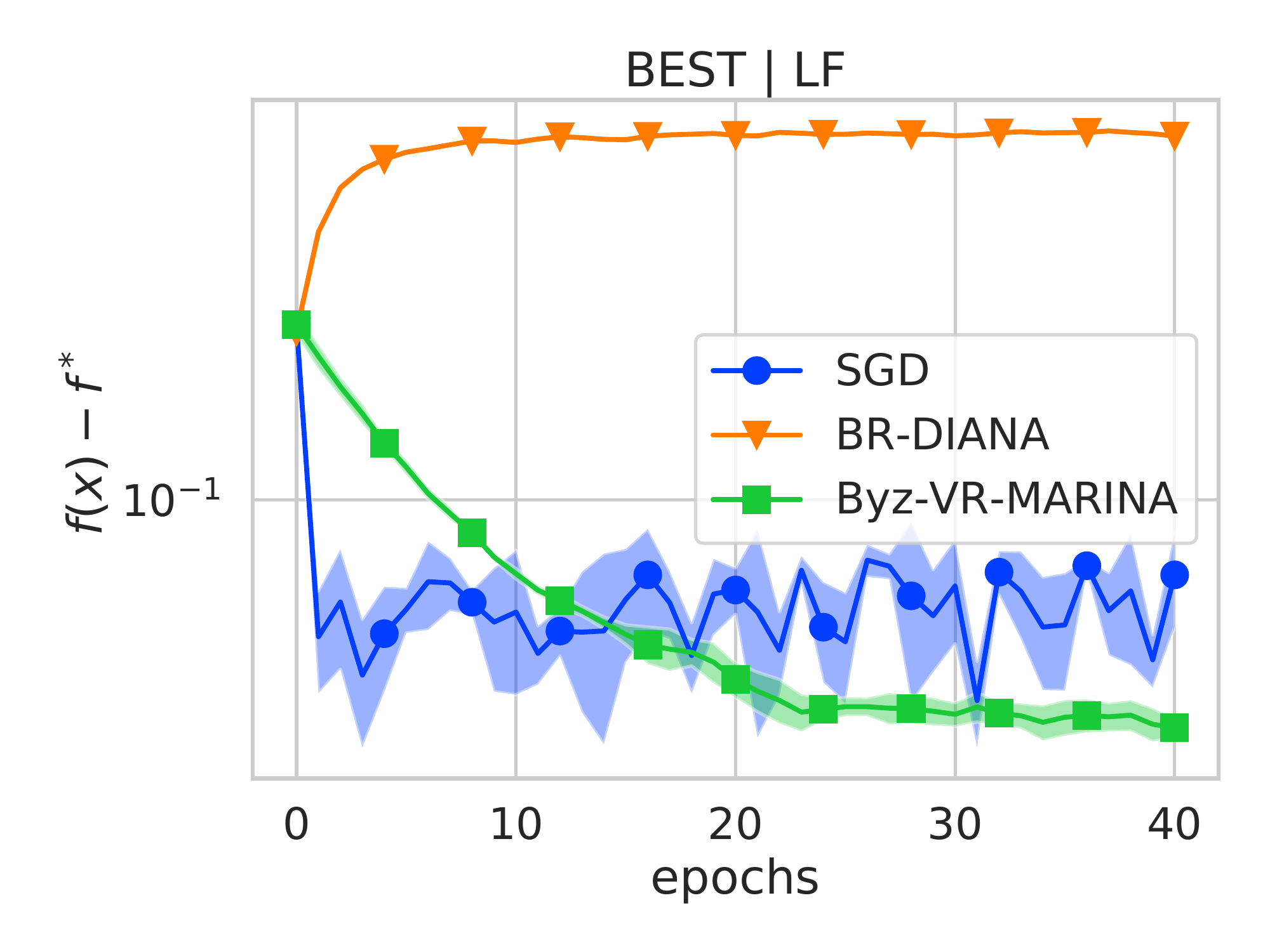}
\includegraphics[width=0.195\textwidth]{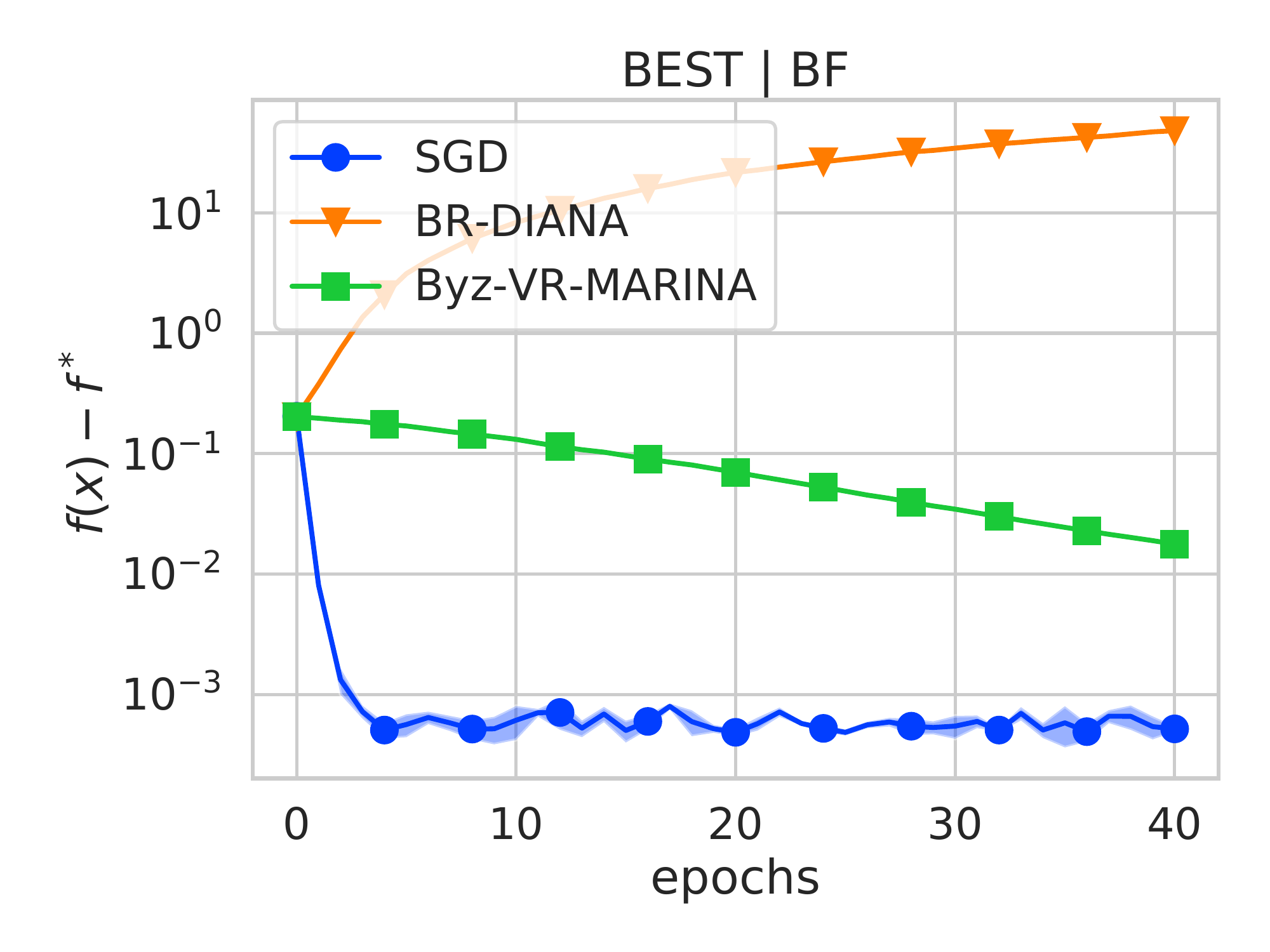}
\includegraphics[width=0.195\textwidth]{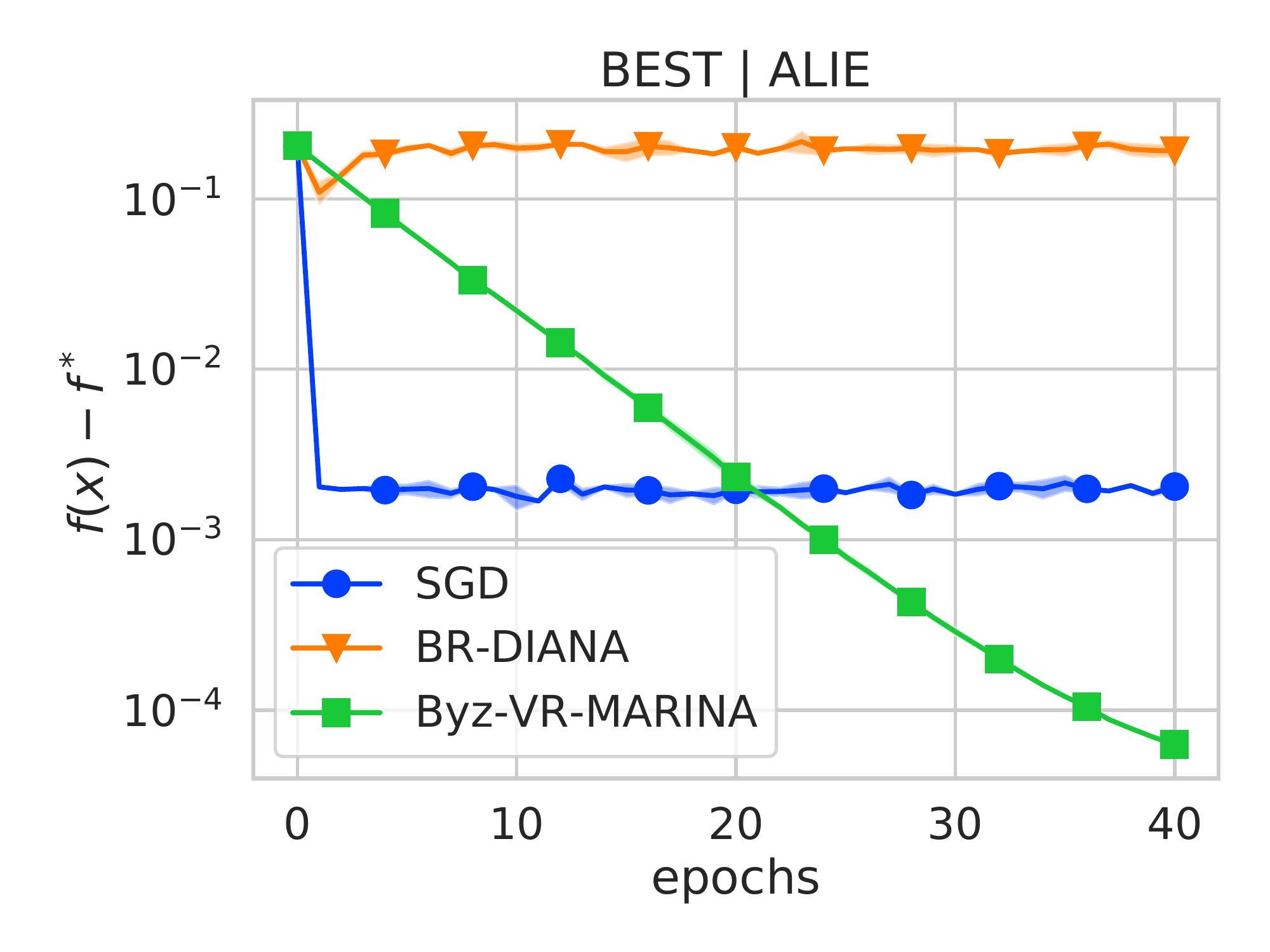}
\includegraphics[width=0.195\textwidth]{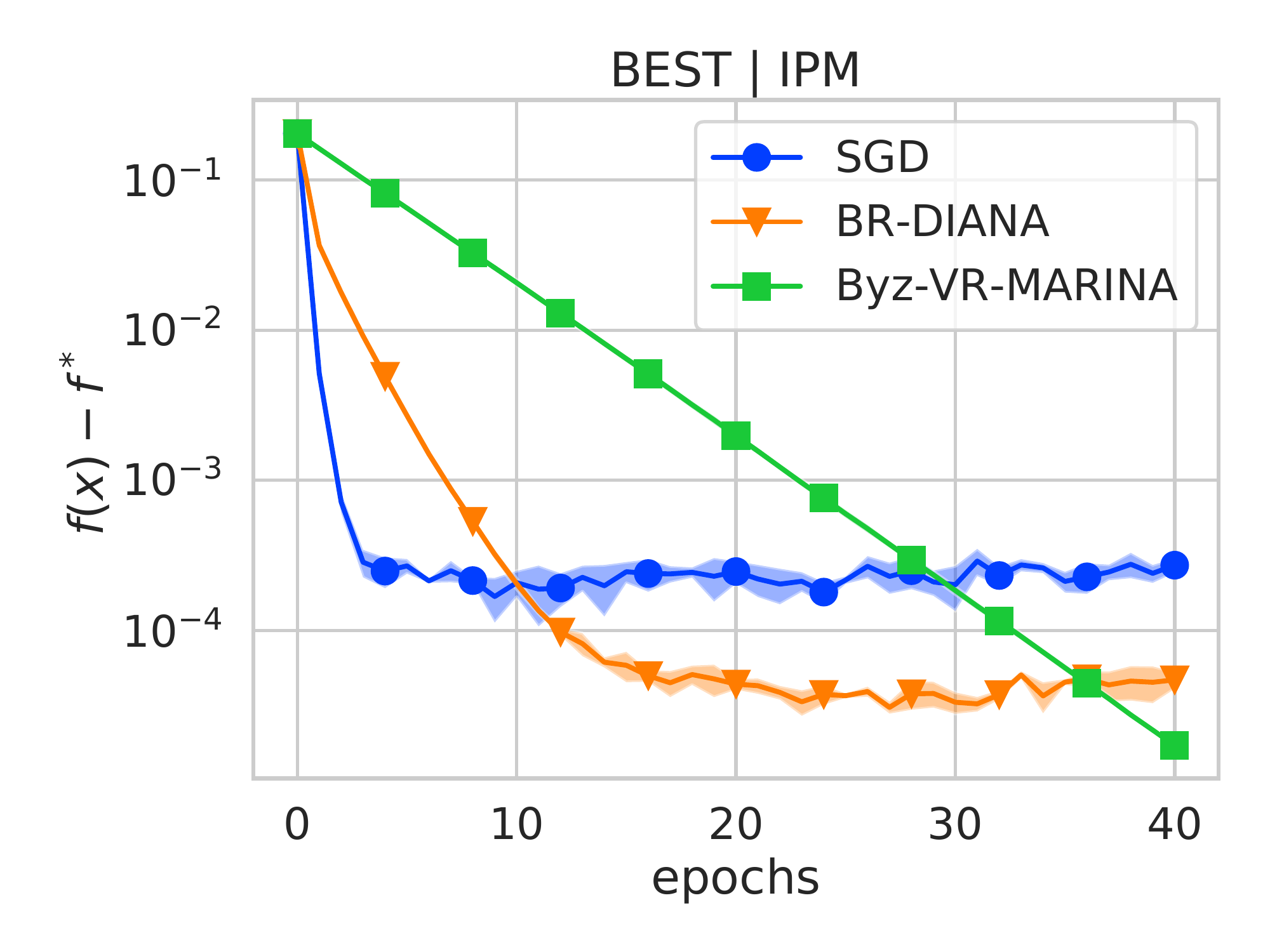}
\\
\includegraphics[width=0.195\textwidth]{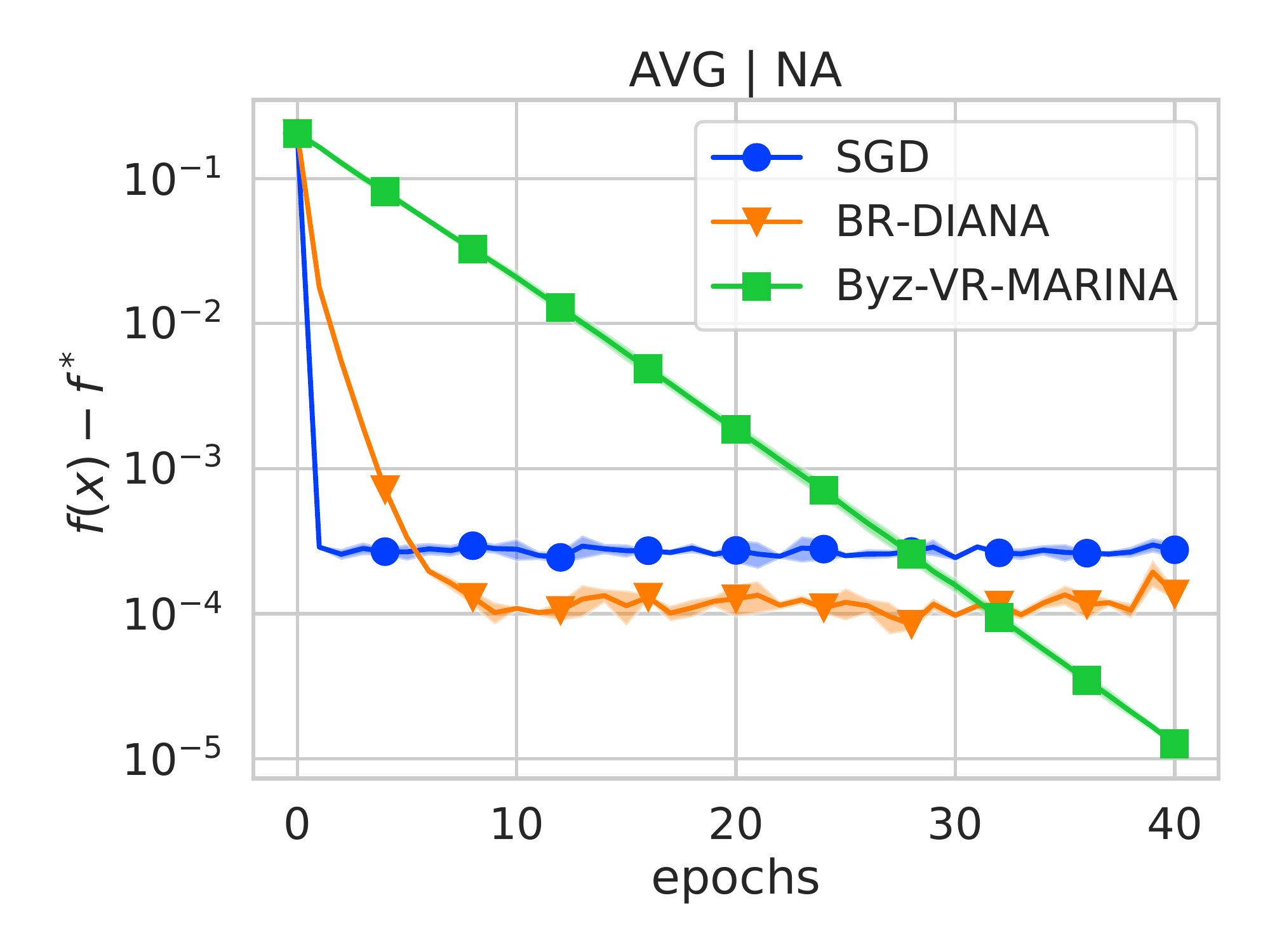}
\includegraphics[width=0.195\textwidth]{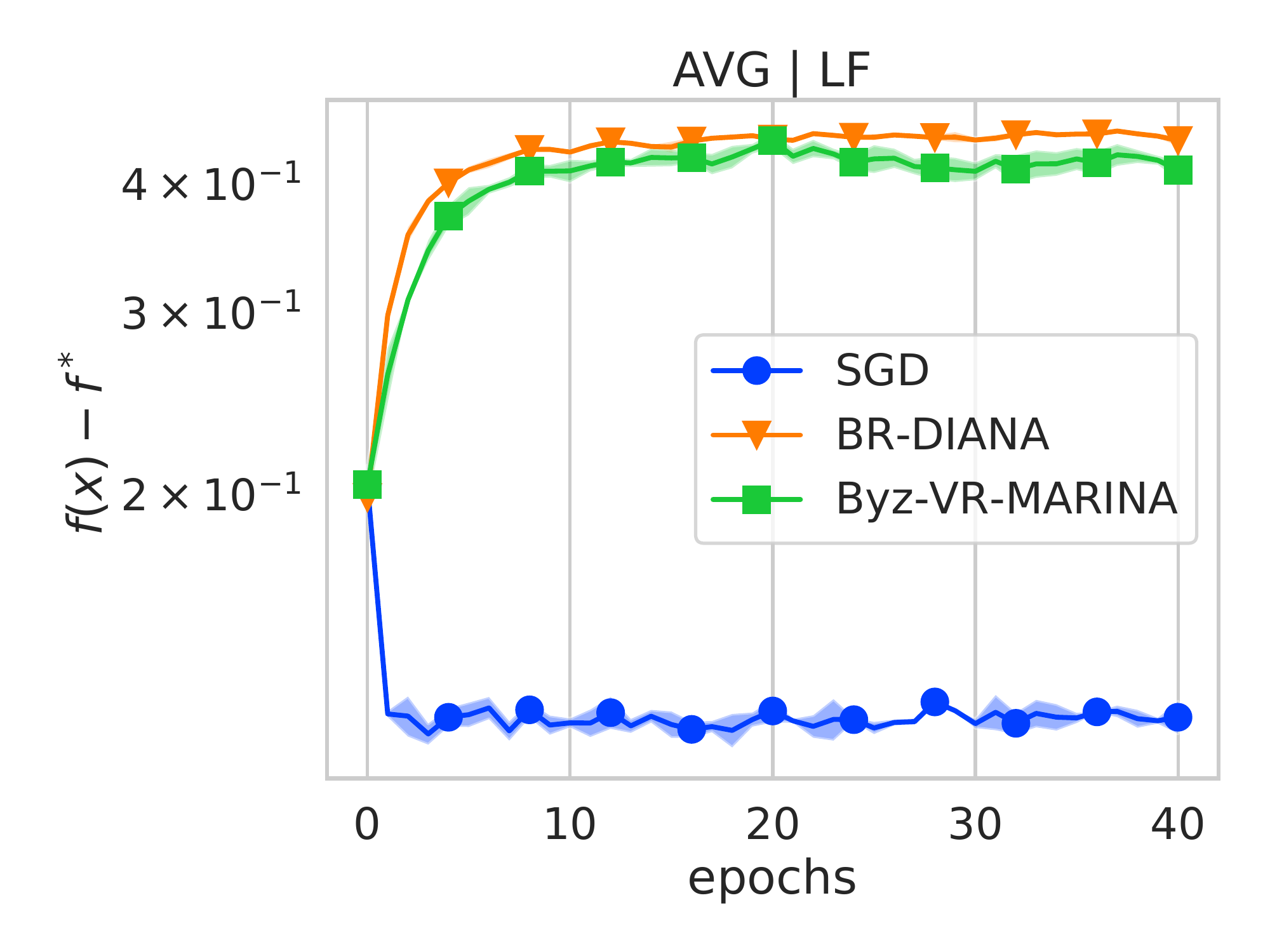}
\includegraphics[width=0.195\textwidth]{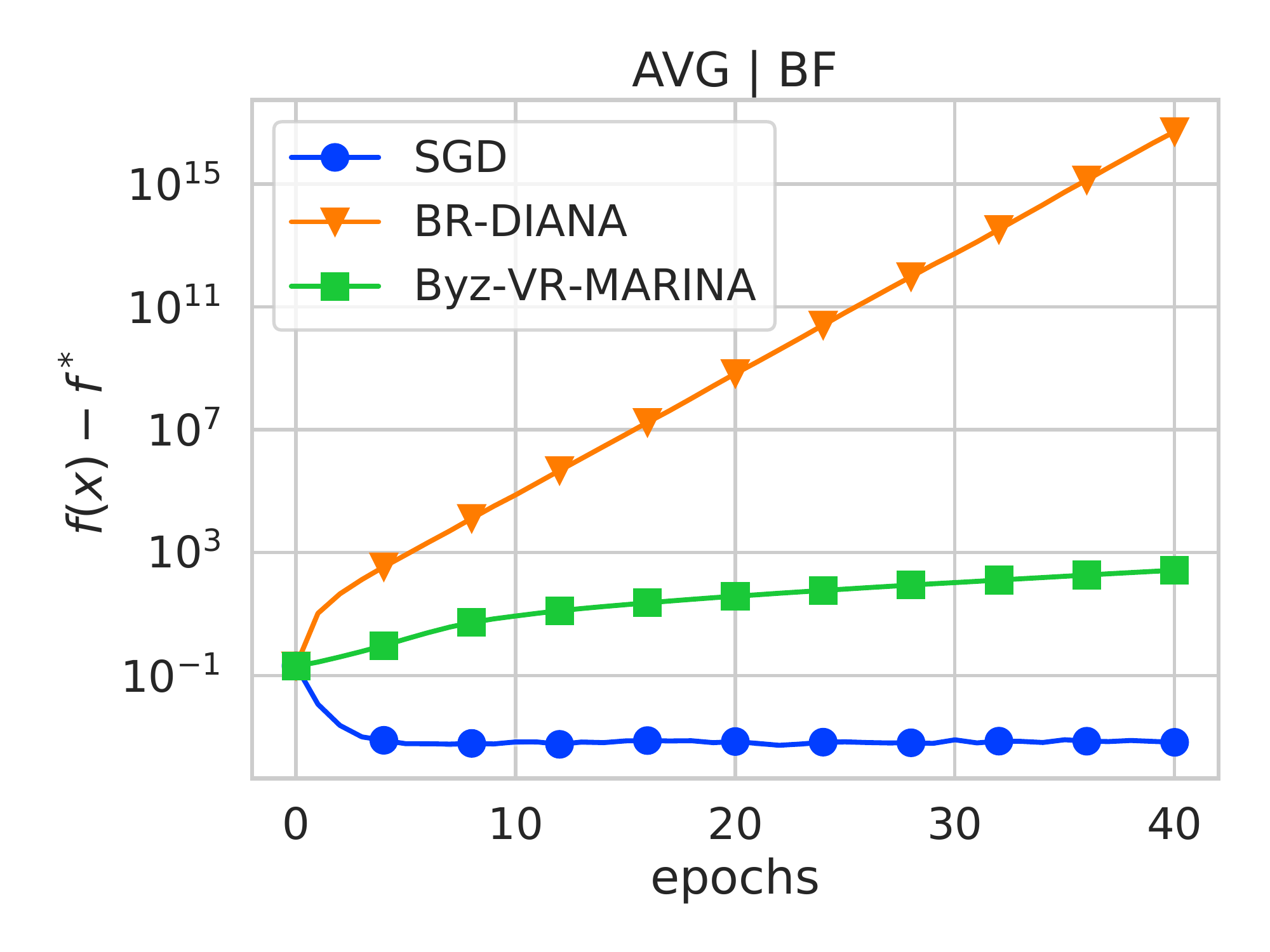}
\includegraphics[width=0.195\textwidth]{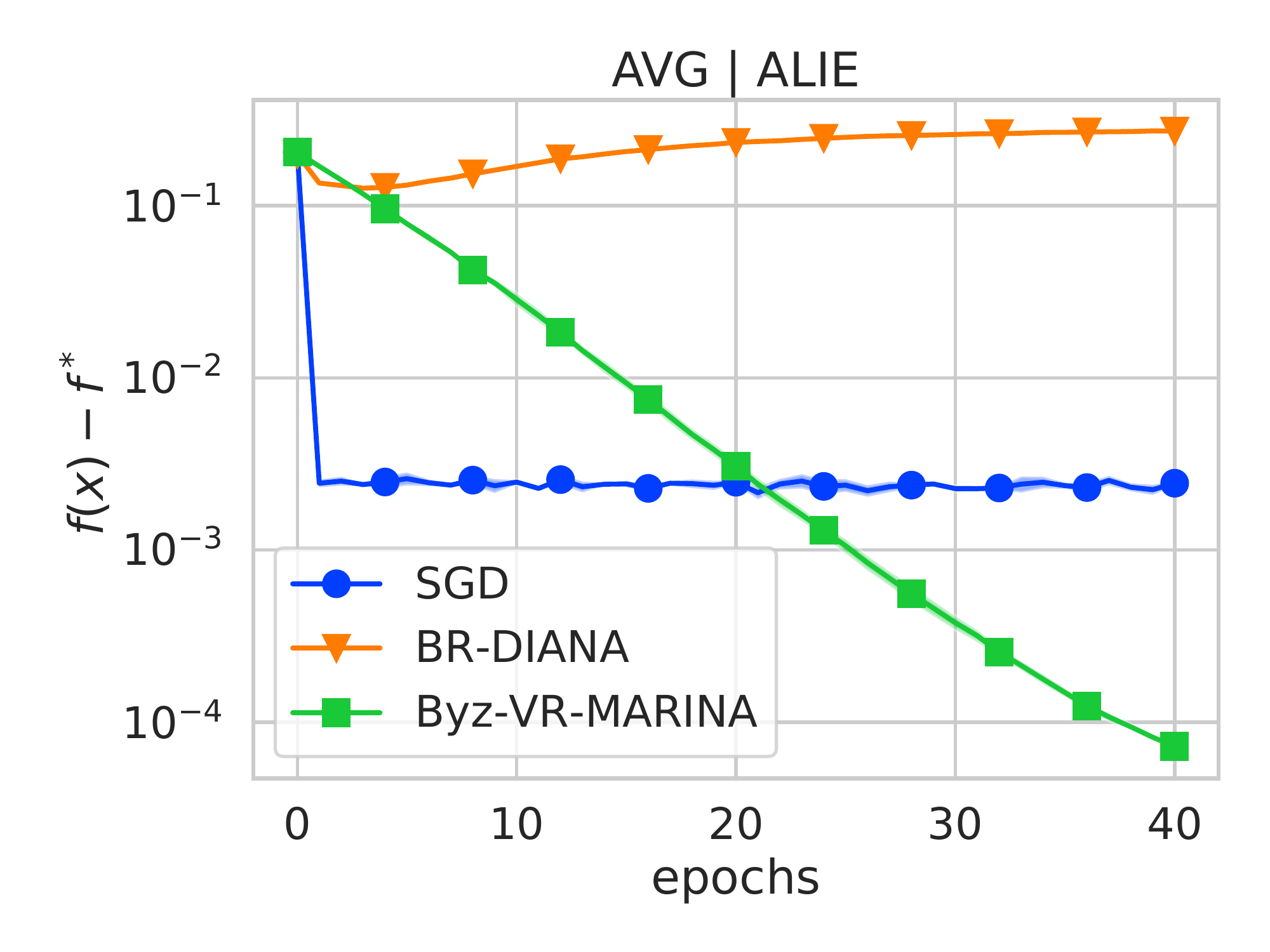}
\includegraphics[width=0.195\textwidth]{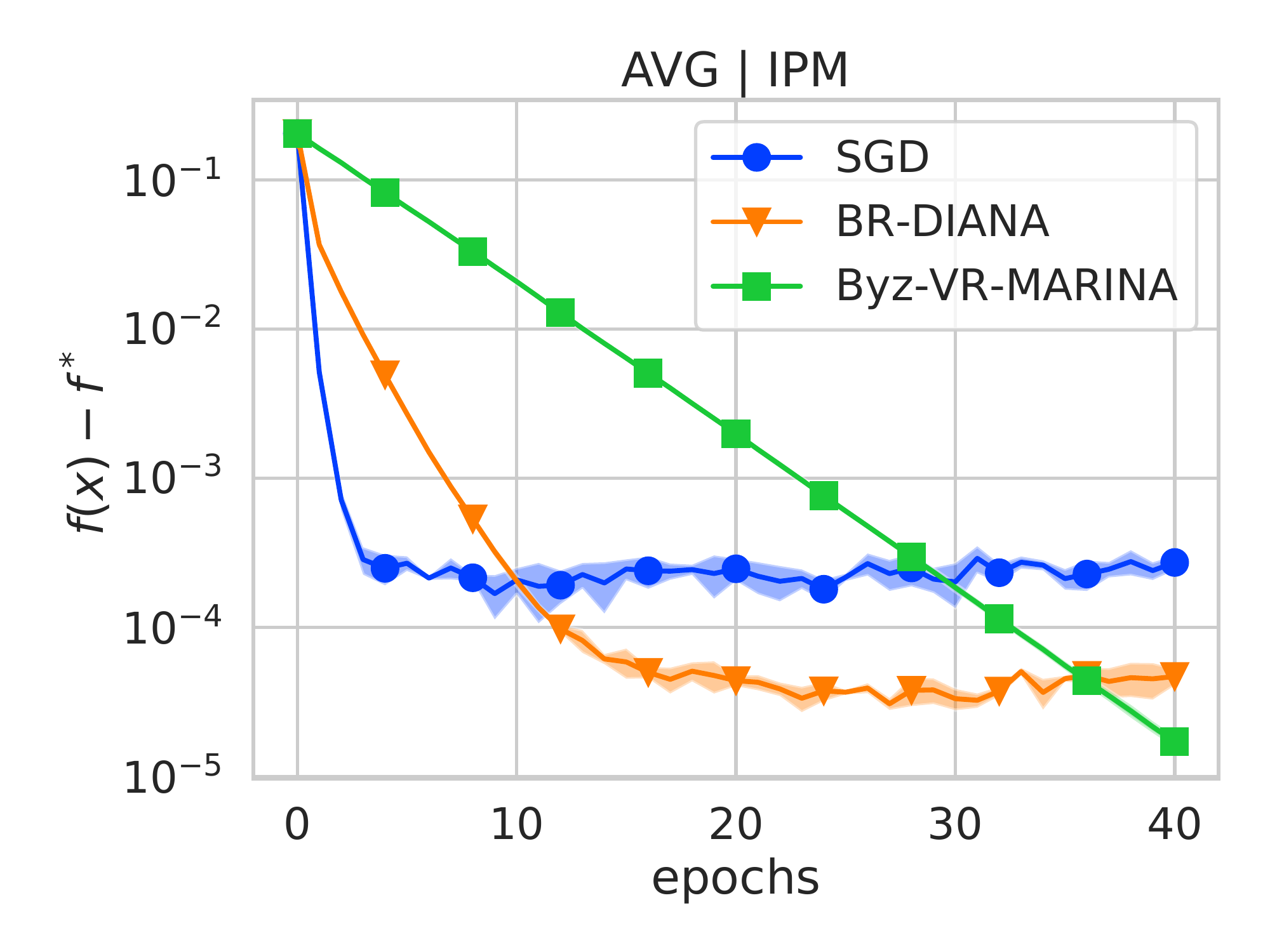}
\\
\includegraphics[width=0.195\textwidth]{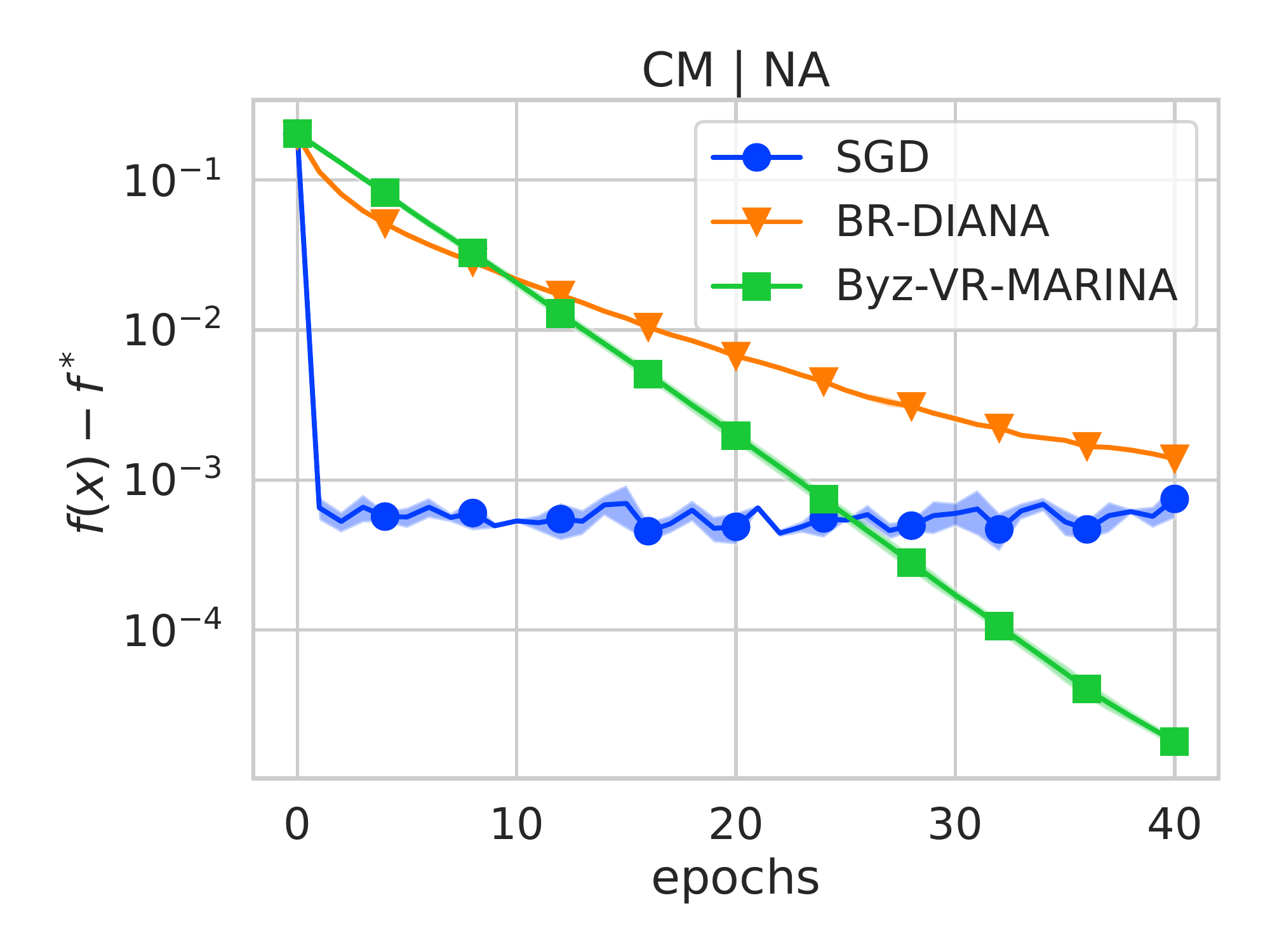}
\includegraphics[width=0.195\textwidth]{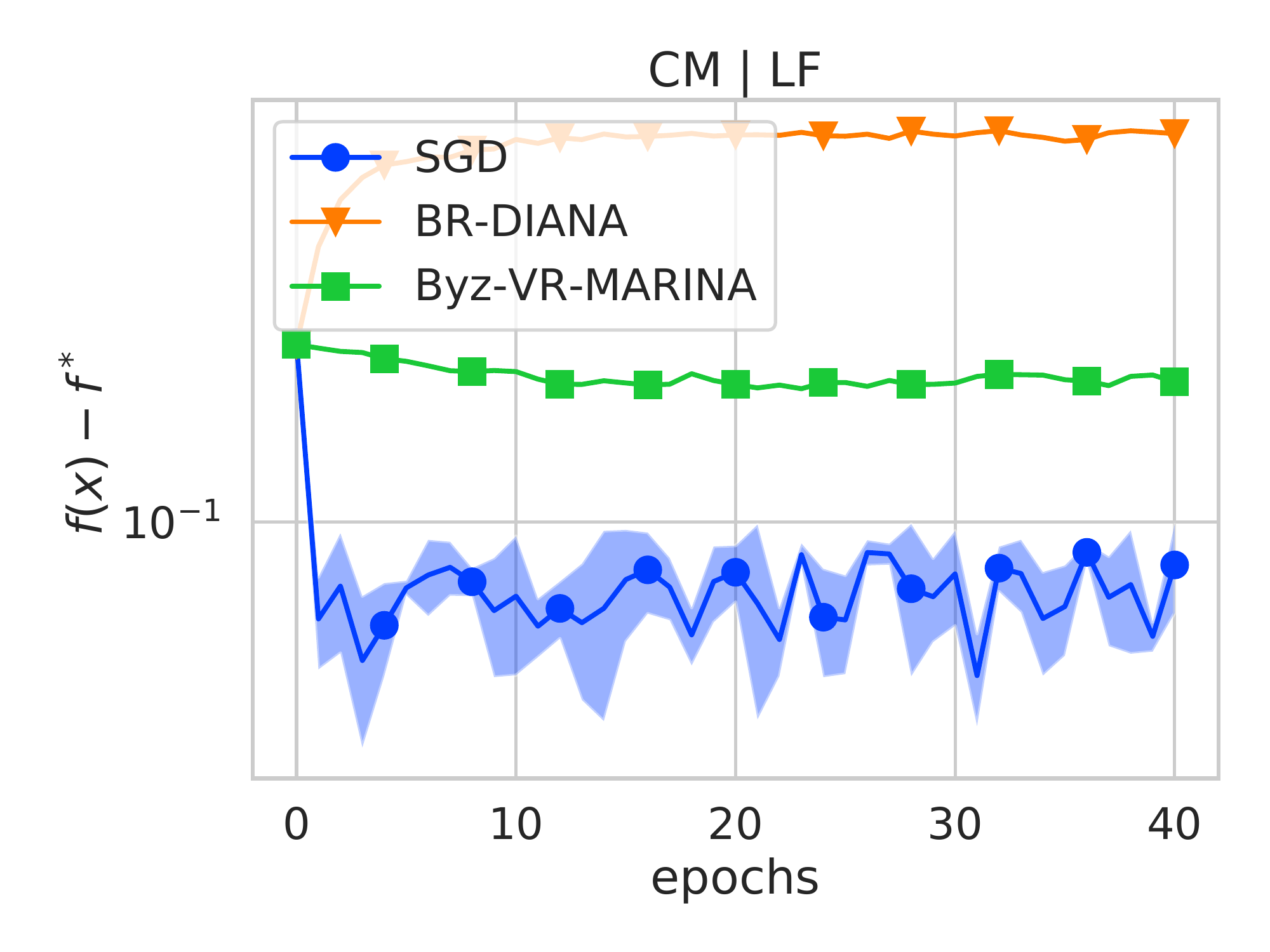}
\includegraphics[width=0.195\textwidth]{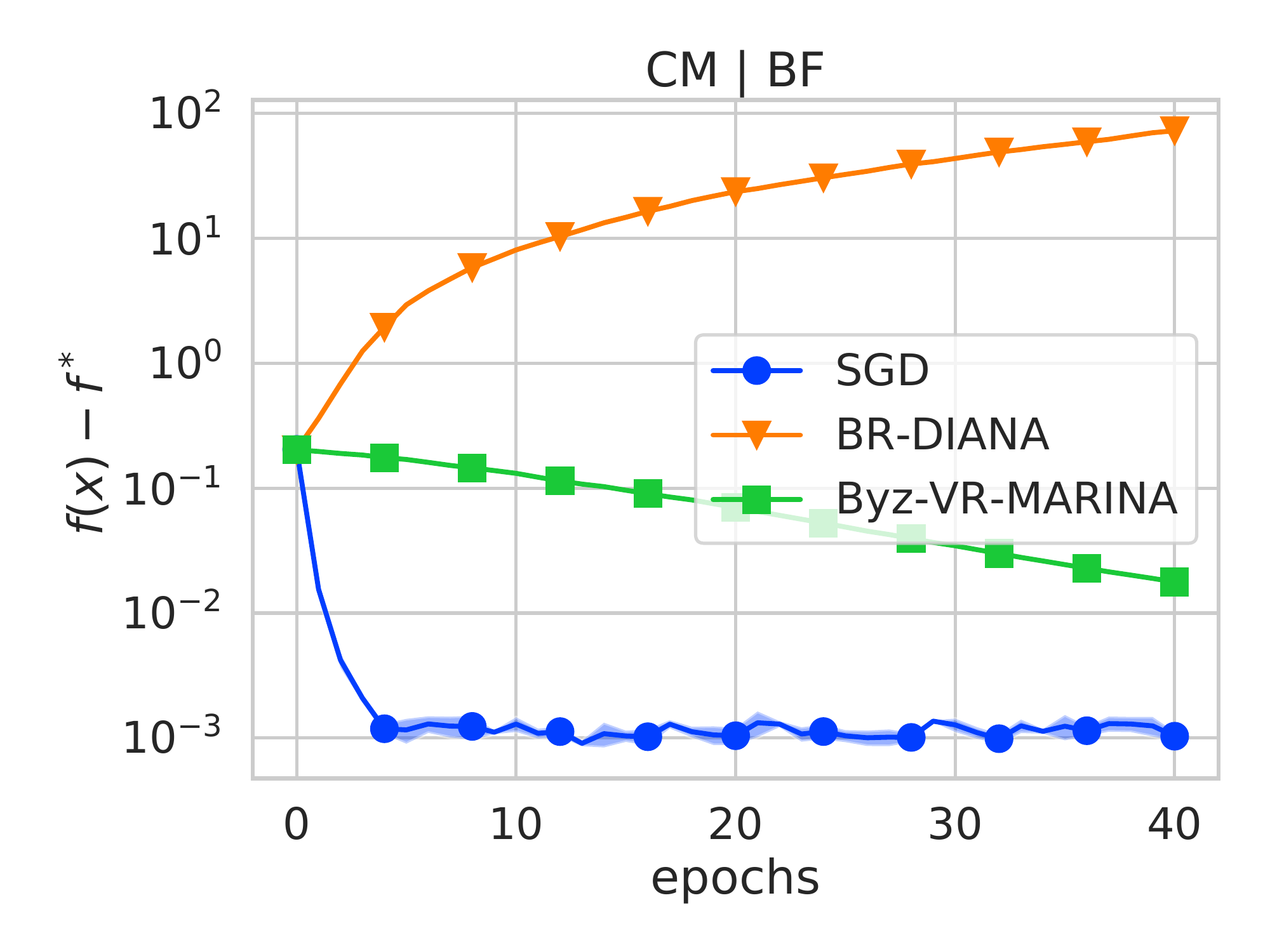}
\includegraphics[width=0.195\textwidth]{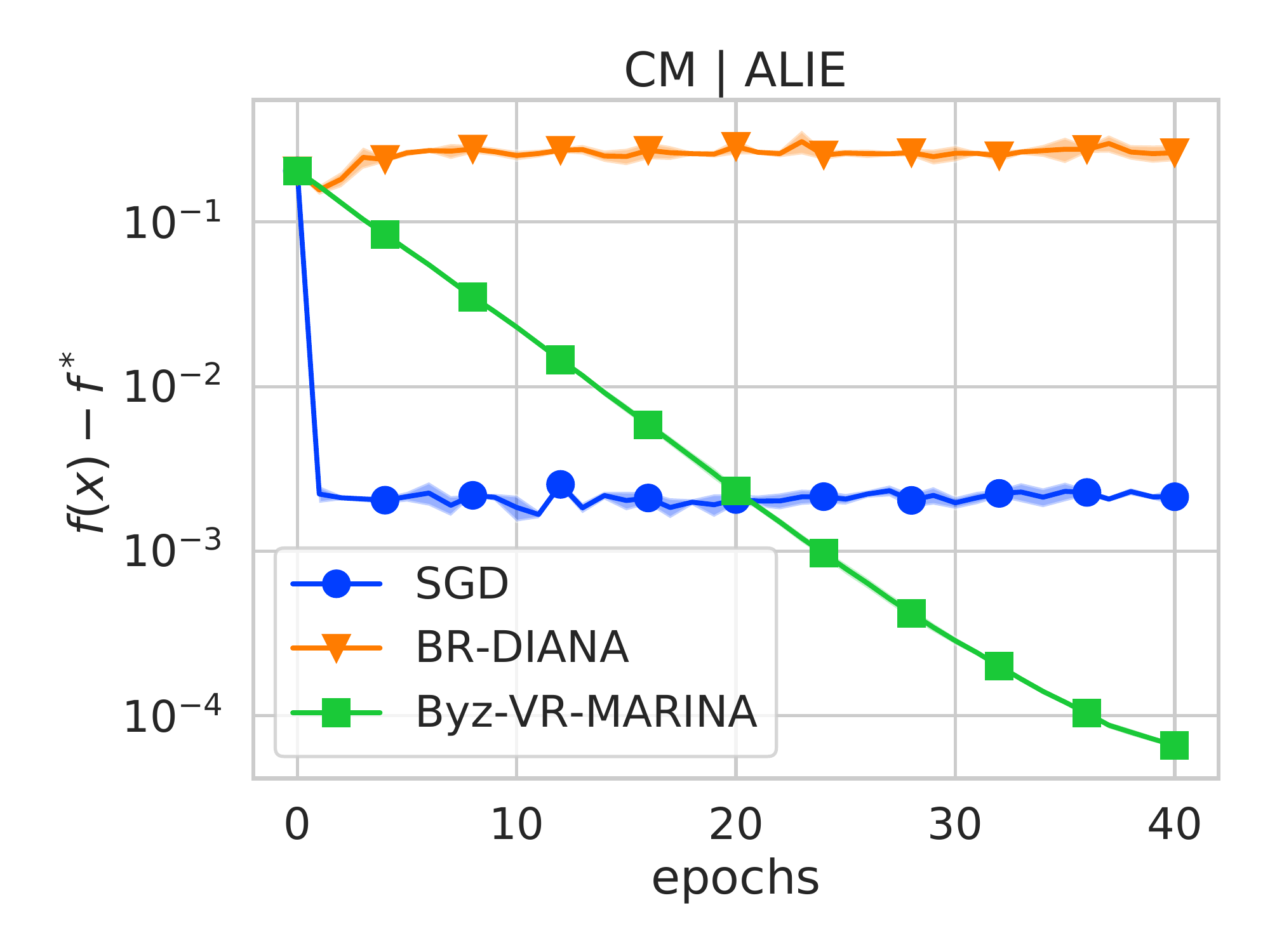}
\includegraphics[width=0.195\textwidth]{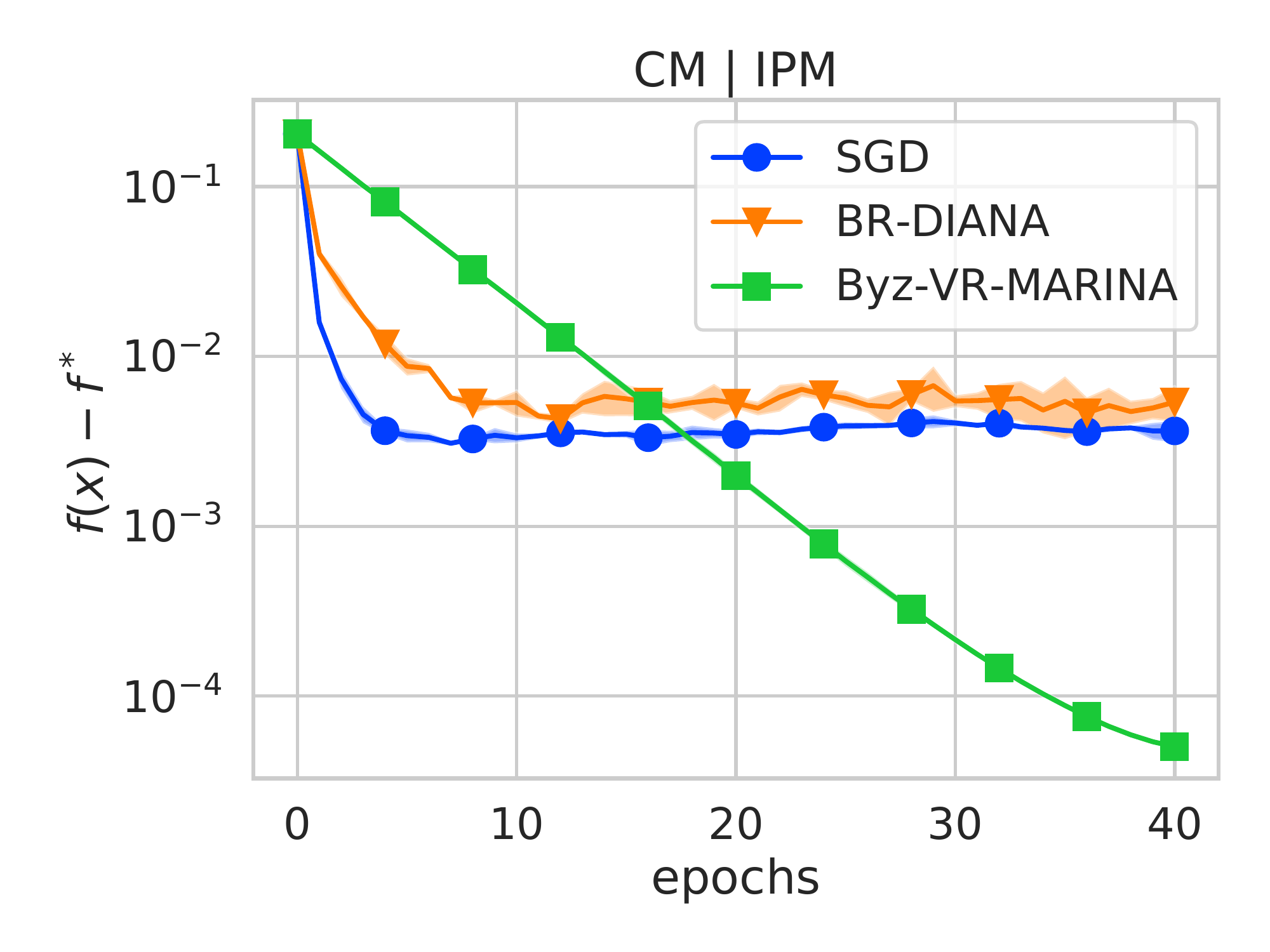}
\\
\includegraphics[width=0.195\textwidth]{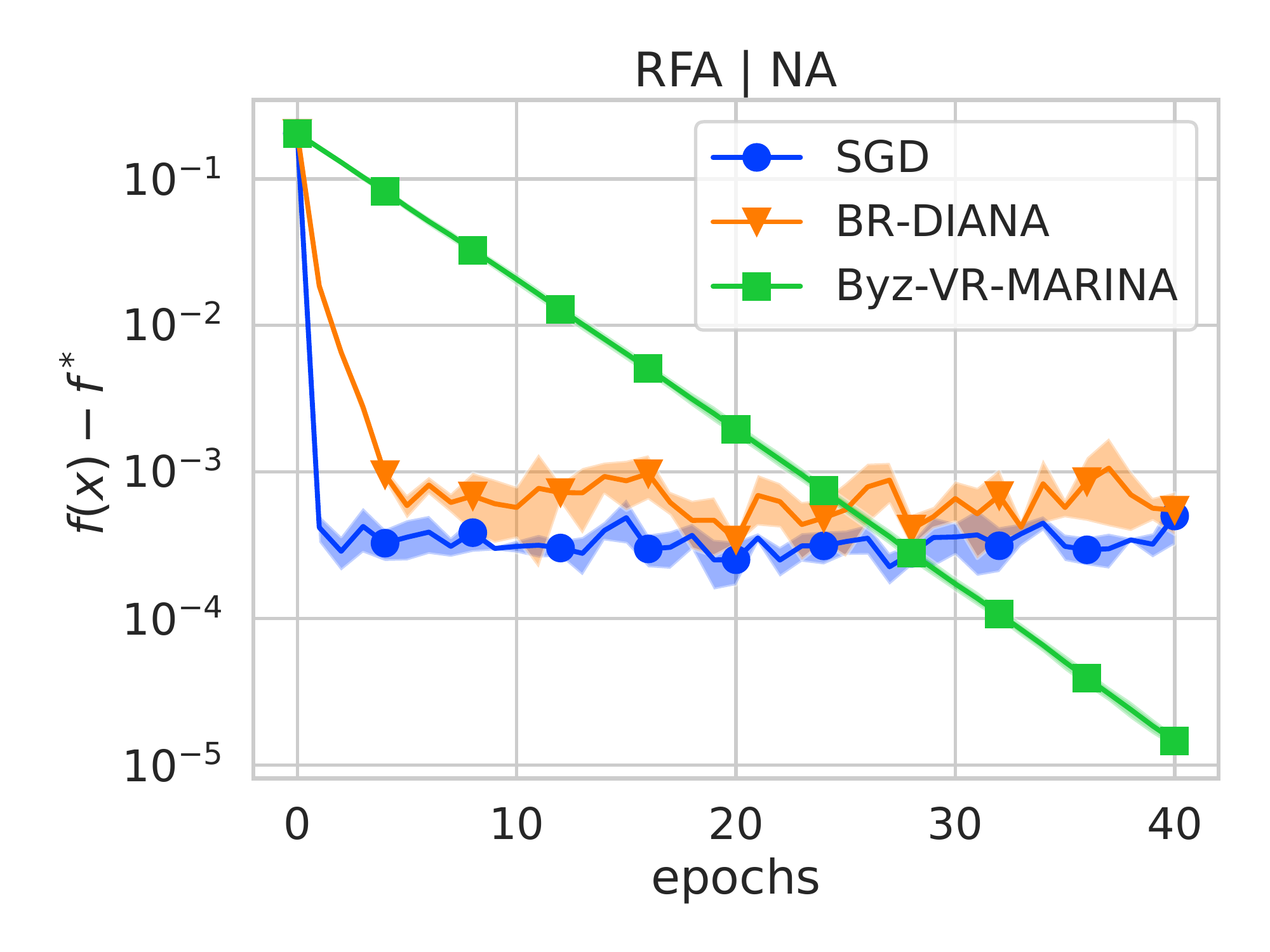}
\includegraphics[width=0.195\textwidth]{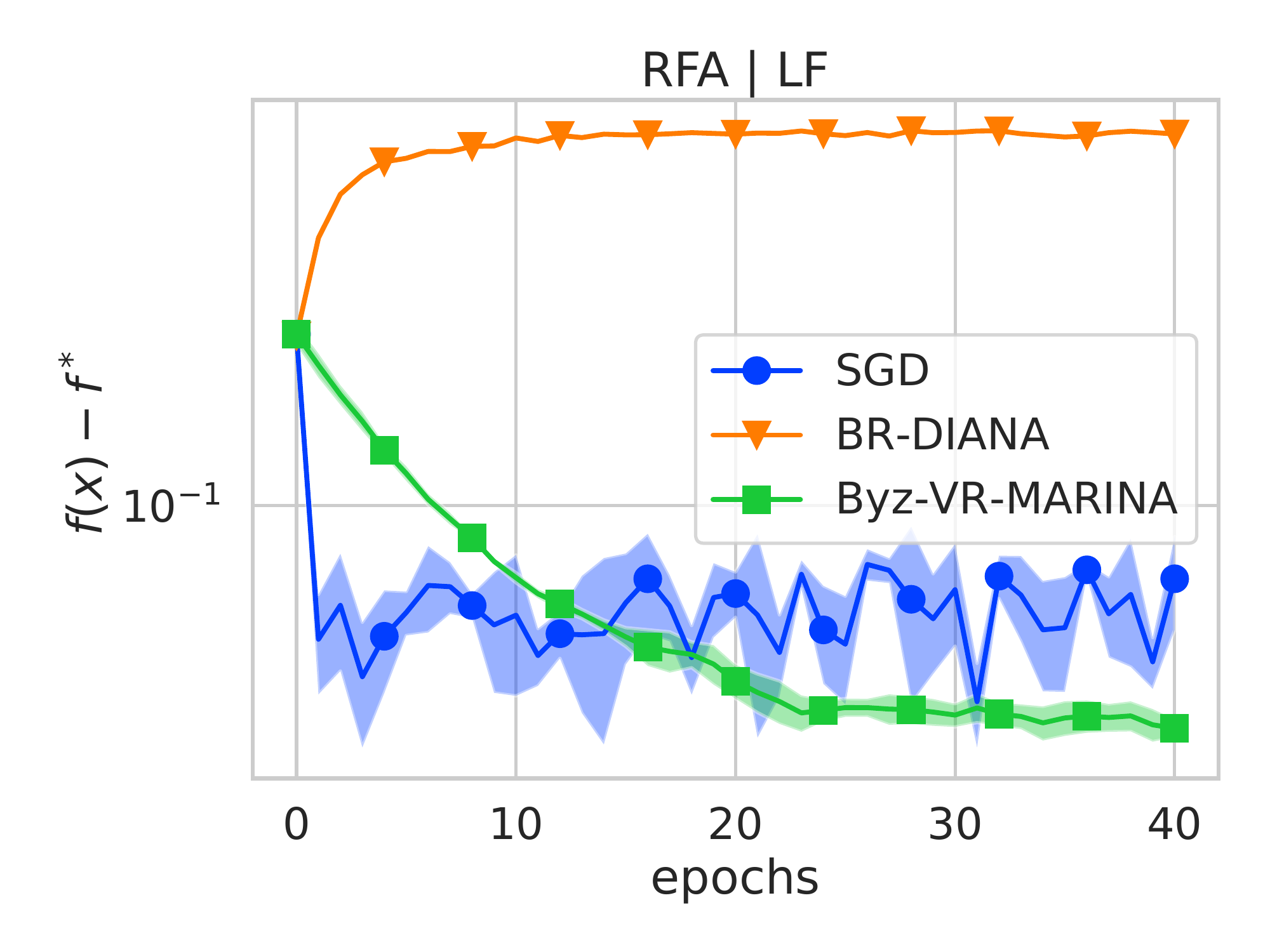}
\includegraphics[width=0.195\textwidth]{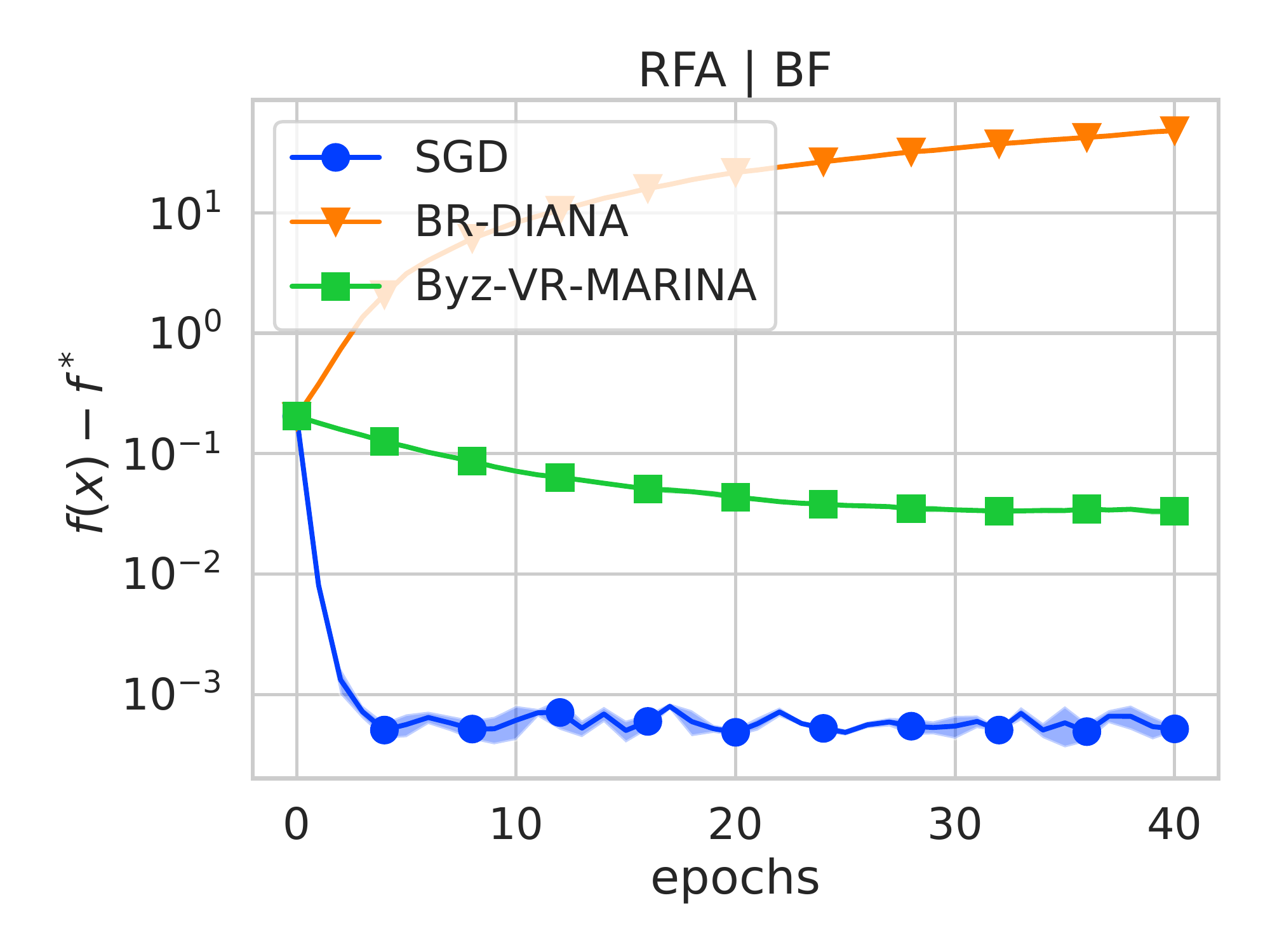}
\includegraphics[width=0.195\textwidth]{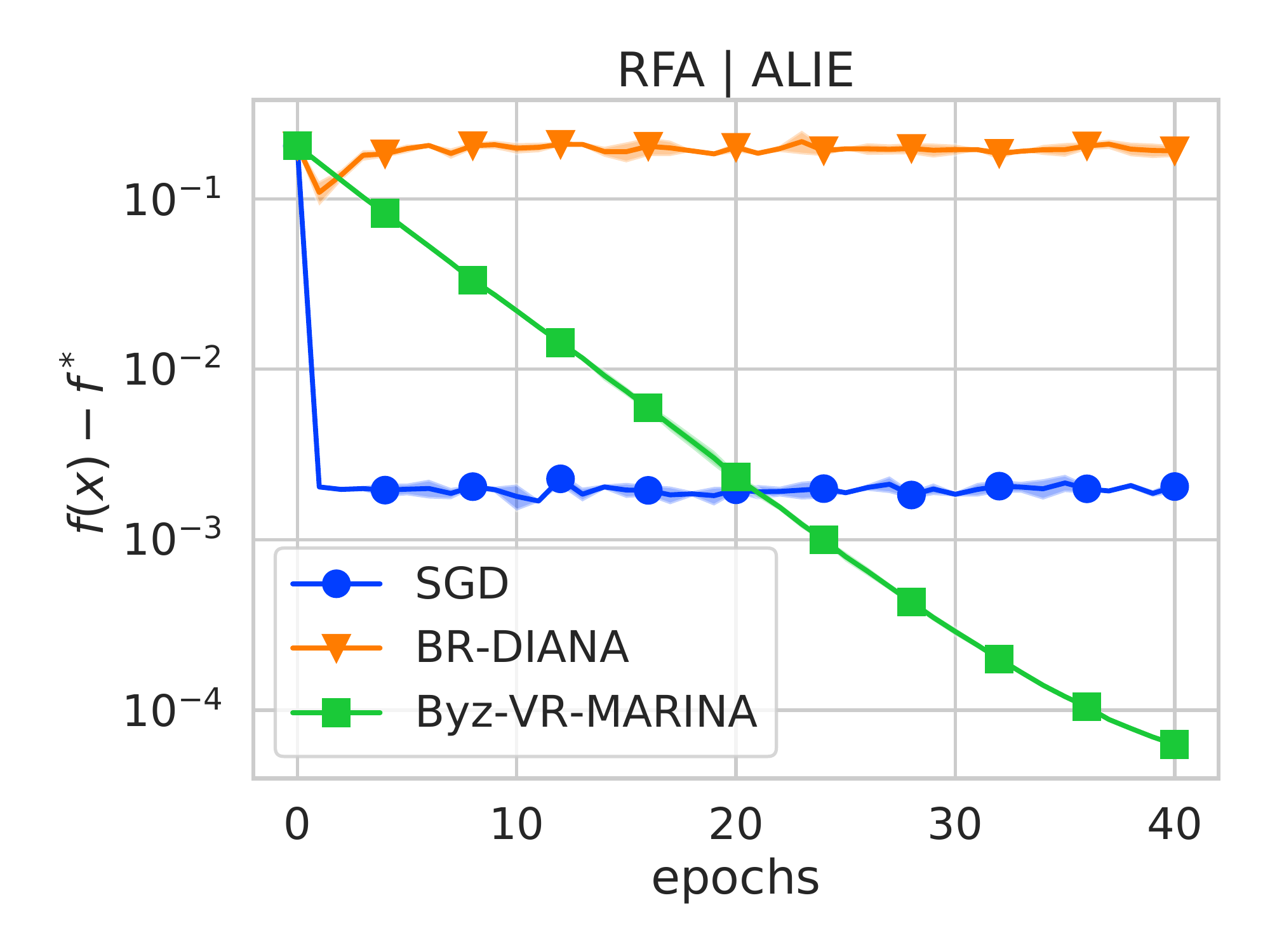}
\includegraphics[width=0.195\textwidth]{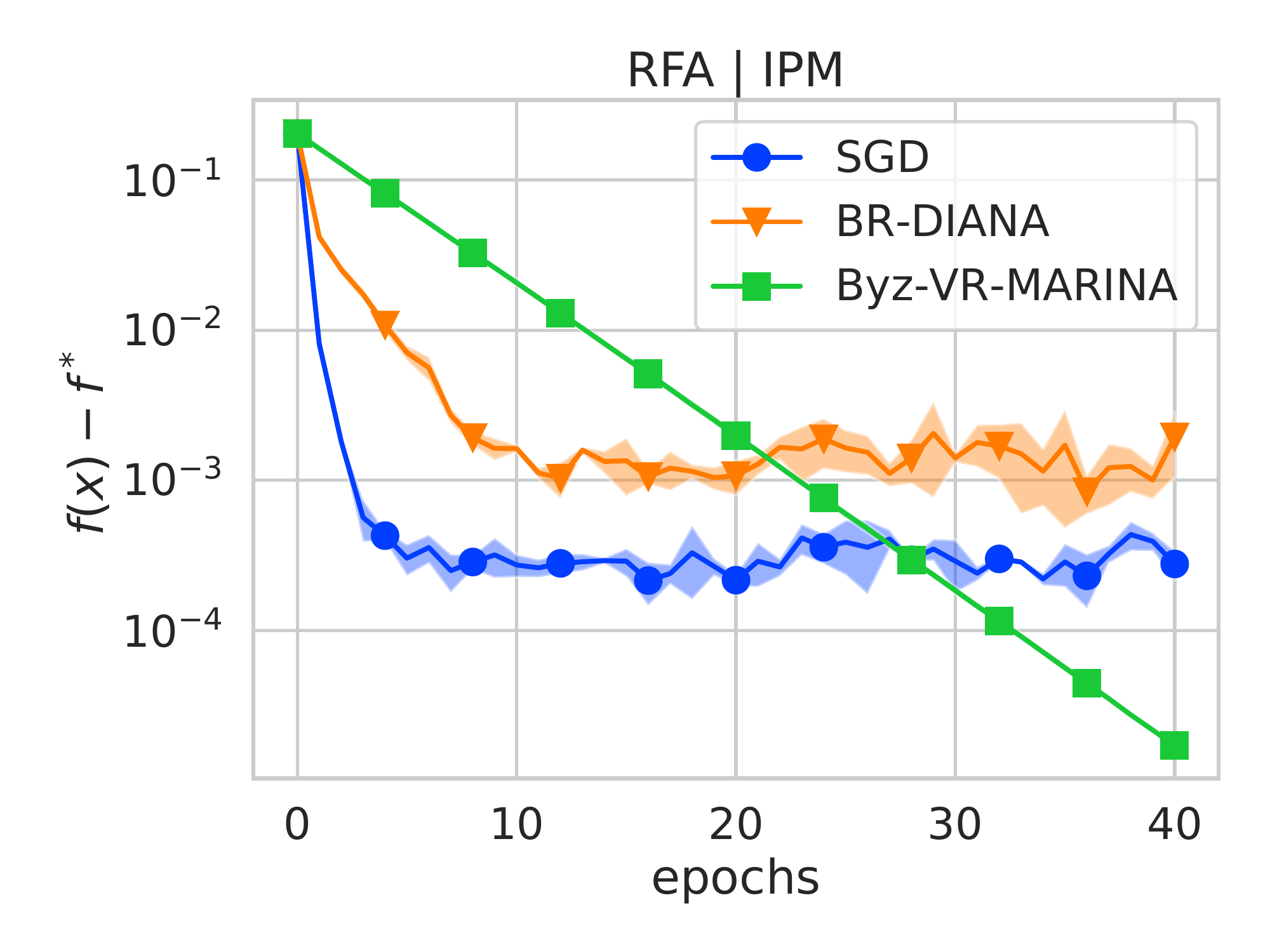}
\caption{The optimality gap $f(x^k) - f(x^*)$ of 3 aggregation rules (AVG, CM, RFA) under 5 attacks (NA, LF, BF, ALIE, IPM) on a9a dataset with uniform split over 15 workers with 5 \revision{Byzantine workers}. The top row displays the best performance in hindsight for a given attack. Each method uses Rand$K$ sparsification with $K = 0.1 d$.} 
\label{fig:a9a_sparse}
\vspace{-5mm}
\end{figure}

\subsection{General setup}
Our running environment has the following setup:
\begin{itemize}
\item 24 CPUs: Intel(R) Xeon(R) Gold 6146 CPU @ 3.20GHz ,
\item GPU: NVIDIA TITAN Xp with CUDA version 11.3,
\item PyTorch version: 1.11.0.
\end{itemize}
\begin{figure}
\centering
\includegraphics[width=0.195\textwidth]{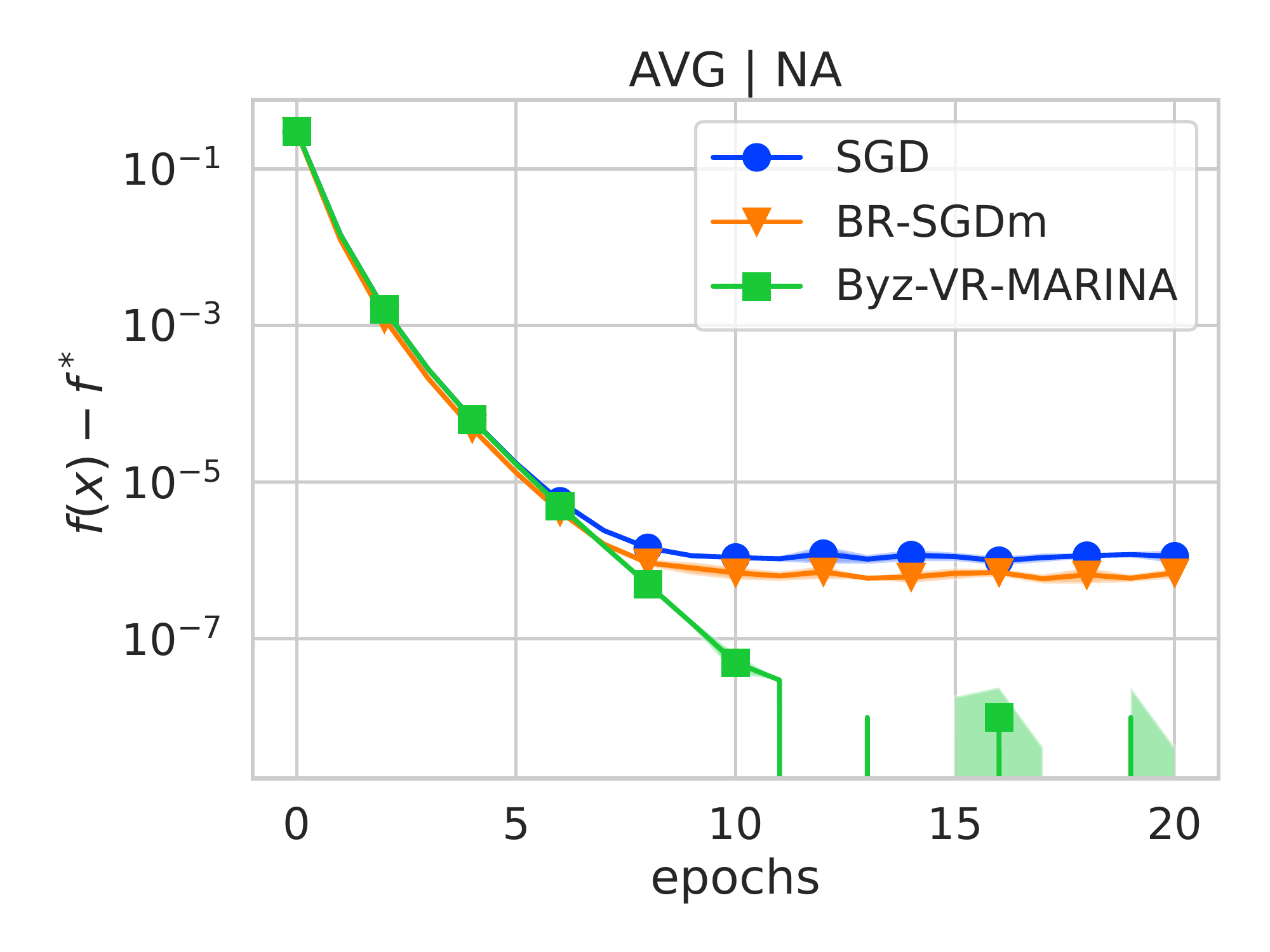}
\includegraphics[width=0.195\textwidth]{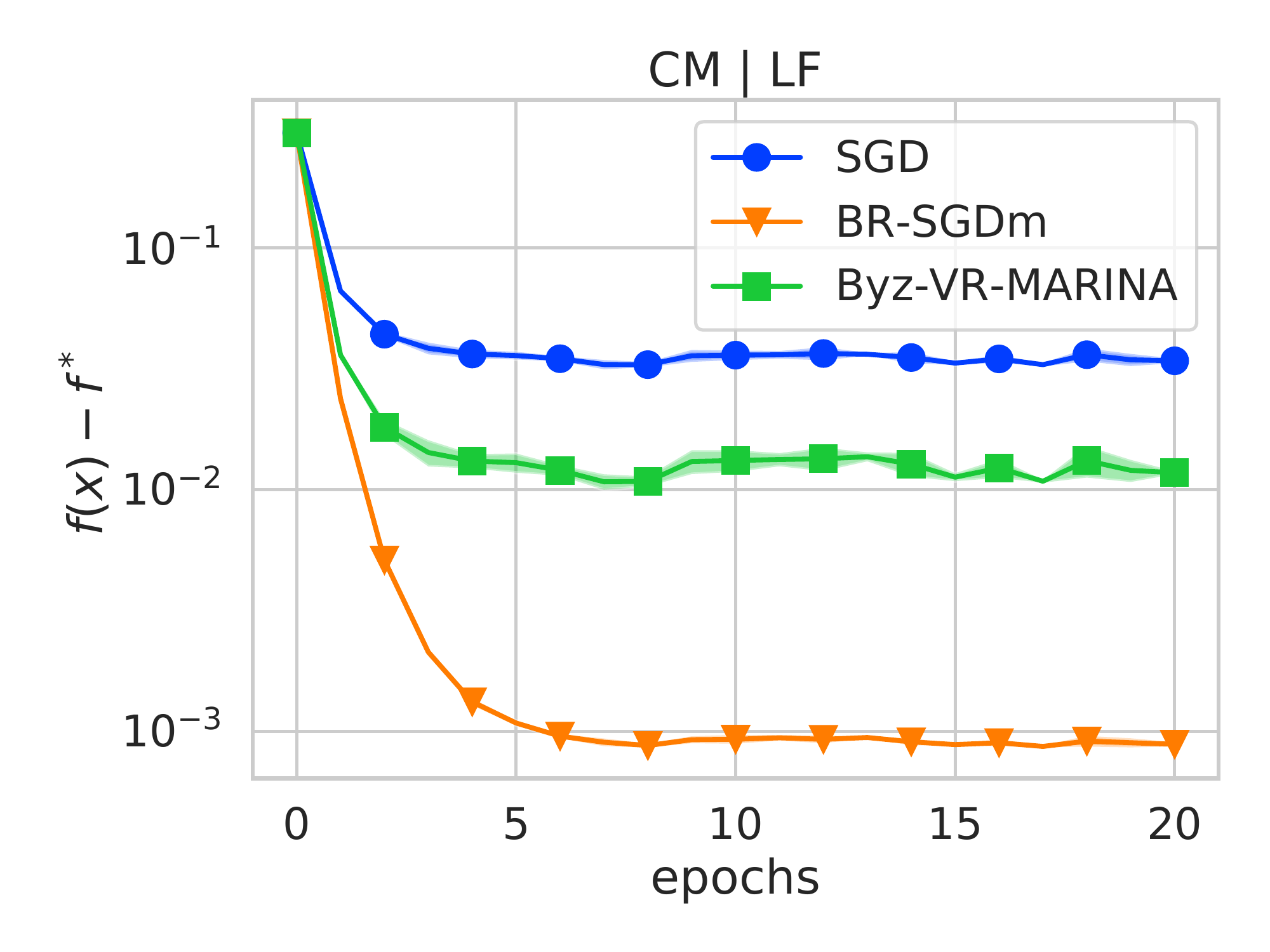}
\includegraphics[width=0.195\textwidth]{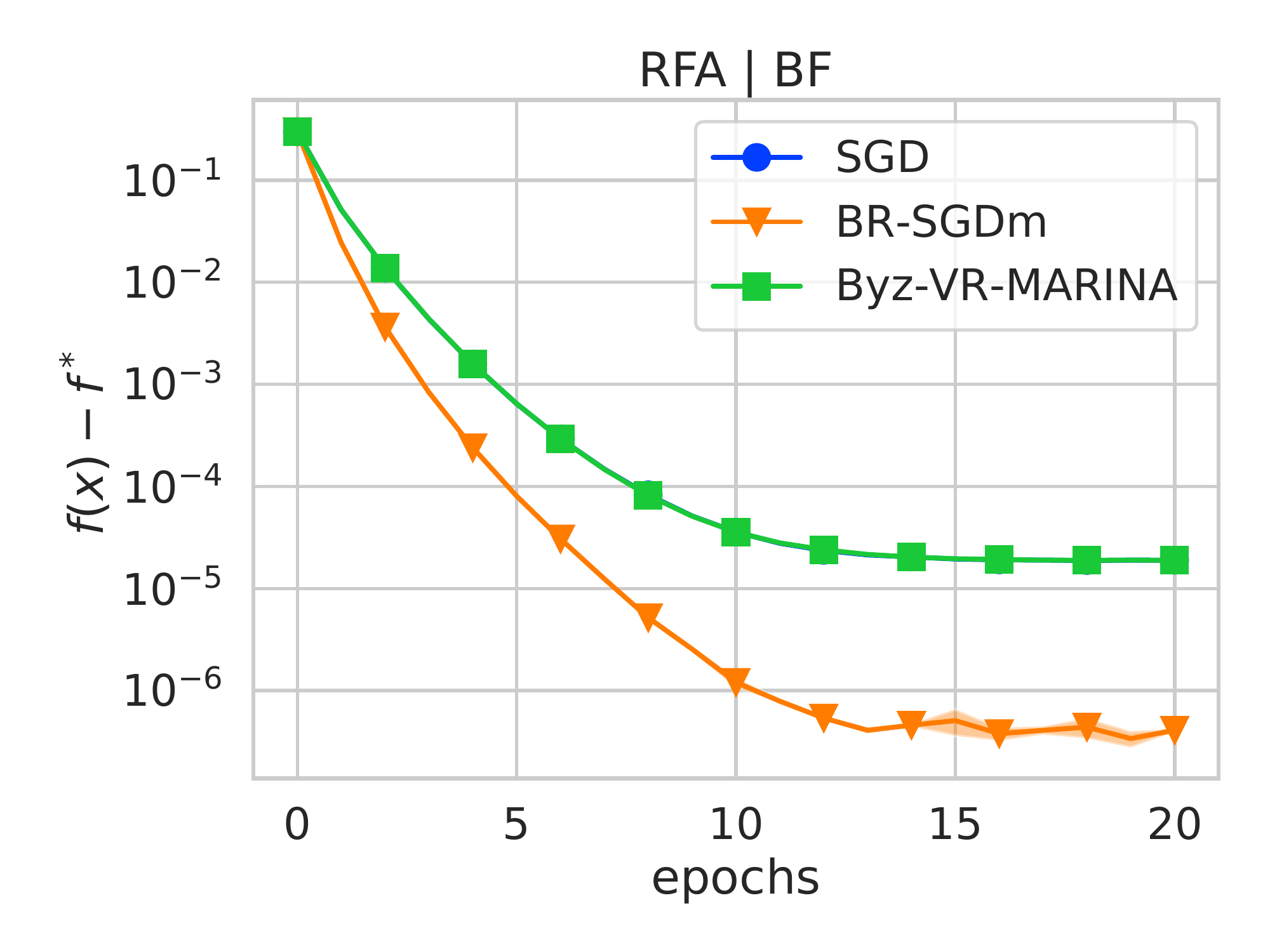}
\includegraphics[width=0.195\textwidth]{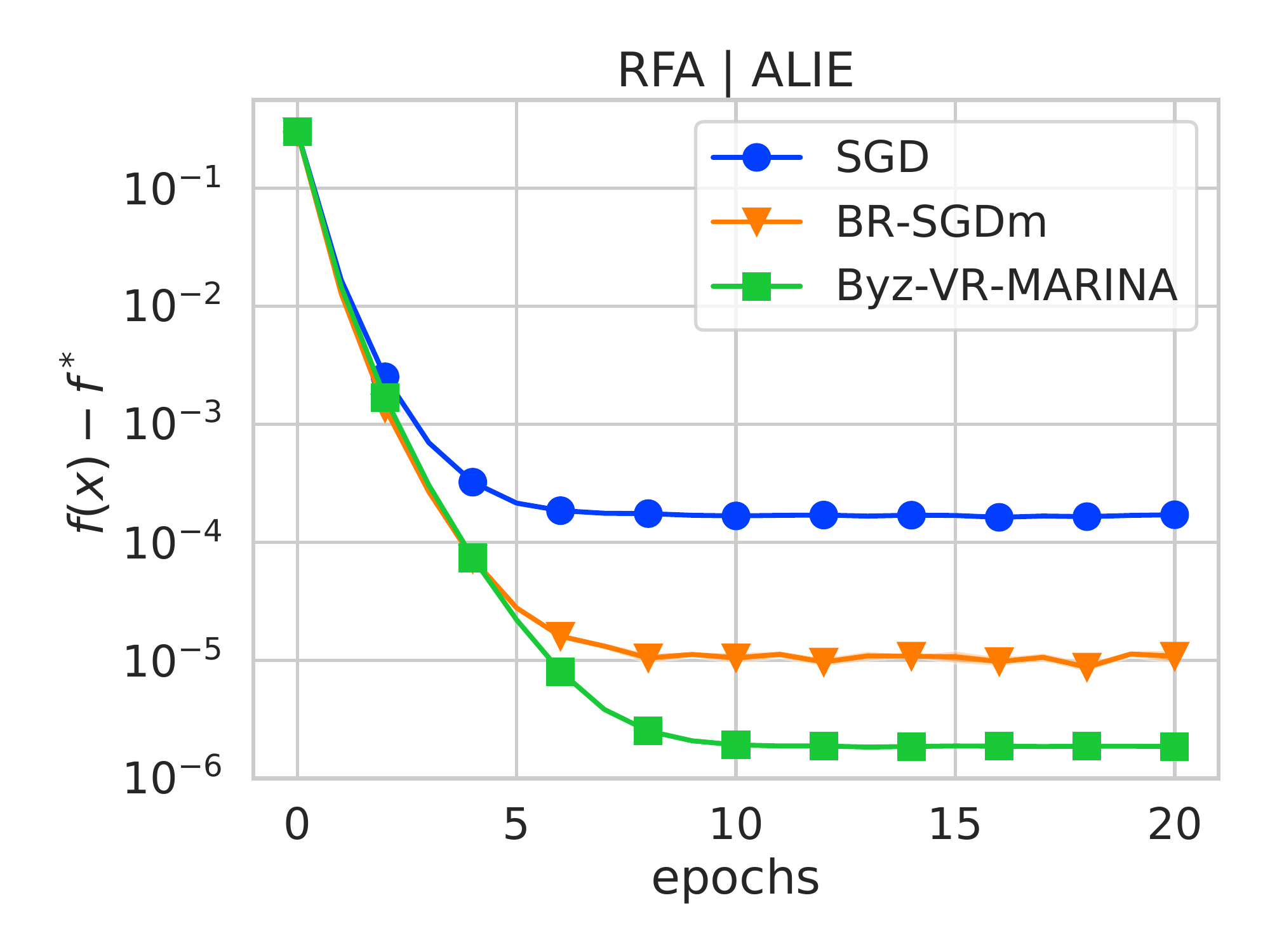}
\includegraphics[width=0.195\textwidth]{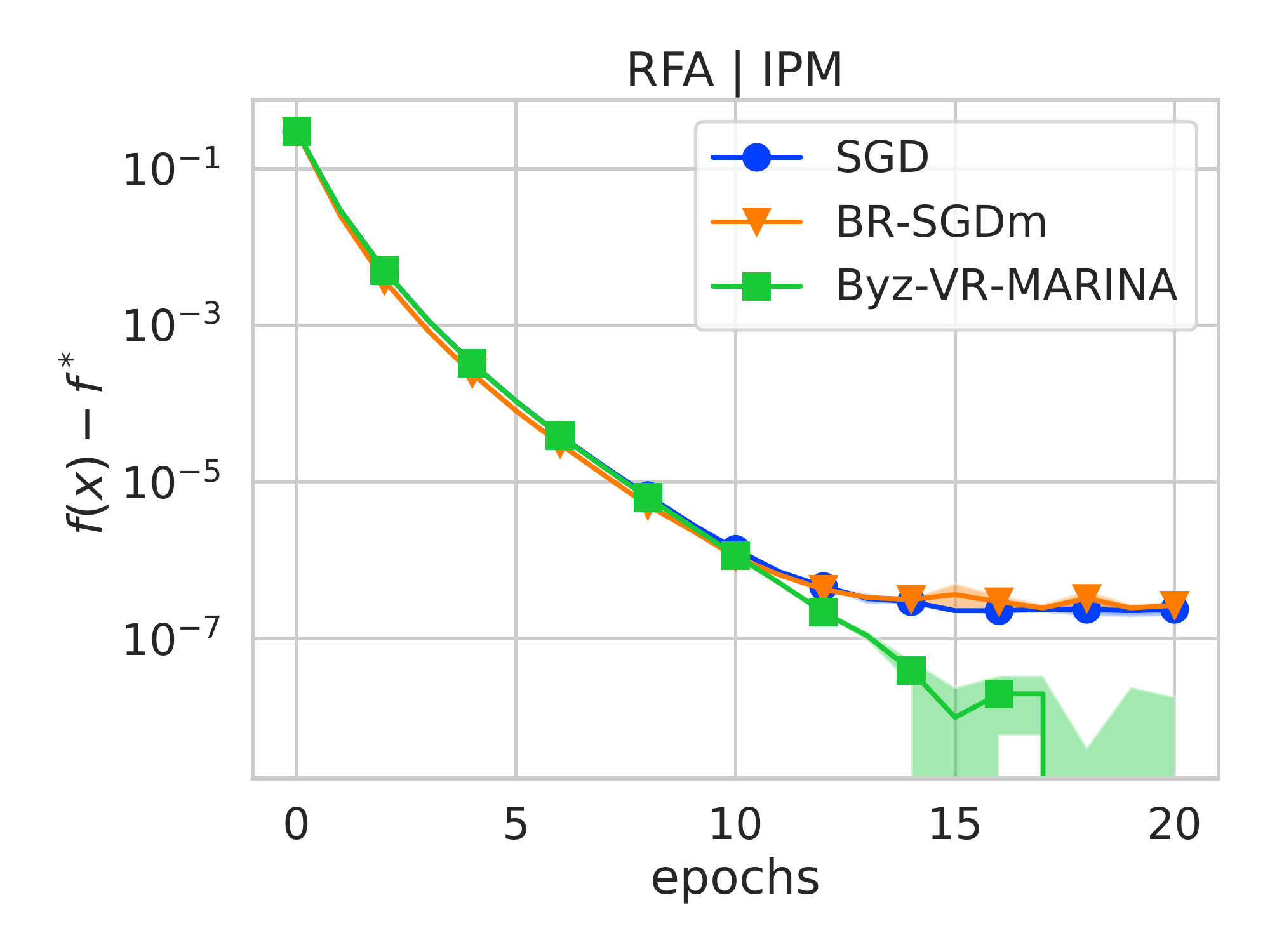}
\\
\includegraphics[width=0.195\textwidth]{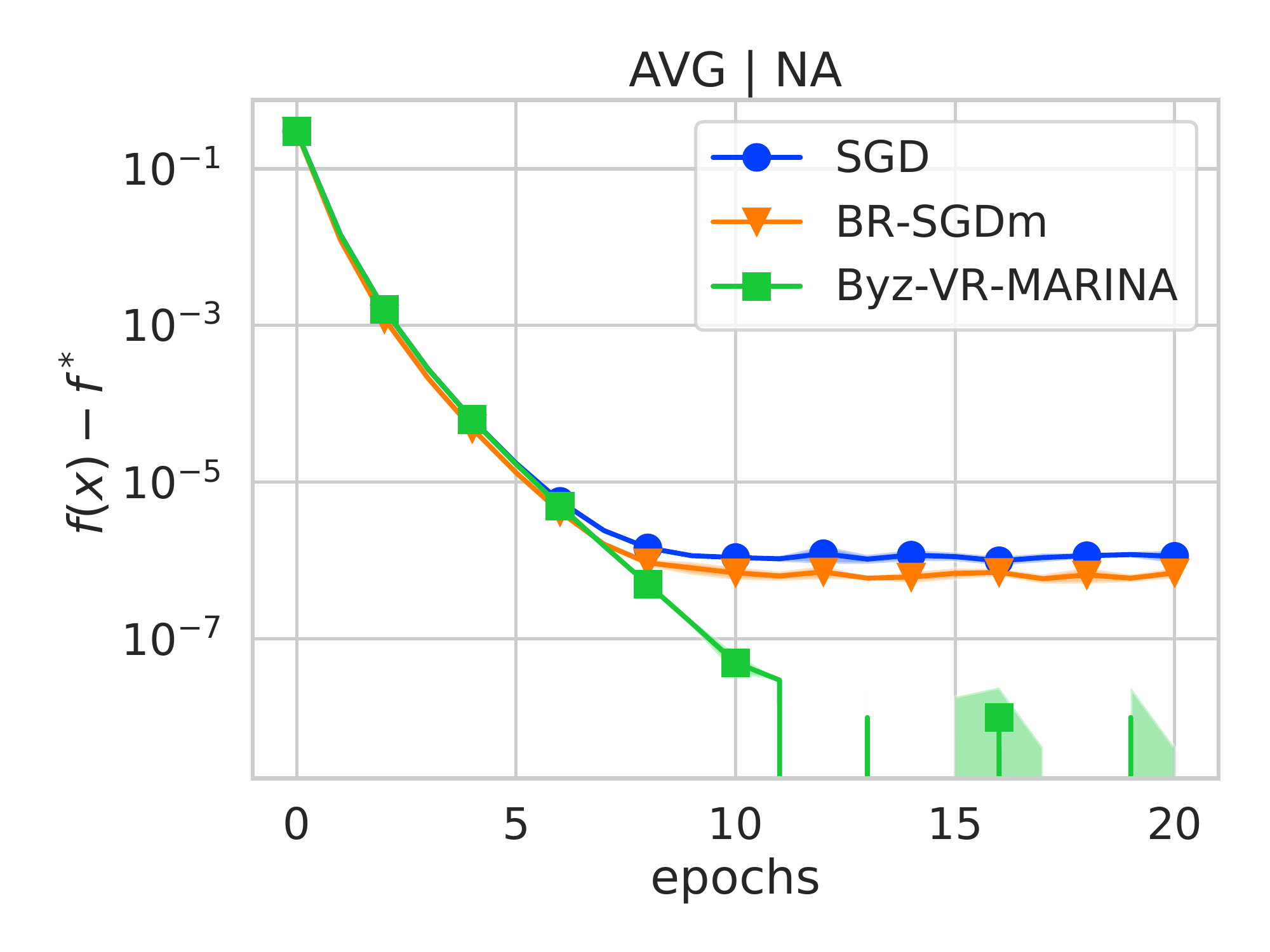}
\includegraphics[width=0.195\textwidth]{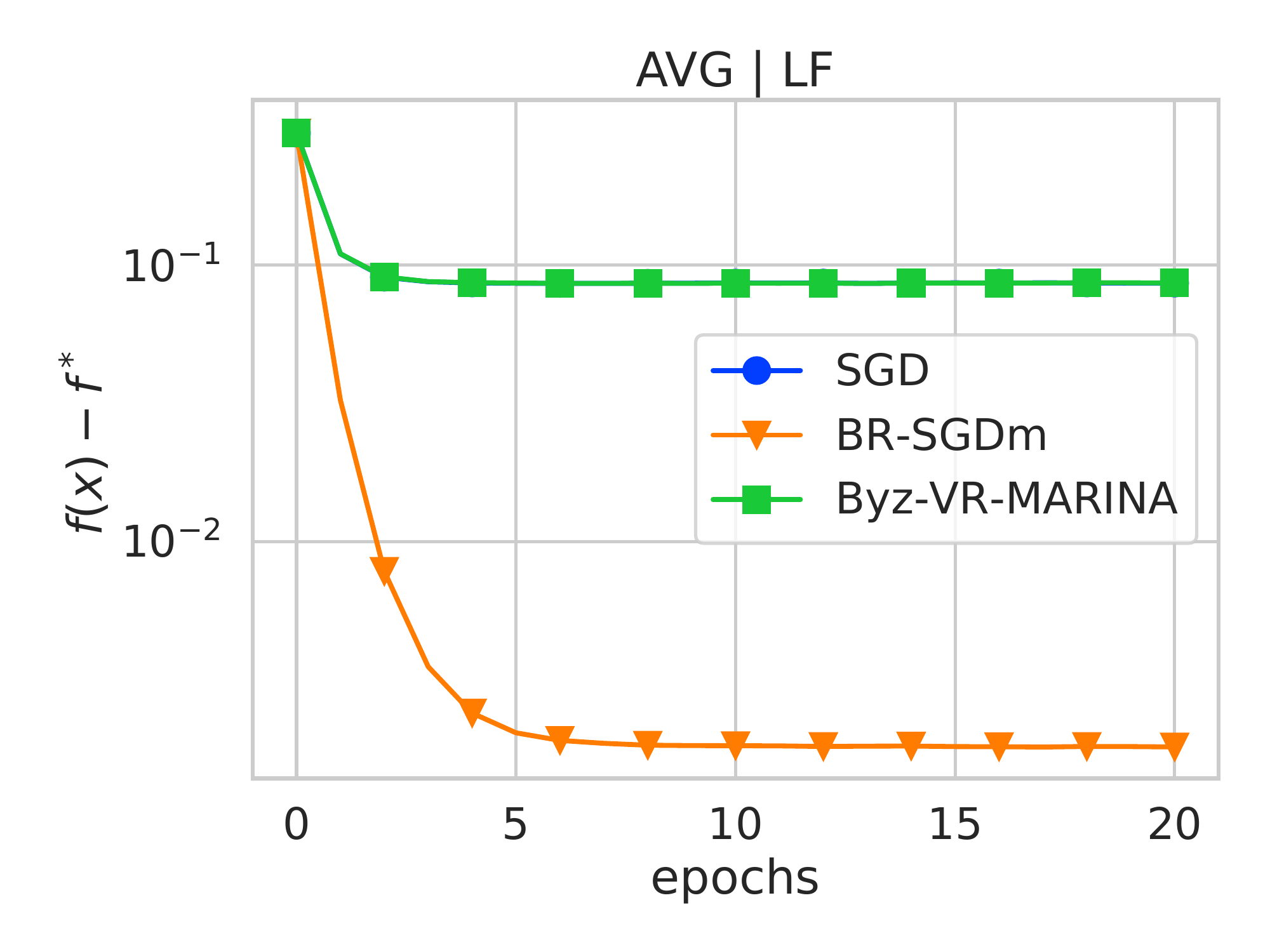}
\includegraphics[width=0.195\textwidth]{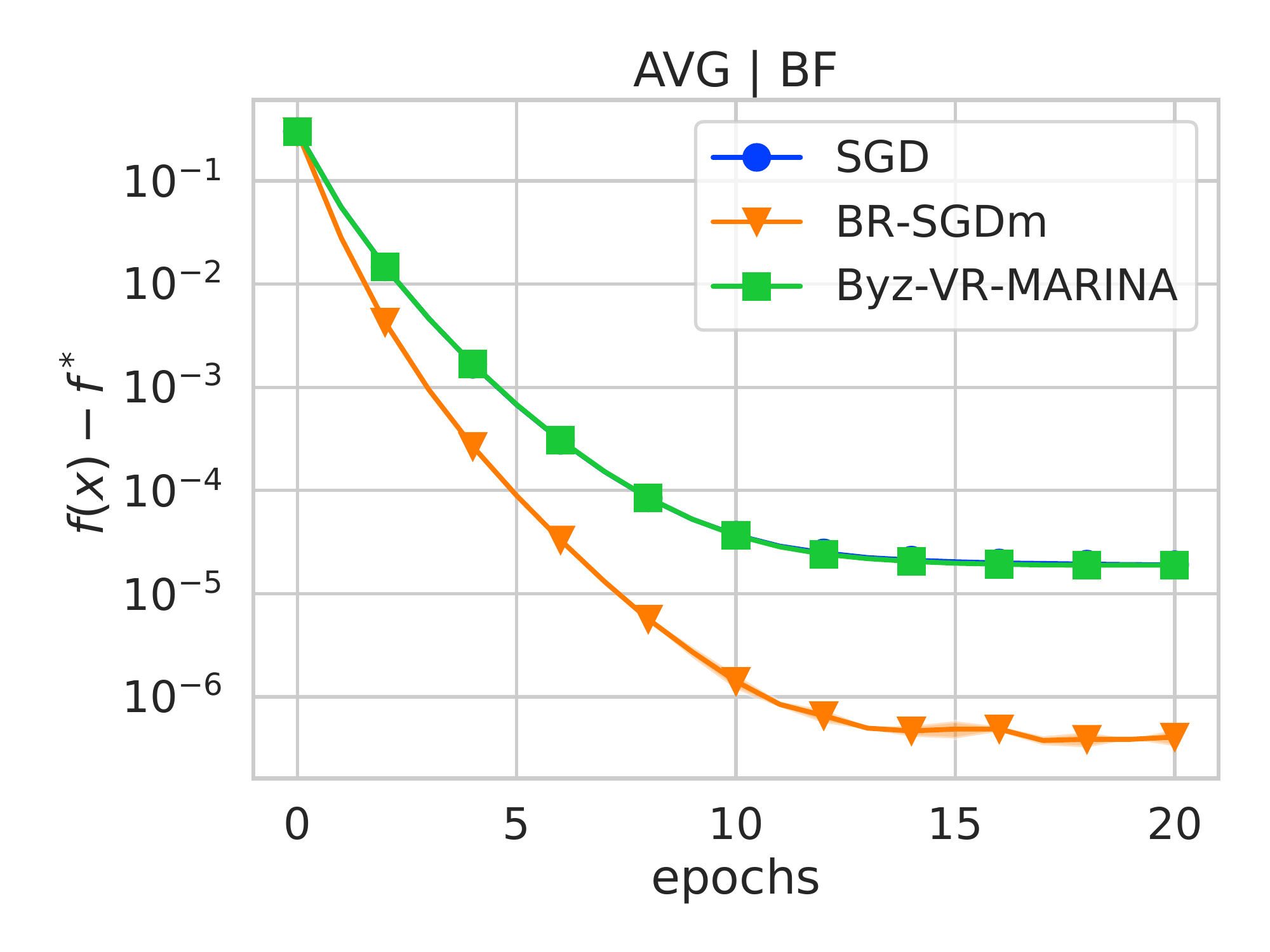}
\includegraphics[width=0.195\textwidth]{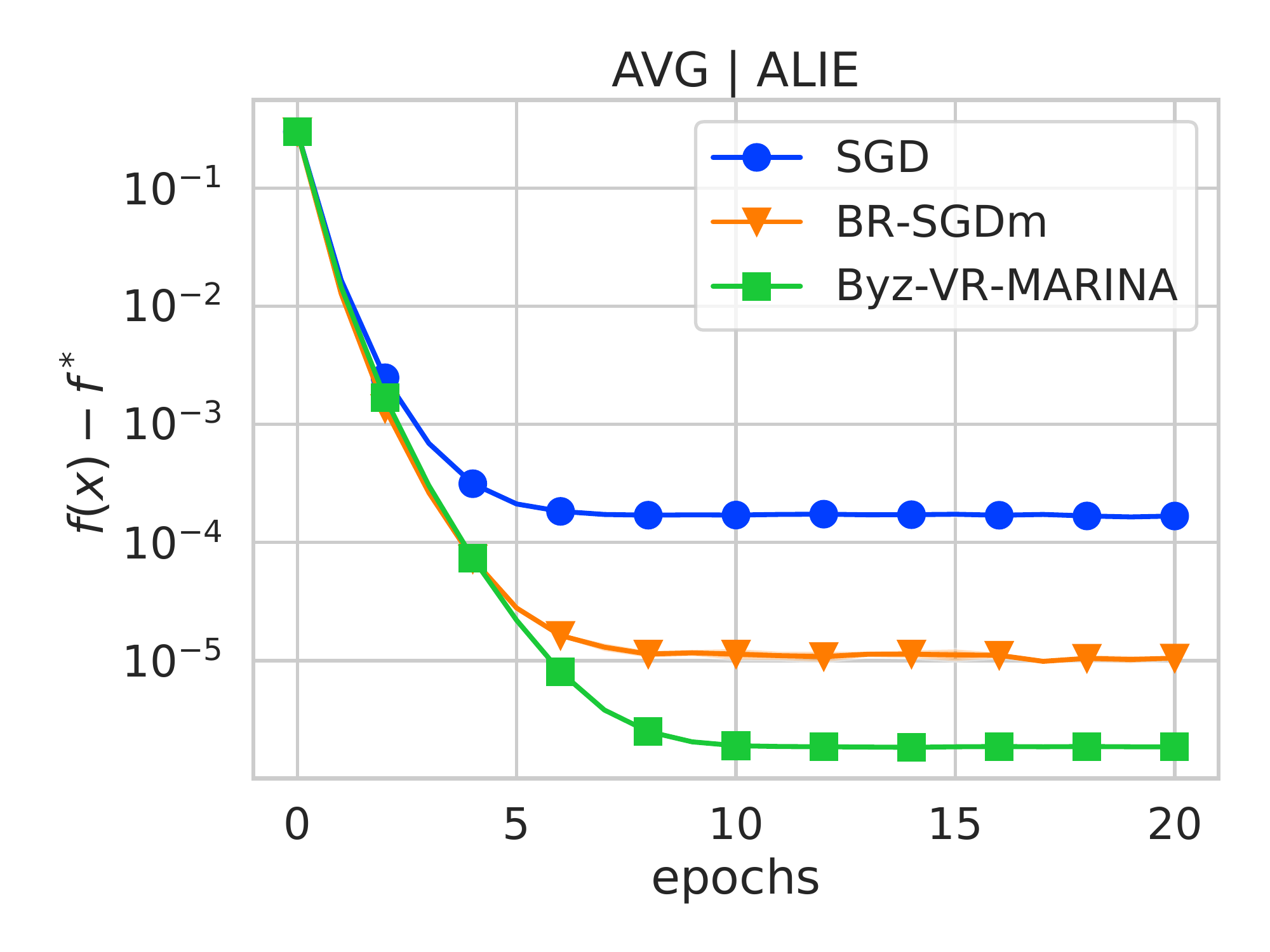}
\includegraphics[width=0.195\textwidth]{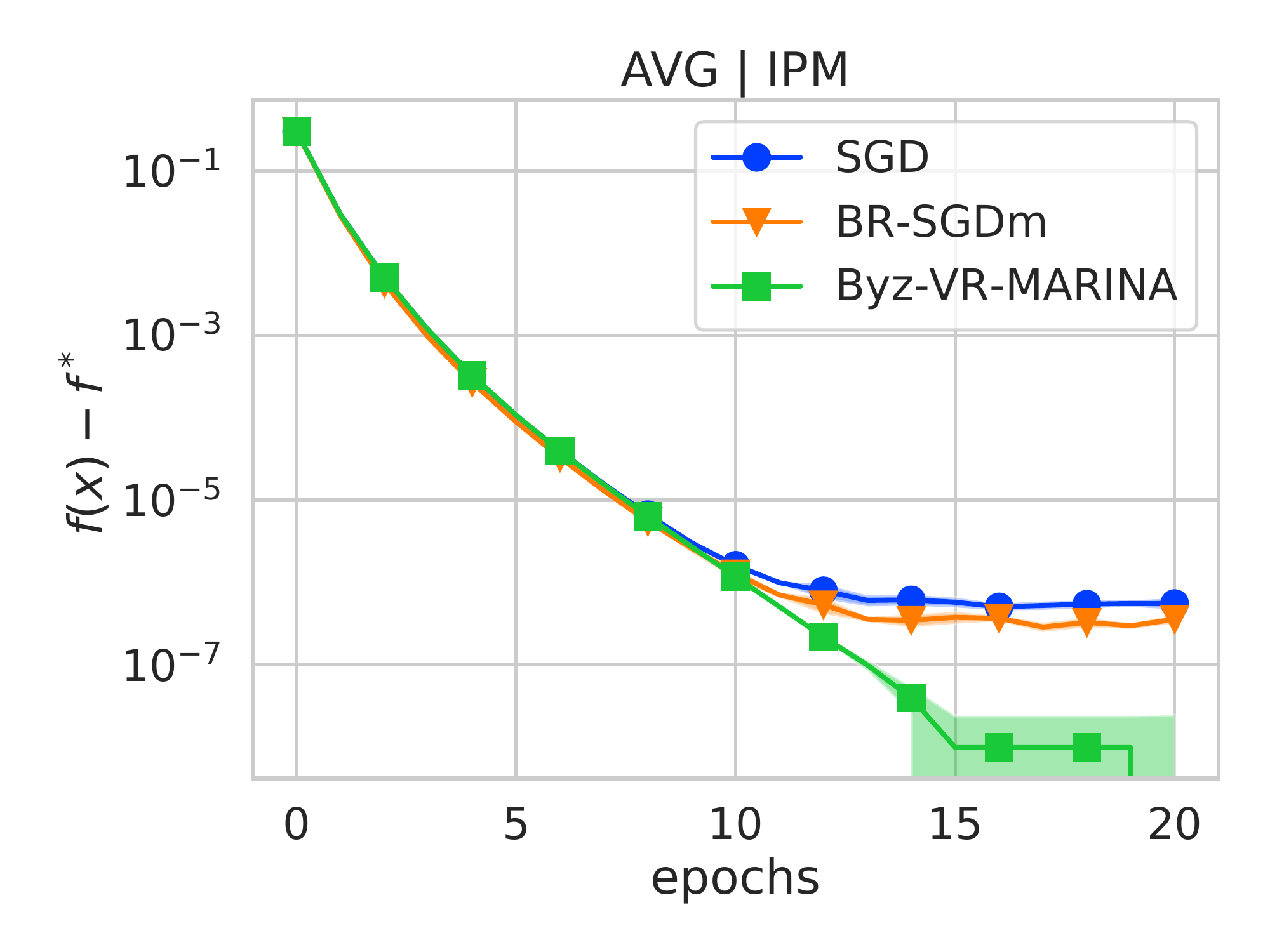}
\\
\includegraphics[width=0.195\textwidth]{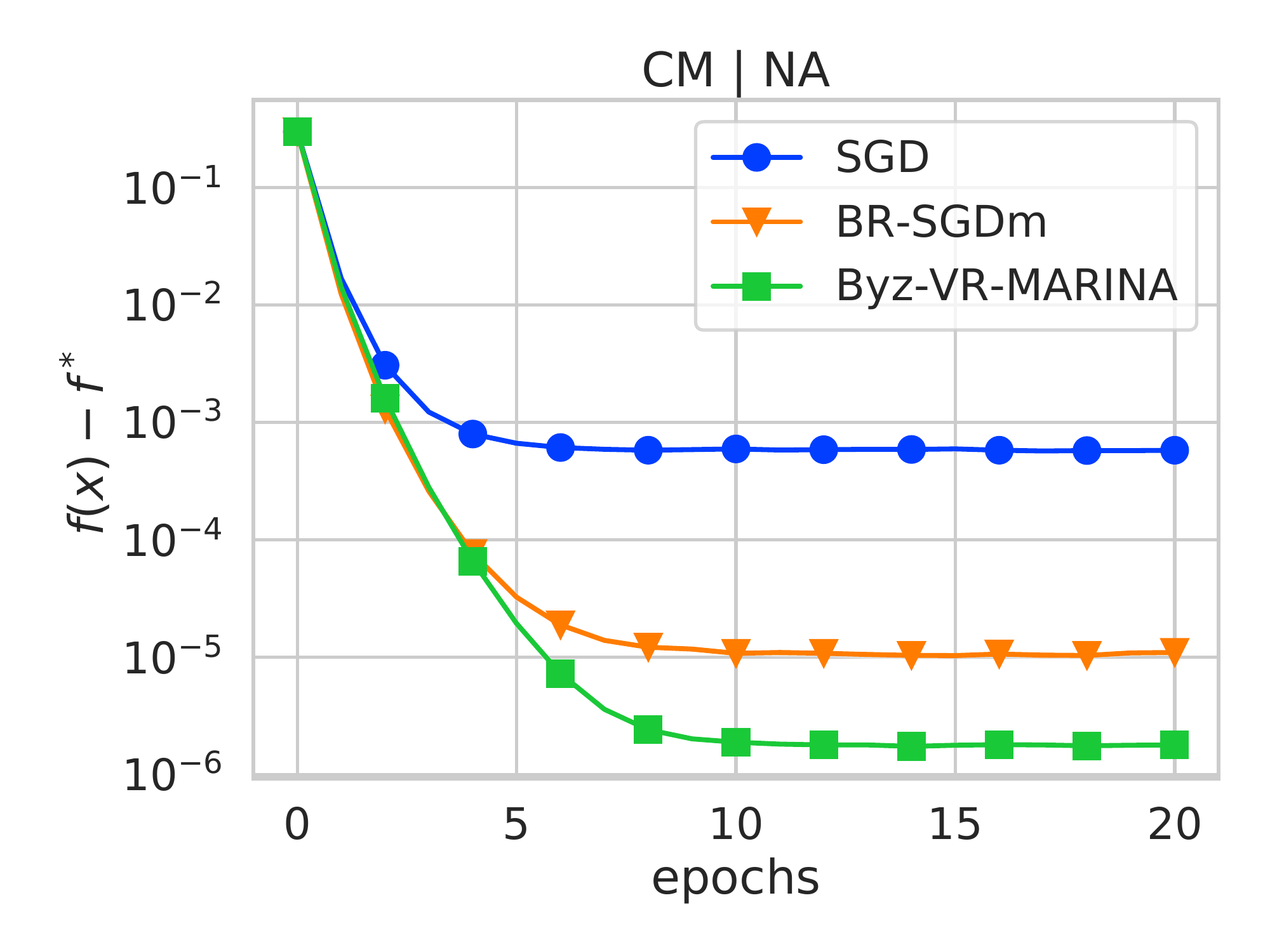}
\includegraphics[width=0.195\textwidth]{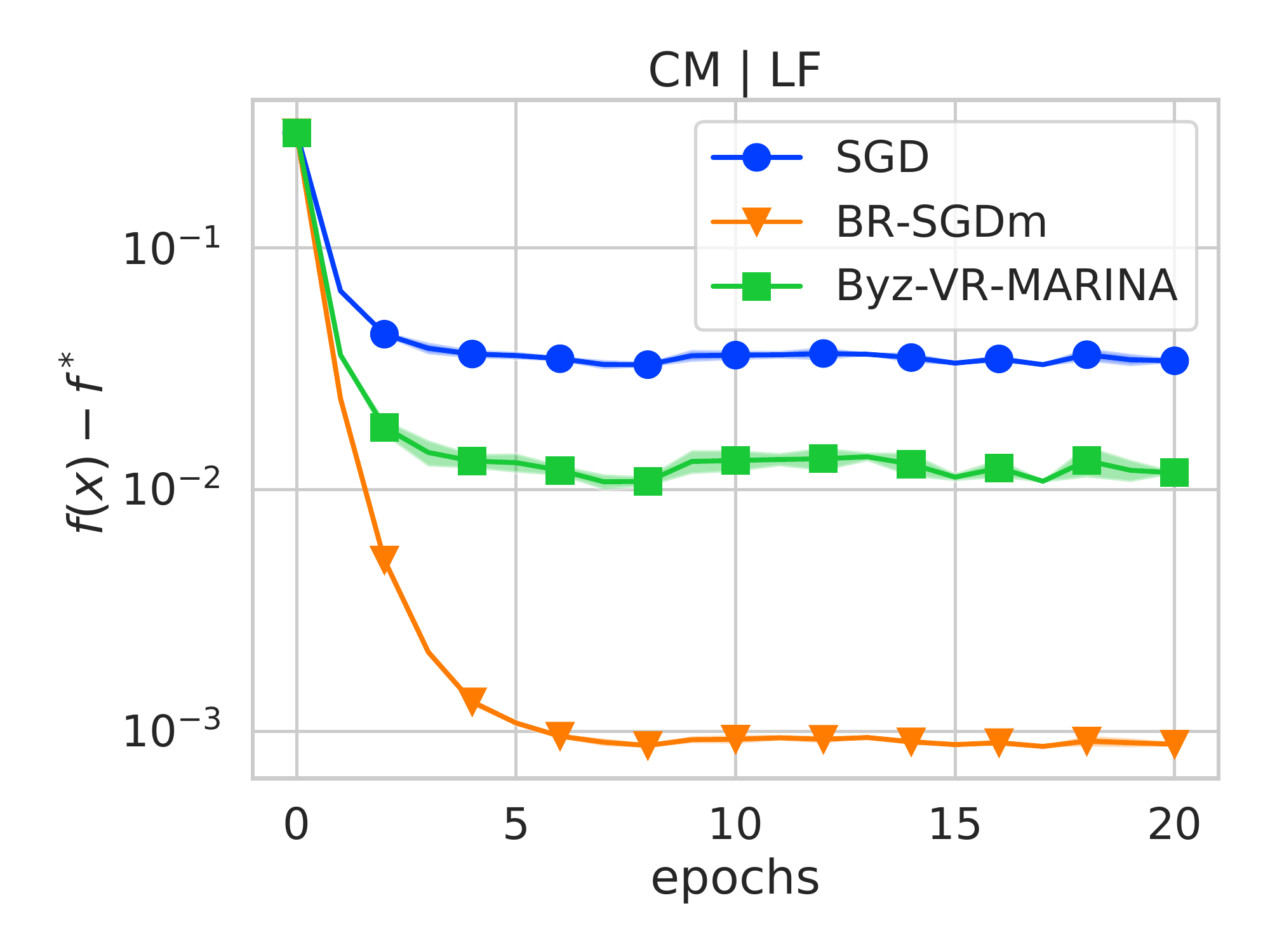}
\includegraphics[width=0.195\textwidth]{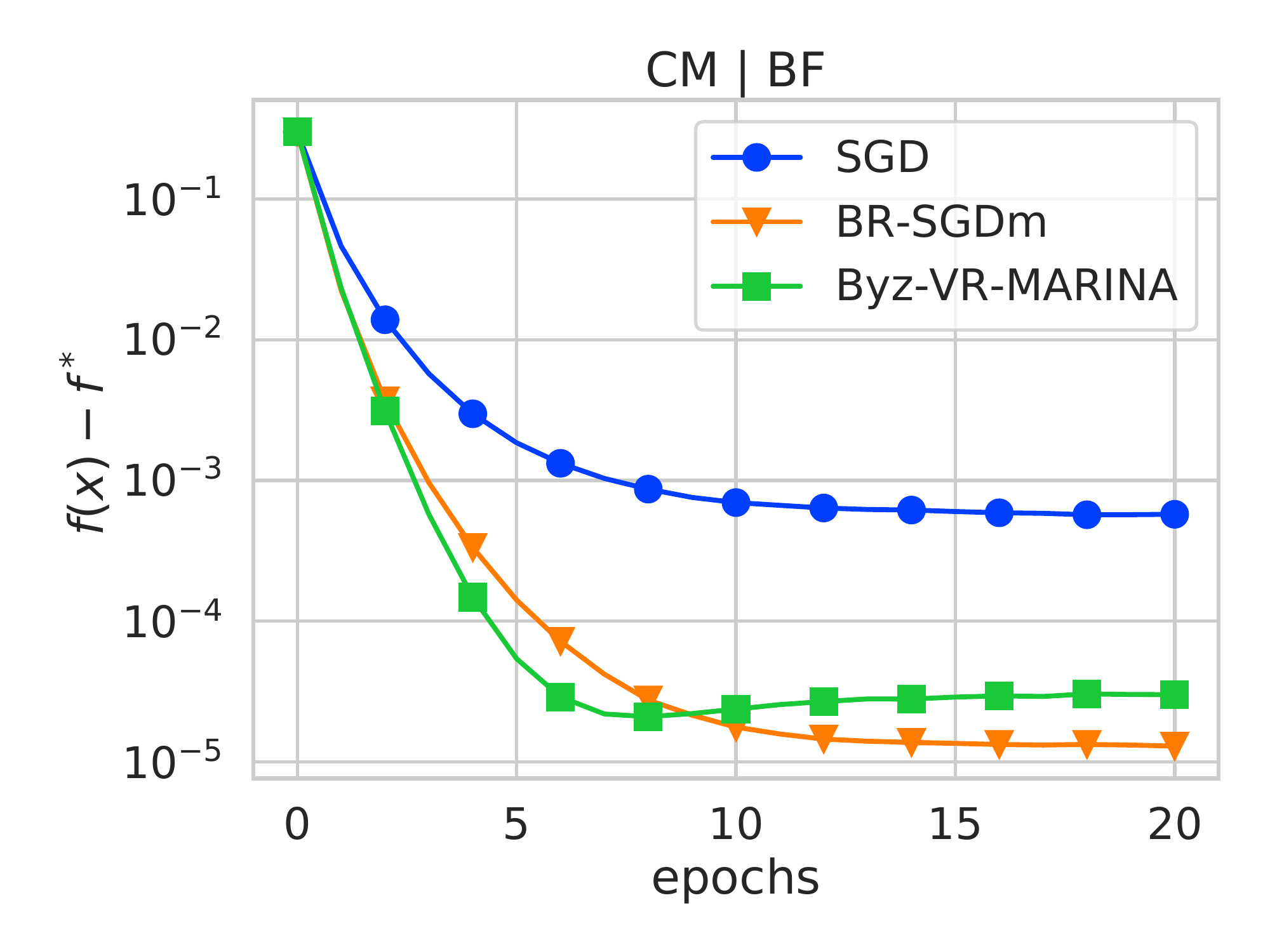}
\includegraphics[width=0.195\textwidth]{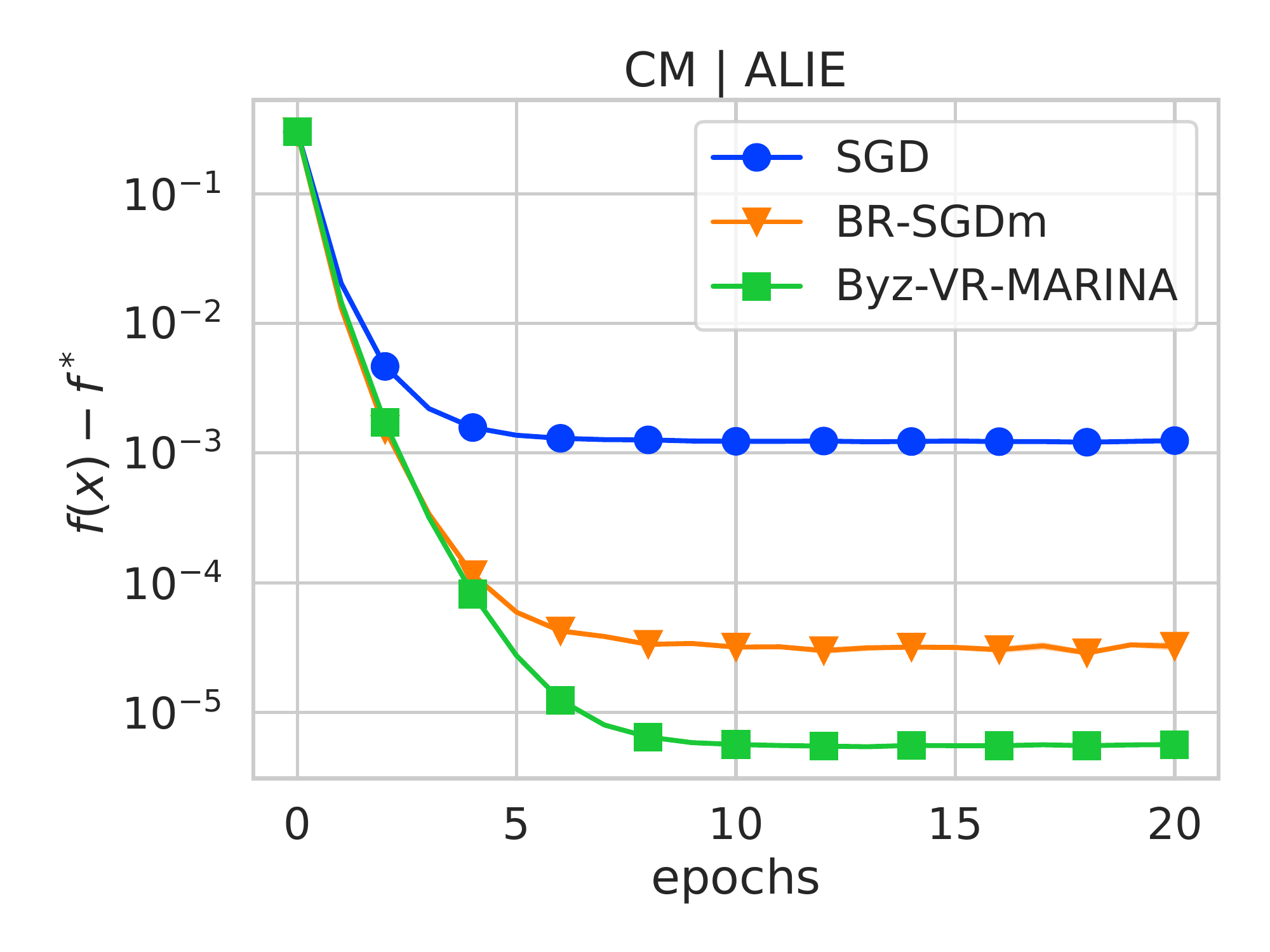}
\includegraphics[width=0.195\textwidth]{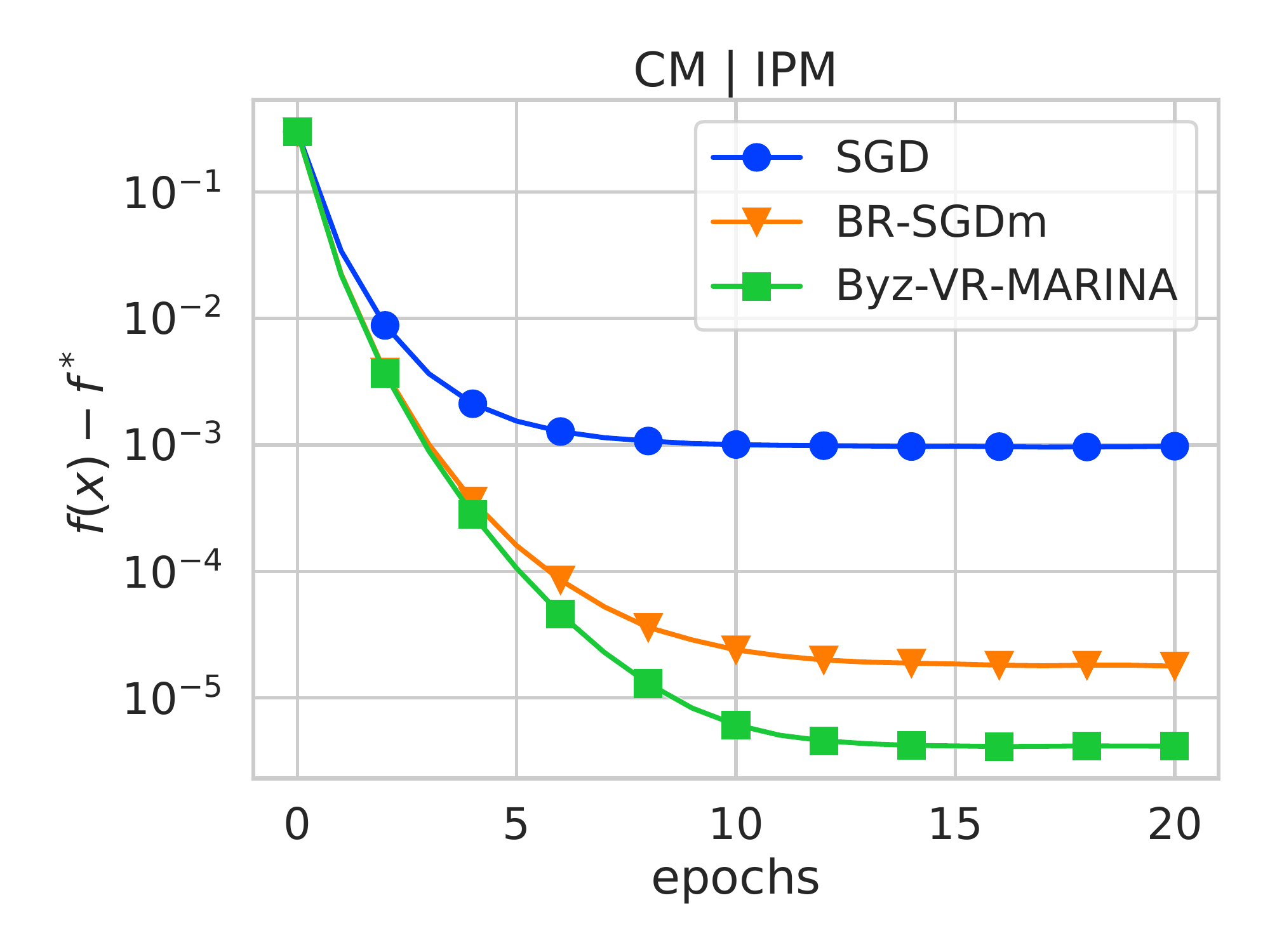}
\\
\includegraphics[width=0.195\textwidth]{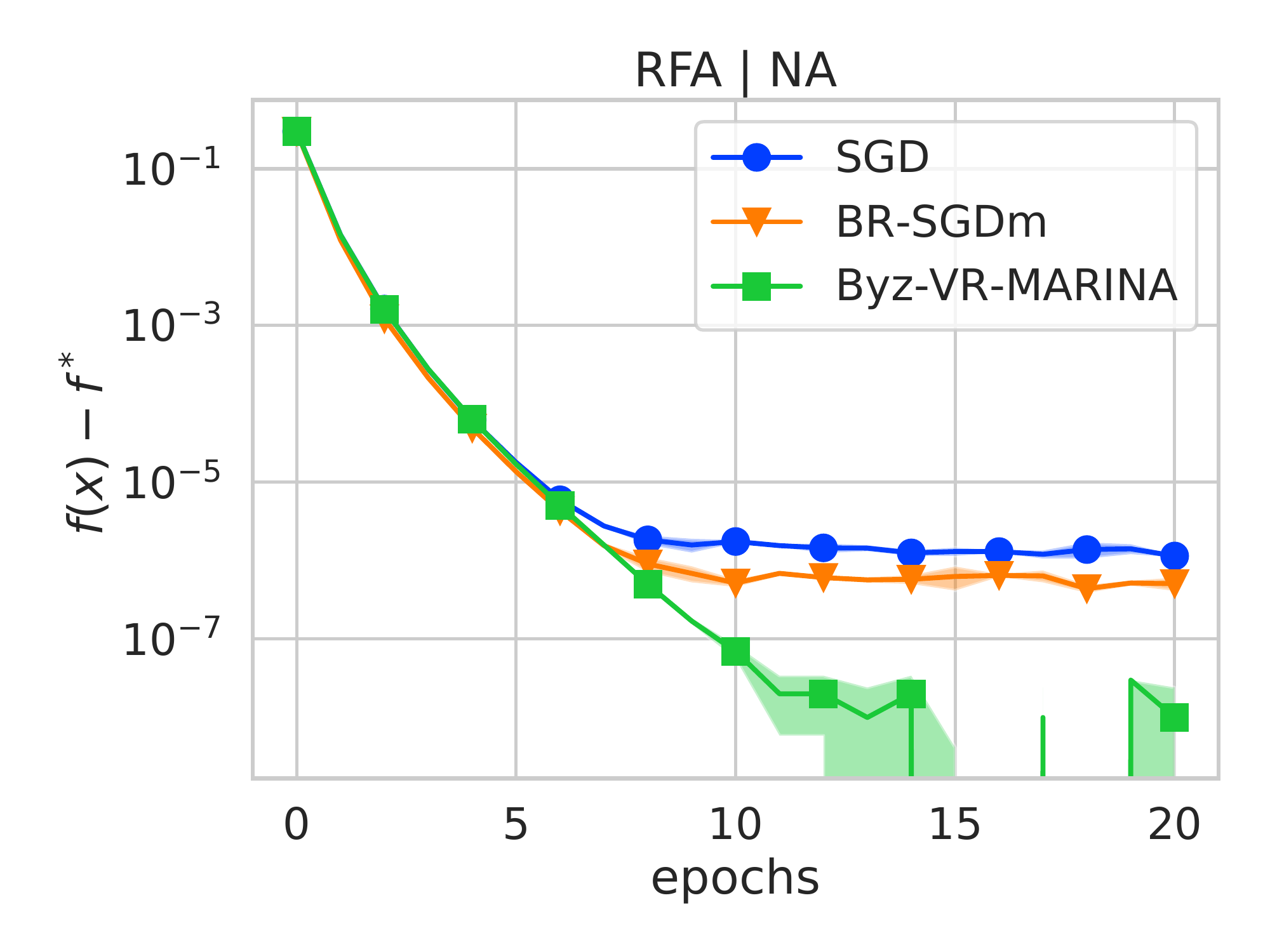}
\includegraphics[width=0.195\textwidth]{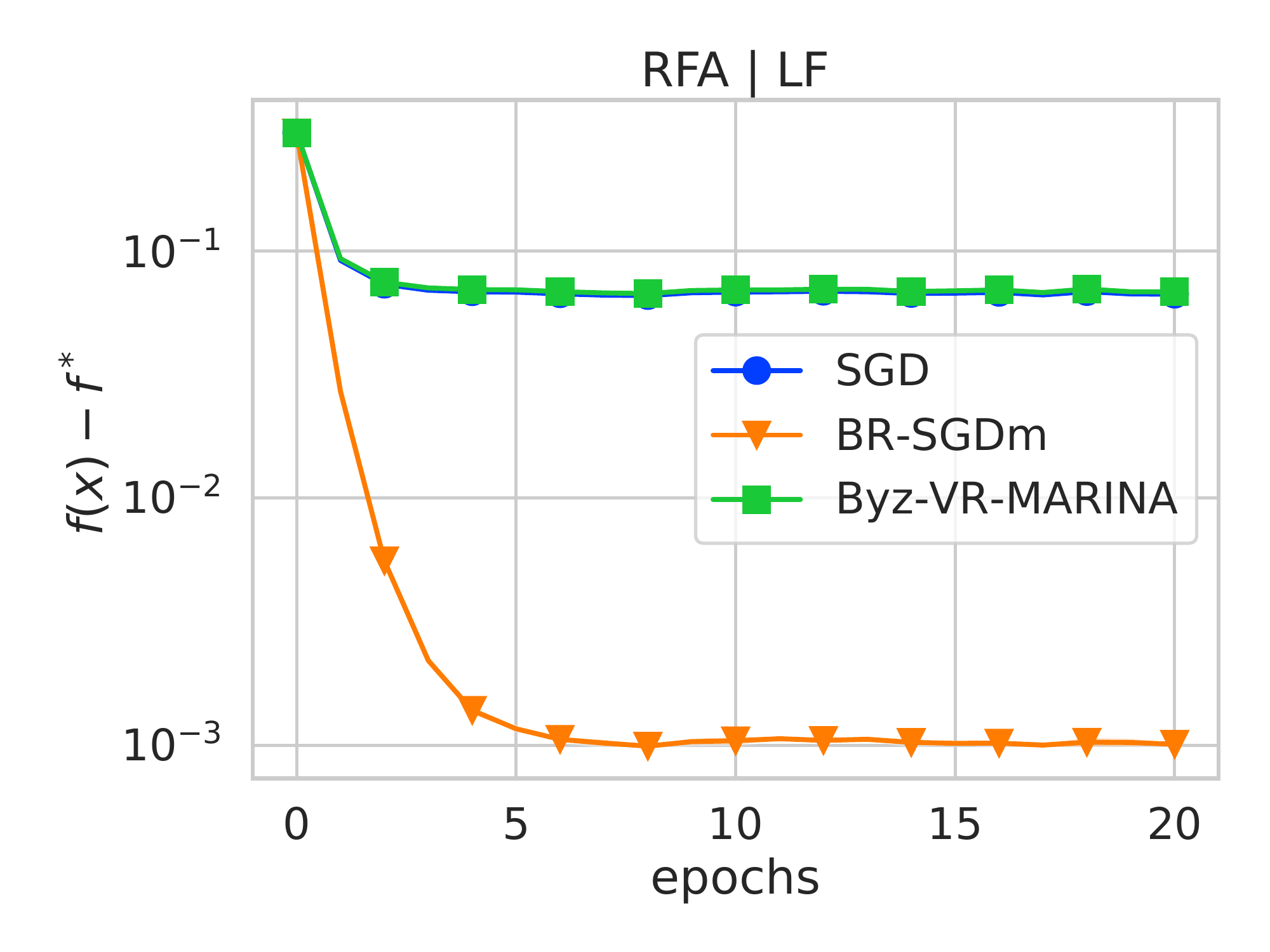}
\includegraphics[width=0.195\textwidth]{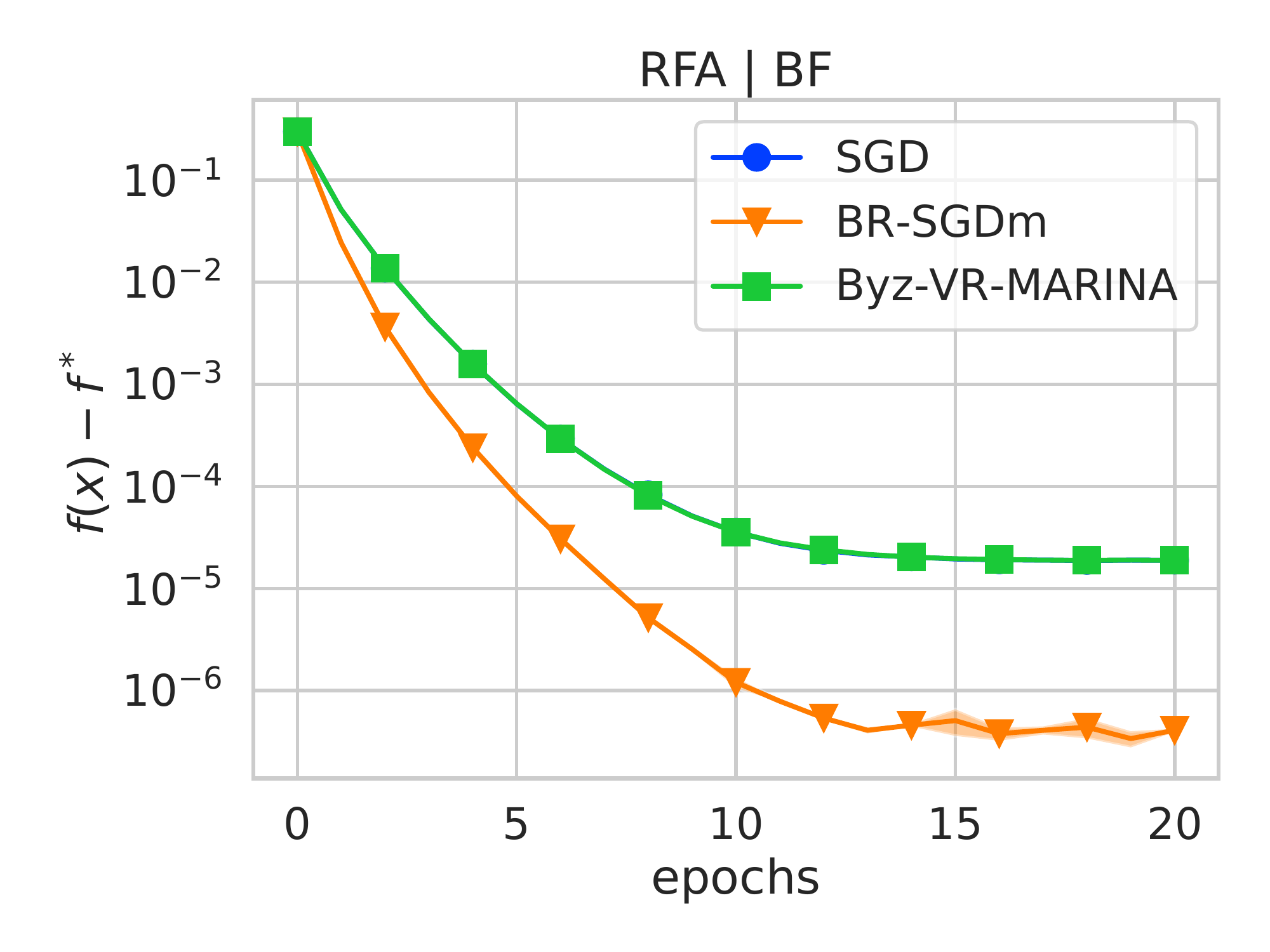}
\includegraphics[width=0.195\textwidth]{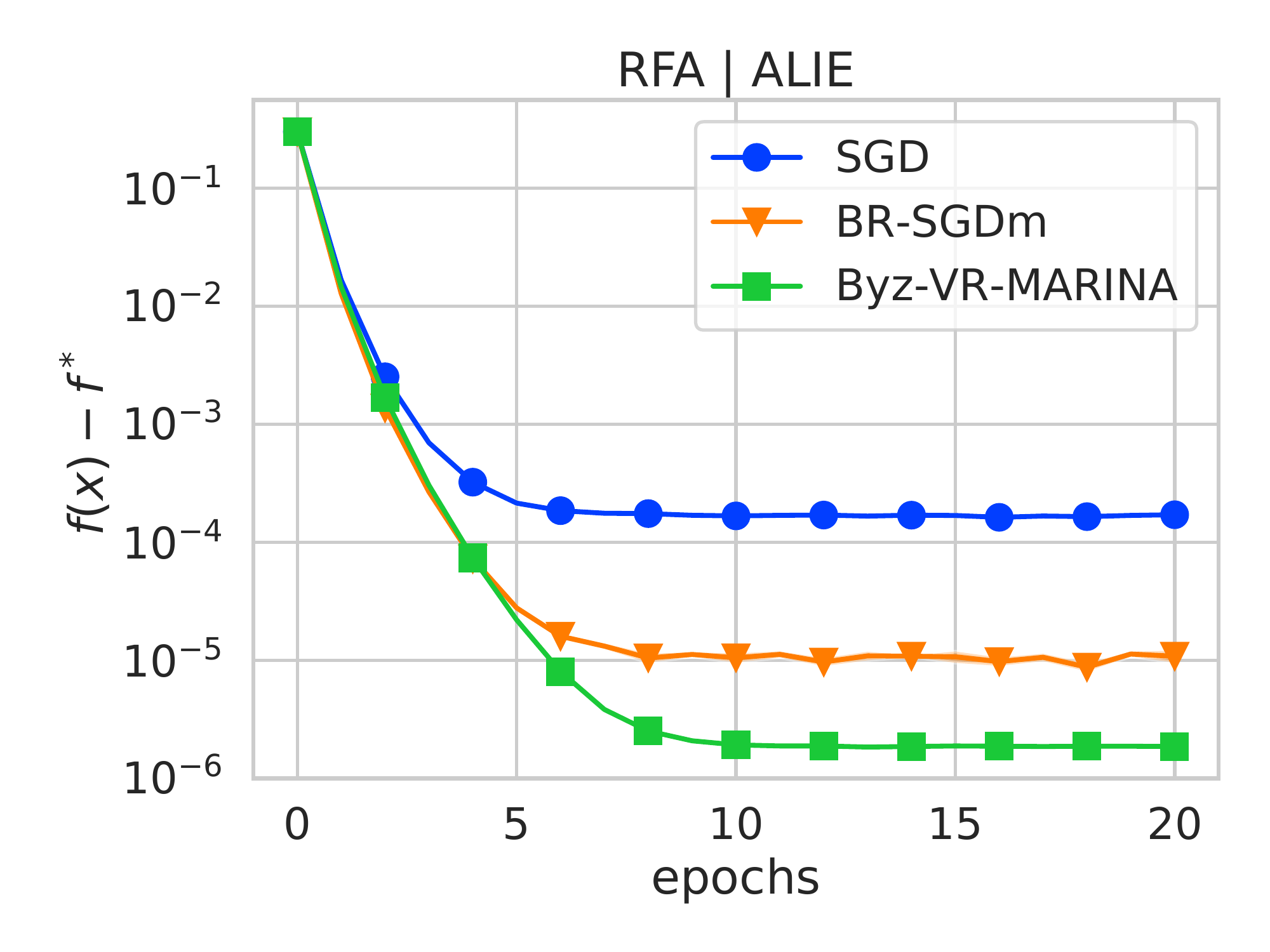}
\includegraphics[width=0.195\textwidth]{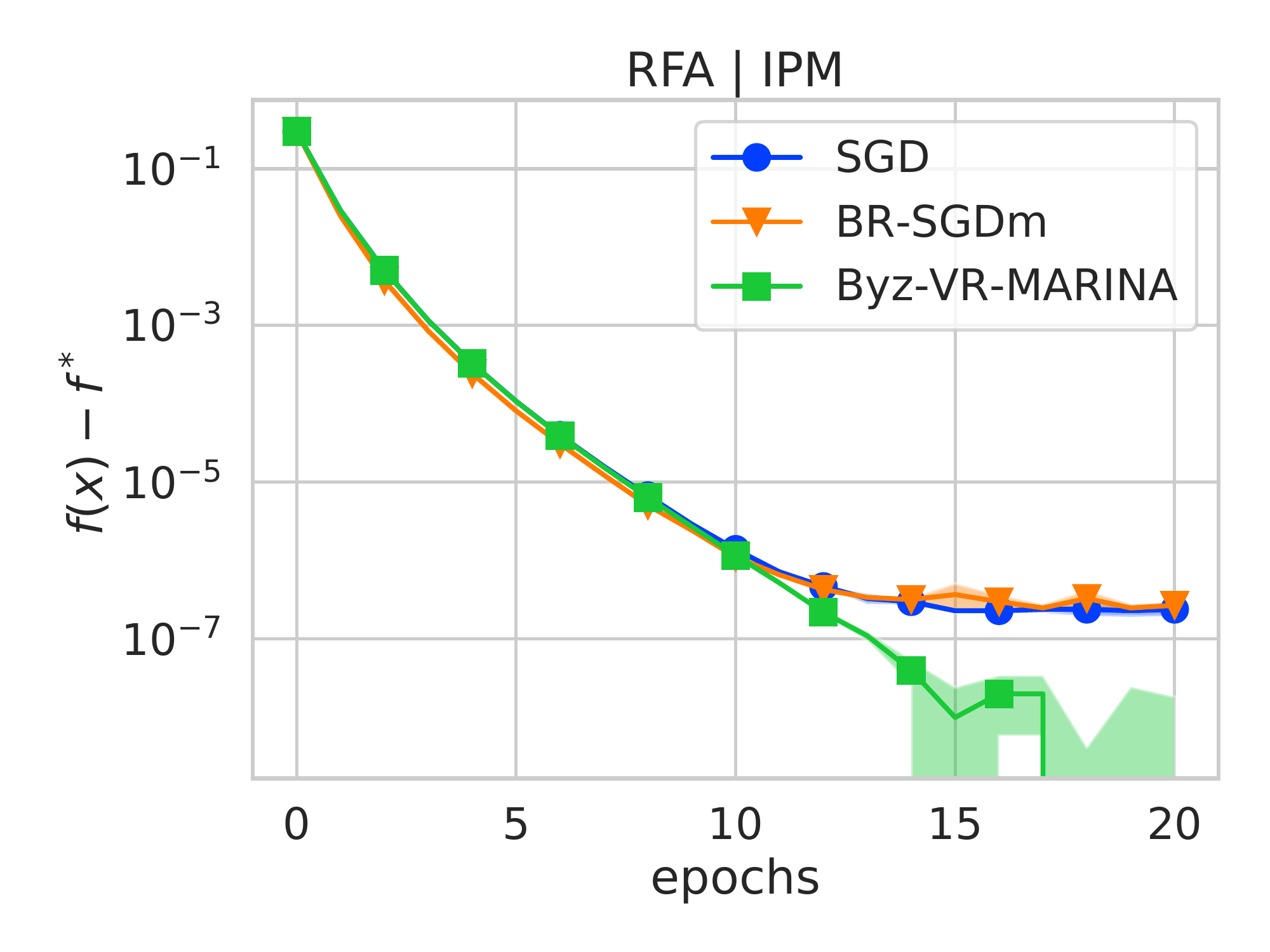}
\caption{The optimality gap $f(x^k) - f(x^*)$ of 3 aggregation rules (AVG, CM, RFA) under 5 attacks (NA, LF, BF, ALIE, IPM) on w8a dataset with uniform split over 15 workers with 5 \revision{Byzantine workers}. The top row displays the best performance in hindsight for a given attack. No compression is applied.} 
\label{fig:w8a_dense}
\vspace{-5mm}
\end{figure}
\begin{figure}
\centering
\includegraphics[width=0.195\textwidth]{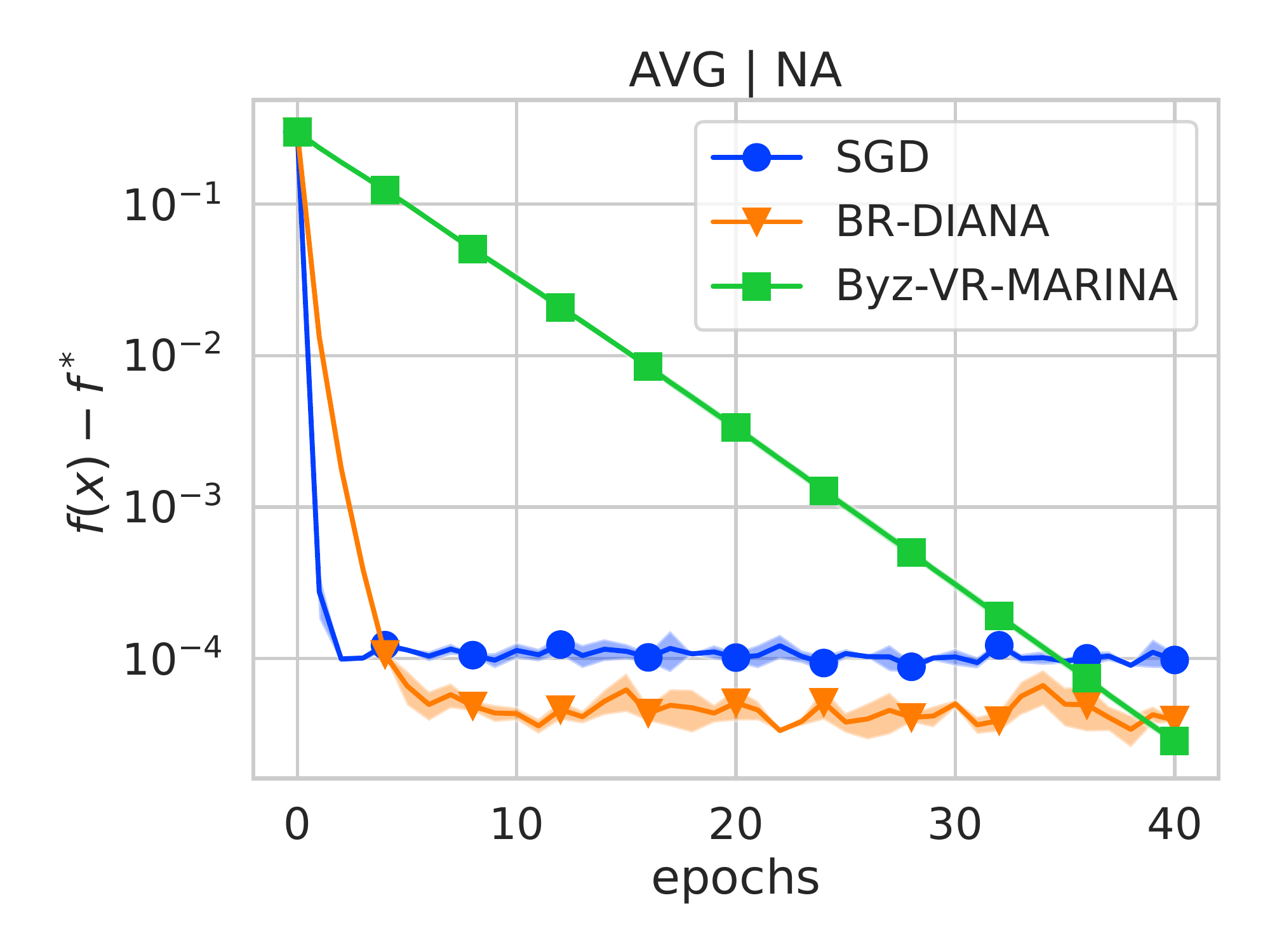}
\includegraphics[width=0.195\textwidth]{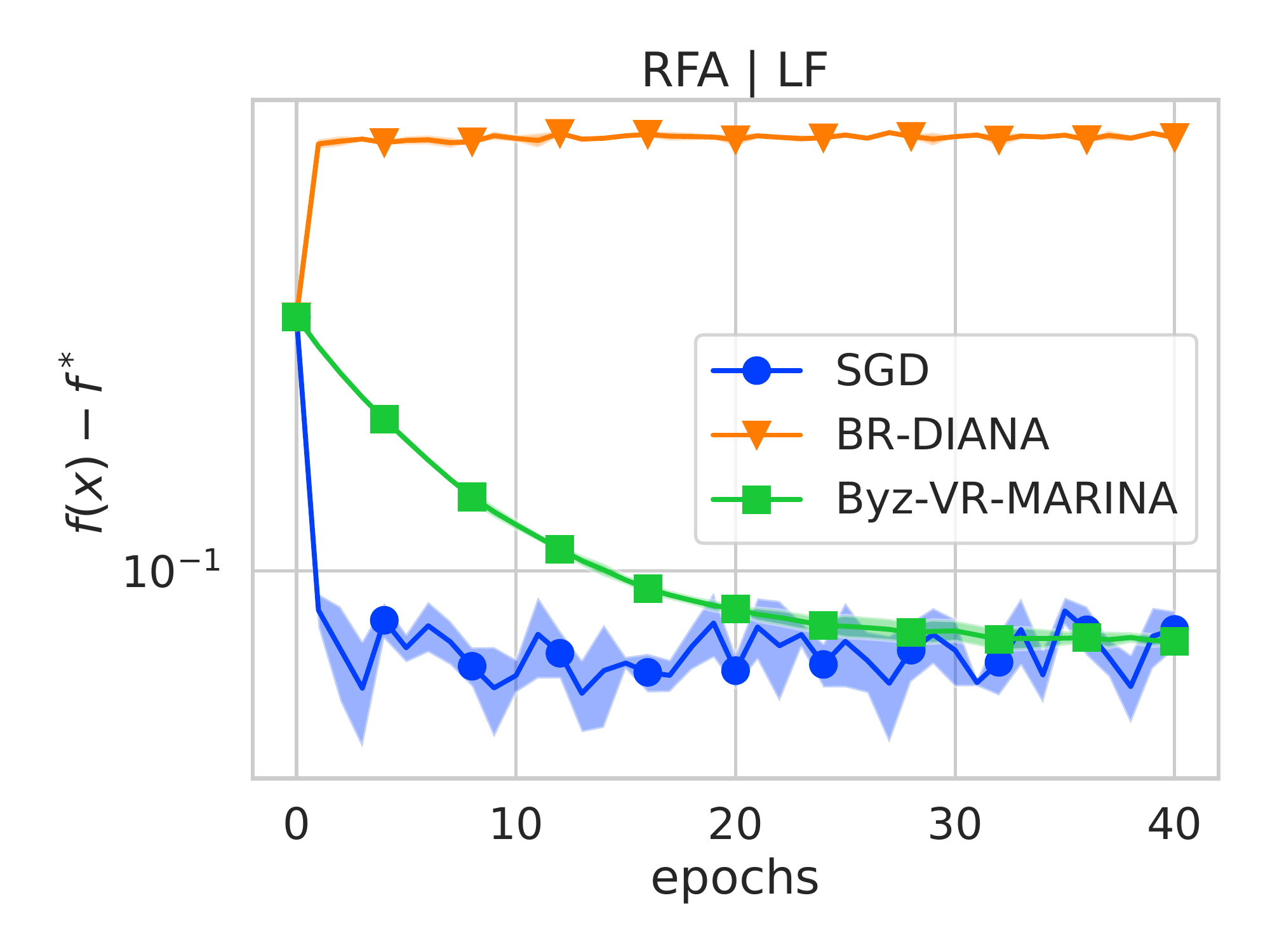}
\includegraphics[width=0.195\textwidth]{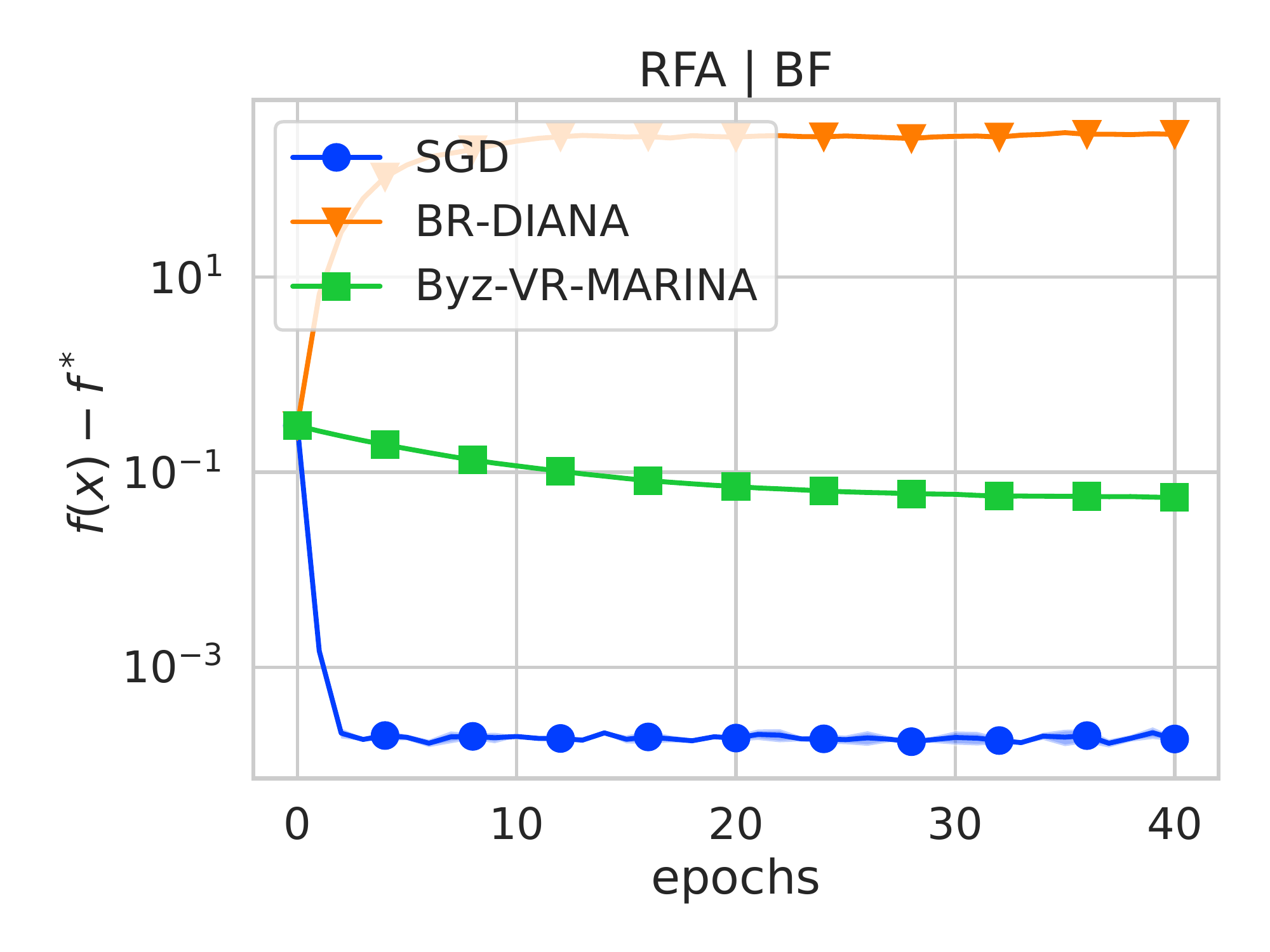}
\includegraphics[width=0.195\textwidth]{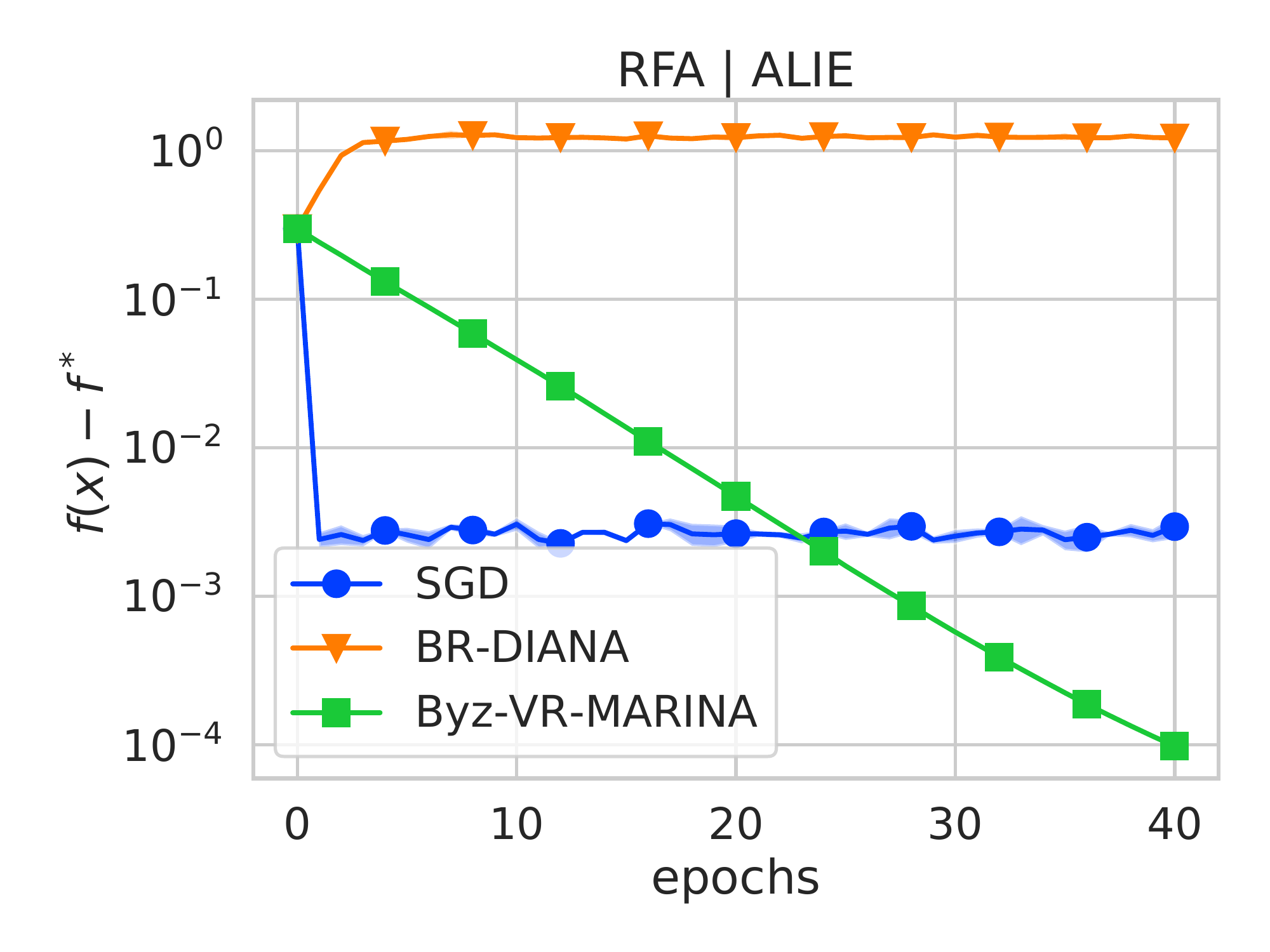}
\includegraphics[width=0.195\textwidth]{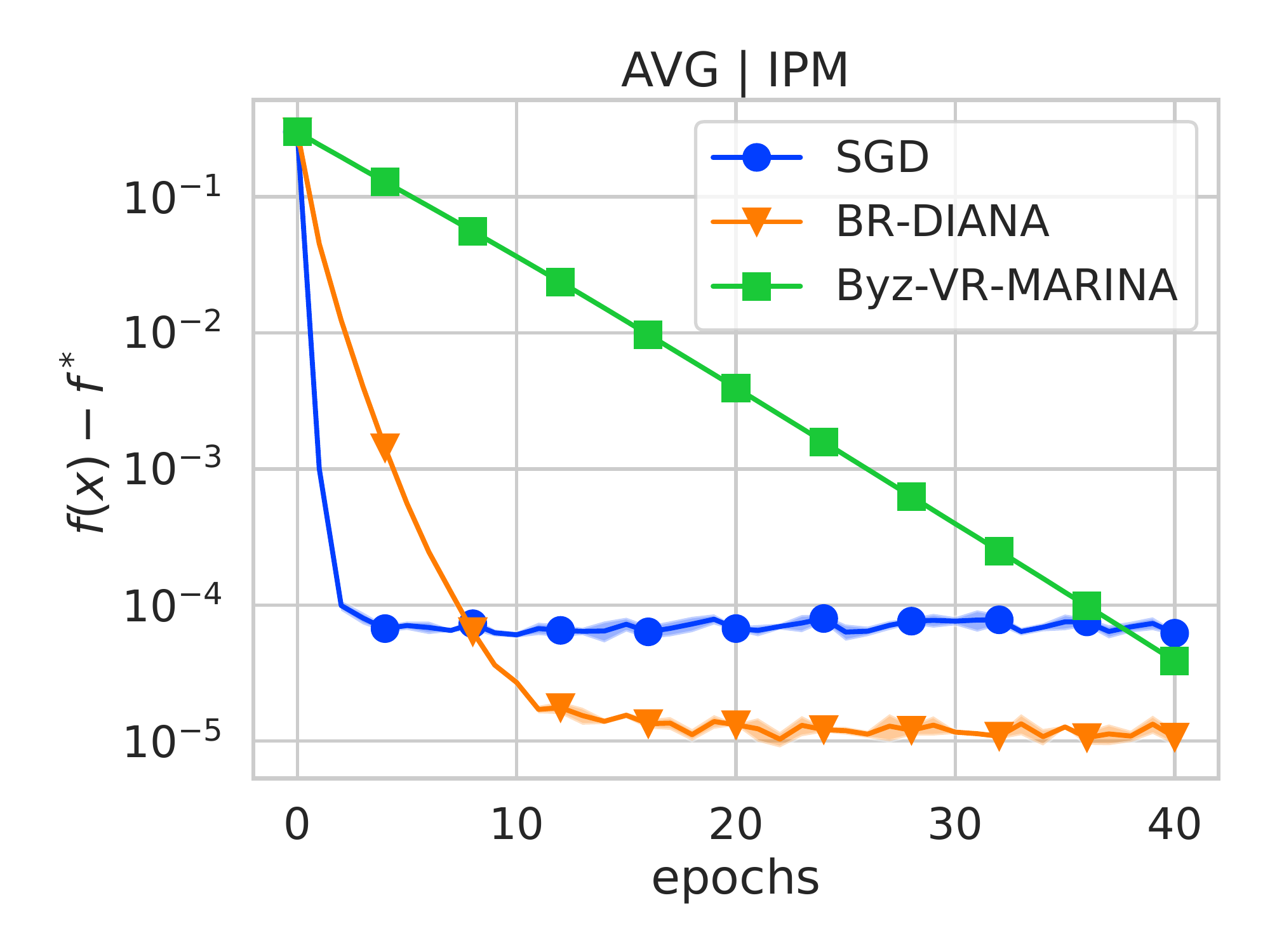}
\\
\includegraphics[width=0.195\textwidth]{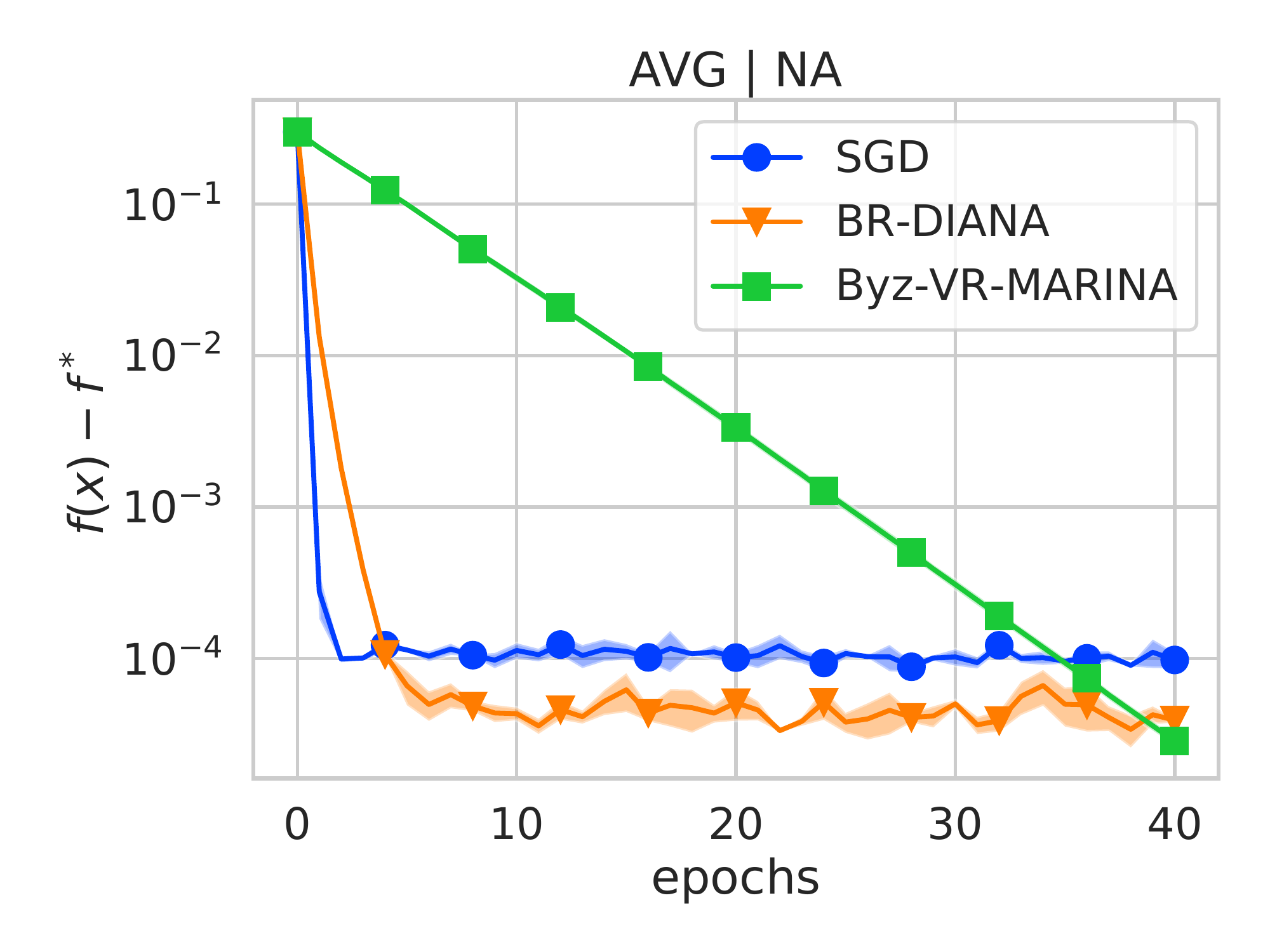}
\includegraphics[width=0.195\textwidth]{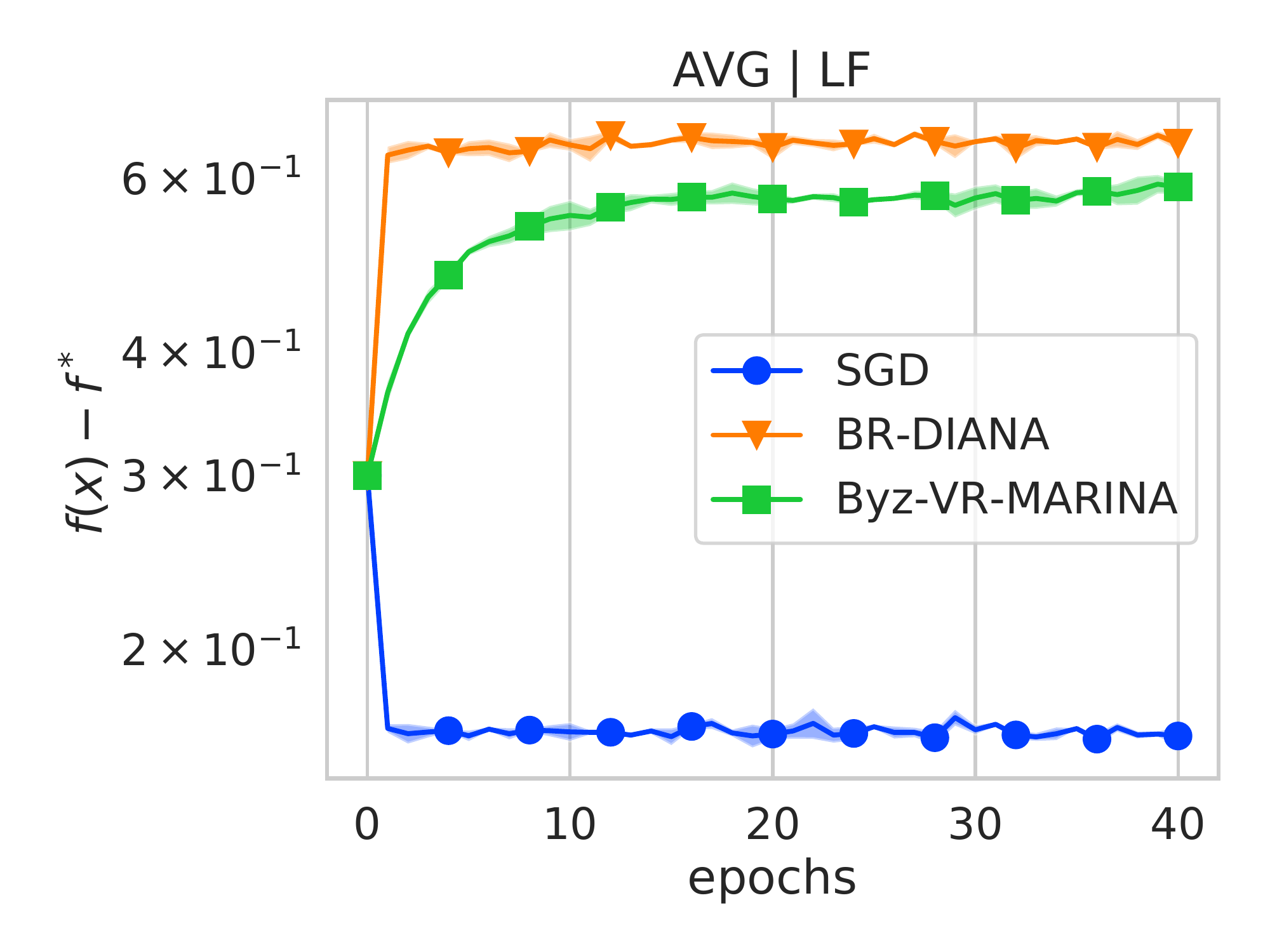}
\includegraphics[width=0.195\textwidth]{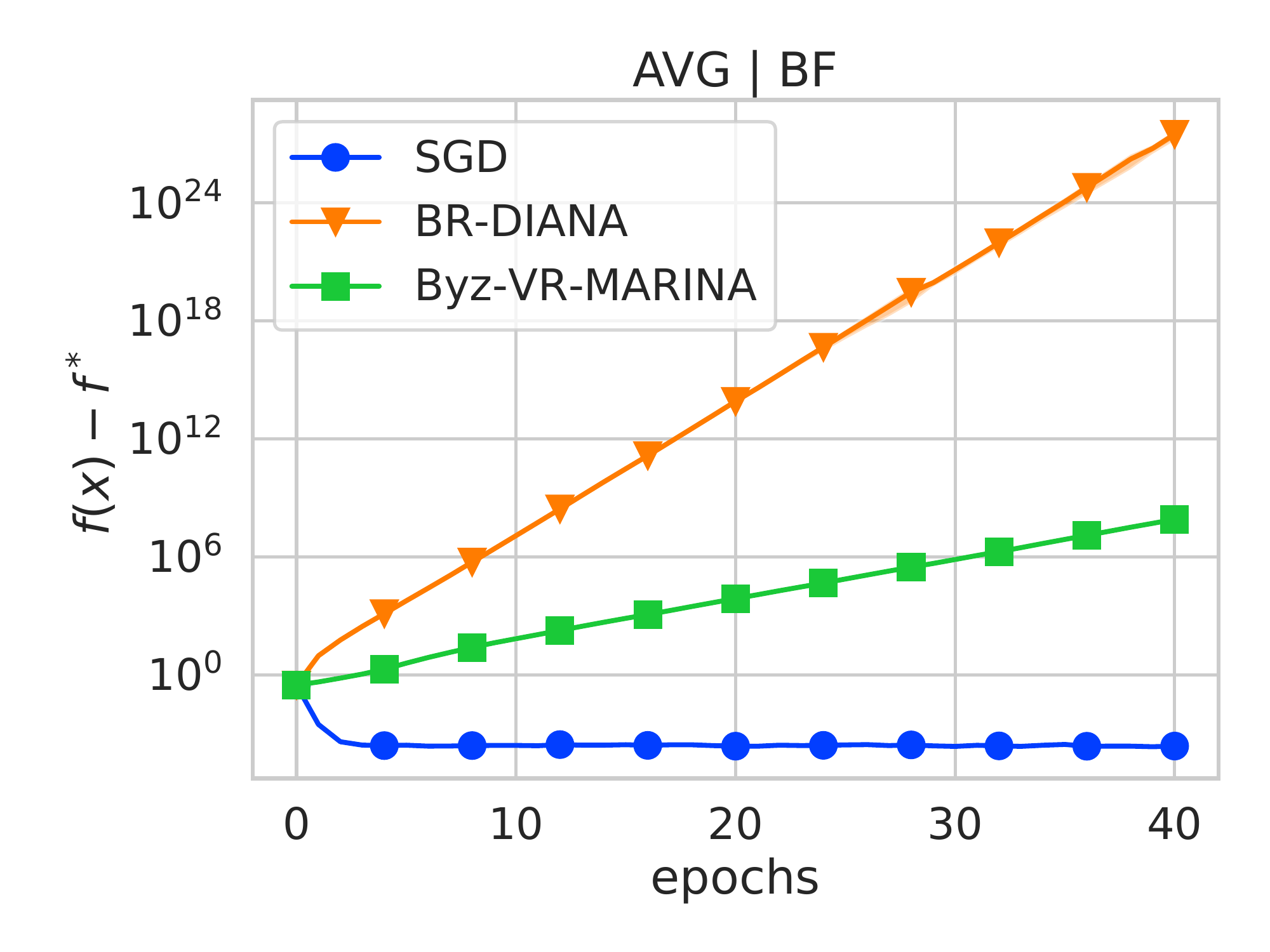}
\includegraphics[width=0.195\textwidth]{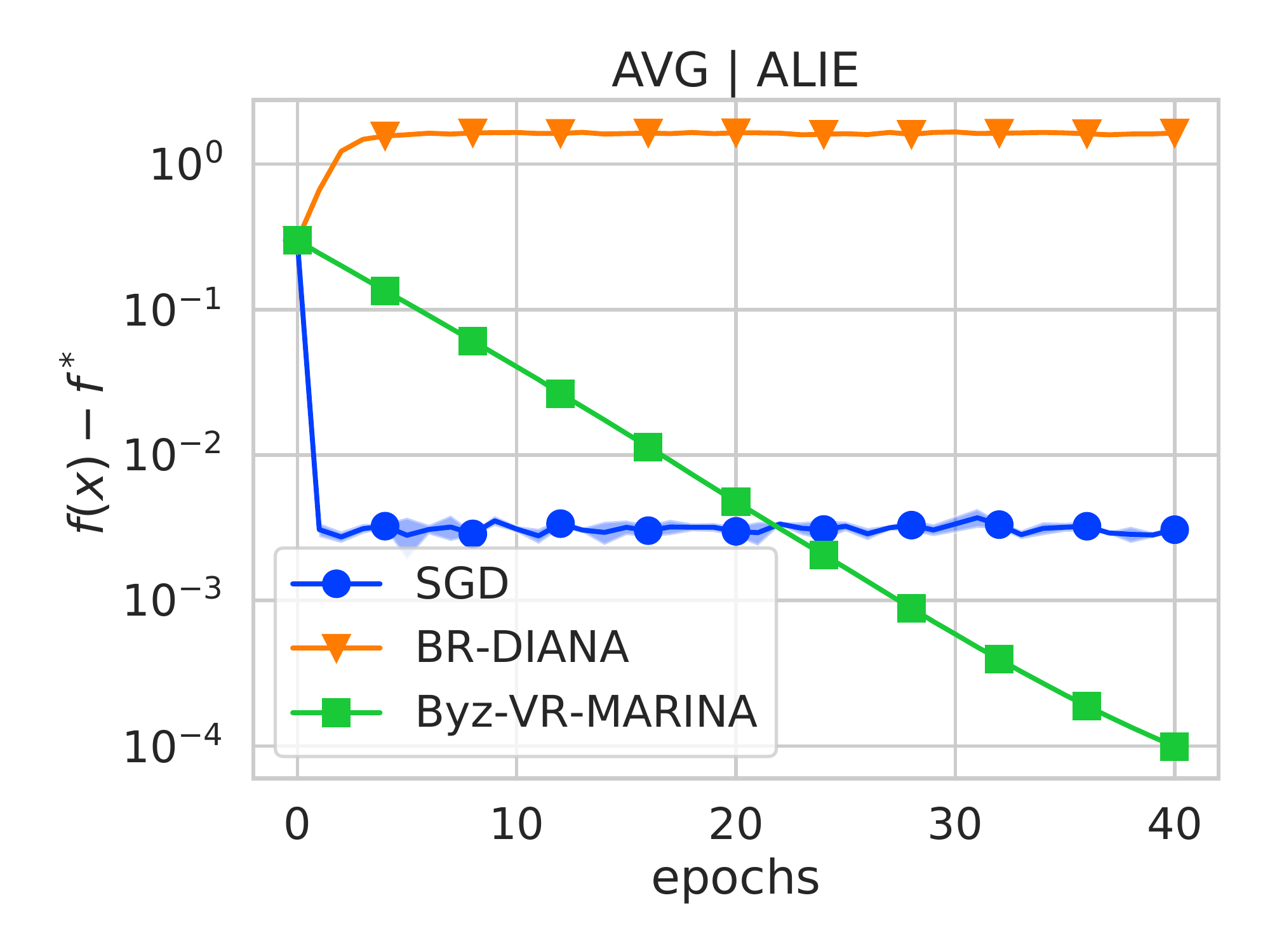}
\includegraphics[width=0.195\textwidth]{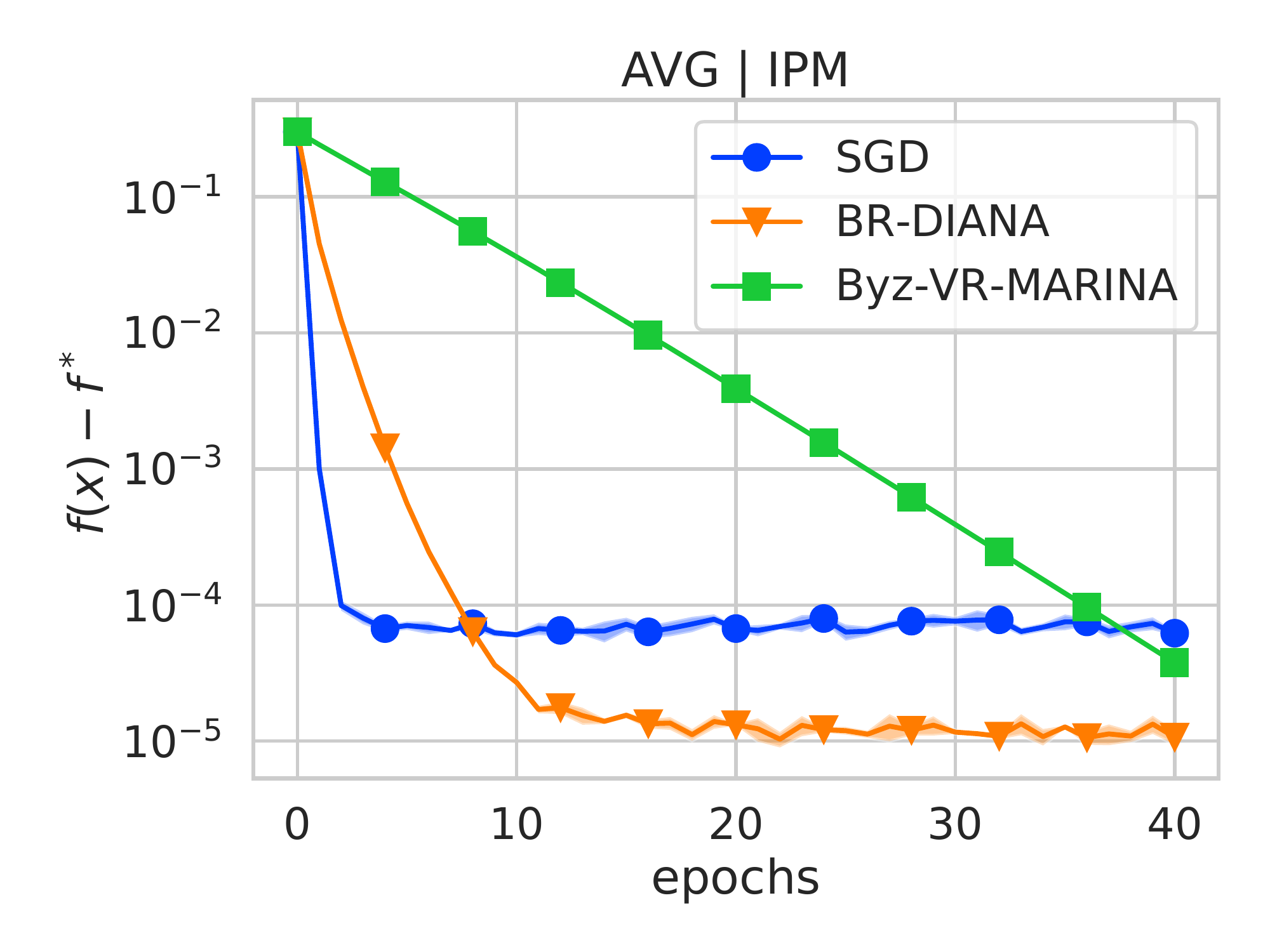}
\\
\includegraphics[width=0.195\textwidth]{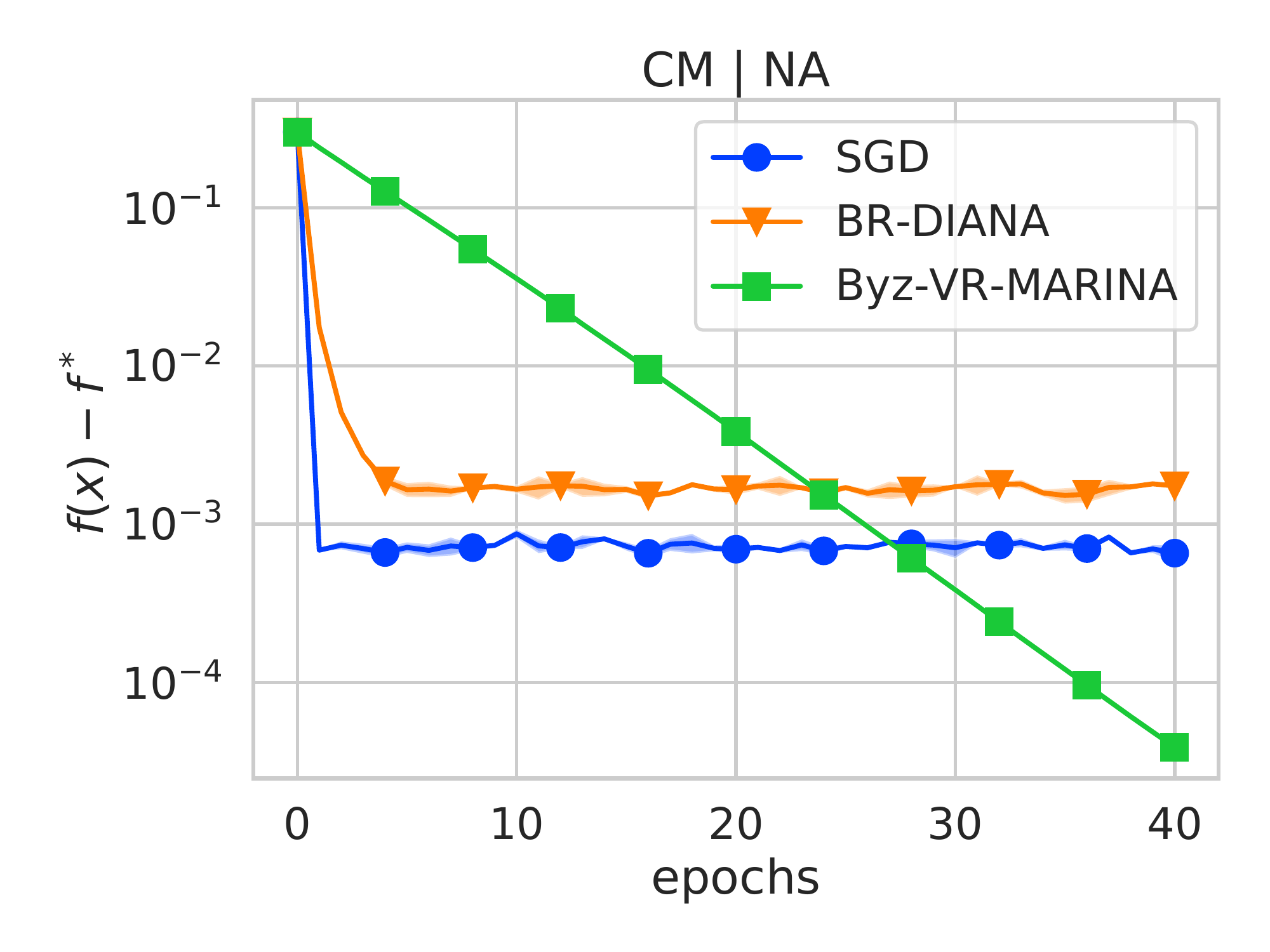}
\includegraphics[width=0.195\textwidth]{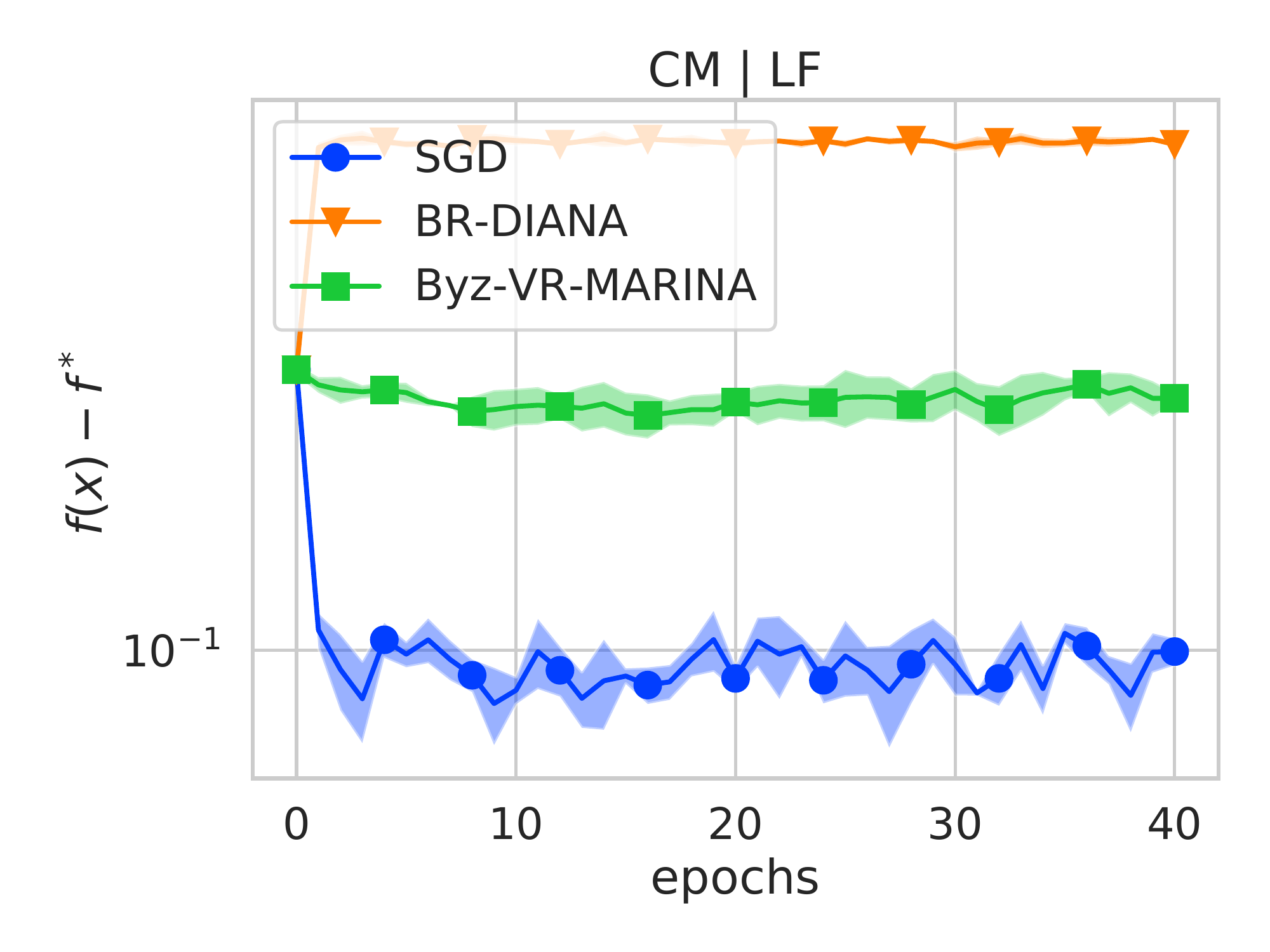}
\includegraphics[width=0.195\textwidth]{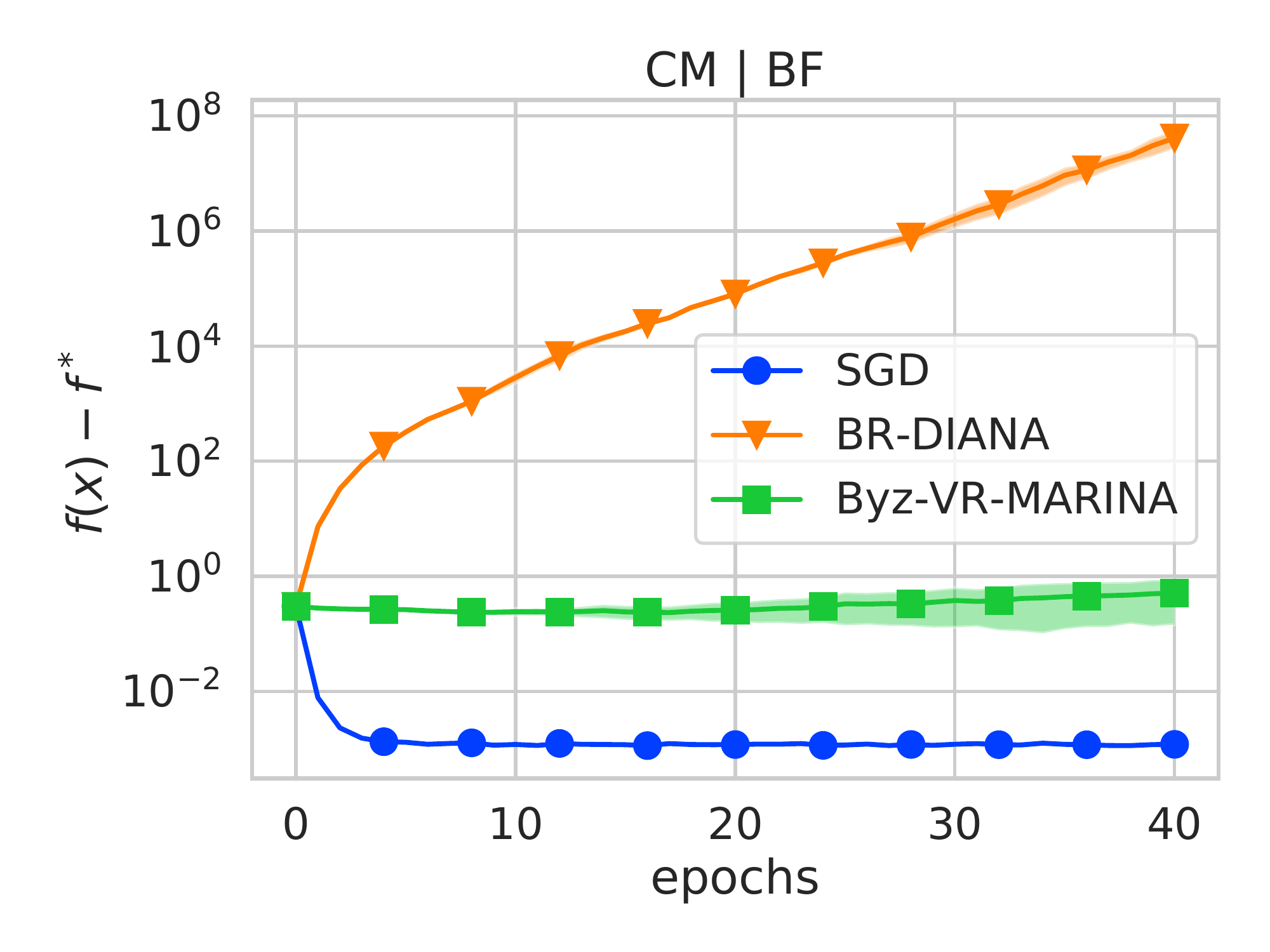}
\includegraphics[width=0.195\textwidth]{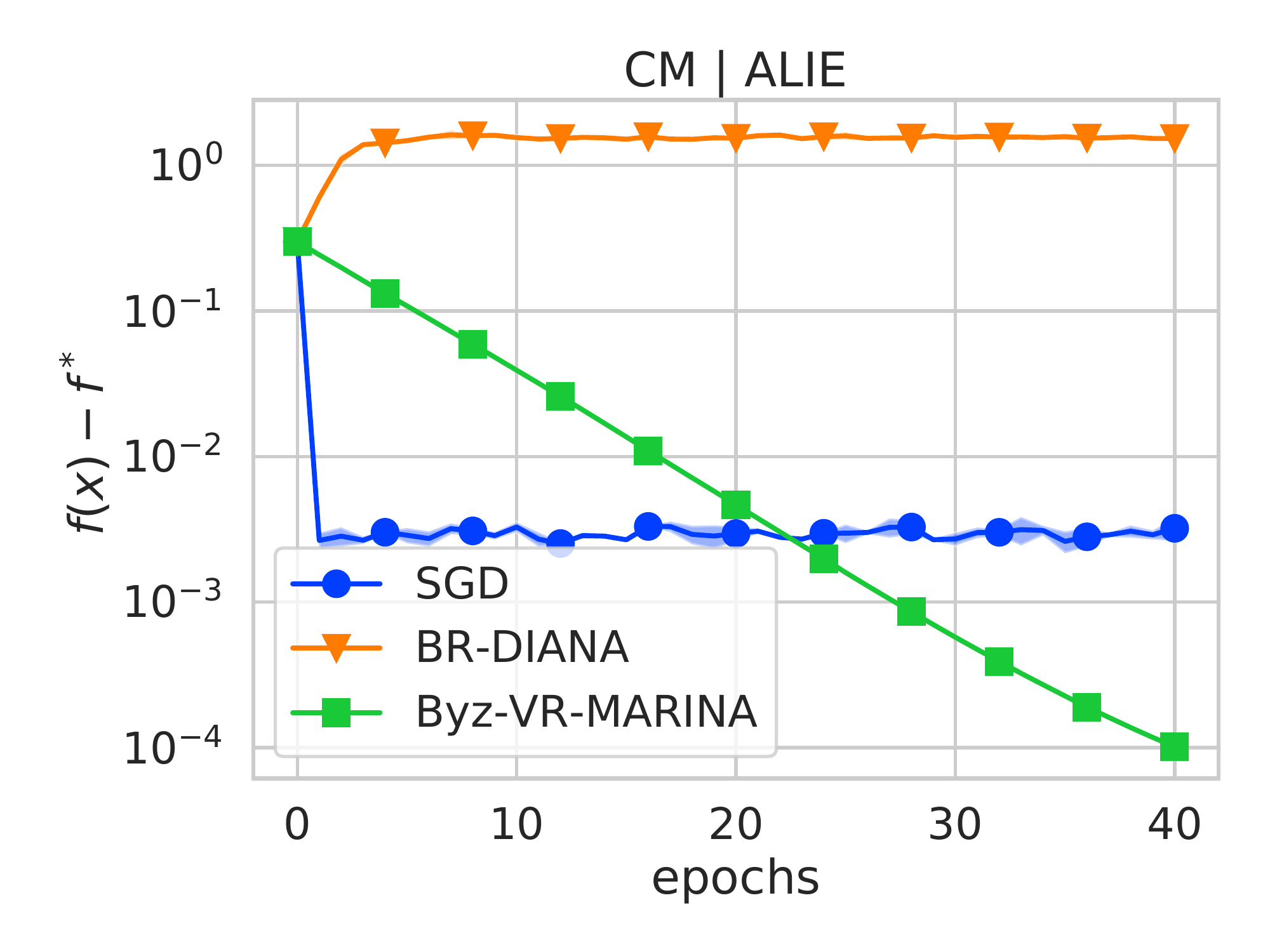}
\includegraphics[width=0.195\textwidth]{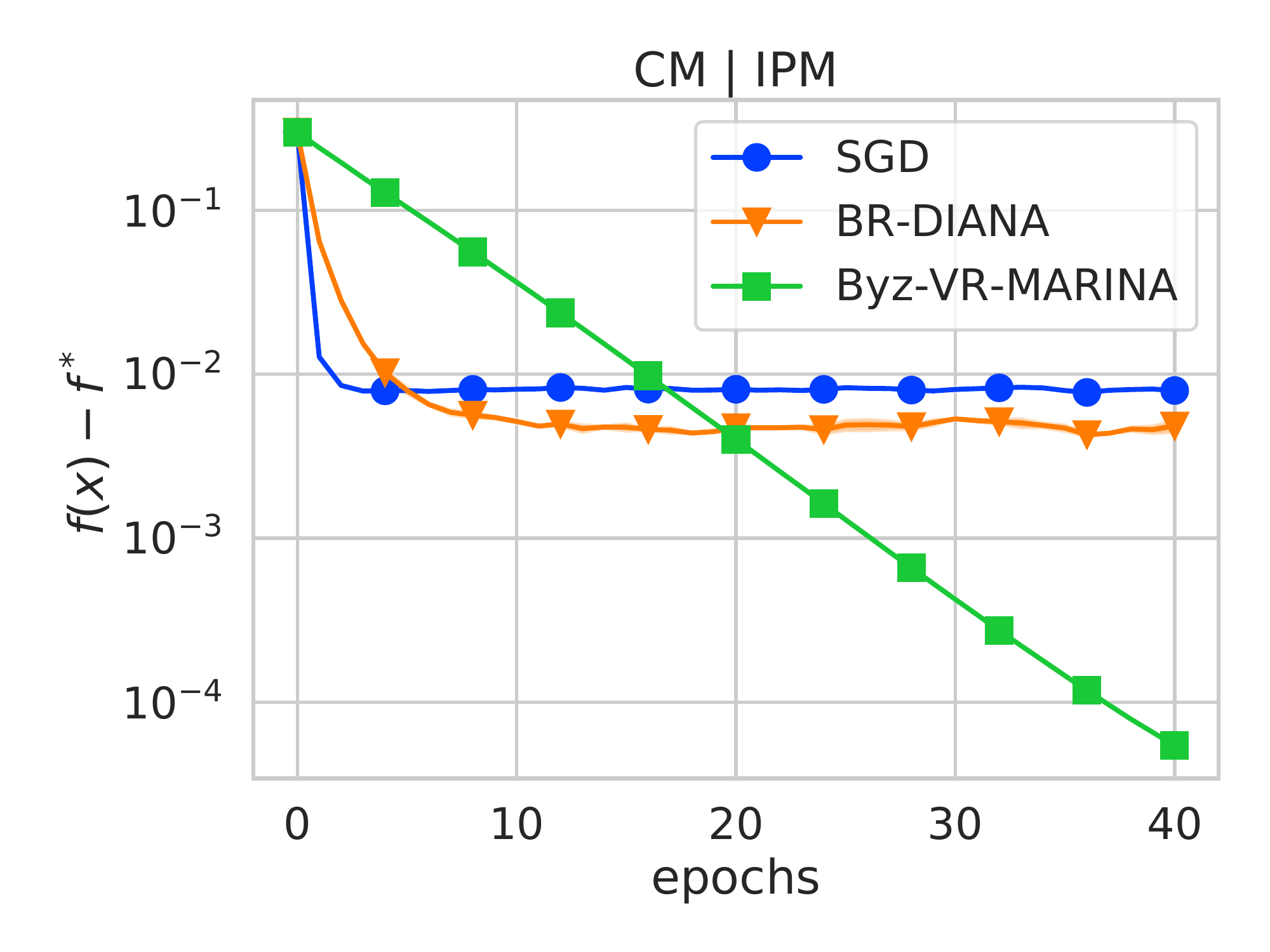}
\\
\includegraphics[width=0.195\textwidth]{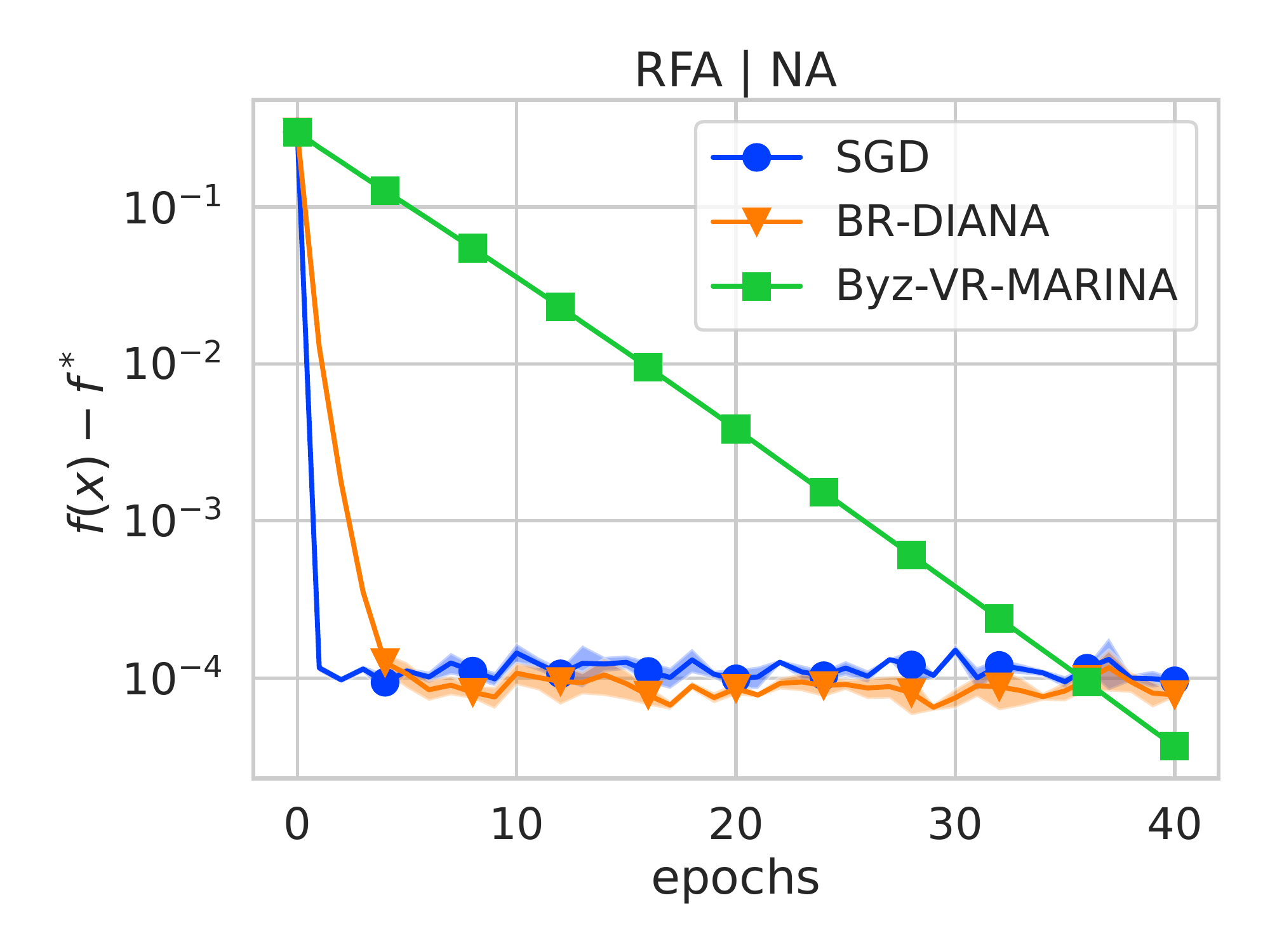}
\includegraphics[width=0.195\textwidth]{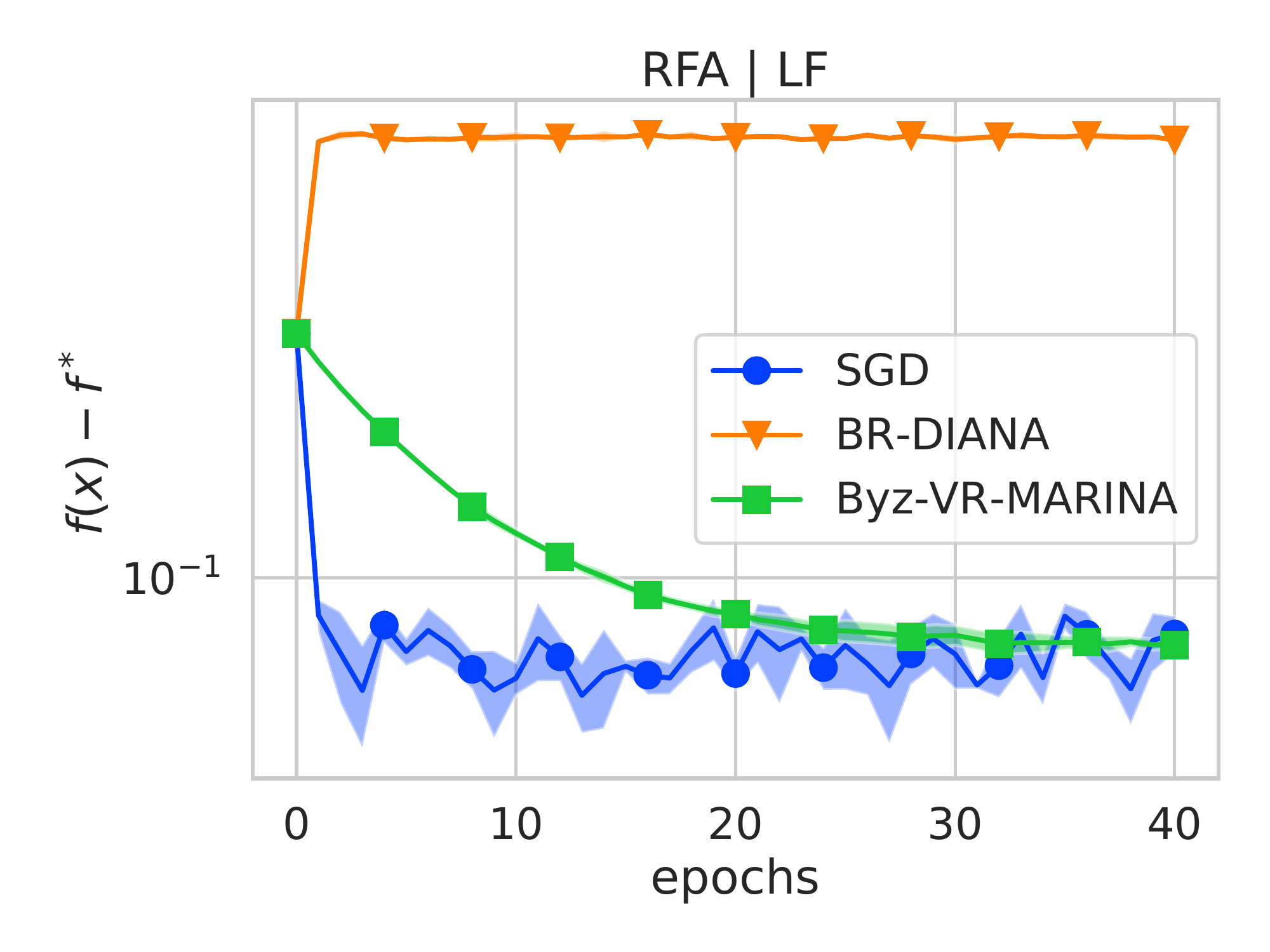}
\includegraphics[width=0.195\textwidth]{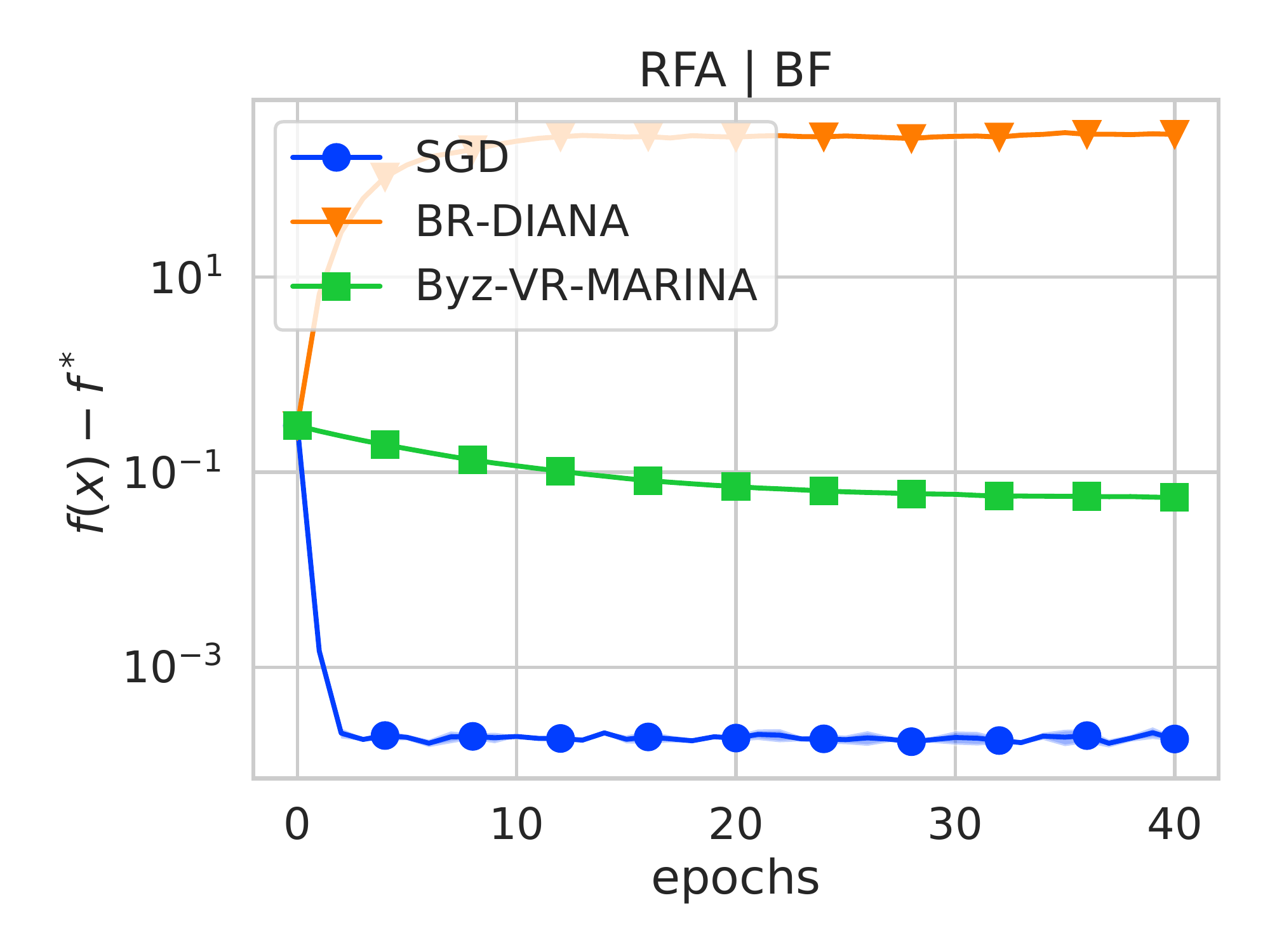}
\includegraphics[width=0.195\textwidth]{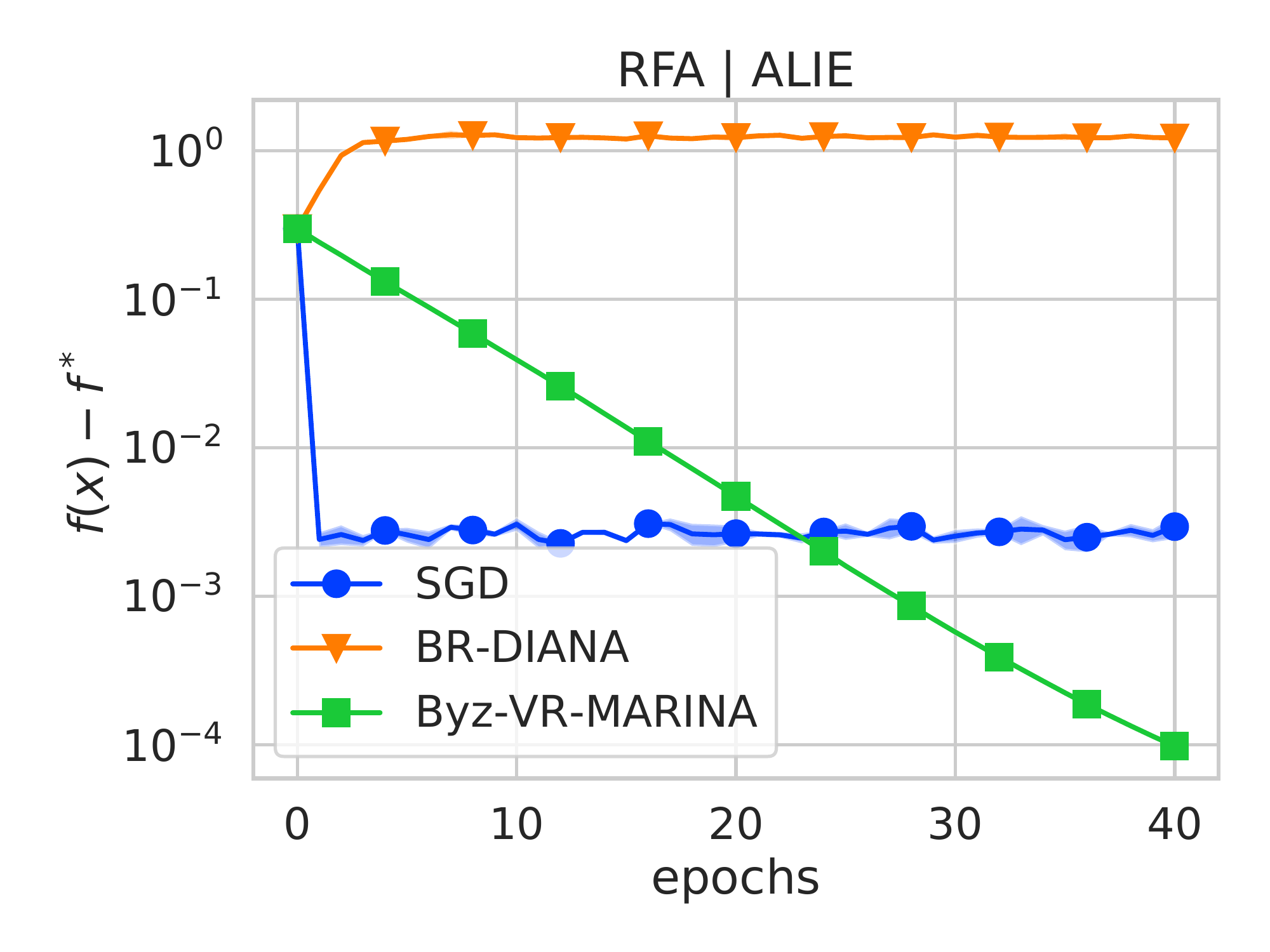}
\includegraphics[width=0.195\textwidth]{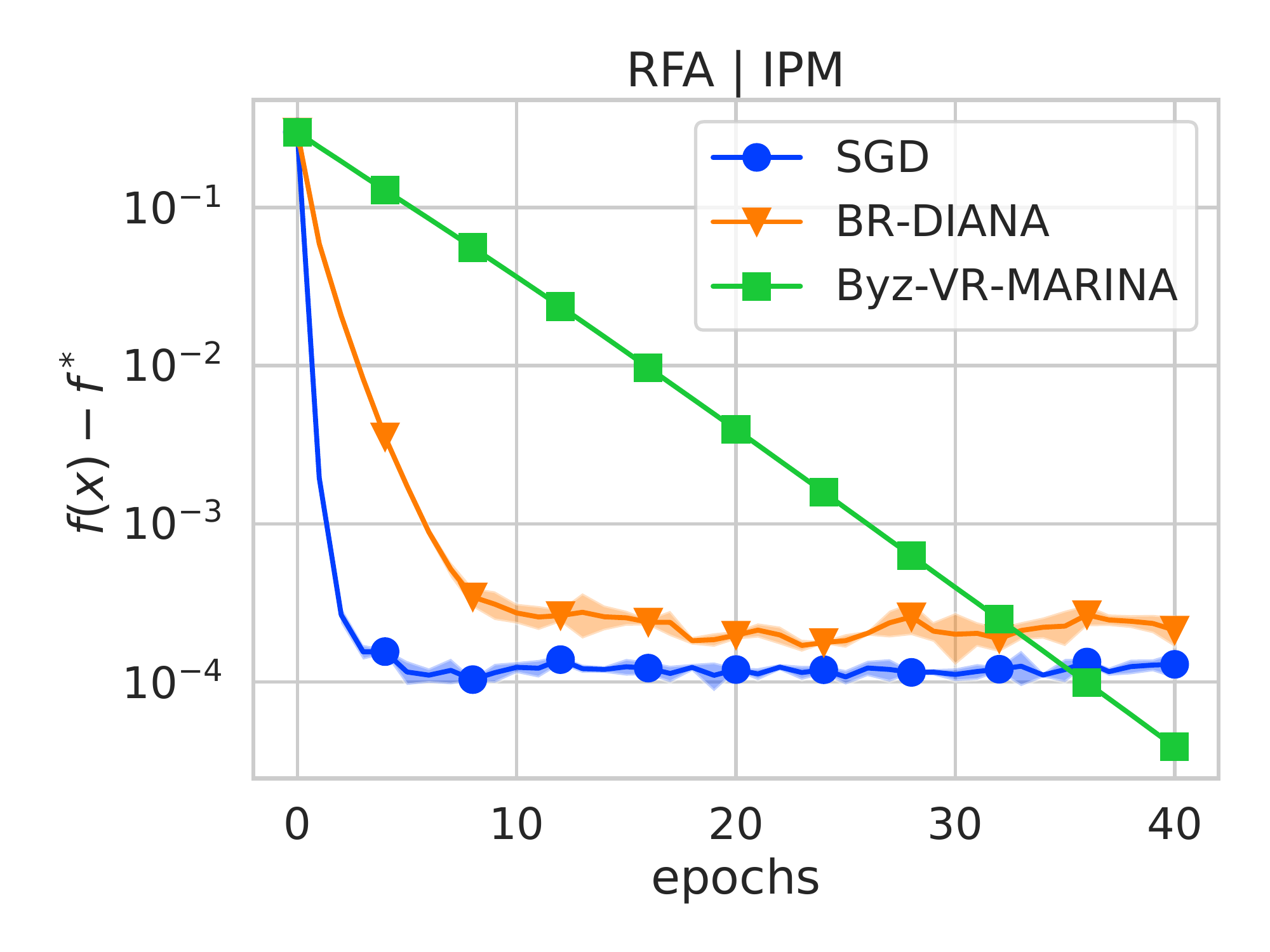}
\caption{The optimality gap $f(x^k) - f(x^*)$ of 3 aggregation rules (AVG, CM, RFA) under 5 attacks (NA, LF, BF, ALIE, IPM) on w8a dataset with uniform split over 15 workers with 5 \revision{Byzantine workers}. The top row displays the best performance in hindsight for a given attack. Each method uses Rand$K$ sparsification with $K = 0.1 d$.} 
\label{fig:w8a_sparse}
\vspace{-5mm}
\end{figure}
\begin{figure}
\centering
\includegraphics[width=0.195\textwidth]{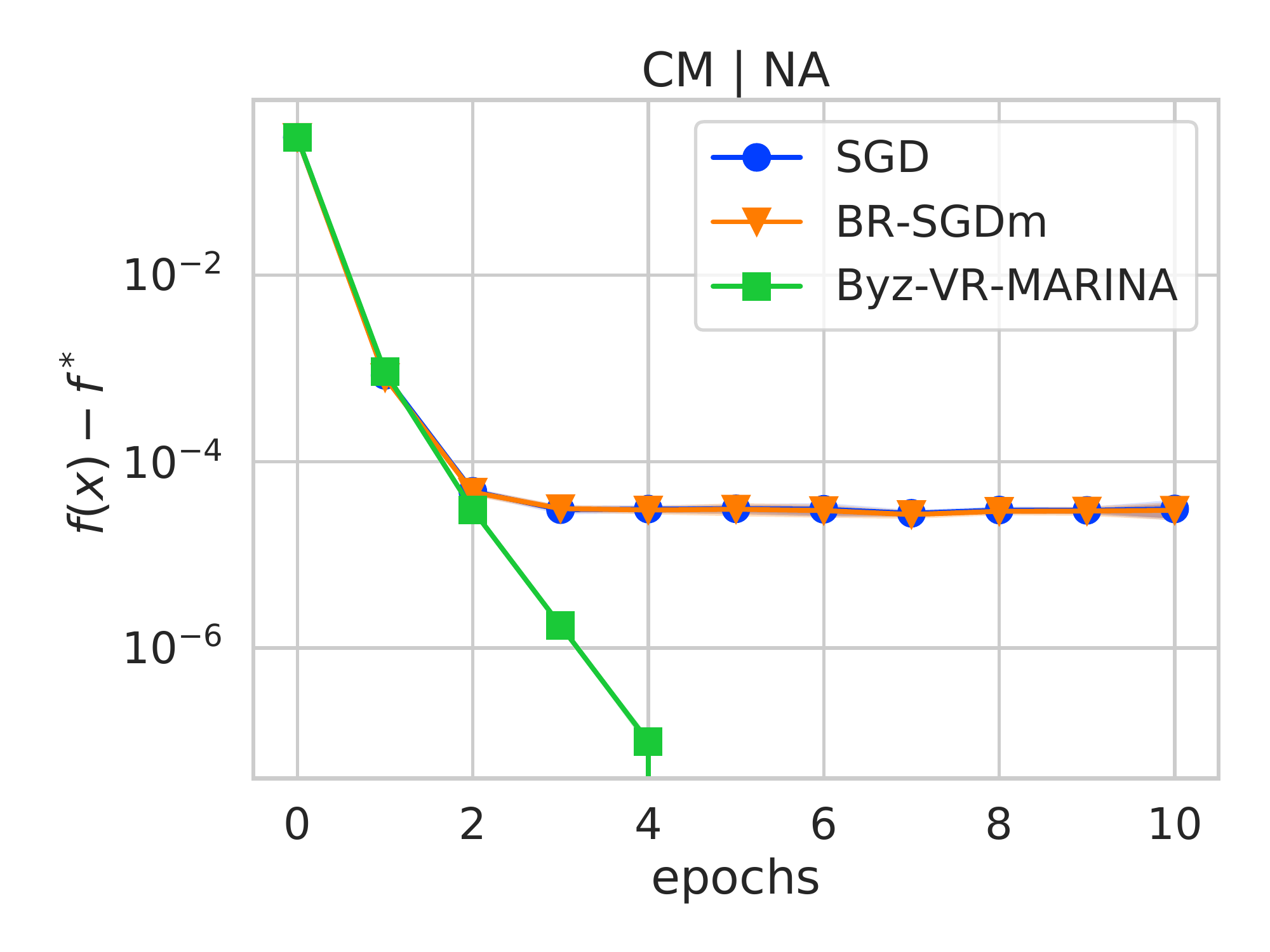}
\includegraphics[width=0.195\textwidth]{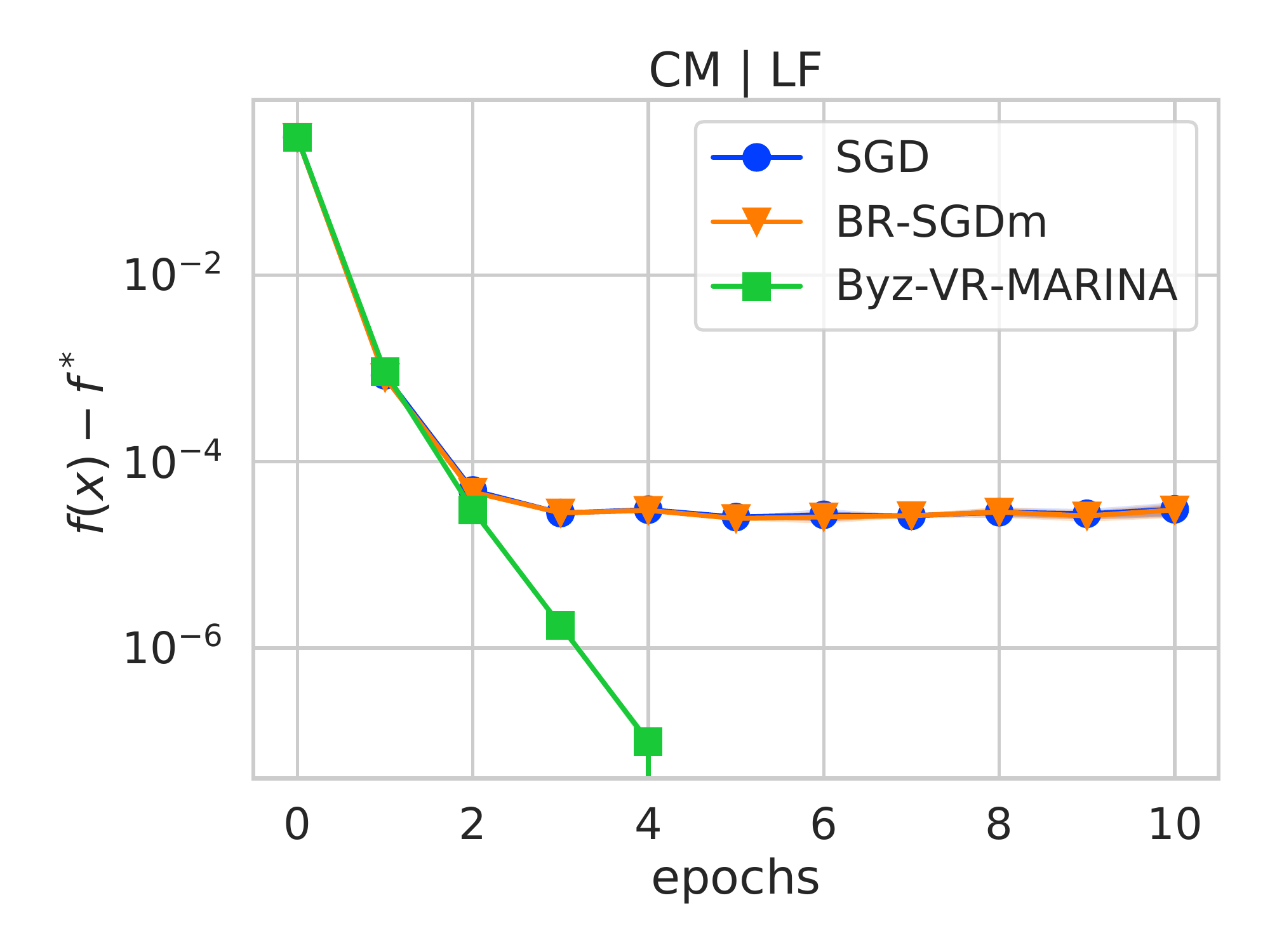}
\includegraphics[width=0.195\textwidth]{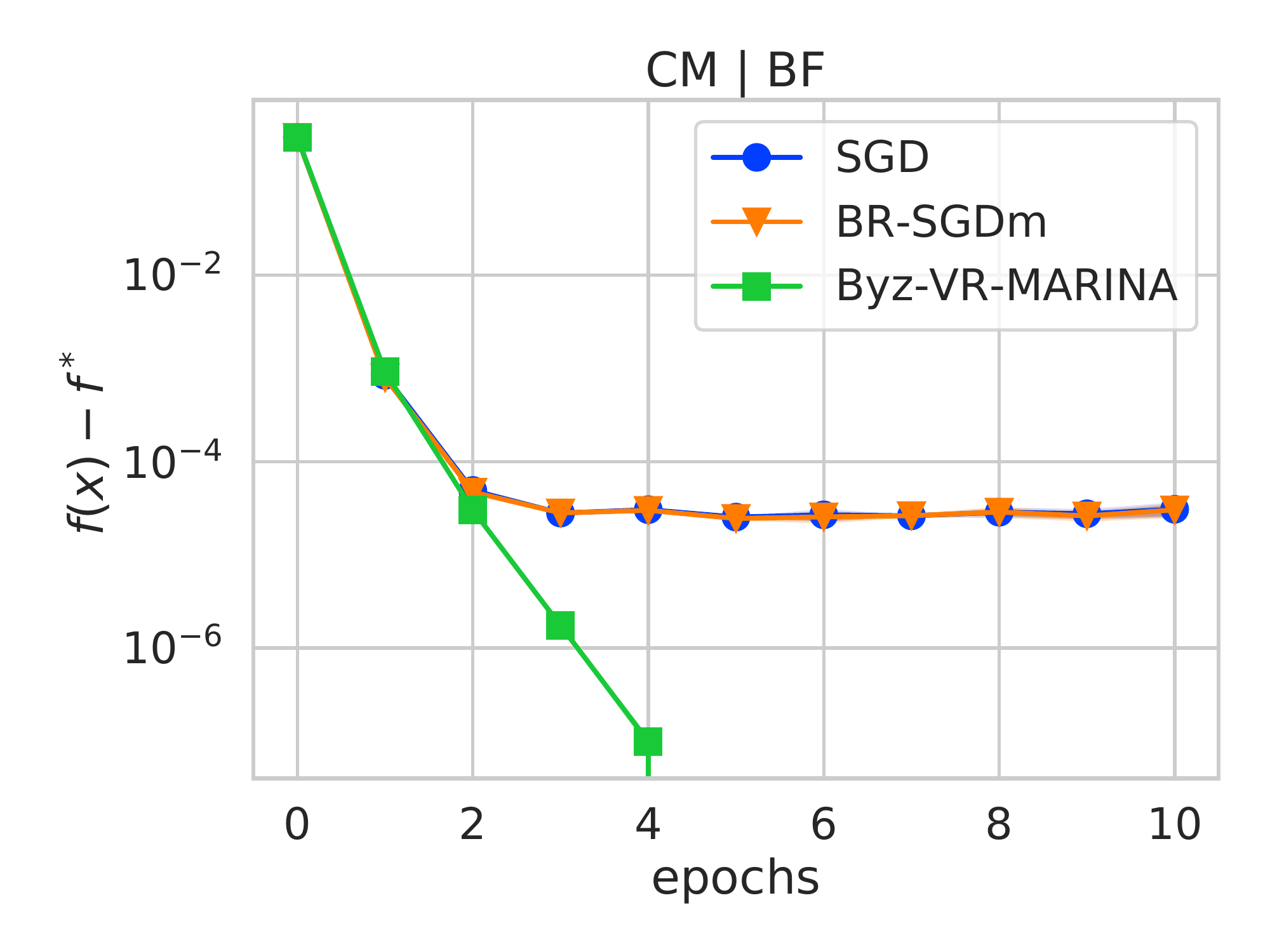}
\includegraphics[width=0.195\textwidth]{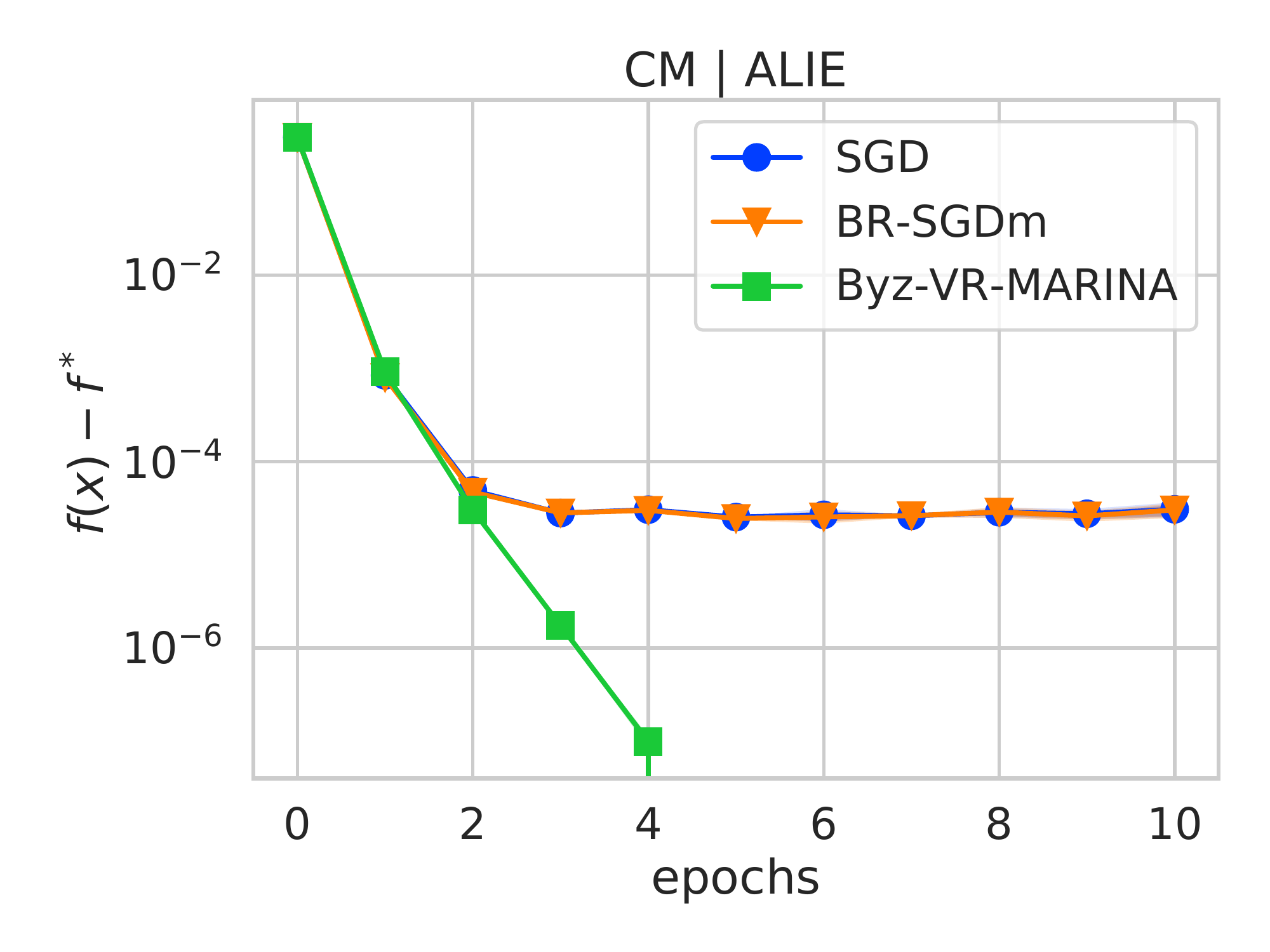}
\includegraphics[width=0.195\textwidth]{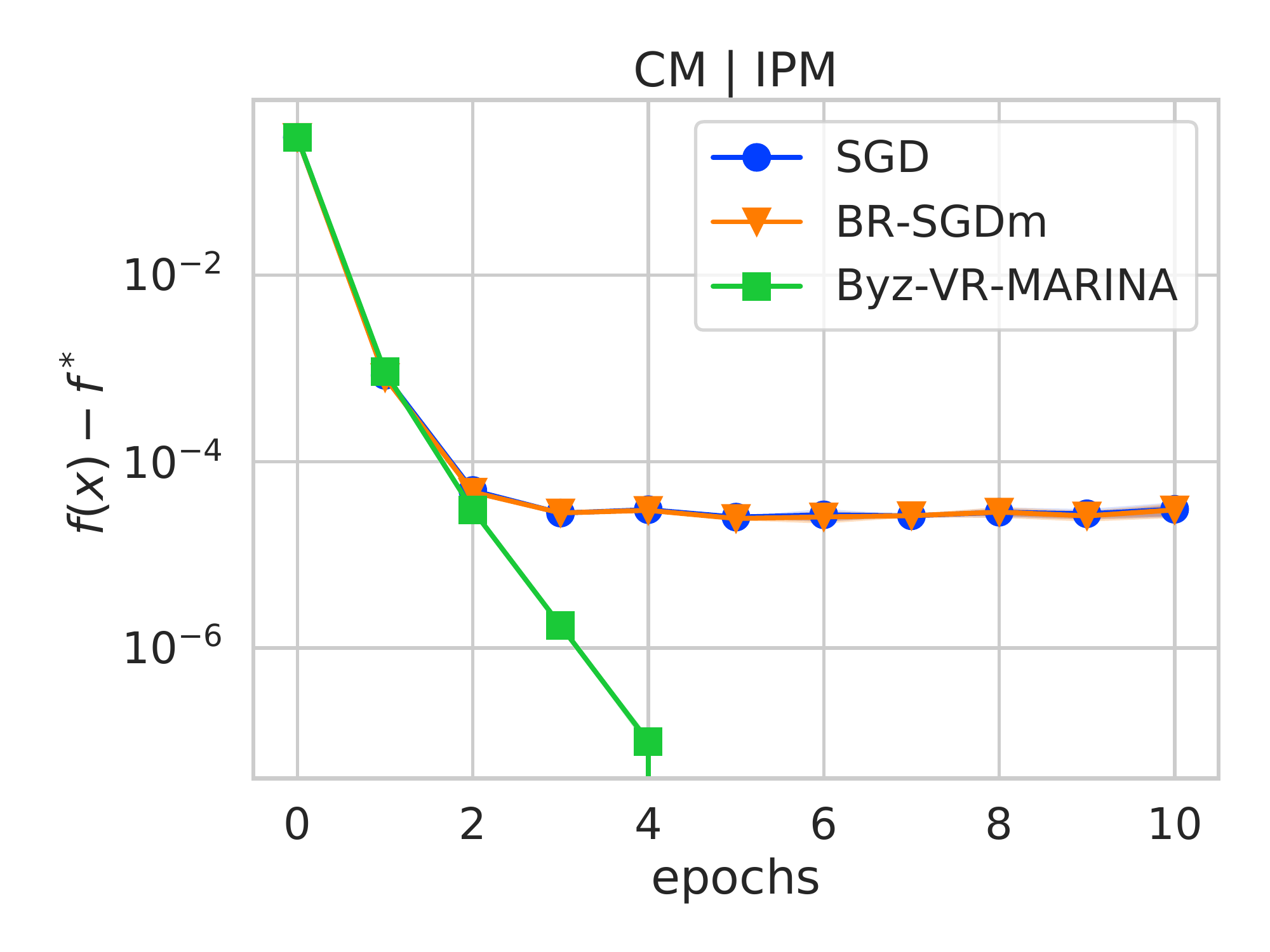}
\\
\includegraphics[width=0.195\textwidth]{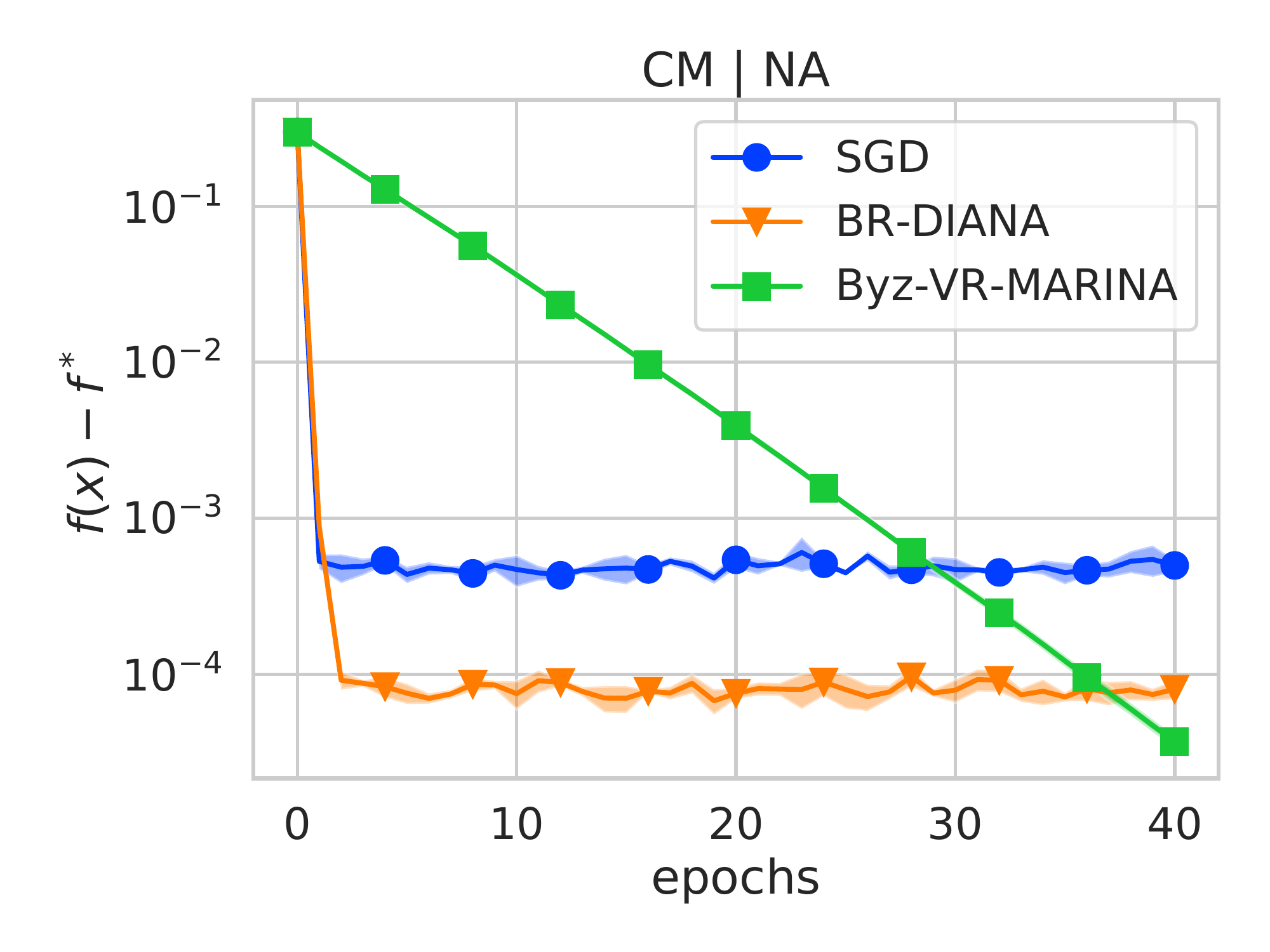}
\includegraphics[width=0.195\textwidth]{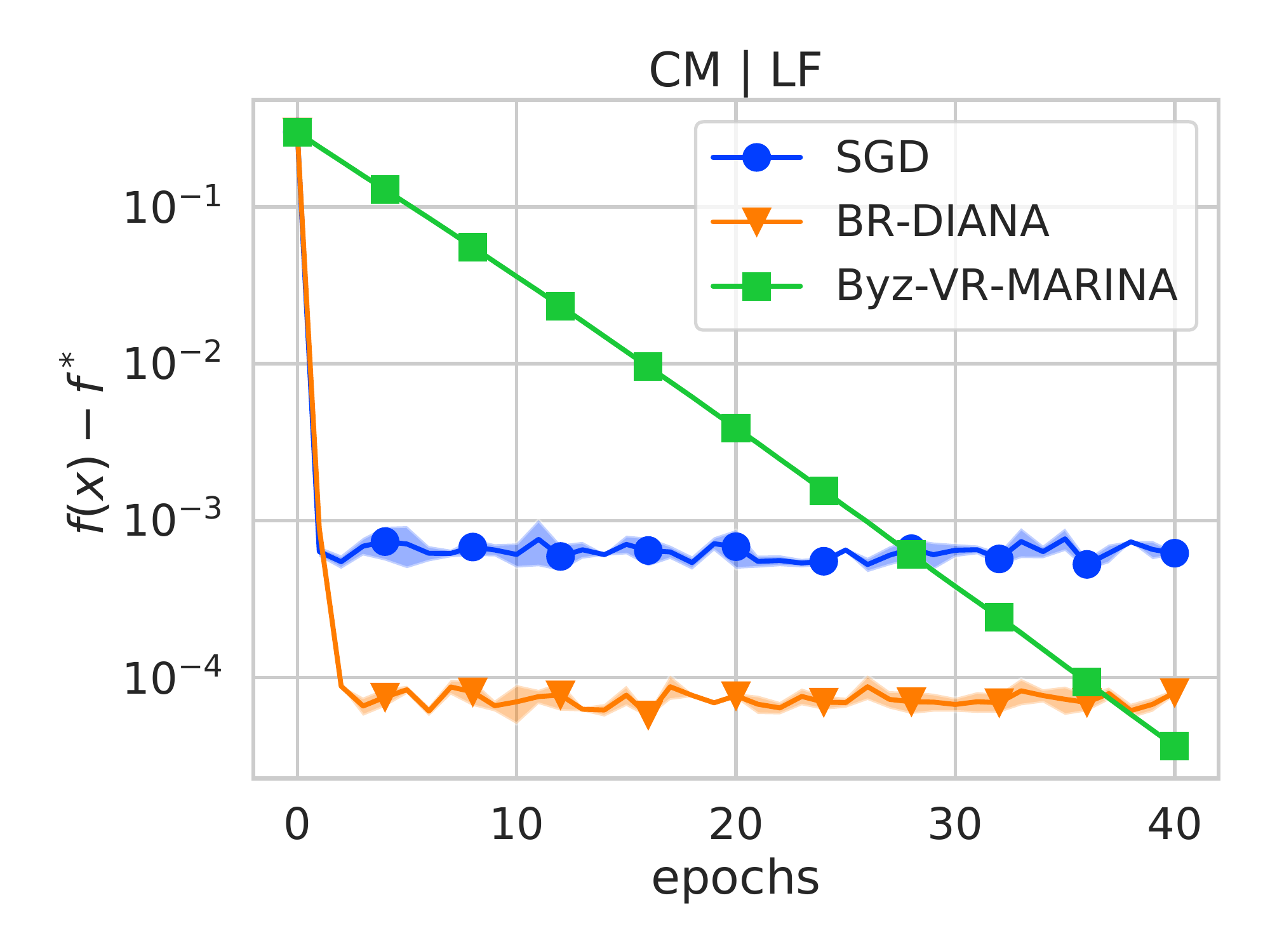}
\includegraphics[width=0.195\textwidth]{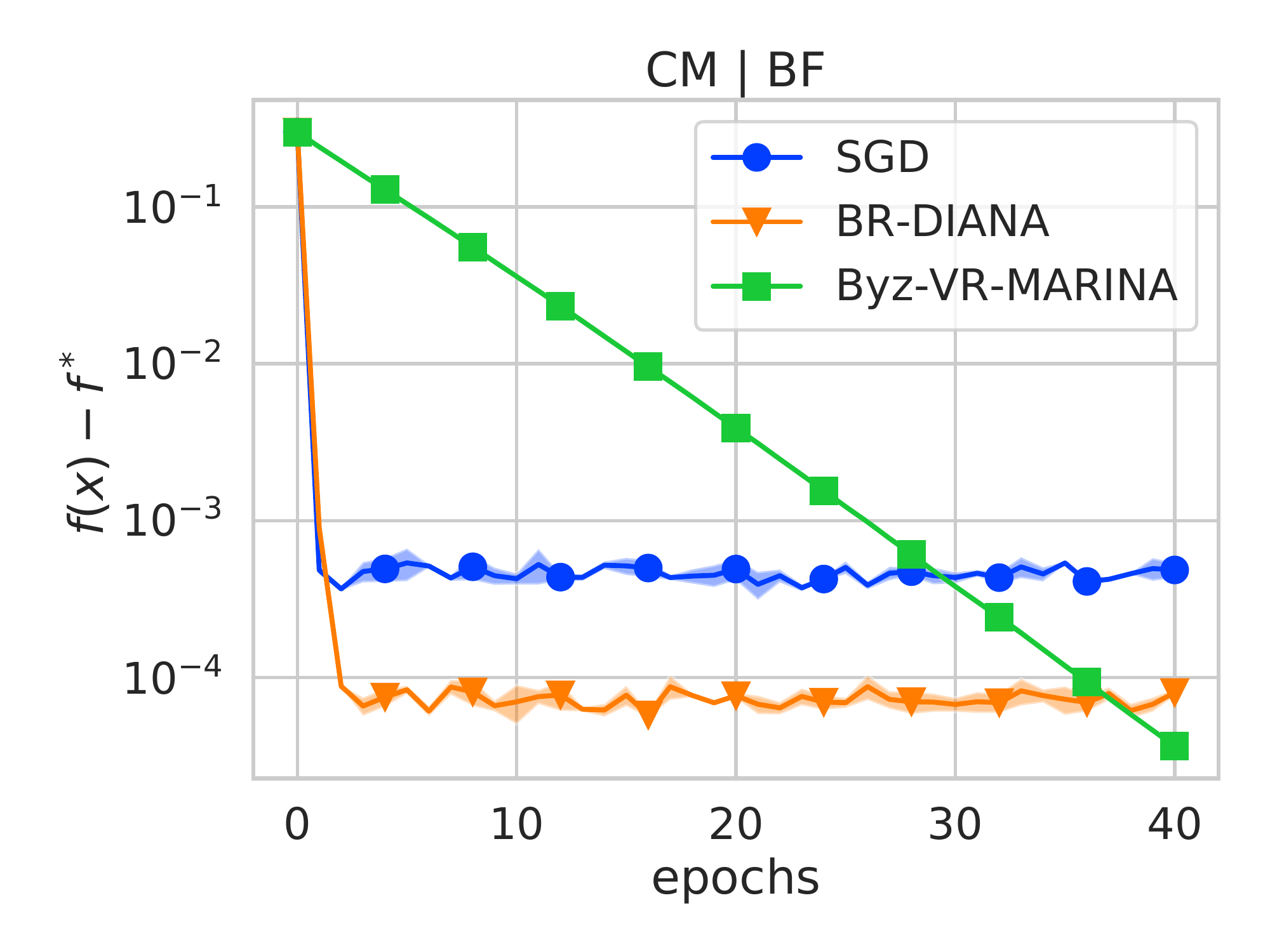}
\includegraphics[width=0.195\textwidth]{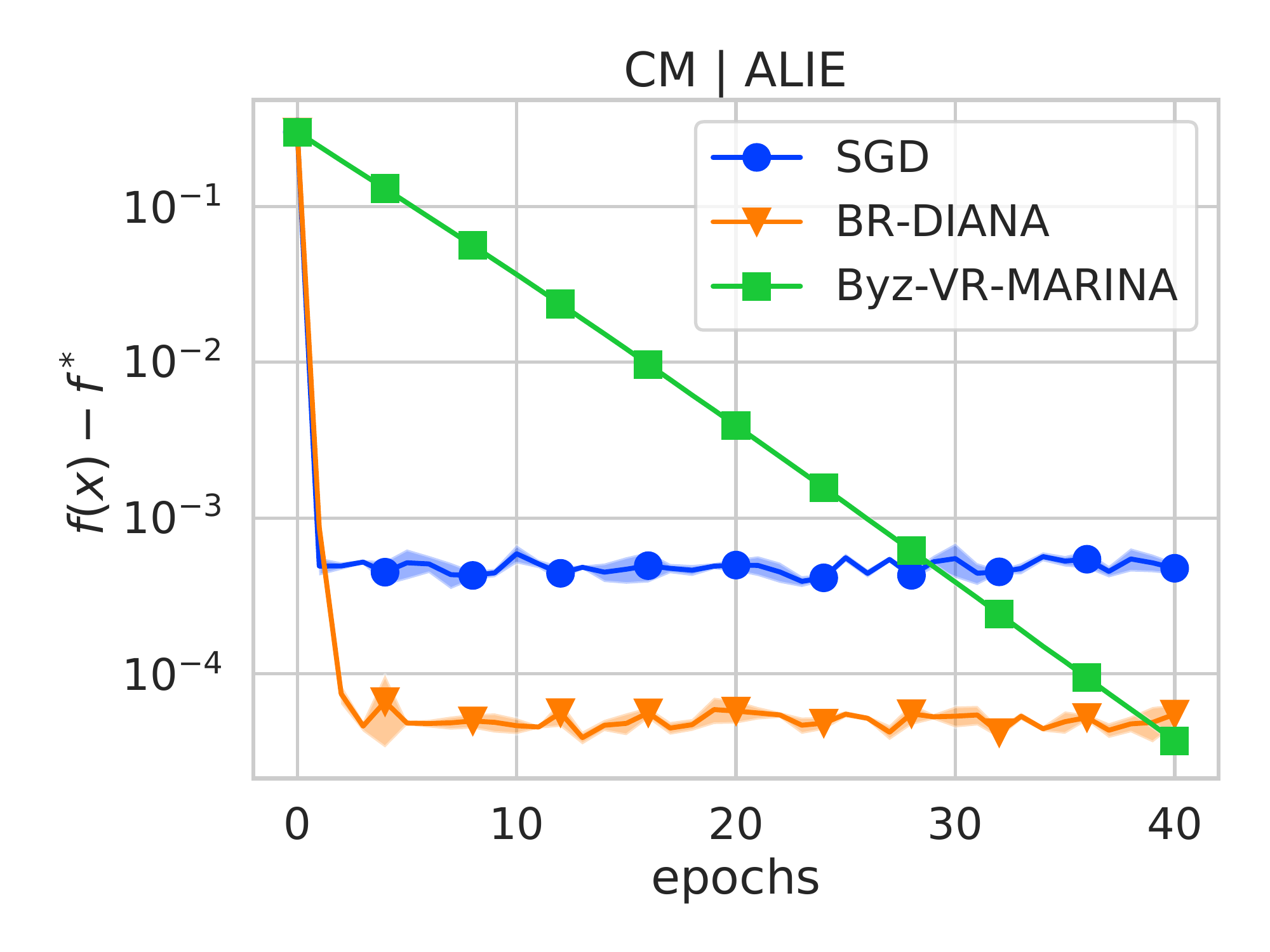}
\includegraphics[width=0.195\textwidth]{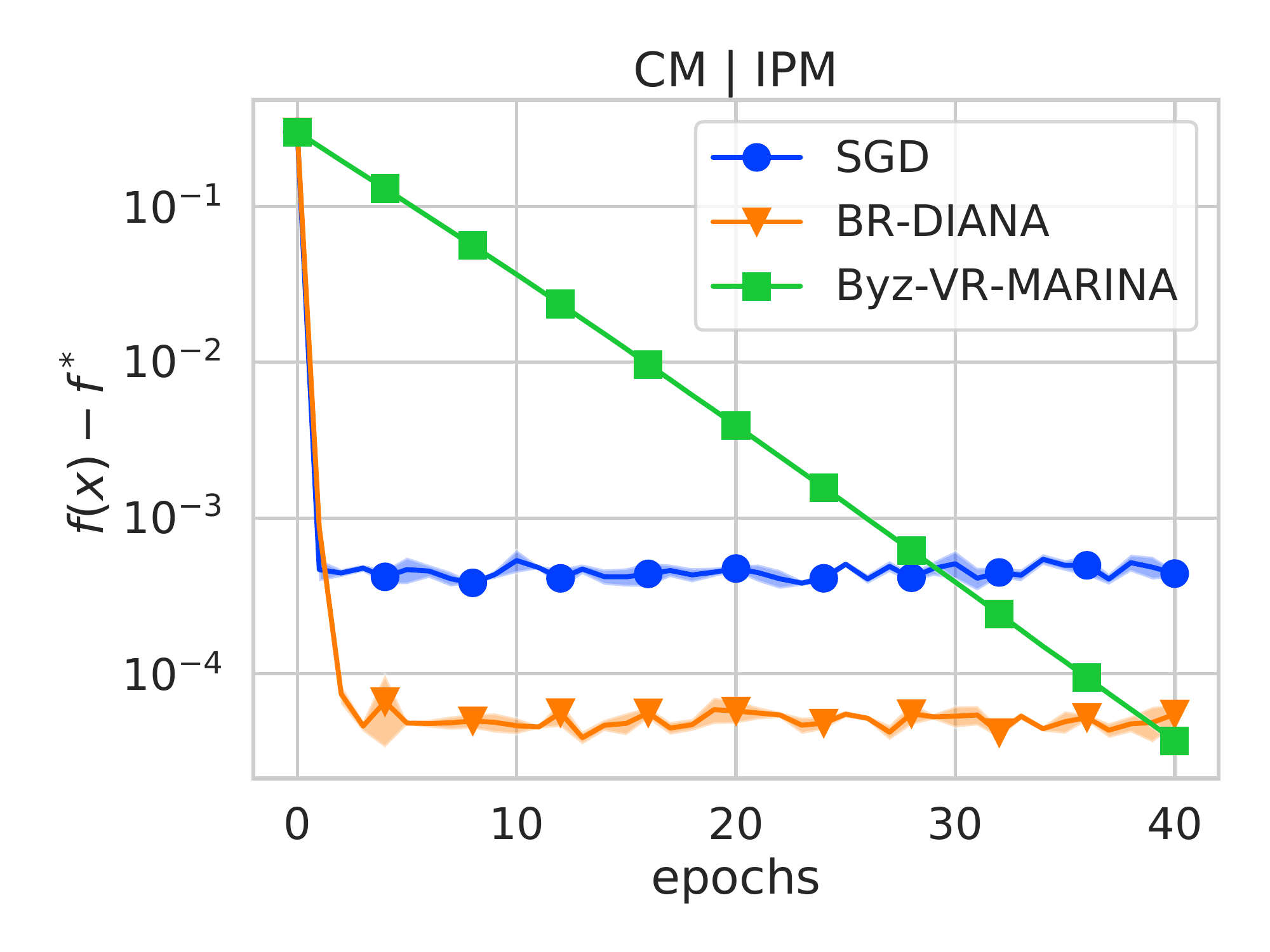}
\caption{The optimality gap $f(x^k) - f(x^*)$ of 3 aggregation rules (AVG, CM, RFA) under 5 attacks (NA, LF, BF, ALIE, IPM) on w8a dataset, where each worker has access full dataset with 4 good and 1 \revision{Byzantine} worker.  In the first row, we do not use any compression, in the second row each method uses Rand$K$ sparsification with $K = 0.1 d$.} 
\label{fig:w8a_full}
\vspace{-5mm}
\end{figure}
\subsection{Experimental setup}
For each experiment, we tune the step size using the following set of candidates $\{0.5, 0.05, 0.005\}$. The step size is fixed. We do not use learning rate warmup or decay. We use batches of size $32$ for all methods. Each experiment is run with three varying random seeds, and we report the mean optimality gap with one standard error. The optimal value is obtained by running gradient descent (\algname{GD}) on the complete dataset for 1000 epochs. Our implementation of attacks and robust aggregation schemes is based on the public implementation from \citep{karimireddy2020byzantine} available at \url{https://github.com/epfml/byzantine-robust-noniid-optimizer}. Our codes are available via an anonymized repository at \url{https://github.com/SamuelHorvath/VR_Byzantine}. We select the same set of hyperparameters as \citep{karimireddy2020byzantine}, i.e., 
\begin{itemize}
\item RFA: the number of steps of smoothed Weisfield algorithm $T=8$; see Section~\ref{app:robustness} for details,
\item ALIE: a small constant that controls the strength of the attack $z$ is chosen according to \citep{baruch2019little},
\item IPM: a small constant that controls the strength of the attack $\epsilon=0.1$.
\end{itemize}

\subsection{Extra Experiments}
\subsubsection{Heterogeneous Data}
In this case, we randomly shuffle dataset and we sequentially distribute it among 15 good workers, where each worker has approximately the same amount of data and there is no overlap. We include five \revision{Byzantine} workers who have access to an entire dataset and the exact updates computed at each client. For the aggregation, we consider three rules: \textit{standard averaging} ({\tt AVG}), \textit{coordinate-wise median} ({\tt CM}) with \textit{bucketing}, and \textit{robust federated averaging} ({\tt RFA}) with \textit{bucketing} (see the details in Appendix~\ref{app:robustness}).

\textbf{Discussion.} In Figure~\ref{fig:a9a_dense}, we can see that momentum (\algname{BR-SGDm}) the variance reduction (\algname{Byz-VR-MARINA}) techniques consistently outperform the \algname{SGD} baseline while none of them dominates for all the attacks. \algname{Byz-VR-MARINA} is particularly useful in the clean data regime and against the ALIE and IPM attacks, and the \algname{BR-SGDm} algorithm provides the best performance for label and bit flipping attacks. It would be interesting to automatically select the best technique, e.g., momentum or \algname{VR-MARINA}, that provides the best defense against any given attack. We leave this for future work.

\subsubsection{Compression}
In this section, we consider the same setup as for the previous experiment with a difference that we employ communication compression. We choose random unbiased sparsification for with sparsity level $10\%$. 
We compare our  \algname{Byz-VR-MARINA} algorithm to compressed \algname{SGD}  and \algname{DIANA} (\algname{BR-DIANA}).

\textbf{Discussion.} In Figure~\ref{fig:a9a_sparse}, we can see that \algname{Byz-VR-MARINA}  consistently outperforms both baselines except for the bit flipping attack. However, even in this case, it seems that \algname{Byz-VR-MARINA} only needs more epochs to provide the better solution while \algname{SGD} cannot further improve regardless of the number of epochs.

\subsubsection{Extra dataset: w8a}
In Figures~\ref{fig:w8a_dense}--\ref{fig:w8a_full}, we perform the same experiments, but for the different LIBSVM dataset:\textit{w8a}. We note that the obtained results are consistent with our observations for the \textit{a9a} dataset.

\subsection{Comparison with \algname{Byrd-SVRG}}
As we note in the main part of the paper, \algname{Byrd-SAGA} is not well suited for PyTorch due to the large memory consumption of \algname{SAGA}-based methods. Nevertheless, one can use \algname{SVRG}-estimator \citep{johnson2013accelerating} as a proxy of \algname{SAGA}-estimator due to similarities between \algname{SAGA} and \algname{SVRG}. We call the resulting method as \algname{Byrd-SVRG} and compare its performance with \algname{Byz-VR-MARINA} on the logistic regression task with non-convex regularization: an instance of \eqref{eq:fin_sum_problem} with $f_{i,j}(x) = - y_{i,j}\log(h(x, a_{i,j})) - (1 - y_{i,j}) \log(1 - h(x, a_{i,j})) + \lambda \sum_{i=1}^d \frac{x_i^2}{1+x_i^2}$. The results are presented in Figure~\ref{fig:a9a_SVRG}. One can see that \algname{Byz-VR-MARINA} converges to the exact solution asymptotically, while other methods are able to converge only to some neighborhood of the solution.

\begin{figure}
\centering
\includegraphics[width=0.4\textwidth]{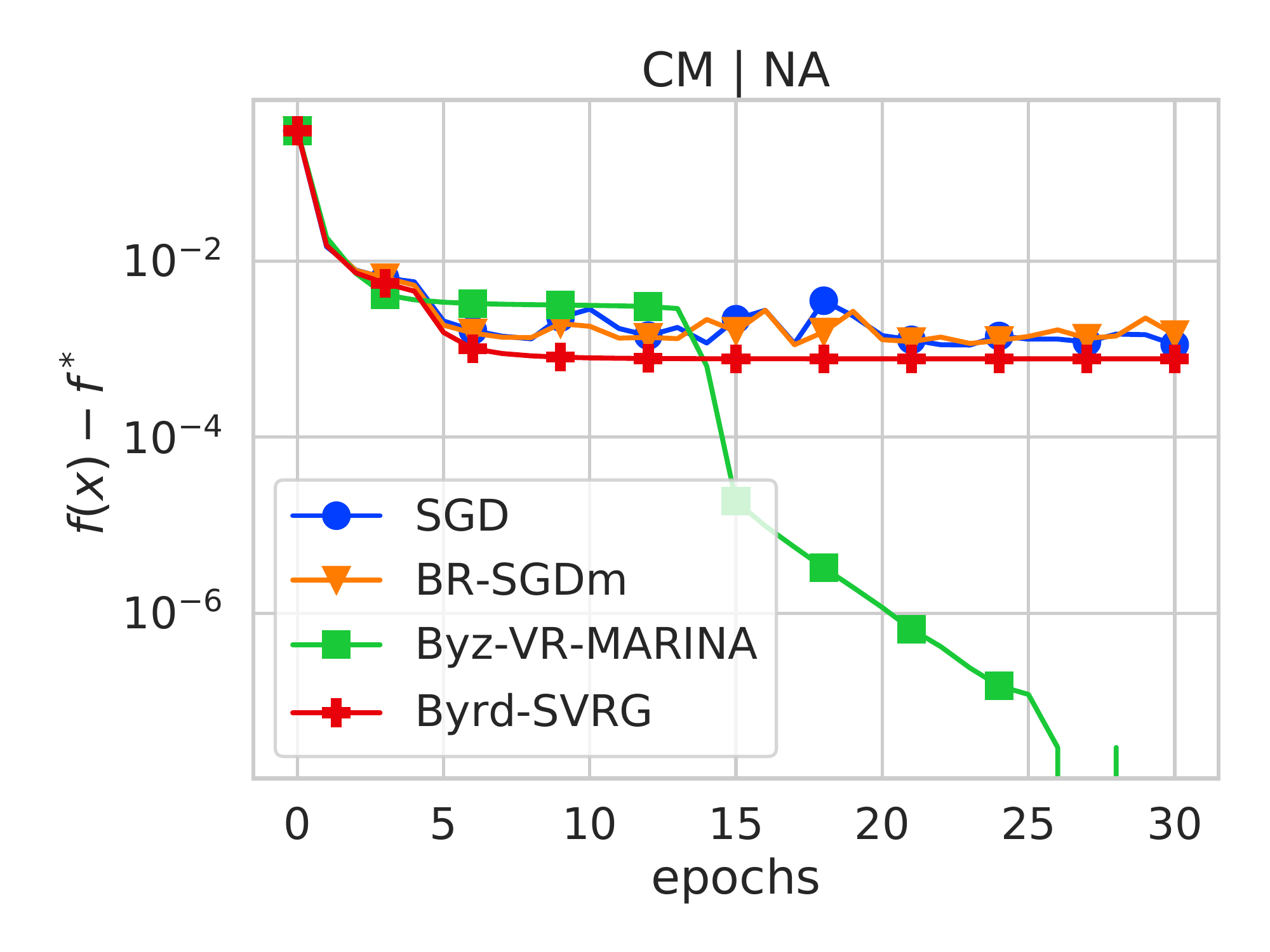}
\includegraphics[width=0.4\textwidth]{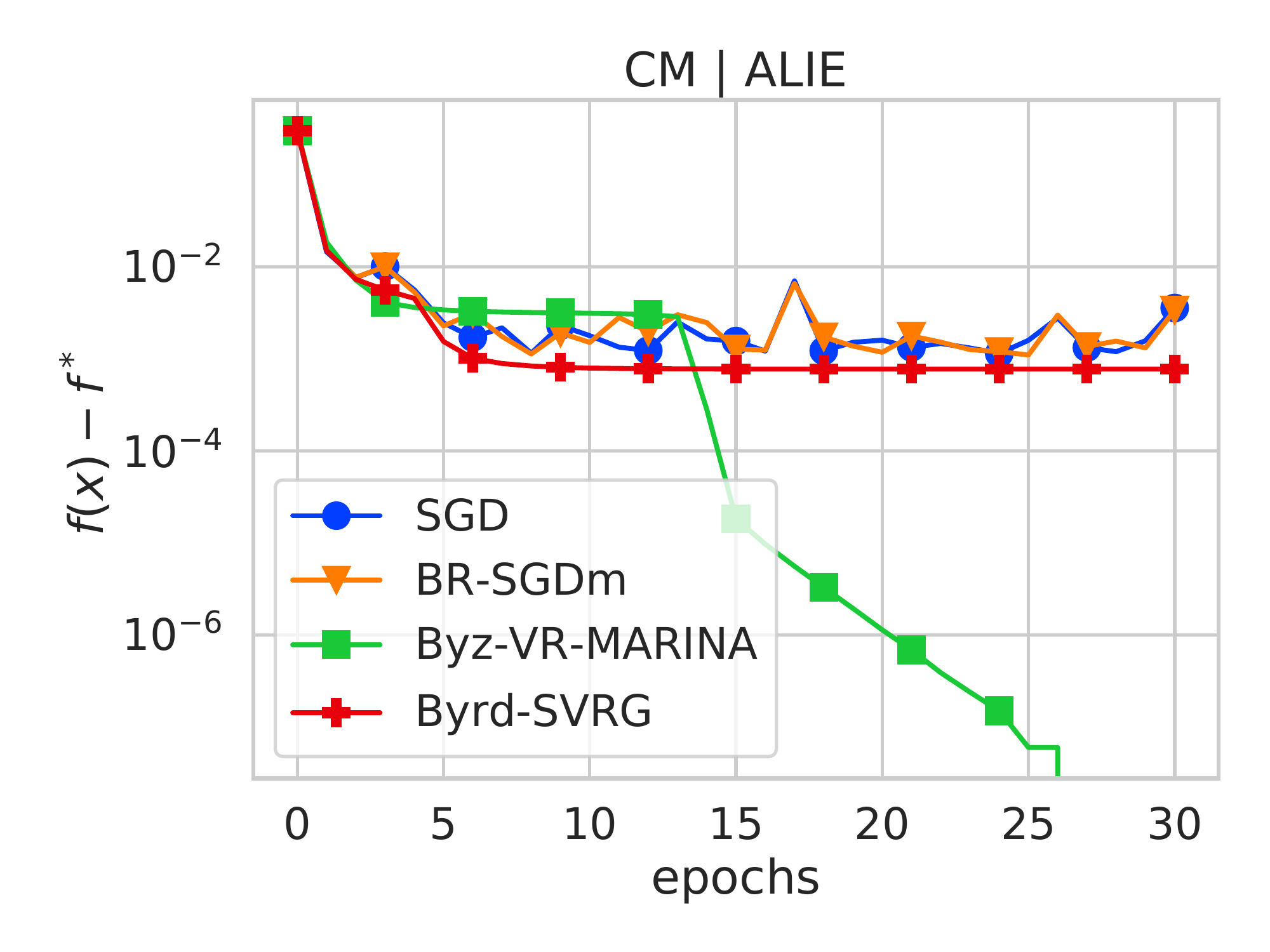}
\caption{The optimality gap $f(x^k) - f(x^*)$ under 2 attacks (NA, ALIE) on a9a dataset, where each worker has access full dataset with 4 good and 1 \revision{Byzantine} worker. No compression is applied.} 
\label{fig:a9a_SVRG}
\end{figure}

\subsection{Effect of compression}
In this experiment, we illustrate the effect of compression in \algname{Byz-VR-MARINA} on its communication efficiency. We compare the performance of \algname{Byz-VR-MARINA} with and without compression in terms of the number of communicated bits between workers and server. The results are shown in Figure~\ref{fig:compr_vs_no_compr}. One can see that communication compression does speed up the training (in terms of the number of transmitted bits).

\begin{figure}
\centering
\includegraphics[width=0.6\textwidth]{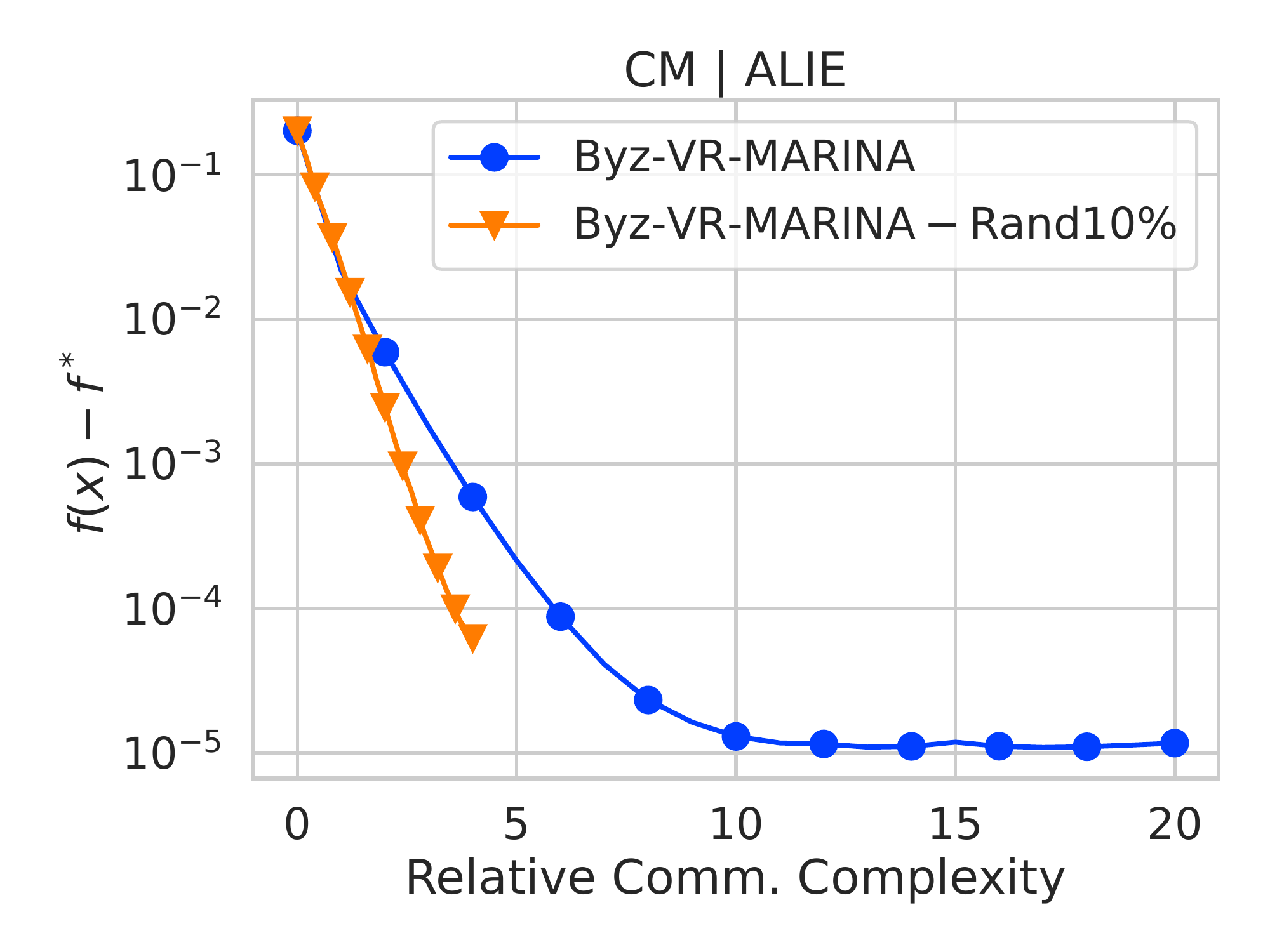}
\caption{The optimality gap $f(x^k) - f(x^*)$ under ALIE attacks on a9a dataset, where each worker has access full dataset with 4 good and 1 \revision{Byzantine} worker. The figure illustrates the effect of compression in \algname{Byz-VR-MARINA}. Rand$K$ sparsification with $K = 0.1 d$ is used as a compression operator. ``Relative comm. compression'' = number of transmitted bits divided by the number of transmitted bits per iteration for full-precision algorithm and then divided by the size of the dataset (when no compression is applied this quantity equals the number of epochs).} 
\label{fig:compr_vs_no_compr}
\end{figure}

\clearpage

\section{Useful Facts}

For all $a,b \in \R^d$ and $\alpha > 0, p \in (0,1]$ the following relations hold:
\begin{eqnarray}
    2\langle a, b \rangle &=& \|a\|^2 + \|b\|^2 - \|a - b\|^2, \label{eq:inner_product_representation}\\
    \|a + b\|^2 &\leq& (1+\alpha)\|a\|^2 + (1 + \alpha^{-1})\|b\|^2, \label{eq:a_plus_b}\\
    -\|a-b\|^2 &\leq& -\frac{1}{1+\alpha}\|a\|^2 + \frac{1}{\alpha}\|b\|^2, \label{eq:a_minus_b}\\
    (1-p)\left(1 + \frac{p}{2}\right) &\leq& 1 - \frac{p}{2}. \label{eq:1_p}
\end{eqnarray}

\begin{lemma}[Lemma~5 from \citet{richtarik2021ef21}]\label{lem:stepsize_lemma}
	Let $a,b > 0$. If $0 \leq \gamma \leq \tfrac{1}{\sqrt{a}+b}$, then $a\gamma^2 + b\gamma \leq 1$. The bound is tight up to the factor of $2$ since $\tfrac{1}{\sqrt{a}+b} \leq \min\left\{\tfrac{1}{\sqrt{a}}, \tfrac{1}{b}\right\} \leq \tfrac{2}{\sqrt{a} + b}$.
\end{lemma}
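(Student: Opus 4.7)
The plan is to argue by monotonicity at the endpoint and then verify the tightness claim through two elementary comparisons. Define $h(\gamma) = a\gamma^2 + b\gamma$. Since $a,b>0$, the function $h$ is strictly increasing on $[0,\infty)$, so it suffices to check the inequality $h(\gamma)\leq 1$ at the largest admissible value $\gamma_\star = \tfrac{1}{\sqrt{a}+b}$. Monotonicity follows from $h'(\gamma) = 2a\gamma + b > 0$ for $\gamma \geq 0$.

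The main step is then a direct calculation: substituting $\gamma_\star$ gives
\begin{equation*}
h(\gamma_\star) \;=\; \frac{a}{(\sqrt{a}+b)^2} + \frac{b}{\sqrt{a}+b} \;=\; \frac{a + b\sqrt{a} + b^2}{(\sqrt{a}+b)^2}.
\end{equation*}
Since $(\sqrt{a}+b)^2 = a + 2b\sqrt{a} + b^2$, the inequality $h(\gamma_\star) \leq 1$ reduces to $a + b\sqrt{a} + b^2 \leq a + 2b\sqrt{a} + b^2$, i.e., $b\sqrt{a} \geq 0$, which holds by the positivity of $a$ and $b$. Combining this with monotonicity finishes the first part of the lemma.

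For the tightness claim, I would verify both inequalities separately. The left inequality $\tfrac{1}{\sqrt{a}+b} \leq \min\{\tfrac{1}{\sqrt{a}}, \tfrac{1}{b}\}$ follows by dropping a non-negative summand in the denominator: $\sqrt{a}+b \geq \sqrt{a}$ gives $\tfrac{1}{\sqrt{a}+b} \leq \tfrac{1}{\sqrt{a}}$, and analogously $\tfrac{1}{\sqrt{a}+b} \leq \tfrac{1}{b}$. The right inequality $\min\{\tfrac{1}{\sqrt{a}}, \tfrac{1}{b}\} \leq \tfrac{2}{\sqrt{a}+b}$ is equivalent, after reciprocating, to $\max\{\sqrt{a}, b\} \geq \tfrac{\sqrt{a}+b}{2}$, which is simply the fact that the maximum of two non-negative numbers dominates their arithmetic mean.

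There is no genuine obstacle here: the only subtle point is noticing that the cleanest route is to reduce to the endpoint via monotonicity rather than trying an AM–GM or a completion-of-squares manipulation, which would work but be longer. The entire proof amounts to a few lines of high-school algebra together with the trivial inequality $\max\{x,y\} \geq \tfrac{x+y}{2}$.
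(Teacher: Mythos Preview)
Your proof is correct: the monotonicity-plus-endpoint argument cleanly establishes $a\gamma^2+b\gamma\le 1$, and the two elementary comparisons verify the tightness chain. Note that the paper itself does not supply a proof for this lemma---it is stated as a quoted fact from \citet{richtarik2021ef21}---so there is no in-paper argument to compare against; your write-up is a complete and standard justification.
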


\newpage

\section{Further Details on Robust Aggregation}\label{app:robustness}

In Section~\ref{sec:main_results}, we consider robust aggregation rules satisfying Definition~\ref{def:RAgg_def}. As we notice, this definition slightly differs from the original one introduced by \citet{karimireddy2020byzantine}. In particular, we assume
\begin{eqnarray}
	\frac{1}{G(G-1)}\sum\limits_{i,l \in \cG} \EE\left[\|x_i - x_l\|^2\right] \leq \sigma^2, \label{eq:our_condition_var}
\end{eqnarray}
while \citet{karimireddy2020byzantine} uses $\EE[\|x_i - x_l\|^2] \leq \sigma^2$ for any \emph{fixed} $i,l \in \cG$. As we show next, this difference is very subtle and condtion \eqref{eq:our_condition_var} also allows to achieve robustness.

We consider robust aggregation via \emph{bucketing} proposed by \citet{karimireddy2020byzantine} (see Algorithm~\ref{alg:bucketing}).
\begin{algorithm}[h]
   \caption{\texttt{Bucketing}: Robust Aggregation using bucketing \citep{karimireddy2020byzantine}}\label{alg:bucketing}
\begin{algorithmic}[1]
   \STATE {\bfseries Input:} $\{x_1,\ldots,x_n\}$, $s \in \NN$ -- bucket size, \texttt{Aggr} -- aggregation rule 
   \STATE Sample random permutation $\pi = (\pi(1),\ldots, \pi(n))$ of $[n]$
   \STATE Compute $y_i = \frac{1}{s}\sum_{k = s(i-1)+1}^{\min\{si, n\}} x_{\pi(k)}$ for $i = 1, \ldots, \lceil \nicefrac{n}{s} \rceil$
   \STATE {\bfseries Return:} $\widehat x = \texttt{Aggr}(y_1, \ldots, y_{\lceil \nicefrac{n}{s} \rceil})$
\end{algorithmic}
\end{algorithm}
This algorithm can robustify some non-robust aggregation rules \texttt{Aggr}. In particular, \citet{karimireddy2020byzantine} show that Algorithm~\ref{alg:bucketing} makes \texttt{Krum}~\citep{blanchard2017machine}, \texttt{Robust Federated Averaging}~(\texttt{RFA}) \citep{pillutla2022robust} (also known as geometric median), and \texttt{Coordinate-wise Median} (\texttt{CM})~\citep{chen2017distributed} robust, in view of definition from \citet{karimireddy2020byzantine}.

Our main goal in this section is to show that \texttt{Krum $\circ$ Bucketing}, \texttt{RFA $\circ$ Bucketing}, and \texttt{CM $\circ$ Bucketing} satisfy Definition~\ref{def:RAgg_def}. Before we prove this fact, we need to introduce \texttt{Krum}, \texttt{RFA}, \texttt{CM}.

\paragraph{Krum.} Let $S_i \subseteq \{x_1, \ldots, x_n\}$ be the subset of $n - |\cB| - 2$ closest vectors to $x_i$. Then, \texttt{Krum}-estimator is defined as
\begin{equation}
	\texttt{Krum}(x_1,\ldots, x_n) \eqdef \argmin\limits_{x_i \in \{x_1, \ldots, x_n\}} \sum\limits_{j \in S_i} \|x_j - x_i\|^2. \label{eq:Krum_estimator}
\end{equation}
\texttt{Krum} requires computing all pair-wise distances of vectors from $\{x_1, \ldots, x_n\}$ resulting in $\cO(n^2)$ computation cost for the server. Therefore, \texttt{Krum} is computationally expensive, when number of workers $n$ is large.

\paragraph{Robust Federated Averaging.} \texttt{RFA}-estimator finds a geometric median:
\begin{equation}
	\texttt{RFA}(x_1,\ldots, x_n) \eqdef \argmin\limits_{x\in \R^d} \sum\limits_{i=1}^n \|x - x_i\|. \label{eq:RFA_estimator}
\end{equation}
The above problem has no closed form solution. However, one can compute approximate \texttt{RFA} using several steps of smoothed Weiszfeld algorithm having $\cO(n)$ computation cost of each iteration \citep{weiszfeld1937point, pillutla2022robust}.

\paragraph{Coordinate-wise Median.} \texttt{CM}-estimator computes a median of each component separately. That is, for $t$-th coordinate it is defined as
\begin{equation}
	\left[\texttt{CM}(x_1,\ldots, x_n)\right]_t \eqdef \texttt{Median}([x_1]_t,\ldots, [x_n]_t) = \argmin\limits_{u \in \R} \sum\limits_{i=1}^n |u - [x_i]_t|, \label{eq:CM_estimator}
\end{equation}
where $[x]_t$ is $t$-th coordinate of vector $x \in \R^d$. \texttt{CM} has $\cO(n)$ computation cost \citep{chen2017distributed, yin2018byzantine}.

\paragraph{Robustness via bucketing.} The following lemma is the key to show robustness of \texttt{Krum $\circ$ Bucketing}, \texttt{RFA $\circ$ Bucketing}, and \texttt{CM $\circ$ Bucketing} in terms of Definition~\ref{def:RAgg_def}.

\begin{lemma}[Modification of Lemma 1 from \citep{karimireddy2020byzantine}]\label{lem:lem1_karimireddy}
	Assume that $\{x_1,x_2,\ldots,x_n\}$ is such that there exists a subset $\cG \subseteq [n]$, $|\cG| = G \geq (1-\delta)n$ and $\sigma \ge 0$ such that \eqref{eq:our_condition_var} holds. Let vectors $\{y_1, \ldots, y_N\}$, $N = \nicefrac{n}{s}$\footnote{For simplicity, we assume that $n$ is divisible by $s$.} be generated by Algorithm~\ref{alg:bucketing} and $\widetilde{\cG} = \{i \in [N]\mid B_i \subseteq \cG\}$, where $y_i = \frac{1}{|B_i|}\sum_{j \in B_i} x_{j}$ and $B_i$ denotes the $i$-th bucket, i.e., $B_i = \{\pi((i-1)\cdot s + 1),\ldots, \pi(\min\{i\cdot s, n\})\}$. Then, $|\widetilde{\cG}| = \widetilde{G} \geq (1-\delta s)N$ and for any fixed $i, l \in \widetilde{\cG}$ we have
	\begin{eqnarray}
		\EE[y_i] = \EE[\overline{x}]\quad \text{and}\quad \EE\left[\|y_i - y_l\|^2\right] \leq \frac{\sigma^2}{s}, \label{eq:lem1_karimireddy}
	\end{eqnarray}
	where $\overline{x} = \frac{1}{G}\sum_{i\in \cG}x_i$.
\end{lemma}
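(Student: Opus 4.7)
The plan is to prove the three assertions (size of $\widetilde{\cG}$, unbiasedness, pairwise second-moment bound) in that order, with the bulk of the work in the final one.

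\textbf{Step 1: Counting good buckets.} Each $j \in \cB$ belongs to exactly one bucket, so at most $|\cB| \leq \delta n$ buckets contain a Byzantine index. Hence $|\widetilde{\cG}| \geq N - \delta n = (1-\delta s)N$ using $N = n/s$. This is a direct pigeonhole argument.

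\textbf{Step 2: Unbiasedness.} For fixed $i \in \widetilde{\cG}$, I would use the fact that the permutation $\pi$ is uniform to argue that, conditionally on $i \in \widetilde{\cG}$ (equivalently, $B_i \subseteq \cG$), the bucket $B_i$ is a uniformly random size-$s$ subset of $\cG$. By exchangeability of its elements, $\EE_\pi[y_i \mid B_i \subseteq \cG,\, \{x_j\}_{j \in \cG}] = \tfrac{1}{G}\sum_{j\in\cG} x_j = \overline{x}$; taking the outer expectation over the $x_j$'s then gives $\EE[y_i] = \EE[\overline{x}]$.

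\textbf{Step 3: Pairwise second moment.} This is the main calculation. Conditional on $i,l \in \widetilde{\cG}$, the pair $(B_i, B_l)$ is a uniformly random ordered pair of disjoint size-$s$ subsets of $\cG$. Writing $I_j^{(m)} = \mathbbm{1}\{j \in B_m\}$, I would express
\[
y_i - y_l \;=\; \frac{1}{s}\sum_{j \in \cG}\bigl(I_j^{(i)} - I_j^{(l)}\bigr) x_j,
\]
fix the $x_j$'s, and compute the expectation over $\pi$ by expanding $\|y_i-y_l\|^2$ into pairs. The required moments of the indicators under sampling without replacement are
\[
\EE_\pi\bigl[(I_j^{(i)} - I_j^{(l)})^2\bigr] = \frac{2s}{G},\qquad
\EE_\pi\bigl[(I_j^{(i)} - I_j^{(l)})(I_k^{(i)} - I_k^{(l)})\bigr] = -\frac{2s}{G(G-1)} \quad (j\neq k),
\]
obtained from $\EE[I_j^{(m)}]=s/G$, $\EE[I_j^{(i)}I_k^{(i)}]=\tfrac{s(s-1)}{G(G-1)}$ for $j\neq k$, $\EE[I_j^{(i)}I_k^{(l)}]=\tfrac{s^2}{G(G-1)}$ for $j\neq k$, and $\EE[I_j^{(i)}I_j^{(l)}]=0$ (disjointness). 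Substituting and invoking the algebraic identity
\[
\frac{1}{G(G-1)}\sum_{j\neq k \in \cG}\|x_j - x_k\|^2 = \frac{2}{G}\sum_{j\in\cG}\|x_j\|^2 - \frac{2}{G(G-1)}\sum_{j\neq k \in \cG}\langle x_j, x_k\rangle
\]
the cross terms collapse and everything reassembles into
\[
\EE_\pi\|y_i - y_l\|^2 \;=\; \frac{1}{s}\cdot\frac{1}{G(G-1)}\sum_{j,k \in \cG}\|x_j - x_k\|^2.
\]
Taking the outer expectation over $\{x_j\}$ and applying the hypothesis \eqref{eq:our_condition_var} yields $\EE\|y_i-y_l\|^2 \leq \sigma^2/s$.

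\textbf{Main obstacle.} The delicate point is Step 3: one must keep the conditioning on $i,l \in \widetilde{\cG}$ straight (it only enforces $B_i, B_l$ to be disjoint subsets of $\cG$, by symmetry of the permutation), and then correctly book-keep the second-order moments of the indicator variables for sampling without replacement. A naive bound via Jensen on $\|y_i - y_l\|^2 \leq \tfrac{1}{s^2}\sum_{j\in B_i, k \in B_l}\|x_j-x_k\|^2$ only yields $\sigma^2$, losing the $1/s$ factor; getting the sharp $\sigma^2/s$ genuinely requires exploiting the variance-reduction effect of averaging disjoint samples, which is precisely what the indicator-moment computation encodes.
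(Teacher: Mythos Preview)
Your argument is correct and complete; Steps~1 and~2 coincide with the paper's, and your Step~3 reaches the same exact identity
\[
\EE_\pi\|y_i-y_l\|^2 \;=\; \frac{1}{s}\cdot\frac{1}{G(G-1)}\sum_{j,k\in\cG}\|x_j-x_k\|^2
\]
before invoking~\eqref{eq:our_condition_var}. The route, however, differs. The paper pairs the bucket elements by position, writing $y_i-y_l=\tfrac{1}{s}\sum_{t=1}^s(x_{k_{i,t}}-x_{k_{l,t}})$; by exchangeability the diagonal terms give $\tfrac{1}{s}\EE\|x_{k_{i,1}}-x_{k_{l,1}}\|^2=\tfrac{1}{sG(G-1)}\sum_{j\neq k}\EE\|x_j-x_k\|^2$, and the cross term $S$ is a sum over ordered $4$-tuples $(i_1,i_2,i_3,i_4)$ of $\EE\langle x_{i_1}-x_{i_2},x_{i_3}-x_{i_4}\rangle$, which vanishes by the antisymmetry $(i_1,i_2)\leftrightarrow(i_2,i_1)$. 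You instead expand in indicator variables and compute the second-order moments of sampling two disjoint $s$-subsets of $\cG$ without replacement. Your approach is more mechanical (no pairing trick, no cancellation to spot) and makes the dependence on the sampling-without-replacement covariances explicit; the paper's is shorter once one sees why $S=0$. Both yield the sharp $\sigma^2/s$ and both rely on the same conditioning: given $i,l\in\widetilde{\cG}$, the pair $(B_i,B_l)$ is uniform over ordered disjoint $s$-subsets of~$\cG$.
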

\begin{proof}
	The proof is \emph{almost} identical to the proof of Lemma 1 from \citep{karimireddy2020byzantine}. Nevertheless, for the sake of mathematical rigor, we provide a complete proof. Since each \revision{Byzantine} peer is contained in no more than $1$ bucket and $|\cB| \leq \delta n$ (here $\cB = [n]\setminus \cG$), we have that number of ``bad'' buckets is not greater than $\delta n \leq \delta s N$, i.e., $\widetilde{G} \geq (1 - \delta s)N$.  Next, for any fixed $i \in \widetilde{\cG}$ we have
	\begin{eqnarray*}
		\EE_{\pi}[y_i \mid i \in \widetilde{\cG}] = \frac{1}{|B_i|}\sum\limits_{j\in B_i} \EE_\pi \left[x_j \mid j \in \cG \right] = \frac{1}{G}\sum\limits_{j\in \cG} x_j = \overline{x},
	\end{eqnarray*}
	where $\EE_\pi[\cdot]$ denotes the expectation w.r.t.\ the randomness comming from the permutation. Taking the full expectation, we obtain the first part of \eqref{eq:lem1_karimireddy}. To derive the second part we introduce the notation for workers from $B_i$ and $B_l$: $B_i = \{k_{i,1}, k_{i,2}, \ldots, k_{i,s}\}$ and $B_l = \{k_{l,1}, k_{l,2}, \ldots, k_{l,s}\}$. Then, for any fixed $i, l \in \widetilde{\cG}$ the (ordered) pairs $\{(x_{k_{i,t}}, x_{k_{l,t}})\}_{t=1}^s$ are identically distributed random variables as well as for any fixed $t = 1,\ldots, s$ vectors $x_{k_{i, t}}, x_{k_{l, t}}$ are identically distributed. Therefore, we have
	\begin{eqnarray*}
		\EE\left[\|y_i - y_l\|^2\right] &=& \EE\left[\left\| \frac{1}{s}\sum\limits_{t = 1}^s (x_{k_{i,t}} -  x_{k_{l,t}}) \right\|^2\right]\\
		&=& \frac{1}{s^2}\sum\limits_{t=1}^s\EE\left[\|x_{k_{i,t}} - x_{k_{l,t}}\|^2\right] + \frac{2}{s^2}\sum\limits_{1 \leq t_1 < t_2 \leq s} \EE\left[\langle x_{k_{i,t_1}} - x_{k_{l,t_1}}, x_{k_{i,t_2}} - x_{k_{l,t_2}} \rangle\right] \\
		&=& \frac{1}{s} \EE\left[\|x_{k_{i,1}} - x_{k_{l,1}}\|^2\right] + \frac{s-1}{s}\EE\left[\langle x_{k_{i,1}} - x_{k_{l,1}}, x_{k_{i,2}} - x_{k_{l,2}} \rangle\right]\\
		&=& \frac{1}{sG(G-1)}\sum\limits_{\overset{i_1,i_2 \in \cG}{i_1\neq i_2}}\EE\left[\|x_{i_1} - x_{i_2}\|^2\right]\\
		&&\quad + \underbrace{\frac{s-1}{s G (G-1)(G-2)(G-3)}\sum\limits_{\overset{i_1,i_2,i_3,i_4 \in \cG}{\overset{i_1 \neq i_2,i_3, i_4}{\overset{i_2 \neq i_3,i_4}{i_3 \neq i_4}}}}\EE\left[\langle x_{i_1} - x_{i_2}, x_{i_3} - x_{i_4} \rangle\right]}_{S}\\
		&=& \frac{1}{sG(G-1)}\sum\limits_{i_1,i_2 \in \cG}\EE\left[\|x_{i_1} - x_{i_2}\|^2\right] + S \overset{\eqref{eq:our_condition_var}}{\leq} \frac{\sigma^2}{s} + S.
	\end{eqnarray*}
Finally, we notice that $S = 0$: for any fixed $i_1, i_2, i_3, i_4$ the sum contains one term proportional to $\EE\left[\langle x_{i_1} - x_{i_2}, x_{i_3} - x_{i_4} \rangle\right]$ and one term proportional to $\EE\left[\langle x_{i_2} - x_{i_1}, x_{i_3} - x_{i_4} \rangle\right]$ that cancel out. This concludes the proof.
\end{proof}

Using the above lemma, we get the following result.
\begin{theorem}[Modification of Theorem I from \citep{karimireddy2020byzantine}]\label{thm:middle_seekers_with_bucketing}
	Let $\{x_1,x_2,\ldots,x_n\}$ satisfy the conditions of Lemma~\ref{lem:lem1_karimireddy} for some $\delta \leq \delta_{\max}$. If Algorithm~\ref{alg:bucketing} is run with $s = \lfloor \nicefrac{\delta_{\max}}{\delta} \rfloor$, then
	\begin{itemize}
	\item \texttt{Krum $\circ$ Bucketing} satisfies Definition~\ref{def:RAgg_def} with $c = \cO(1)$ and $\delta_{\max} < \nicefrac{1}{4}$,
	\item \texttt{RFA $\circ$ Bucketing} satisfies Definition~\ref{def:RAgg_def} with $c = \cO(1)$ and $\delta_{\max} < \nicefrac{1}{2}$,
	\item \texttt{CM $\circ$ Bucketing} satisfies Definition~\ref{def:RAgg_def} with $c = \cO(d)$ and $\delta_{\max} < \nicefrac{1}{2}$.
	\end{itemize}
\end{theorem}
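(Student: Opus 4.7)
The plan is to combine Lemma~\ref{lem:lem1_karimireddy} with the classical robustness bounds for \texttt{Krum}, \texttt{RFA}, and \texttt{CM} from \citet{karimireddy2020byzantine}. The key observation is that bucketing converts the weak averaged-pairwise-variance condition of Def.~\ref{def:RAgg_def} into the stronger fixed-pair variance condition used in the prior literature, with the effective variance reduced by a factor of $s$. Once we are on the bucketed scale, the ``base'' bounds for \texttt{Krum}/\texttt{RFA}/\texttt{CM} (which bound $\EE\|\widehat x-\overline y\|^2$ by a constant times the pairwise variance, even before any extra bucketing) will already deliver the desired $c\delta\sigma^2$ guarantee once translated back.

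First I would apply Lemma~\ref{lem:lem1_karimireddy} with $s = \lfloor \delta_{\max}/\delta\rfloor$; the resulting vectors $\{y_1,\dots,y_N\}$ contain at most $\delta_{\max} N$ bad entries and obey $\EE\|y_i-y_l\|^2\leq \sigma^2/s$ for every fixed pair $i,l\in\widetilde\cG$. Thus $\{y_j\}$ satisfies the stronger fixed-pair hypothesis of \citet{karimireddy2020byzantine} with variance $\sigma^2/s$ and corruption rate $\delta_{\max}$.

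Next I would invoke the base bounds for \texttt{Krum} ($\delta_{\max}<1/4$, constant $c_K=\cO(1)$ by the ``closest neighbor'' argument), \texttt{RFA} ($\delta_{\max}<1/2$, $c_R=\cO(1)$ via the contractive property of the geometric median), and \texttt{CM} ($\delta_{\max}<1/2$, $c_C=\cO(d)$, where the factor $d$ arises because the coordinate-wise median bound must be summed across coordinates). Each gives $\EE\|\widehat x-\overline y\|^2\leq c_\ast\delta_{\max}\sigma^2/s$ with $\overline y=\tfrac{1}{\widetilde G}\sum_{i\in\widetilde\cG}y_i$, and the elementary identity $\delta_{\max}/s\leq 2\delta$ (valid for $s=\lfloor\delta_{\max}/\delta\rfloor\geq 1$) turns this into $\EE\|\widehat x-\overline y\|^2\leq 2c_\ast\delta\sigma^2$. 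To match Def.~\ref{def:RAgg_def} I would then translate from $\overline y$ back to $\overline x=\tfrac{1}{G}\sum_{i\in\cG}x_i$ via $\EE\|\widehat x-\overline x\|^2\leq 2\EE\|\widehat x-\overline y\|^2+2\EE\|\overline y-\overline x\|^2$, and bound the cross-term using the permutation symmetry of Algorithm~\ref{alg:bucketing}: conditionally on the $x_i$'s, $\overline y$ is an unbiased estimator of $\overline x$ obtained by averaging $x_j$'s over a random subset of $\cG$ (those lying in good buckets), so an exchangeability/sampling-without-replacement computation yields $\EE\|\overline y-\overline x\|^2=\cO(\delta\sigma^2)$, which is absorbed into the final constants.

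The main obstacle I anticipate is this last step: making the bound on $\EE\|\overline y-\overline x\|^2$ quantitatively tight. The subset of good workers that survives into good buckets is a non-trivial random set (which workers are ``lost'' depends on which ones happen to be paired with a \revision{Byzantine} peer under $\pi$), so one must carefully exploit exchangeability under the random permutation together with the \emph{averaged} pairwise bound $\sigma^2$ to recover the correct $\cO(\delta\sigma^2)$ scaling rather than the much weaker $\cO(\sigma^2)$ that a naive triangle inequality would produce; handling this cleanly without inflating either the constant $c$ or the admissible threshold $\delta_{\max}$ is the technical heart of the argument.
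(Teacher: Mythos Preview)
Your proposal is essentially correct and matches the spirit of what the paper does, but it is worth noting that the paper's own ``proof'' is a one-line deferral: it simply observes that the argument of \citet{karimireddy2020byzantine} for their Theorem~I uses only the general properties of \texttt{Krum}/\texttt{RFA}/\texttt{CM} together with the bucketing conclusion \eqref{eq:lem1_karimireddy}, and since Lemma~\ref{lem:lem1_karimireddy} here re-establishes \eqref{eq:lem1_karimireddy} under the weaker averaged-pairwise hypothesis, the rest of the cited proof carries over verbatim. The paper supplies no further detail beyond this.

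Your sketch is therefore a reasonable reconstruction of the deferred argument rather than a different route. The step you flag as the ``main obstacle'' --- controlling $\EE\|\overline y-\overline x\|^2$ --- is indeed something the cited proof must address, and your plan (exchangeability under the random permutation plus the averaged pairwise bound to get $\cO(\delta\sigma^2)$) is the standard way to do it. Since the present paper does not reproduce these computations, there is nothing in its proof to compare your handling of this step against; the paper simply asserts that the argument is identical to the one in \citet{karimireddy2020byzantine}.
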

\begin{proof}
	The proof is identical to the proof of Theorem I from \citep{karimireddy2020byzantine}, since \citet{karimireddy2020byzantine} rely only on the general properties of \texttt{Krum}/\texttt{RFA}/\texttt{CM} and \eqref{eq:lem1_karimireddy} to get the result.
\end{proof}

\newpage

\section{Missing Proofs and Details From Section \ref{sec:main_results}}\label{app:proofs}

\subsection{Examples of Samplings}\label{appendix:samplings}

Below we provide two examples of situations when Assumption~\ref{as:hessian_variance_local} holds. In both cases, we assume that $f_{i,j}$ is $L_{i,j}$-smooth for all $i\in \cG, j\in [m]$.

\begin{example}[Uniform sampling with replacement]\label{ex:uniform}
	Consider $\widehat{\Delta}_i(x,y) = \frac{1}{b}\sum_{j\in I_{i,k}}\Delta_{i,j}(x,y)$, where $\Delta_{i,j}(x,y) = \nabla f_{i,j}(x) - \nabla f_{i,j}(y)$ and $I_{i,k}$ is the set of $b$ i.i.d.\ samples from the uniform distribution on $[m]$. Then, Assumption~\ref{as:hessian_variance_local} holds with $\cL_{\pm}^2 = L_{\pm, \text{US}}^2$, where $L_{\pm, \text{US}}^2 \leq \tfrac{1}{G}\sum_{i\in \cG}L_{i,\pm, \text{US}}^2$ and $L_{i,\pm, \text{US}}^2$ is such that
	\begin{equation*}
		\frac{1}{m}\sum\limits_{j=1}^m \|\Delta_{i,j}(x,y) - \Delta_i(x,y)\|^2 \leq  L_{i,\pm, \text{US}}^2\|x-y\|^2.
	\end{equation*}	 
	Lemma~2 from \citet{szlendak2021permutation} implies that $L_{i,\text{US}}^2 - L_i^2 \leq L_{i,\pm, \text{US}}^2 \leq L_{i,\text{US}}^2$, where $L_{i,\text{US}}^2$ is such that $\tfrac{1}{m}\sum_{j=1}^m \|\Delta_{i,j}(x,y)\|^2 \leq L_{i,\text{US}}^2\|x-y\|^2$. We point out that in the worst case $L_{i,\text{US}}^2 = \frac{1}{m}\sum_{j=1}^m L_{i,j}^2$.
\end{example}

\begin{example}[Importance sampling with replacement]\label{ex:importance}
	Consider $\widehat{\Delta}_i^k = \frac{1}{b}\sum_{j\in I_{i,k}}\tfrac{\overline{L}_i}{L_{i,j}}\Delta_{i,j}(x,y)$, where $\Delta_{i,j}(x,y) = \nabla f_{i,j}(x) - \nabla f_{i,j}(y)$, $\overline{L}_i = \tfrac{1}{m}\sum_{j=1}^m L_{i,j}$, and $I_{i,k}$ is the set of $b$ i.i.d.\ samples from the distribution $\cD_{i,\text{IS}}$ on $[m]$ such that for $j \sim \cD_{i,\text{IS}}$ we have $\PP\{j = t\} = \tfrac{L_{i,t}}{m\overline{L}_i}$. Then, Assumption~\ref{as:hessian_variance_local} holds with $\cL_{\pm}^2 = L_{\pm, \text{IS}}^2$ such that
	\begin{equation*}
		\frac{1}{mG}\sum\limits_{i\in \cG}\sum\limits_{j=1}^m \frac{\overline{L}_i}{L_{i,j}}\|\Delta_{i,j}(x,y)\|^2 - \frac{1}{mG}\sum\limits_{i\in \cG} \|\Delta_{i}(x,y)\|^2 \leq  L_{\pm, \text{IS}}^2\|x-y\|^2.
	\end{equation*}	 
	Lemma~2 from \citet{szlendak2021permutation} implies that $\tfrac{1}{G}\sum_{i\in \cG}(\overline{L}_i^2 - L_i^2) \leq L_{\pm, \text{IS}}^2 \leq \frac{1}{G}\sum_{i\in \cG}\overline{L}_i^2$. We point out that $\overline{L}_{i}^2 \leq L_{i,\text{US}}^2$ and in the worst case $m\overline{L}_{i}^2 = L_{i,\text{US}}^2$. Therefore, typically $L_{\pm, \text{IS}}^2 < L_{\pm, \text{US}}^2$.
\end{example}

\revision{Next, we show that Assumption~\ref{as:hessian_variance_local} holds whenever $f_{i,j}$ is $L_{i,j}$-smooth for all $i\in \cG, j\in [m]$ and $\widehat{\Delta}_i(x,y)$ can be written as
\begin{equation*}
	\widehat{\Delta}_i(x,y) = \frac{1}{m}\sum\limits_{j=1}^m \xi_{i,j}\left(\nabla f_{i,j}(x) - \nabla f_{i,j}(y)\right),
\end{equation*}
for some random variables $\xi_{i,j}$ such that $\EE[\xi_{i,j}] = 1$, $i\in \cG, j \in [m]$ and $\max_{i\in \cG, j\in [m]}\EE[\xi_{i,j}^2] = E^2 < \infty$\footnote{We note that this assumption on the form of $\widehat{\Delta}_i(x,y)$ is very mild and holds for standard sampling strategies including uniform and importance samplings. We refer to \citep{gower2019sgd} for more examples.}. Indeed, in this case, we have
\begin{eqnarray*}
	\frac{1}{G}\sum\limits_{i\in\cG}\EE\left\|\widehat{\Delta}_i(x,y) - \Delta_i(x,y)\right\|^2 &\leq& \frac{1}{G}\sum\limits_{i\in\cG}\EE\left\|\widehat{\Delta}_i(x,y)\right\|^2\notag\\
	&=& \frac{1}{G}\sum\limits_{i\in\cG}\EE\left\|\frac{1}{m}\sum\limits_{j=1}^m \xi_{i,j}\left(\nabla f_{i,j}(x) - \nabla f_{i,j}(y)\right)\right\|^2\\
	&\leq&   \frac{1}{Gm}\sum\limits_{i\in\cG}\sum\limits_{j=1}^m \|\nabla f_{i,j}(x) - \nabla f_{i,j}(y)\|^2 \EE\xi_{i,j}^2\\
	&\leq& \frac{E^2}{GM}\sum\limits_{i\in\cG}\sum\limits_{j=1}^m L_i^2\|x-y\|^2\\
	&\leq& E^2 \max_{i\in \cG, j\in [m]} L_{i,j}^2\|x-y\|^2,
\end{eqnarray*}
meaning that Assumption~\ref{as:hessian_variance_local} holds with $\cL_{\pm} \leq \sqrt{b}E\max_{i\in \cG, j\in [m]} L_{i,j}$. However, as we show in Examples~\ref{ex:uniform} and \ref{ex:importance} constant $\cL_{\pm}$ can be much smaller than the derived upper bound.}

\subsection{Key Lemmas}

Our theory works for a slightly more general setting than the one we discussed in the main part of the paper. In particular, instead of Assumption~\ref{as:bounded_heterogeneity_simplified} we consider a more general assumption on the heterogeneity.
\begin{assumption}[$(B,\zeta^2)$-heterogeneity]\label{as:bounded_heterogeneity}
	We assume that good clients have $(B,\zeta^2)$-heterogeneous local loss functions for some $B \geq 0, \zeta \geq 0$, i.e.,
	\begin{equation}
		\frac{1}{G}\sum\limits_{i\in \cG} \|\nabla f_i(x) - \nabla f(x)\|^2 \leq B\|\nabla f(x)\|^2 + \zeta^2 \quad \forall x\in \R^d. \label{eq:bounded_heterogeneity}	
	\end{equation}	 
\end{assumption}
When $B = 0$ the above assumption recovers Assumption~\ref{as:bounded_heterogeneity_simplified}. Since we allow $B$ to be positive, it may reduce the value of $\zeta$ in some applications (for example, for over-parameterized models). As we show further, the best possible optimization error that \algname{Byz-VR-MARINA} can achieve in this case is proportional to $\zeta^2$. We refer to \citet{karimireddy2020byzantine} for the study of typical values of parameter $B$ for some over-parameterized models.

In proofs, we need the following lemma, wich is often used for analyzing \algname{SGD}-like methods in the non-convex case.

\begin{lemma}[Lemma~2 from \citet{li2021page}]\label{lem:lemma_2_page}
	Assume that function $f$ is $L$-smooth and $x^{k+1} = x^k - \gamma g^k$. Then 
	\begin{equation}
		f(x^{k+1}) \le f(x^k) - \frac{\gamma}{2}\|\nabla f(x^k)\|^2 - \left(\frac{1}{2\gamma} - \frac{L}{2}\right)\|x^{k+1}-x^k\|^2 + \frac{\gamma}{2}\|g^k - \nabla f(x^k)\|^2. \label{eq:starting_inequality}
	\end{equation}
\end{lemma}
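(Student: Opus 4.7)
The plan is to combine the standard descent consequence of $L$-smoothness with an exact rewriting of the inner product that arises from the update rule. First I would invoke $L$-smoothness of $f$ in its quadratic-upper-bound form
\begin{equation*}
f(x^{k+1}) \leq f(x^k) + \langle \nabla f(x^k), x^{k+1} - x^k \rangle + \tfrac{L}{2}\|x^{k+1} - x^k\|^2,
\end{equation*}
which is the only analytic ingredient needed; everything else is algebra driven by the update $x^{k+1} - x^k = -\gamma g^k$.

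Next I would use the polarization identity \eqref{eq:inner_product_representation} with the scaled vectors $a = -\sqrt{\gamma}\,\nabla f(x^k)$ and $b = \tfrac{1}{\sqrt{\gamma}}(x^{k+1}-x^k)$. The motivation for this specific splitting is that $\|a\|^2 = \gamma\|\nabla f(x^k)\|^2$ and $\|b\|^2 = \tfrac{1}{\gamma}\|x^{k+1}-x^k\|^2$ produce exactly the two quadratic terms appearing in the claim, while using $x^{k+1}-x^k = -\gamma g^k$ gives $a-b = -\sqrt{\gamma}(\nabla f(x^k) - g^k)$, so that $\|a-b\|^2 = \gamma\|g^k - \nabla f(x^k)\|^2$. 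The identity therefore yields the exact equality
\begin{equation*}
\langle \nabla f(x^k), x^{k+1} - x^k \rangle = -\tfrac{\gamma}{2}\|\nabla f(x^k)\|^2 - \tfrac{1}{2\gamma}\|x^{k+1}-x^k\|^2 + \tfrac{\gamma}{2}\|g^k - \nabla f(x^k)\|^2.
\end{equation*}

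Plugging this back into the descent inequality and merging the coefficient $-\tfrac{1}{2\gamma} + \tfrac{L}{2}$ in front of $\|x^{k+1}-x^k\|^2$ delivers \eqref{eq:starting_inequality} immediately. There is no real obstacle here; the only non-routine choice is recognizing that the scaling $a = -\sqrt{\gamma}\nabla f(x^k)$, $b = \tfrac{1}{\sqrt{\gamma}}(x^{k+1}-x^k)$ is exactly the one that aligns the three target terms $\|\nabla f(x^k)\|^2$, $\|x^{k+1}-x^k\|^2$, and $\|g^k - \nabla f(x^k)\|^2$ under the polarization identity, which is forced by the update relation $x^{k+1} - x^k = -\gamma g^k$.
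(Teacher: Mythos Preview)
Your argument is correct: the $L$-smoothness descent inequality combined with the polarization identity \eqref{eq:inner_product_representation} applied to $a=-\sqrt{\gamma}\,\nabla f(x^k)$ and $b=\tfrac{1}{\sqrt{\gamma}}(x^{k+1}-x^k)$ yields exactly \eqref{eq:starting_inequality}. Note that the paper does not supply its own proof of this lemma but simply cites it as Lemma~2 of \citet{li2021page}, so there is no in-paper proof to compare against; your derivation is a valid self-contained justification.
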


To estimate the ``quality'' of robust aggregation at iteration $k+1$ we derive an upper bound for the averaged pairwise variance of estimators obtained by good peers (see also Definition~\ref{def:RAgg_def}).

\begin{lemma}[Bound on the variance]\label{lem:pairwise_variance}
	Let Assumptions~\ref{as:smoothness}, \ref{as:bounded_heterogeneity}, \ref{as:hessian_variance}, \ref{as:hessian_variance_local} hold. Then for all $k \geq 0$ the iterates produced by \algname{Byz-VR-MARINA} satisfy
	\begin{eqnarray}
		\frac{1}{G(G-1)}\sum\limits_{i,l\in \cG}\EE\left[\|g_i^{k+1} - g_l^{k+1}\|^2\right] \leq A'\EE\left[\|x^{k+1}-x^k\|^2\right] + 8Bp\EE\|\nabla f(x^k)\|^2 + 4p\zeta^2, \label{eq:pairwise_variance}
	\end{eqnarray}
	where $A' = \left(8BpL^2 + 4(1-p)\left(\omega L^2 + (1+\omega)L_{\pm}^2 + \frac{(1+\omega)\cL_{\pm}^2}{b}\right)\right)$.
\end{lemma}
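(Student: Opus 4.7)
The plan is to condition on the Bernoulli variable $c_k$ and split into the two cases prescribed by the algorithm, then bound each case separately by exploiting that many terms simplify. Recall that when $c_k=1$ we have $g_i^{k+1}-g_l^{k+1}=\nabla f_i(x^{k+1})-\nabla f_l(x^{k+1})$, and when $c_k=0$ the shared $g^k$ cancels and $g_i^{k+1}-g_l^{k+1}=\cQ(\widehat{\Delta}_i(x^{k+1},x^k))-\cQ(\widehat{\Delta}_l(x^{k+1},x^k))$. So the bound will be $p\cdot(\text{Case 1})+(1-p)\cdot(\text{Case 2})$, and matching terms to the stated $A'$ will be a matter of tracking constants.

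For Case 1, I would use the standard identity
\[
\frac{1}{G(G-1)}\sum_{i\neq l}\|y_i-y_l\|^2 \;=\; \frac{2G}{G-1}\cdot\frac{1}{G}\sum_{i\in\cG}\|y_i-\bar y\|^2
\]
with $\bar y=\nabla f(x^{k+1})$, invoke Assumption~\ref{as:bounded_heterogeneity} to bound the right-hand side by $\frac{2G}{G-1}\bigl(B\|\nabla f(x^{k+1})\|^2+\zeta^2\bigr)$, use $\frac{2G}{G-1}\leq 4$ for $G\geq 2$, and then transfer from $x^{k+1}$ to $x^k$ via $L$-smoothness: $\|\nabla f(x^{k+1})\|^2 \leq 2\|\nabla f(x^k)\|^2+2L^2\|x^{k+1}-x^k\|^2$. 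Multiplying by $p$ produces exactly the terms $8Bp\|\nabla f(x^k)\|^2$, $8BpL^2\|x^{k+1}-x^k\|^2$, and $4p\zeta^2$ appearing in the right-hand side of \eqref{eq:pairwise_variance}.

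For Case 2, the key is that the compression noise and the mini-batch sampling noise are both unbiased and mutually independent across distinct good workers. Conditioning on the past and on $c_k=0$, independence and zero-mean yield the clean identity
\[
\EE\|\cQ(\widehat{\Delta}_i)-\cQ(\widehat{\Delta}_l)\|^2 \;=\; \EE\|\cQ(\widehat{\Delta}_i)-\Delta_i\|^2+\EE\|\cQ(\widehat{\Delta}_l)-\Delta_l\|^2+\|\Delta_i-\Delta_l\|^2,
\]
where $\Delta_i=\nabla f_i(x^{k+1})-\nabla f_i(x^k)$. Splitting each single-worker term via $\EE\|\cQ(\widehat{\Delta}_i)-\Delta_i\|^2\leq (1+\omega)\EE\|\widehat{\Delta}_i-\Delta_i\|^2+\omega\|\Delta_i\|^2$ (tower rule on $\cQ$ given $\widehat{\Delta}_i$ plus Definition~\ref{def:quantization}), then averaging over $i,l\in\cG$, lets me invoke Assumption~\ref{as:hessian_variance_local} on $\tfrac{1}{G}\sum_i\EE\|\widehat{\Delta}_i-\Delta_i\|^2$ and Assumption~\ref{as:hessian_variance} twice: once on $\tfrac{1}{G}\sum_i\|\Delta_i-\bar\Delta\|^2\leq L_\pm^2\|x^{k+1}-x^k\|^2$ (for the $\|\Delta_i-\Delta_l\|^2$ contribution), and once combined with $L$-smoothness to bound $\tfrac{1}{G}\sum_i\|\Delta_i\|^2\leq (L^2+L_\pm^2)\|x^{k+1}-x^k\|^2$. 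Absorbing $\tfrac{2G}{G-1}\leq 4$ and using $2\omega+4\leq 4(1+\omega)$ to lump the $L_\pm^2$ terms, each of the three pieces inherits the factor $4(1-p)$ that matches the $A'$ formula.

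The main obstacle is bookkeeping: making sure the cross terms vanish for the right reason (independence of $\cQ$ and $\widehat{\Delta}$ across distinct workers, and conditional unbiasedness of $\cQ$ inside each worker), and keeping the constants clean enough that the final assembly fits the stated $A'$ rather than a tighter but more awkward form. Once the two cases are bounded correctly, combining them with probabilities $p$ and $1-p$ gives \eqref{eq:pairwise_variance} directly.
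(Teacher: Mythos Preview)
Your proposal is correct and follows essentially the same route as the paper: split on the Bernoulli $c_k$, handle the full-gradient case via heterogeneity plus smoothness to shift $x^{k+1}\to x^k$, and in the compressed case exploit independence/unbiasedness across workers to decompose $\EE\|\cQ(\widehat\Delta_i)-\cQ(\widehat\Delta_l)\|^2$, then apply Definition~\ref{def:quantization} together with Assumptions~\ref{as:hessian_variance} and~\ref{as:hessian_variance_local}. The only cosmetic difference is that the paper obtains the factor $4$ directly from Young's inequality $\|a-b\|^2\le 2\|a-c\|^2+2\|b-c\|^2$, whereas you use the exact pairwise-to-centered identity $\tfrac{1}{G(G-1)}\sum_{i\neq l}\|y_i-y_l\|^2=\tfrac{2G}{G-1}\cdot\tfrac{1}{G}\sum_i\|y_i-\bar y\|^2$ and then bound $\tfrac{2G}{G-1}\le 4$ (and similarly relax $2\le 4$ on the single-worker terms); either way yields the stated $A'$.
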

\begin{proof}
	For the compactness, we introduce new notation: $\Delta^k = \nabla f(x^{k+1}) - \nabla f(x^k)$. \revision{Let $\EE_{c_k}[\cdot]$ denote the expectation w.r.t.\ $c_k$.} Then, by definition of $g_i^{k+1}$ for $i\in \cG$ we have
\revision{
\begin{eqnarray*}
	\frac{1}{G(G-1)}\sum\limits_{i,l\in \cG}\EE_{c_k}\left[\|g_i^{k+1} - g_l^{k+1}\|^2\right] &=& \underbrace{\frac{p}{G(G-1)}\sum\limits_{i,l\in \cG}\|\nabla f_i(x^{k+1}) - \nabla f_{l}(x^{k+1})\|^2}_{T_1}\notag\\
		&&\quad + \underbrace{\frac{1-p}{G(G-1)}\sum\limits_{i,l\in \cG}\|\cQ(\widehat{\Delta}_i^k) - \cQ(\widehat{\Delta}_l^k)\|^2}_{T_2}.
\end{eqnarray*}
Taking the full expectation and using the tower property $\EE[\EE_{c_k}[\cdot]] = \EE[\cdot]$, we derive}		
	\begin{eqnarray}
		\frac{1}{G(G-1)}\sum\limits_{i,l\in \cG}\EE\left[\|g_i^{k+1} - g_l^{k+1}\|^2\right] &=& \EE[T_1] + \EE[T_2]. \label{eq:var_bound_1}
	\end{eqnarray}
	Term \revision{$\EE[T_1]$} can be bounded via Assumption~\ref{eq:bounded_heterogeneity}:
	\begin{eqnarray*}
		\revision{\EE[T_1]} &=& \frac{p}{G(G-1)}\sum\limits_{\overset{i, l \in \cG}{i\neq l}}\EE\left[\|\nabla f_i(x^{k+1}) -  \nabla f_l(x^{k+1})\|^2\right] \\
		&\overset{\eqref{eq:a_plus_b}}{\leq}&  \frac{p}{G(G-1)}\sum\limits_{\overset{i, l \in \cG}{i\neq l}}\EE\left[2\|\nabla f_i(x^{k+1}) -  \nabla f(x^{k+1})\|^2 + 2\|\nabla f_l(x^{k+1}) -  \nabla f(x^{k+1})\|^2\right]\\
		&=& \frac{4p}{G}\sum\limits_{i \in \cG}\EE\left[\|\nabla f_i(x^{k+1}) -  \nabla f(x^{k+1})\|^2\right]\\
		&\overset{\eqref{eq:bounded_heterogeneity}}{\leq}& 4Bp\EE\left[\|\nabla f(x^{k+1})\|^2\right] + 4p\zeta^2\\
		&\overset{\eqref{eq:a_plus_b}}{\leq}& 8Bp\EE\left[\|\nabla f(x^k)\|^2\right] + 8Bp\EE\left[\|\nabla f(x^{k+1}) - \nabla f(x^k)\|^2\right] + 4p\zeta^2\\
		&\overset{As.~\ref{as:smoothness}}{\leq}& 8Bp\EE\left[\|\nabla f(x^k)\|^2\right] + 8BpL^2\EE\left[\|x^{k+1} - x^k\|^2\right] + 4p\zeta^2.
	\end{eqnarray*}
	\revision{To estimate $\EE[T_2]$ we first derive an upper bound for $\EE_{k}[T_2]$, where $\EE_{k}[\cdot]$ denotes expectation w.r.t.\ the all randomness (compression and stochasticity of the the gradients) coming from the step $k+1$ of the algorithm:}
	\begin{eqnarray*}
		\revision{\EE_{k}[T_2]} &=& \frac{1-p}{G(G-1)}\sum\limits_{\overset{i,l\in \cG}{i\neq l}}\revision{\EE_{k}}\left[\|\cQ(\widehat{\Delta}_i^k) - \cQ(\widehat{\Delta}_l^k)\|^2\right]\\
		&=& \frac{1-p}{G(G-1)}\sum\limits_{\overset{i,l\in \cG}{i\neq l}}\revision{\EE_{k}}\left[\|\cQ(\widehat{\Delta}_i^k) - \Delta_i^k - (\cQ(\widehat{\Delta}_l^k) - \Delta_l^k)\|^2\right]\\
		&&\quad + \frac{1-p}{G(G-1)}\sum\limits_{\overset{i,l\in \cG}{i\neq l}}\|\Delta_i^k - \Delta_l^k\|^2\\
		&\overset{\eqref{eq:a_plus_b}}{\leq}& \frac{1-p}{G(G-1)}\sum\limits_{\overset{i,l\in \cG}{i\neq l}}\revision{\EE_{k}}\left[2\|\cQ(\widehat{\Delta}_i^k) - \Delta_i^k\|^2 + 2\|\cQ(\widehat{\Delta}_l^k) - \Delta_l^k\|^2\right]\\
		&&\quad + \frac{1-p}{G(G-1)}\sum\limits_{\overset{i,l\in \cG}{i\neq l}}\left(2\|\Delta_i^k - \Delta^k\|^2 + 2\|\Delta_l^k - \Delta^k\|^2\right)\\
		&=&\revision{\frac{1-p}{G(G-1)}\sum\limits_{\overset{i,l\in \cG}{i\neq l}}\revision{\EE_{k}}\left[\EE_{\cQ_k}\left[2\|\cQ(\widehat{\Delta}_i^k) - \Delta_i^k\|^2 + 2\|\cQ(\widehat{\Delta}_l^k) - \Delta_l^k\|^2\right]\right]}\\
		&&\quad \revision{+ \frac{1-p}{G(G-1)}\sum\limits_{\overset{i,l\in \cG}{i\neq l}}\left(2\|\Delta_i^k - \Delta^k\|^2 + 2\|\Delta_l^k - \Delta^k\|^2\right)}\\
		&=& \frac{4(1-p)}{G}\sum\limits_{i\in \cG}\revision{\EE_{k}}\left[\revision{\EE_{\cQ_k}\left[\|\cQ(\widehat{\Delta}_i^k)\|^2\right]}  - \|\Delta_i^k\|^2\right] + \frac{4(1-p)}{G}\sum\limits_{i\in \cG}\left(\|\Delta_i^k\|^2 - \|\Delta^k\|^2\right)\\
		&\overset{\eqref{eq:quantization_def}}{\leq}& \frac{4(1-p)}{G}\sum\limits_{i\in \cG}\revision{\EE_{k}}\left[(1+\omega)\|\widehat{\Delta}_i^k\|^2  - \|\Delta_i^k\|^2\right] + \frac{4(1-p)}{G}\sum\limits_{i\in \cG}\left(\|\Delta_i^k\|^2 - \|\Delta^k\|^2\right)\\
		&=& \frac{4(1-p)(1+\omega)}{G}\sum\limits_{i\in \cG}\revision{\EE_{k}}\left[\|\widehat{\Delta}_i^k - \Delta_i^k\|^2\right] + \frac{4(1-p)(1+\omega)}{G}\sum\limits_{i\in \cG}\left(\|\Delta_i^k - \Delta^k\|^2\right)\\
		&&\quad  + 4(1-p)\omega\|\Delta^k\|^2,
	\end{eqnarray*}
	\revision{where $\EE_{Q_k}[\cdot]$ denotes the expectation w.r.t.\ the randomness coming from the compression at step $k+1$.}		
	Applying Assumptions~\ref{as:hessian_variance_local}, \ref{as:hessian_variance}, and \ref{as:smoothness} and taking the full expectation, we get
	\begin{eqnarray*}
		\revision{\EE[T_2]} &\leq& 4(1-p)\left(\omega L^2 + (1+\omega)\left(L_{\pm}^2 + \frac{\cL_{\pm}^2}{b}\right)\right)\EE\left[\|x^{k+1} - x^k\|^2\right].
	\end{eqnarray*}
	Plugging the upper bounds for \revision{$\EE[T_1]$ and $\EE[T_2]$} in \eqref{eq:var_bound_1}, we obtain the result.
\end{proof}

Using the above lemma, we derive the following technical result, which we rely on in the proofs of the main results.

\begin{lemma}[Bound on the distortion]\label{lem:distortion}
	Let Assumptions~\ref{as:smoothness}, \ref{as:bounded_heterogeneity}, \ref{as:hessian_variance}, \ref{as:hessian_variance_local} hold. Then for all $k \geq 0$ the iterates produced by \algname{Byz-VR-MARINA} satisfy
	\begin{eqnarray}
		\EE\left[\|g^{k+1} - \nabla f(x^{k+1})\|^2\right] &\leq& \left(1-\frac{p}{2}\right)\EE\left[\|g^k - \nabla f(x^k)\|^2\right] + 24Bc\delta \EE\|\nabla f(x^k)\|^2 + 12c\delta\zeta^2\notag\\
		&&\quad + \frac{Ap}{4}\EE\left[\|x^{k+1}-x^k\|^2\right], \label{eq:distortion}
	\end{eqnarray}
	where $A = \frac{48 BL^2 c\delta}{p} + \frac{6(1-p)}{p}\left(\frac{4c\delta}{p} + \frac{1}{2G}\right)\left(\omega L^2 + \frac{(1+\omega)\cL_{\pm}^2}{b}\right)+ \frac{6(1-p)}{p}\left(\frac{4c\delta(1+\omega)}{p} + \frac{\omega}{2G}\right)L_{\pm}^2$.
\end{lemma}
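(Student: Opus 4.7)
The plan is to decompose $g^{k+1} - \nabla f(x^{k+1})$ into an \emph{aggregation error} $g^{k+1} - \bar{g}^{k+1}$ and a \emph{mean error} $\bar{g}^{k+1} - \nabla f(x^{k+1})$, where $\bar{g}^{k+1} = \tfrac{1}{G}\sum_{i\in\cG}g_i^{k+1}$. I would split the two pieces via Young's inequality at parameter $s = p/2$,
\begin{equation*}
\|g^{k+1}-\nabla f(x^{k+1})\|^2 \leq (1+p/2)\|\bar{g}^{k+1}-\nabla f(x^{k+1})\|^2 + (1+2/p)\|g^{k+1}-\bar{g}^{k+1}\|^2,
\end{equation*}
chosen so that the Bernoulli prefactor $(1-p)$ appearing below in the mean error combines with $1+p/2$ via~\eqref{eq:1_p} to produce the desired contraction $(1+p/2)(1-p)\leq 1-p/2$, while $1+2/p \leq 3/p$ becomes the prefactor multiplying the aggregation piece.

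For the aggregation piece, Definition~\ref{def:RAgg_def} together with Lemma~\ref{lem:pairwise_variance} directly gives $\EE\|g^{k+1}-\bar{g}^{k+1}\|^2 \leq c\delta\bigl(A'\,\EE\|x^{k+1}-x^k\|^2 + 8Bp\,\EE\|\nabla f(x^k)\|^2 + 4p\,\zeta^2\bigr)$. Multiplying by $3/p$ yields the claimed $24Bc\delta\,\EE\|\nabla f(x^k)\|^2$ and $12c\delta\zeta^2$ terms, while expanding $A'$ produces precisely the $c\delta/p$-scaled portions of the coefficient $\tfrac{Ap}{4}$ of $\EE\|x^{k+1}-x^k\|^2$.

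For the mean piece, the Bernoulli coin makes $\bar{g}^{k+1}=\nabla f(x^{k+1})$ when $c_k=1$, whereas when $c_k=0$ we have $\bar{g}^{k+1}-\nabla f(x^{k+1}) = (g^k-\nabla f(x^k)) + (\bar{Q}^k-\Delta^k)$, with $\bar{Q}^k = \tfrac{1}{G}\sum_{i\in\cG}\cQ(\widehat{\Delta}_i(x^{k+1},x^k))$ and $\Delta^k = \nabla f(x^{k+1}) - \nabla f(x^k)$. Unbiasedness of both compression and sampling forces $\EE[\bar{Q}^k-\Delta^k\mid\mathcal{F}_k]=0$, killing the cross term, so $\EE\|\bar{g}^{k+1}-\nabla f(x^{k+1})\|^2 = (1-p)\EE\|g^k-\nabla f(x^k)\|^2 + (1-p)\EE\|\bar{Q}^k-\Delta^k\|^2$. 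Independence of compression and sampling across good workers turns $\EE\|\bar{Q}^k-\Delta^k\|^2$ into $\tfrac{1}{G^2}\sum_{i\in\cG}\EE\|\cQ(\widehat{\Delta}_i)-\Delta_i\|^2$, and the standard decomposition $\EE\|\cQ(\widehat{\Delta}_i)-\Delta_i\|^2 \leq (1+\omega)\EE\|\widehat{\Delta}_i-\Delta_i\|^2 + \omega\|\Delta_i\|^2$ (Definition~\ref{def:quantization}) together with Assumption~\ref{as:hessian_variance_local}, Assumption~\ref{as:hessian_variance}, and $L$-smoothness gives $\EE\|\bar{Q}^k-\Delta^k\|^2 \leq \tfrac{1}{G}\bigl(\omega L^2 + \omega L_{\pm}^2 + \tfrac{(1+\omega)\cL_{\pm}^2}{b}\bigr)\EE\|x^{k+1}-x^k\|^2$; scaling by $(1+p/2)(1-p)\leq 1-p/2$ then supplies the $1/G$-scaled portions of $\tfrac{Ap}{4}$.

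The main technical obstacle is the coefficient bookkeeping: the aggregation side produces terms scaled by $c\delta/p$ and the mean side produces terms scaled by $1/G$, and the two must assemble into the symmetric factors $\tfrac{4c\delta}{p}+\tfrac{1}{2G}$ and $\tfrac{4c\delta(1+\omega)}{p}+\tfrac{\omega}{2G}$ that appear in the definition of $A$. The remaining manipulations — the tower property of expectation, and using $L$-smoothness to convert $\tfrac{1}{G}\sum_{i\in\cG}\|\Delta_i\|^2 \leq (L^2 + L_{\pm}^2)\|x^{k+1}-x^k\|^2$ into the quantity that appears as the coefficient of $\omega$ in $A$ — are routine; collecting everything produces~\eqref{eq:distortion}.
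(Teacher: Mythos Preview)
Your proposal is correct and follows essentially the same route as the paper's proof: the same Young split at parameter $p/2$, the same treatment of the mean piece via the Bernoulli coin, unbiasedness, and per-worker independence, and the same handling of the aggregation piece via Definition~\ref{def:RAgg_def} combined with Lemma~\ref{lem:pairwise_variance}. The only cosmetic difference is that for the $\|x^{k+1}-x^k\|^2$ coefficient coming from the mean piece the paper crudely uses $1+p/2\le 3/2$ rather than $(1+p/2)(1-p)\le 1-p/2$, which is what makes the $(1-p)$ factor survive into the $\tfrac{1}{2G}$ portion of $A$; be careful to do the bookkeeping the paper's way if you want to land exactly on the stated constant.
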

\begin{proof}
	For convenience, we intoduce the following notation: 
	\begin{equation}
		\overline{g}^{k+1} = \tfrac{1}{G}\sum_{i\in\cG}g_i^{k+1} = \begin{cases} \nabla f(x^{k+1}),& \text{if } c_k = 1,\\ g^{k} + \tfrac{1}{G}\sum_{i\in\cG}\cQ(\widehat{\Delta}_i^k),& \text{otherwise.} \end{cases} \label{eq:overline_g_k+1_representation}
	\end{equation}		
	Using the introduced notation, we derive
	\begin{eqnarray}
		\EE\left[\|g^{k+1} - \nabla f(x^{k+1})\|^2\right] &\overset{\eqref{eq:a_plus_b}}{\leq}& \left(1 + \frac{p}{2}\right)\EE\left[\|\overline{g}^{k+1} - \nabla f(x^{k+1})\|^2\right]\notag\\
		&&\quad + \left(1 + \frac{2}{p}\right)\EE\left[\|g^{k+1} - \overline{g}^{k+1}\|^2\right]. \label{eq:technical_1}
	\end{eqnarray}
	Next, we need to upper-bound the terms from the right-hand side of \eqref{eq:technical_1}. \revision{Let $\EE_{c_k}[\cdot]$ denote the expectation w.r.t.\ $c_k$. Then, in view of \eqref{eq:overline_g_k+1_representation}, we have
	\begin{eqnarray*}
	\EE_{c_k}\left[\|g^{k+1} - \nabla f(x^{k+1})\|^2\right] &=& (1-p)\left\|g^{k} + \frac{1}{G}\sum\limits_{i\in\cG}\cQ(\widehat{\Delta}_i^k) - \nabla f(x^{k+1})\right\|^2.
	\end{eqnarray*}}
\revision{Taking expectation $\EE_k[\cdot]$ w.r.t.\ the all randomness (compression and stochasticity of the the gradients) coming from the step $k+1$ of the algorithm} and applying the variance decomposition and independence of mini-batch and compression computations on different workers, we get
	\begin{eqnarray}
		\revision{\EE_k}\left[\|\overline{g}^{k+1} - \nabla f(x^{k+1})\|^2\right] &=& (1-p)\revision{\EE_k}\left[\left\|g^{k} + \frac{1}{G}\sum\limits_{i\in\cG}\cQ(\widehat{\Delta}_i^k) - \nabla f(x^{k+1})\right\|^2\right]\notag\\
		&=& (1-p)\|g^k - \nabla f(x^k)\|^2\notag\\
		&&\quad + (1-p)\revision{\EE_k}\left[\left\|\frac{1}{G}\sum\limits_{i\in\cG}(\cQ(\widehat{\Delta}_i^k) - \Delta_i^k)\right\|^2\right]\notag\\
		&=& (1-p)\|g^k - \nabla f(x^k)\|^2 + \frac{1-p}{G^2}\sum\limits_{i\in\cG}\revision{\EE_k}\left[\left\|\cQ(\widehat{\Delta}_i^k) - \Delta_i^k\right\|^2\right].\notag
	\end{eqnarray}
	\revision{Let $\EE_{\cQ_k}[\cdot]$ denote the expectation w.r.t.\ the randomness coming from the compression at step $k+1$.} The definition of the unbiased compression operator (Definition~\ref{def:quantization}) implies
	\begin{eqnarray*}
		\revision{\EE_k}\left[\|\overline{g}^{k+1} - \nabla f(x^{k+1})\|^2\right] &=& (1-p)\|g^k - \nabla f(x^k)\|^2 + \frac{1-p}{G^2}\sum\limits_{i\in\cG}\revision{\EE_k}\left[\|\cQ(\widehat{\Delta}_i^k)\|^2\right]\\
		&&\quad - \frac{1-p}{G^2}\sum\limits_{i\in\cG}\|\Delta_i^k\|^2\\
		&=& \revision{(1-p)\|g^k - \nabla f(x^k)\|^2 + \frac{1-p}{G^2}\sum\limits_{i\in\cG}\revision{\EE_k}\left[\EE_{\cQ_k}\left[\|\cQ(\widehat{\Delta}_i^k)\|^2\right]\right]}\\
		&&\quad \revision{- \frac{1-p}{G^2}\sum\limits_{i\in\cG}\|\Delta_i^k\|^2}\\
		&\overset{\eqref{eq:quantization_def}}{\leq}& (1-p)\|g^k - \nabla f(x^k)\|^2 + \frac{(1-p)(1+\omega)}{G^2}\sum\limits_{i\in\cG}\revision{\EE_k}\left[\|\widehat{\Delta}_i^k\|^2\right]\\
		&&\quad - \frac{1-p}{G^2}\sum\limits_{i\in\cG}\|\Delta_i^k\|^2\\
		&=& (1-p)\|g^k - \nabla f(x^k)\|^2\\
		&&\quad + \frac{(1-p)(1+\omega)}{G^2}\sum\limits_{i\in\cG}\revision{\EE_k}\left[\|\widehat{\Delta}_i^k - \Delta_i^k\|^2\right] \\
		&&\quad + \frac{(1-p)\omega}{G^2}\sum\limits_{i\in\cG}\|\Delta_i^k\|^2 \\
		&=& (1-p)\|g^k - \nabla f(x^k)\|^2\\
		&&\quad + \frac{(1-p)(1+\omega)}{G^2}\sum\limits_{i\in\cG}\revision{\EE_k}\left[\|\widehat{\Delta}_i^k - \Delta_i^k\|^2\right] \\
		&&\quad + \frac{(1-p)\omega}{G^2}\sum\limits_{i\in\cG}\|\Delta_i^k - \Delta^k\|^2 + \frac{(1-p)\omega}{G}\|\Delta^k\|^2.
	\end{eqnarray*}	
	Using Assumptions~\ref{as:hessian_variance_local}, \ref{as:hessian_variance}, and \ref{as:smoothness} \revision{and taking the full expectation}, we arrive at
	\begin{eqnarray}
		\EE\left[\|\overline{g}^{k+1} - \nabla f(x^{k+1})\|^2\right] &\leq& (1-p)\EE\left[\|g^k - \nabla f(x^k)\|^2\right] \label{eq:technical_2}\\
		&&\quad  + \frac{1-p}{G}\left(\omega L^2 + \omega L_{\pm}^2 + \frac{(1+\omega)\cL_{\pm}^2}{b}\right)\EE\left[\|x^{k+1} - x^k\|^2\right]. \notag
	\end{eqnarray}
	That is, we obtained an upper bound for the first term in the right-hand side of \eqref{eq:technical_1}. To bound the second term, we use the definition of $(\delta,c)$-\texttt{ARAgg} (Definition~\ref{def:RAgg_def}) and Lemma~\ref{lem:pairwise_variance}:
	\begin{eqnarray}
		\EE\left[\|g^{k+1} - \overline{g}^{k+1}\|^2\right] &=& \revision{\EE\left[\EE_k\left[\|g^{k+1} - \overline{g}^{k+1}\|^2\right]\right]}\notag\\
		&\overset{\eqref{eq:RAgg_def}}{\leq}& \revision{\EE\left[\frac{c\delta}{G(G-1)} \sum\limits_{i,l \in \cG}\EE_k\left[\|g_i^{k+1} - g_l^{k+1}\|^2\right]\right]}\notag\\
		&=& \frac{c\delta}{G(G-1)} \sum\limits_{i,l \in \cG}\EE\left[\|g_i^{k+1} - g_l^{k+1}\|^2\right] \notag\\
		&\overset{\eqref{eq:pairwise_variance}}{\leq}& 8Bp c\delta + 4p\zeta^2 c\delta + A'c\delta\EE\left[\|x^{k+1}-x^k\|^2\right], \label{eq:technical_3}
	\end{eqnarray}
	where $A' = \left(8BpL^2 + 4(1-p)\left(\omega L^2 + (1+\omega)L_{\pm}^2 + \frac{(1+\omega)\cL_{\pm}^2}{b}\right)\right)$.
	Plugging \eqref{eq:technical_2} and \eqref{eq:technical_3} in \eqref{eq:technical_1} and using $p \leq 1$, we obtain
	\begin{eqnarray*}
		\EE\left[\|g^{k+1} - \nabla f(x^{k+1})\|^2\right] &\leq& (1-p)\left(1 + \frac{p}{2}\right)\EE\left[\|g^k - \nabla f(x^k)\|^2\right] \\
		&&\quad  + \frac{3(1-p)}{2G}\left(\omega L^2 + \omega L_{\pm}^2 + \frac{(1+\omega)\cL_{\pm}^2}{b}\right)\EE\left[\|x^{k+1} - x^k\|^2\right]\\
		&&\quad + \frac{3}{p}\left(8Bp c\delta + 4p\zeta^2 c\delta + A'c\delta\EE\left[\|x^{k+1}-x^k\|^2\right]\right)\\
		&\overset{\eqref{eq:1_p}}{\leq}& \left(1 - \frac{p}{2}\right)\EE\left[\|g^k - \nabla f(x^k)\|^2\right] + 24Bc\delta + 12\zeta^2 c\delta\\
		&&\quad + \frac{Ap}{2}\EE\left[\|x^{k+1} - x^k\|^2\right],
	\end{eqnarray*}
	where
	\begin{eqnarray*}
		A &=& \frac{2}{p}\left(\frac{3(1-p)}{2G}\left(\omega L^2 + \omega L_{\pm}^2 + \frac{(1+\omega)\cL_{\pm}^2}{b}\right) + \frac{3}{p}A'c\delta \right)\\
		&=& \frac{2}{p}\left(24BL^2c\delta + 3(1-p)\left(\frac{4c\delta}{p} + \frac{1}{2G}\right)\left(\omega L^2 + \frac{(1+\omega)\cL_{\pm}^2}{b}\right) \right)\\
		&&\quad + \frac{2}{p}\cdot 3(1-p)\left(\frac{4c\delta(1+\omega)}{p} + \frac{\omega}{2G}\right)L_{\pm}^2\\
		&=& \frac{48 BL^2 c\delta}{p} + \frac{6(1-p)}{p}\left(\frac{4c\delta}{p} + \frac{1}{2G}\right)\left(\omega L^2 + \frac{(1+\omega)\cL_{\pm}^2}{b}\right)\\
		&&\quad + \frac{6(1-p)}{p}\left(\frac{4c\delta(1+\omega)}{p} + \frac{\omega}{2G}\right)L_{\pm}^2.
	\end{eqnarray*}
	This concludes the proof.
\end{proof}

\subsection{General Non-Convex Functions}\label{app:general_non_cvx}

\begin{theorem}[Generalized version of Theorem~\ref{thm:main_result}]\label{thm:main_result_app}
	Let Assumptions~\ref{as:smoothness}, \ref{as:bounded_heterogeneity},  \ref{as:hessian_variance}, \ref{as:hessian_variance_local} hold. Assume that
	\begin{equation}
		0 < \gamma \leq \frac{1}{L + \sqrt{A}},\quad \delta < \frac{p}{48cB}, \label{eq:gamma_condition_app}
	\end{equation}
	where $A = \frac{48 BL^2 c\delta}{p} + \frac{6(1-p)}{p}\left(\frac{4c\delta}{p} + \frac{1}{2G}\right)\left(\omega L^2 + \frac{(1+\omega)\cL_{\pm}^2}{b}\right)+ \frac{6(1-p)}{p}\left(\frac{4c\delta(1+\omega)}{p} + \frac{\omega}{2G}\right)L_{\pm}^2$. Then for all $K \geq 0$ the iterates produced by \algname{Byz-VR-MARINA} satisfy
	\begin{equation}
		\EE\left[\|\nabla f(\hx^K)\|^2\right] \leq \frac{2\Phi_0}{\gamma\left(1 - \frac{48Bc\delta}{p}\right) (K+1)} + \frac{24c\delta\zeta^2}{p - 48Bc\delta}, \label{eq:main_result_app} 
	\end{equation}
	where $\hx^K$ is choosen uniformly at random from $x^0, x^1, \ldots, x^K$, and $\Phi_0 = f(x^0) - f_* + \tfrac{\gamma}{p}\|g^0 - \nabla f(x^0)\|^2$. \revision{The result of Theorem~\ref{thm:main_result} is a special case of the statement above with $B=0$, since for $B = 0$ we have $A = \frac{48 BL^2 c\delta}{p} + \frac{6(1-p)}{p}\left(\frac{4c\delta}{p} + \frac{1}{2G}\right)\left(\omega L^2 + \frac{(1+\omega)\cL_{\pm}^2}{b}\right)+ \frac{6(1-p)}{p}\left(\frac{4c\delta(1+\omega)}{p} + \frac{\omega}{2G}\right)L_{\pm}^2 = \frac{6(1-p)}{p}\left(\frac{4c\delta}{p} + \frac{1}{2G}\right)\left(\omega L^2 + \frac{(1+\omega)\cL_{\pm}^2}{b}\right)+ \frac{6(1-p)}{p}\left(\frac{4c\delta(1+\omega)}{p} + \frac{\omega}{2G}\right)L_{\pm}^2$, $\left(1 - \frac{48Bc\delta}{p}\right) = 1$, $p - 48Bc\delta = p$, and the second condition from \eqref{eq:gamma_condition_app} always holds.}
\end{theorem}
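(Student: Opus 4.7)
\textbf{Proof proposal for Theorem~\ref{thm:main_result_app}.} The plan is to construct a Lyapunov function that couples the function suboptimality with the distortion of the aggregated estimator, and then show that this Lyapunov function decreases on average, modulo the heterogeneity floor $c\delta\zeta^2$.

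Specifically, I would set
\[
\Phi_k \eqdef f(x^k) - f_* + \frac{\gamma}{p}\|g^k - \nabla f(x^k)\|^2,
\]
which is exactly the $\Phi_0$ appearing in the theorem statement when $k=0$. The first step is to apply Lemma~\ref{lem:lemma_2_page} to obtain the descent inequality
\[
\EE[f(x^{k+1})] \le \EE[f(x^k)] - \frac{\gamma}{2}\EE\|\nabla f(x^k)\|^2 - \left(\frac{1}{2\gamma} - \frac{L}{2}\right)\EE\|x^{k+1}-x^k\|^2 + \frac{\gamma}{2}\EE\|g^k - \nabla f(x^k)\|^2,
\]
which handles the primary $f(x^k)-f_*$ part of $\Phi_k$. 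The second step is to feed Lemma~\ref{lem:distortion} into the distortion part by multiplying that bound by $\gamma/p$; this produces a crucial factor $-\gamma/2 \cdot \EE\|g^k-\nabla f(x^k)\|^2$ that will exactly cancel the $+\gamma/2$ term from Lemma~\ref{lem:lemma_2_page}, and contributes a ``Byzantine drift'' term $\frac{24Bc\delta\gamma}{p}\EE\|\nabla f(x^k)\|^2 + \frac{12c\delta\zeta^2\gamma}{p}$ along with $\frac{A\gamma}{4}\EE\|x^{k+1}-x^k\|^2$.

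Summing these two inequalities gives the one-step Lyapunov contraction
\[
\EE[\Phi_{k+1}] \le \EE[\Phi_k] - \frac{\gamma}{2}\!\left(1 - \frac{48Bc\delta}{p}\right)\!\EE\|\nabla f(x^k)\|^2 - \left(\frac{1}{2\gamma} - \frac{L}{2} - \frac{A\gamma}{4}\right)\!\EE\|x^{k+1}-x^k\|^2 + \frac{12c\delta\zeta^2\gamma}{p}.
\]
The third step is to verify that the coefficient of $-\EE\|x^{k+1}-x^k\|^2$ is nonnegative under the stepsize rule $\gamma \le 1/(L+\sqrt{A})$: this reduces to checking $\tfrac{A}{2}\gamma^2 + L\gamma \le 1$, which follows from Lemma~\ref{lem:stepsize_lemma} (with $a=A/2$, $b=L$) since $1/(L+\sqrt{A}) \le 1/(L+\sqrt{A/2})$. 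The condition $\delta < p/(48cB)$ ensures the factor $1 - 48Bc\delta/p$ in front of $\EE\|\nabla f(x^k)\|^2$ is strictly positive.

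The final step is to telescope from $k=0$ to $K$, divide by $K+1$, drop the nonnegative final Lyapunov value $\EE[\Phi_{K+1}]$, and identify the time-averaged squared gradient with $\EE\|\nabla f(\hx^K)\|^2$ by the random-index selection rule for $\hx^K$; this produces exactly \eqref{eq:main_result_app}. The main subtle point (rather than an obstacle) is arranging the weight $\gamma/p$ on the distortion term so that the $-p/2$ contraction from Lemma~\ref{lem:distortion} produces the clean cancellation $-\gamma/2 \cdot \EE\|g^k-\nabla f(x^k)\|^2$ against Lemma~\ref{lem:lemma_2_page}; every other piece falls into place mechanically once Lemmas~\ref{lem:pairwise_variance} and \ref{lem:distortion} are in hand, since they already encapsulate the interaction between the $(\delta,c)$-robust aggregator and the \algname{PAGE}-style variance reduction.
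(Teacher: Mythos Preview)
Your proposal is correct and follows essentially the same argument as the paper: define the Lyapunov function $\Phi_k = f(x^k)-f_* + \tfrac{\gamma}{p}\|g^k-\nabla f(x^k)\|^2$, combine Lemma~\ref{lem:lemma_2_page} with $\tfrac{\gamma}{p}\times$Lemma~\ref{lem:distortion} so that the $\pm\tfrac{\gamma}{2}\|g^k-\nabla f(x^k)\|^2$ terms cancel, use Lemma~\ref{lem:stepsize_lemma} and the stepsize rule to kill the $\|x^{k+1}-x^k\|^2$ term, then telescope and average. The only cosmetic difference is that the paper writes the $\|x^{k+1}-x^k\|^2$ coefficient as $\tfrac{A\gamma}{2}$ (so the check is $A\gamma^2+L\gamma\le 1$), whereas you carry the $\tfrac{Ap}{4}$ from the lemma statement to get $\tfrac{A\gamma}{4}$; either way the bound $\gamma\le 1/(L+\sqrt{A})$ suffices.
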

\begin{proof}
	For all $k \geq 0$ we introduce $\Phi_k = f(x^k) - f_* + \frac{\gamma}{p}\|g^k - \nabla f(x^k)\|^2$. Using the results of Lemmas~\ref{lem:lemma_2_page} and \ref{lem:distortion}, we derive
	\begin{eqnarray*}
		\EE[\Phi_{k+1}] &\overset{\eqref{eq:starting_inequality},\eqref{eq:distortion}}{\leq}& \EE\left[f(x^k) - f_* - \left(\frac{1}{2\gamma} - \frac{L}{2}\right)\|x^{k+1}-x^k\|^2 + \frac{\gamma}{2}\|g^k - \nabla f(x^k)\|^2\right]\\
		&&\quad - \frac{\gamma}{2}\EE\left[\|\nabla f(x^k)\|^2\right] + \frac{\gamma}{p}\left(1-\frac{p}{2}\right)\EE\left[\|g^k - \nabla f(x^k)\|^2\right]\\
		&&\quad + \frac{24Bc\delta\gamma}{p}\EE\left[\|\nabla f(x^k)\|^2\right] + \frac{12c\delta\zeta^2\gamma}{p} + \frac{\gamma A}{2}\EE\left[\|x^{k+1} - x^k\|^2\right]\\
		&=& \EE\left[\Phi_k\right] - \frac{\gamma}{2}\left(1 - \frac{48 Bc\delta}{p}\right)\EE\left[\|\nabla f(x^k)\|^2\right] + \frac{12c\delta\zeta^2\gamma}{p}\\
		&&\quad - \frac{1}{2\gamma}\left(1 - L\gamma - A\gamma^2\right)\EE\left[\|x^{k+1} - x^k\|^2\right]\\
		&\leq& \EE\left[\Phi_k\right] - \frac{\gamma}{2}\left(1 - \frac{48 Bc\delta}{p}\right)\EE\left[\|\nabla f(x^k)\|^2\right] + \frac{12c\delta\zeta^2\gamma}{p},
	\end{eqnarray*}
	where in the last step we use Lemma~\ref{lem:stepsize_lemma} and our choice of $\gamma$ from \eqref{eq:gamma_condition_app}. Next, in view of \eqref{eq:gamma_condition_app}, we have $\frac{\gamma}{2}\left(1 - \frac{48 Bc\delta}{p}\right) > 0$. Therefore, summing up the above inequality for $k = 0,1,\ldots,K$ and rearranging the terms, we get
	\begin{eqnarray*}
		\frac{1}{K+1}\sum\limits_{k=0}^{K}\EE\left[\|\nabla f(x^k)\|^2\right] &\leq& \frac{2}{\gamma\left(1 - \frac{48 Bc\delta}{p}\right)(K+1)}\sum\limits_{k=0}^K\left(\EE[\Phi_k] - \EE[\Phi_{k+1}]\right)\\
		&&\quad + \frac{24 c\delta\zeta^2}{p - 48 Bc\delta}\\
		&=& \frac{2\left(\EE[\Phi_0] - \EE[\Phi_{K+1}]\right)}{\gamma\left(1 - \frac{48 Bc\delta}{p}\right)(K+1)} + \frac{24 c\delta\zeta^2}{p - 48 Bc\delta}\\
		&\overset{\Phi_{K+1} \geq 0}{\leq}& \frac{2\EE[\Phi_0]}{\gamma \left(1 - \frac{48 Bc\delta}{p}\right)(K+1)} + \frac{24 c\delta\zeta^2}{p - 48 Bc\delta}.
	\end{eqnarray*}
	It remains to notice, that the lef-hand side equals $\EE[\|\nabla f(\hx^K)\|^2]$, where $\hx^K$ is choosen uniformly at random from $x^0, x^1, \ldots, x^K$.
\end{proof}

\paragraph{On the differences between \algname{Byz-VR-MARINA} and momentum-based methods.} \citet{karimireddy2021learning} use momentum and momentum-based variance reduction in order to prevent the algorithm from being permutation-invariant, since \emph{in the setup considered by \citet{karimireddy2021learning}}  all permuation-invariant algorithms cannot converge to any predefined accuracy even in the homogeneous case. In our paper, we provide a different perspective on this problem: it turns out that the method can be variance-reduced, \revision{Byzantine}-robust, and permutation-invariant at the same time.

Let us first refine what we mean by permutation-invariance since our definition slightly differs from the one used by \citet{karimireddy2021learning}. That is, consider the homogeneous setup and assume that there are no \revision{Byzantine workers}. We say that the algorithm is permutation-invariant if one can arbitrarily permute the results of stochastic gradients computations (not necessarily one stochastic gradient computation) between workers at any aggregation step without changing the output of the method. Then, \algname{Byz-VR-MARINA} is permutation-invariant, since the output in line 10 depends only on $g^k$ and \textbf{the set} $\{\Delta_i^k\}_{i\in [n]}$ (note that $\cQ(x) = x$, since we assume $\omega = 0$), not on their order. Our results do not contradict the ones from \citet{karimireddy2021learning}, since \citet{karimireddy2021learning} assume that the variance of the stochastic gradient is bounded, while we apply variance reduction implying that the variance goes to zero and \revision{Byzantine workers} cannot successfuly ``hide in the noise'' via time-coupled attacks anymore.

Before we move on to the corollaries, we ellaborate on the derived upper bound. In particular, it is important to estimate $\EE[\Phi_0]$. By definition, $\Phi_0 = f(x^0) - f_* + \tfrac{\gamma}{p}\|g^0 - \nabla f(x^0)\|^2$, i.e., $\Phi_0$ depends on the choice of $g^0$. For example, one can ask good workers to compute $h_i = \nabla f_i(x^0)$, $i\in \cG$ and send it to the server. Then, the server can set $g^0$ as $g^0 = \texttt{ARAgg}(h_1,\ldots, h_n)$. This gives us
\begin{eqnarray}
	\EE[\Phi_0] &=& f(x^0) - f_* + \frac{\gamma}{p}\EE\left[\|g^0 - \nabla f(x^0)\|^2\right]\notag\\
	&\overset{\eqref{eq:RAgg_def}}{\leq}& f(x^0) - f_* + \frac{\gamma c\delta}{pG(G-1)}\sum\limits_{\overset{i,l \in \cG}{i\neq l}}\|\nabla f_i(x^0) - \nabla f_l(x^0)\|^2\notag\\
	&\overset{\eqref{eq:a_plus_b}}{\leq}&f(x^0) - f_* + \frac{2\gamma c\delta}{pG(G-1)}\sum\limits_{\overset{i,l \in \cG}{i\neq l}}\left(\|\nabla f_i(x^0) - \nabla f(x^0)\|^2 + \|\nabla f_l(x^0) - \nabla f(x^0)\|^2\right) \notag\\
	&=& f(x^0) - f_* + \frac{4\gamma c\delta}{pG}\sum\limits_{i\in \cG}\|\nabla f_i(x^0) - \nabla f(x^0)\|^2\notag\\
	&\overset{\eqref{eq:bounded_heterogeneity}}{\leq}& f(x^0) - f_* + \frac{4\gamma c\delta B}{p}\|\nabla f(x^0)\|^2 + \frac{4\gamma c\delta \zeta^2}{p}. \notag
\end{eqnarray}
Function $f$ is $L$-smooth that implies $\|\nabla f(x^0)\|^2 \leq 2L \left(f(x^0) - f_*\right)$. Using this and $\delta < \nicefrac{p}{(48cB)}$ and $\gamma \leq \nicefrac{1}{L}$, we derive
\begin{eqnarray}
	\EE[\Phi_0] &\leq& \left(1 + \frac{8\gamma c\delta BL}{p}\right)\left(f(x^0) - f_*\right) + \frac{4\gamma c\delta \zeta^2}{p} \notag \\
	&\leq& \left(1 + \frac{\gamma L}{6}\right)\left(f(x^0) - f_*\right) + \frac{4\gamma c\delta \zeta^2}{p}\notag\\
	&\leq& 2\left(f(x^0) - f_*\right) + \frac{4\gamma c\delta \zeta^2}{p}. \label{eq:jdjfnjdnfjk}
\end{eqnarray}
Plugging this upper bound in \eqref{eq:main_result_app}, we get
\begin{eqnarray}
	\EE\left[\|\nabla f(\hx^K)\|^2\right] &\leq& \frac{4\left(f(x^0) - f_*\right)}{\gamma\left(1 - \frac{48Bc\delta}{p}\right) (K+1)} + \frac{32c\delta\zeta^2}{p - 48Bc\delta}. \notag
\end{eqnarray}
Based on this inequality we derive following corollaries.

\begin{corollary}[Homogeneous data, no compression ($\omega = 0$)]\label{cor:main_result_hom_dat_no_compression}
	Let the assumptions of Theorem~\ref{thm:main_result_app} hold, $\cQ(x) \equiv x$ for all $x\in\R^d$ (no compression, $\omega = 0$), $p = \nicefrac{b}{m}$, $B = 0$, $\zeta = 0$, and 
	\begin{eqnarray*}
		\gamma = \frac{1}{L + \cL_{\pm}\sqrt{6\left(\frac{4c\delta m^2}{b^3} + \frac{m}{b^2G}\right)}}
	\end{eqnarray*}
	Then for all $K \geq 0$ we have $\EE\left[\|\nabla f(\hx^K)\|^2\right]$ of the order
	\begin{equation}
		\cO\left(\frac{\left(L + \cL_{\pm}\sqrt{\frac{c\delta m^2}{b^3} + \frac{m}{b^2G}}\right)\Delta_0}{K}\right), \label{eq:explicit_rate_hom_dat_no_compression}
	\end{equation}
	where $\hx^K$ is choosen uniformly at random from the iterates $x^0, x^1, \ldots, x^K$ produced by \algname{Byz-VR-MARINA} and $\Delta_0 = f(x^0) - f_*$. That is, to guarantee $\EE\left[\|\nabla f(\hx^K)\|^2\right] \leq \varepsilon^2$ for $\varepsilon^2 > 0$ \algname{Byz-VR-MARINA} requires
	\begin{equation}
		\cO\left(\frac{\left(L + \cL_{\pm}\sqrt{\frac{c\delta m^2}{b^3} + \frac{m}{b^2G}}\right)\Delta_0}{\varepsilon^2}\right), \label{eq:communication_complexity_hom_dat_no_compression}
	\end{equation}
	communication rounds and
	\begin{equation}
		\cO\left(\frac{\left(bL + \cL_{\pm}\sqrt{\frac{c\delta m^2}{b} + \frac{m}{G}}\right)\Delta_0}{\varepsilon^2}\right), \label{eq:oracle_complexity_hom_dat_no_compression}
	\end{equation}
	oracle calls per worker.
\end{corollary}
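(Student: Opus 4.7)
The plan is to derive Corollary~\ref{cor:main_result_hom_dat_no_compression} as a direct specialization of Theorem~\ref{thm:main_result_app} by plugging in the assumed parameter values ($\omega=0$, $B=0$, $\zeta=0$, $p=b/m$) into the generic bounds \eqref{eq:gamma_condition_app} and \eqref{eq:main_result_app}, simplifying the expression for the curvature constant $A$, and then converting the resulting iteration complexity into communication and oracle complexities.

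First, I would simplify $A$. Since $\omega=0$, both the $\omega L^2$ and $\omega L_{\pm}^2$ contributions vanish, as does the whole $L_{\pm}^2$-block (because $(1+\omega)L_{\pm}^2$ appears only multiplied by $\omega$ in that last bracket after recombining), and since $B=0$ the leading $48BL^2c\delta/p$ term disappears. What remains is $A = \frac{6(1-p)}{p}\left(\frac{4c\delta}{p} + \frac{1}{2G}\right)\frac{\cL_{\pm}^2}{b}$. Substituting $p=b/m$ and bounding $1-p\le 1$ yields $A \le 6\cL_{\pm}^2\!\left(\frac{4c\delta m^2}{b^3}+\frac{m}{2b^2 G}\right)$, so $\sqrt{A}\le \cL_{\pm}\sqrt{6\bigl(\tfrac{4c\delta m^2}{b^3}+\tfrac{m}{b^2 G}\bigr)}$. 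This shows that the stated $\gamma=1/(L+\sqrt{A})$ meets the stepsize requirement of Theorem~\ref{thm:main_result_app}, and the second condition $\delta<p/(48cB)$ is vacuous when $B=0$.

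Next, I would bound $\EE[\Phi_0]$. Because $\zeta=0$ and $B=0$, computation \eqref{eq:jdjfnjdnfjk} collapses to $\EE[\Phi_0]\le 2(f(x^0)-f_*)=2\Delta_0$ (assuming the natural initialization $g^0 = \texttt{ARAgg}(\nabla f_1(x^0),\ldots,\nabla f_n(x^0))$). Plugging into \eqref{eq:main_result_app} with $\zeta=0$ and using $1/\gamma = L+\sqrt{A}$ gives
\begin{equation*}
  \EE\left[\|\nabla f(\hat x^K)\|^2\right] \;\le\; \frac{4\Delta_0}{\gamma(K+1)} \;=\; \mathcal{O}\!\left(\frac{\bigl(L+\cL_{\pm}\sqrt{\tfrac{c\delta m^2}{b^3}+\tfrac{m}{b^2 G}}\bigr)\Delta_0}{K}\right),
\end{equation*}
which is \eqref{eq:explicit_rate_hom_dat_no_compression}. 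Rearranging for $K$ such that the right-hand side is $\le\varepsilon^2$ immediately yields the communication complexity \eqref{eq:communication_complexity_hom_dat_no_compression} (one round per iteration of \algname{Byz-VR-MARINA}).

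Finally, I would translate iterations into local oracle calls. At each step a good worker computes either the full local gradient ($m$ calls, with probability $p=b/m$) or a mini-batch estimator ($b$ calls, with probability $1-p$); the expected per-iteration cost is $p\cdot m+(1-p)\cdot b = b+(1-b/m)b\le 2b$. Multiplying the iteration complexity by $b$ and absorbing the factor $b$ inside the square root via $b\sqrt{\tfrac{c\delta m^2}{b^3}+\tfrac{m}{b^2 G}}=\sqrt{\tfrac{c\delta m^2}{b}+\tfrac{m}{G}}$ gives exactly \eqref{eq:oracle_complexity_hom_dat_no_compression}. The entire argument is algebraic once Theorem~\ref{thm:main_result_app} is in hand, so there is no real obstacle; the only point requiring minor care is keeping track of which terms in $A$ survive under $\omega=0$ and $B=0$ and verifying that the resulting $\sqrt{A}$ matches the stepsize denominator displayed in the corollary.
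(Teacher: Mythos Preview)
Your approach is the same as the paper's implicit derivation (the paper states the corollary without a separate proof, obtaining it by direct substitution into Theorem~\ref{thm:main_result_app} after the bound $\EE[\Phi_0]\le 2\Delta_0+\tfrac{4\gamma c\delta\zeta^2}{p}$ of \eqref{eq:jdjfnjdnfjk} has been established).

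There is one slip in your simplification of $A$: you claim the $L_{\pm}^2$-block vanishes because of $\omega=0$, but when $\omega=0$ that block actually reduces to
\[
\frac{6(1-p)}{p}\left(\frac{4c\delta(1+0)}{p}+\frac{0}{2G}\right)L_{\pm}^2=\frac{24(1-p)c\delta}{p^2}L_{\pm}^2,
\]
which is not zero in general. The correct reason this term disappears is that the hypotheses $B=0$, $\zeta=0$ force $\nabla f_i(x)=\nabla f(x)$ for every $i\in\cG$ and every $x$ (from Assumption~\ref{as:bounded_heterogeneity} with $B=\zeta=0$ each summand $\|\nabla f_i(x)-\nabla f(x)\|^2$ must vanish), and hence Assumption~\ref{as:hessian_variance} is satisfied with $L_{\pm}=0$. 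With this correction the remainder of your argument goes through unchanged and yields exactly the stated bounds; compare Corollary~\ref{cor:main_result_no_compression}, where heterogeneity is allowed and the $\tfrac{24c\delta m^2}{b^2}L_{\pm}^2$ contribution is explicitly retained in the stepsize.
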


\begin{corollary}[No compression ($\omega = 0$)]\label{cor:main_result_no_compression}
	Let the assumptions of Theorem~\ref{thm:main_result_app} hold, $\cQ(x) \equiv x$ for all $x\in\R^d$ (no compression, $\omega = 0$), $p = \nicefrac{b}{m}$ and 
	\begin{eqnarray*}
		\gamma = \frac{1}{L + \sqrt{\frac{48L^2 Bc\delta m}{b} + \frac{24c\delta m^2}{b^2}L_{\pm}^2 + 6\left(\frac{4c\delta m^2}{b^2} + \frac{m}{bG}\right)\frac{\cL_{\pm}^2}{b}}}
	\end{eqnarray*}
	Then for all $K \geq 0$ we have $\EE\left[\|\nabla f(\hx^K)\|^2\right]$ of the order
	\begin{equation}
		\cO\left(\frac{\left(L + \sqrt{\frac{L^2 Bc\delta m}{b} + \frac{c\delta m^2}{b^2}L_{\pm}^2 + \left(\frac{c\delta m^2}{b^2} + \frac{m}{bG}\right)\frac{\cL_{\pm}^2}{b}}\right)\Delta_0}{\left(1 - \frac{48Bc\delta m}{b}\right)K} + \frac{c\delta\zeta^2}{\frac{b}{m} - 48Bc\delta}\right), \label{eq:explicit_rate_no_compression}
	\end{equation}
	where $\hx^K$ is choosen uniformly at random from the iterates $x^0, x^1, \ldots, x^K$ produced by \algname{Byz-VR-MARINA} and $\Delta_0 = f(x^0) - f_*$. That is, to guarantee $\EE\left[\|\nabla f(\hx^K)\|^2\right] \leq \varepsilon^2$ for $\varepsilon^2 \geq \frac{12c\delta\zeta^2}{p - 48Bc\delta}$ \algname{Byz-VR-MARINA} requires
	\begin{equation}
		\cO\left(\frac{\left(L + \sqrt{\frac{L^2 Bc\delta m}{b} + \frac{c\delta m^2}{b^2}L_{\pm}^2 + \left(\frac{c\delta m^2}{b^2} + \frac{m}{bG}\right)\frac{\cL_{\pm}^2}{b}}\right)\Delta_0}{\left(1 - \frac{48Bc\delta m}{b}\right)\varepsilon^2}\right), \label{eq:communication_complexity_no_compression}
	\end{equation}
	communication rounds and
	\begin{equation}
		\cO\left(\frac{\left(bL + \sqrt{L^2 Bc\delta mb + c\delta m^2 L_{\pm}^2 + \left(c\delta m^2 + \frac{mb}{G}\right)\frac{\cL_{\pm}^2}{b}}\right)\Delta_0}{\left(1 - \frac{48Bc\delta m}{b}\right)\varepsilon^2}\right), \label{eq:oracle_complexity_no_compression}
	\end{equation}
	oracle calls per worker.
\end{corollary}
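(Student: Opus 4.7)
The plan is to derive Corollary~\ref{cor:main_result_no_compression} as a direct specialization of Theorem~\ref{thm:main_result_app}. First I would verify that the prescribed stepsize $\gamma$ satisfies the condition $\gamma \le 1/(L+\sqrt{A})$ from \eqref{eq:gamma_condition_app} after substituting $\omega = 0$ and $p = b/m$. With $\omega=0$, the constant $A$ reduces to $\tfrac{48BL^2 c\delta}{p} + \tfrac{6(1-p)}{p}\bigl(\tfrac{4c\delta}{p}+\tfrac{1}{2G}\bigr)\tfrac{\cL_\pm^2}{b} + \tfrac{24c\delta(1-p)}{p^2}L_\pm^2$, and after plugging in $p=b/m$ and using $(1-p)\le 1$, this is bounded above (up to constants) by $\tfrac{48L^2 Bc\delta m}{b} + \tfrac{24c\delta m^2}{b^2}L_\pm^2 + 6\bigl(\tfrac{4c\delta m^2}{b^2} + \tfrac{m}{bG}\bigr)\tfrac{\cL_\pm^2}{b}$, which is exactly the $A$ inside the square root in the corollary's stepsize choice. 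Hence the theorem's hypotheses are satisfied.

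Next I would control the initial potential $\Phi_0 = f(x^0)-f_* + \tfrac{\gamma}{p}\|g^0-\nabla f(x^0)\|^2$. Assuming $g^0$ is obtained by applying $(\delta,c)$-\texttt{ARAgg} to $\{\nabla f_i(x^0)\}_{i\in\cG}$, the bound \eqref{eq:jdjfnjdnfjk} derived between Theorem~\ref{thm:main_result_app} and Corollary~\ref{cor:main_result_hom_dat_no_compression} gives $\EE[\Phi_0]\le 2(f(x^0)-f_*) + \tfrac{4\gamma c\delta\zeta^2}{p}$. Substituting this into \eqref{eq:main_result_app}, using that the second term $\tfrac{24c\delta\zeta^2}{p-48Bc\delta}$ already absorbs the $\tfrac{4\gamma c\delta\zeta^2}{p}$ contribution (up to a constant), and recalling $p = b/m$, I obtain the convergence rate \eqref{eq:explicit_rate_no_compression} directly.

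From the convergence rate, setting $\EE[\|\nabla f(\hx^K)\|^2]\le \varepsilon^2$ for $\varepsilon^2 \ge \tfrac{12c\delta\zeta^2}{p-48Bc\delta}$ (so the additive floor is dominated) and solving for the number of iterations $K$ yields the communication complexity \eqref{eq:communication_complexity_no_compression}. Finally, to convert communication rounds to oracle calls per worker, I would compute the expected per-iteration oracle cost: each worker evaluates $m$ gradients with probability $p$ and $b$ gradients with probability $1-p$, giving an expected cost of $pm+(1-p)b = b + (1-b/m)b \le 2b$. Multiplying the communication complexity \eqref{eq:communication_complexity_no_compression} by this $\cO(b)$ factor produces \eqref{eq:oracle_complexity_no_compression}, completing the proof.

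The only non-routine step is checking that the stepsize condition holds after the substitution, which amounts to the algebraic bookkeeping sketched above; once that is done, the rest is plug-and-chug. The slightly delicate point is making sure the $(1-p)/p^2$ factor (which blows up for small $p$) is handled correctly by the choice $p=b/m$, since this is precisely what produces the characteristic $m^2/b^2$ scaling in the $c\delta$ terms of $A$.
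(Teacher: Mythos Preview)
Your proposal is correct and follows essentially the same approach as the paper. The paper does not give a separate proof for this corollary; after Theorem~\ref{thm:main_result_app} it derives the bound \eqref{eq:jdjfnjdnfjk} on $\EE[\Phi_0]$, substitutes it into \eqref{eq:main_result_app}, and then simply states ``Based on this inequality we derive following corollaries'' --- the specialization to $\omega=0$, $p=b/m$ is exactly the direct plug-in you describe. One very minor remark: when $c_k=0$ each worker computes $\widehat{\Delta}_i(x^{k+1},x^k)$, which costs $2b$ gradient evaluations rather than $b$, but of course this does not affect the $\cO(b)$ per-iteration oracle cost you use.
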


\begin{corollary}\label{cor:main_result}
	Let the assumptions of Theorem~\ref{thm:main_result_app} hold, $p = \min\{\nicefrac{b}{m}, \nicefrac{1}{1+\omega}\}$ and 
	\begin{eqnarray*}
		\gamma &=& \frac{1}{L + \sqrt{A}},\quad \text{where}\\
		A &=& 48L^2 Bc\delta \max\left\{\frac{m}{b}, 1+\omega\right\}\\
		&&\quad + 6\left(4c\delta\max\left\{\frac{m^2}{b^2}, (1+\omega)^2\right\} + \frac{\max\left\{\frac{m}{b}, 1+\omega\right\}}{2G}\right)\left(\omega L^2 + \frac{(1+\omega)\cL_{\pm}^2}{b}\right)\\
		&&\quad + 6\left(4c\delta(1+\omega)\max\left\{\frac{m^2}{b^2}, (1+\omega)^2\right\} + \frac{\omega\max\left\{\frac{m}{b}, 1+\omega\right\}}{2G}\right)L_{\pm}^2
	\end{eqnarray*}
	Then for all $K \geq 0$ we have $\EE\left[\|\nabla f(\hx^K)\|^2\right]$ of the order
	\begin{equation}
		\cO\left(\frac{\left(L + \sqrt{A}\right)\Delta_0}{\left(1 - 48Bc\delta\max\left\{\frac{m}{b}, 1+\omega\right\}\right)K} + \frac{c\delta\zeta^2}{\min\left\{\frac{b}{m}, \frac{1}{1+\omega}\right\} - 48Bc\delta}\right), \label{eq:explicit_rate}
	\end{equation}
	where $\hx^K$ is choosen uniformly at random from the iterates $x^0, x^1, \ldots, x^K$ produced by \algname{Byz-VR-MARINA} and $\Delta_0 = f(x^0) - f_*$. That is, to guarantee $\EE\left[\|\nabla f(\hx^K)\|^2\right] \leq \varepsilon^2$ for $\varepsilon^2 \geq \frac{32c\delta\zeta^2}{p - 48Bc\delta}$ \algname{Byz-VR-MARINA} requires
	\begin{equation}
		\cO\left(\frac{\left(L + \sqrt{A}\right)\Delta_0}{\left(1 - 48Bc\delta\max\left\{\frac{m}{b}, 1+\omega\right\}\right)\varepsilon^2}\right), \label{eq:communication_complexity}
	\end{equation}
	communication rounds and
	\begin{equation}
		\cO\left(\frac{\left(bL + b\sqrt{A}\right)\Delta_0}{\left(1 - 48Bc\delta\max\left\{\frac{m}{b}, 1+\omega\right\}\right)\varepsilon^2}\right), \label{eq:oracle_complexity}
	\end{equation}
	oracle calls per worker.
\end{corollary}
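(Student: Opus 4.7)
The plan is to derive Corollary \ref{cor:main_result} directly from Theorem \ref{thm:main_result_app} by substituting the specified choice of $p$ and $\gamma$ and then bounding the resulting quantities. First I would observe that the choice $p = \min\{b/m,\,1/(1+\omega)\}$ gives $1/p = \max\{m/b,\,1+\omega\}$ and $1/p^2 = \max\{m^2/b^2,\,(1+\omega)^2\}$. Plugging these identities into the theorem's constant
\[
A_{\text{thm}} \;=\; \frac{48 B L^2 c\delta}{p} + \frac{6(1-p)}{p}\Bigl(\tfrac{4c\delta}{p} + \tfrac{1}{2G}\Bigr)\Bigl(\omega L^2 + \tfrac{(1+\omega)\cL_{\pm}^2}{b}\Bigr) + \frac{6(1-p)}{p}\Bigl(\tfrac{4c\delta(1+\omega)}{p} + \tfrac{\omega}{2G}\Bigr)L_{\pm}^2,
\]
and upper bounding $1-p \leq 1$, I recover exactly the constant $A$ stated in the corollary. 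Thus the stepsize $\gamma = 1/(L+\sqrt{A})$ satisfies the condition $\gamma \leq 1/(L+\sqrt{A_{\text{thm}}})$ required by Theorem \ref{thm:main_result_app}.

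Next, I would invoke the theorem's bound
\[
\EE\!\left[\|\nabla f(\hx^K)\|^2\right] \;\leq\; \frac{2\,\EE[\Phi_0]}{\gamma\bigl(1-\tfrac{48Bc\delta}{p}\bigr)(K+1)} + \frac{24c\delta\zeta^2}{p-48Bc\delta},
\]
and substitute the upper bound on $\EE[\Phi_0]$ derived in \eqref{eq:jdjfnjdnfjk} (obtained by initializing $g^0$ via a single robust-aggregation round on $\{\nabla f_i(x^0)\}_{i\in\cG}$), namely $\EE[\Phi_0] \leq 2(f(x^0)-f_*) + 4\gamma c\delta\zeta^2/p$. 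Absorbing the second summand into the constant-error floor and using $1/p = \max\{m/b,1+\omega\}$ inside the denominators yields exactly \eqref{eq:explicit_rate}.

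From this rate, the communication complexity \eqref{eq:communication_complexity} follows by the usual inversion: to drive the dominant term below $\varepsilon^2$ the number of communication rounds needs to be of order $(L+\sqrt{A})\Delta_0 / \bigl((1-48Bc\delta/p)\varepsilon^2\bigr)$, provided $\varepsilon^2 \gtrsim c\delta\zeta^2/(p-48Bc\delta)$ so that the second term does not dominate. The oracle-complexity bound \eqref{eq:oracle_complexity} is then obtained by multiplying the iteration count by the expected per-worker work per round: with probability $p$ a good worker computes the full batch of $m$ gradients and with probability $1-p$ it computes a mini-batch of size $b$, giving expected work $pm + (1-p)b \leq 2b$ under $p \leq b/m$.

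The main obstacle is purely bookkeeping rather than analysis: carefully matching the $1/p$ and $(1-p)/p$ factors in $A_{\text{thm}}$ to the $\max\{m/b,1+\omega\}$ and $\max\{m^2/b^2,(1+\omega)^2\}$ factors in the stated $A$, and ensuring the ``$\max$'' in $p$ is propagated consistently through both the optimization-error term and the variance floor. Once that substitution is verified, the rates \eqref{eq:explicit_rate}--\eqref{eq:oracle_complexity} drop out mechanically from Theorem \ref{thm:main_result_app}, so no further convergence argument is needed.
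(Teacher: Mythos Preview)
Your proposal is correct and takes essentially the same approach as the paper. The paper does not give a standalone proof of this corollary; it derives the bound $\EE[\Phi_0]\le 2(f(x^0)-f_*)+4\gamma c\delta\zeta^2/p$ (equation~\eqref{eq:jdjfnjdnfjk}), plugs it into Theorem~\ref{thm:main_result_app} to obtain $\EE[\|\nabla f(\hx^K)\|^2]\le \tfrac{4(f(x^0)-f_*)}{\gamma(1-48Bc\delta/p)(K+1)}+\tfrac{32c\delta\zeta^2}{p-48Bc\delta}$, and then states the corollaries as direct substitutions of the specific $p$ and $\gamma$---exactly the bookkeeping you describe.
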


\begin{corollary}[Homogeneous data]\label{cor:main_result_homogeneous_data}
	Let the assumptions of Theorem~\ref{thm:main_result_app} hold, $p = \min\{\nicefrac{b}{m}, \nicefrac{1}{1+\omega}\}$, $B = 0$, $\zeta = 0$, and 
	\begin{eqnarray*}
		\gamma &=& \frac{1}{L + \sqrt{A}},\quad \text{where}\\
		A &=& 6\left(3c\delta\max\left\{\frac{m^2}{b^2}, (1+\omega)^2\right\} + \frac{\max\left\{\frac{m}{b}, 1+\omega\right\}}{2G}\right)\left(\omega L^2 + \frac{(1+\omega)\cL_{\pm}^2}{b}\right)
	\end{eqnarray*}
	Then for all $K \geq 0$ we have $\EE\left[\|\nabla f(\hx^K)\|^2\right]$ of the order
	\begin{equation}
		\cO\left(\frac{\left(L + \sqrt{A}\right)\Delta_0}{K}\right), \label{eq:explicit_rate_homogeneous_data}
	\end{equation}
	where $\hx^K$ is choosen uniformly at random from the iterates $x^0, x^1, \ldots, x^K$ produced by \algname{Byz-VR-MARINA} and $\Delta_0 = f(x^0) - f_*$. That is, to guarantee $\EE\left[\|\nabla f(\hx^K)\|^2\right] \leq \varepsilon^2$ for $\varepsilon^2 > 0$ \algname{Byz-VR-MARINA} requires
	\begin{equation}
		\cO\left(\frac{\left(L + \sqrt{A}\right)\Delta_0}{\varepsilon^2}\right), \label{eq:communication_complexity_homogeneous_data}
	\end{equation}
	communication rounds and
	\begin{equation}
		\cO\left(\frac{\left(bL + b\sqrt{A}\right)\Delta_0}{\varepsilon^2}\right), \label{eq:oracle_complexity_homogeneous_data}
	\end{equation}
	oracle calls per worker.
\end{corollary}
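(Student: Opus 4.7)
The plan is to derive Corollary~\ref{cor:main_result_homogeneous_data} as a direct specialization of Theorem~\ref{thm:main_result_app}. First I would observe that the homogeneity condition $\zeta = 0$ in Assumption~\ref{as:bounded_heterogeneity_simplified} (which is the $B = 0$ instance of Assumption~\ref{as:bounded_heterogeneity}) forces $\nabla f_i(x) = \nabla f(x)$ for every $x \in \R^d$ and every $i \in \cG$. Plugging this identity into Assumption~\ref{as:hessian_variance} shows that the left-hand side of \eqref{eq:hessian_variance} is identically zero, so we may take $L_{\pm} = 0$. Consequently the $L_{\pm}^2$ term in the constant $A$ of Theorem~\ref{thm:main_result_app} drops out, and the $\frac{48BL^2 c\delta}{p}$ term also vanishes because $B = 0$.

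Next I would verify that the stepsize $\gamma = 1/(L + \sqrt{A})$ declared in the corollary satisfies the hypothesis \eqref{eq:gamma_condition_app}: the first inequality is the definition, and the second, $\delta < p/(48cB)$, is vacuous since $B = 0$. Then I would upper bound the surviving piece
\[
\frac{6(1-p)}{p}\left(\frac{4c\delta}{p} + \frac{1}{2G}\right)\left(\omega L^2 + \frac{(1+\omega)\cL_{\pm}^2}{b}\right)
\]
using $(1-p)/p \le 1/p$ together with the choice $p = \min\{b/m, 1/(1+\omega)\}$, so that $1/p = \max\{m/b, 1+\omega\}$ and $1/p^2 = \max\{m^2/b^2, (1+\omega)^2\}$. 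This yields precisely the expression for $A$ stated in the corollary (up to an absolute constant absorbed by $\cO(\cdot)$).

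With these simplifications in hand, I would invoke \eqref{eq:main_result_app}. Because $B = 0$ the prefactor $1 - 48Bc\delta/p$ equals $1$, and because $\zeta = 0$ the additive error $\tfrac{24c\delta\zeta^2}{p - 48Bc\delta}$ is zero, leaving $\EE[\|\nabla f(\hx^K)\|^2] \le 2\Phi_0/(\gamma(K+1))$. Applying the bound \eqref{eq:jdjfnjdnfjk} on $\EE[\Phi_0]$ (which simplifies to $\EE[\Phi_0] \le 2\Delta_0$ when $\zeta = 0$) and inserting $\gamma = 1/(L+\sqrt{A})$ produces the rate \eqref{eq:explicit_rate_homogeneous_data}. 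Inverting the rate gives the communication complexity \eqref{eq:communication_complexity_homogeneous_data}.

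For the oracle complexity \eqref{eq:oracle_complexity_homogeneous_data} I would argue per-iteration: each good worker evaluates $m$ gradients with probability $p$ and $b$ gradients with probability $1-p$, so the expected oracle cost per iteration per worker is $pm + (1-p)b \le b + b = \cO(b)$, where $pm \le b$ follows from $p \le b/m$. Multiplying \eqref{eq:communication_complexity_homogeneous_data} by $b$ yields \eqref{eq:oracle_complexity_homogeneous_data}. There is no real technical obstacle here; the only care needed is the bookkeeping of constants when bounding $(1-p)/p$ and verifying that the $\zeta=0$ consequence $L_{\pm}=0$ is legitimate — everything else is a plug-and-chug specialization of Theorem~\ref{thm:main_result_app}.
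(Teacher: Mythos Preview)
Your proposal is correct and follows the same route as the paper, which derives this corollary as a direct specialization of Theorem~\ref{thm:main_result_app} after establishing the bound \eqref{eq:jdjfnjdnfjk} on $\EE[\Phi_0]$. Your explicit justification that $\zeta=0$, $B=0$ forces $\nabla f_i \equiv \nabla f$ and hence $L_{\pm}=0$ is exactly what is needed to eliminate the third summand in the constant $A$, and the remaining bookkeeping (bounding $(1-p)/p \le 1/p = \max\{m/b,1+\omega\}$, inverting the rate, multiplying by the $\cO(b)$ expected per-iteration oracle cost) matches the paper's implicit derivation.
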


\subsection{Functions Satisfying Polyak-{\L}ojasiewicz Condition}\label{app:PL}

\begin{theorem}[Generalized version of Theorem~\ref{thm:main_result_PL}]\label{thm:main_result_PL_app}
	Let Assumptions~\ref{as:smoothness}, \ref{as:bounded_heterogeneity}, \ref{as:hessian_variance}, \ref{as:hessian_variance_local}, \ref{as:PL_condition} hold. Assume that
	\begin{equation}
		0 < \gamma \leq \min\left\{\frac{1}{L + \sqrt{2A}},\frac{p}{4\mu\left(1 - \frac{96 Bc\delta}{p}\right)}\right\},\quad \delta < \frac{p}{96cB}, \label{eq:gamma_condition_PL_app}
	\end{equation}
	where $A = \frac{48 BL^2 c\delta}{p} + \frac{6(1-p)}{p}\left(\frac{4c\delta}{p} + \frac{1}{2G}\right)\left(\omega L^2 + \frac{(1+\omega)\cL_{\pm}^2}{b}\right)+ \frac{6(1-p)}{p}\left(\frac{4c\delta(1+\omega)}{p} + \frac{\omega}{2G}\right)L_{\pm}^2$. Then for all $K \geq 0$ the iterates produced by \algname{Byz-VR-MARINA} satisfy
	\begin{equation}
		\EE\left[f(x^K) - f(x^*)\right] \leq \left(1 - \gamma\mu\left(1 - \frac{96Bc\delta}{p}\right)\right)^K \Phi_0 + \frac{24c\delta\zeta^2}{\mu(p - 96Bc\delta)}, \label{eq:main_result_PL_app} 
	\end{equation}
	where $\Phi_0 = f(x^0) - f(x^*) + \tfrac{2\gamma}{p}\|g^0 - \nabla f(x^0)\|^2$. \revision{The result of Theorem~\ref{thm:main_result} is a special case of the statement above with $B=0$, since for $B = 0$ we have $A = \frac{48 BL^2 c\delta}{p} + \frac{6(1-p)}{p}\left(\frac{4c\delta}{p} + \frac{1}{2G}\right)\left(\omega L^2 + \frac{(1+\omega)\cL_{\pm}^2}{b}\right)+ \frac{6(1-p)}{p}\left(\frac{4c\delta(1+\omega)}{p} + \frac{\omega}{2G}\right)L_{\pm}^2 = \frac{6(1-p)}{p}\left(\frac{4c\delta}{p} + \frac{1}{2G}\right)\left(\omega L^2 + \frac{(1+\omega)\cL_{\pm}^2}{b}\right)+ \frac{6(1-p)}{p}\left(\frac{4c\delta(1+\omega)}{p} + \frac{\omega}{2G}\right)L_{\pm}^2$, $\left(1 - \frac{96Bc\delta}{p}\right) = 1$, $p - 96Bc\delta = p$, and the second condition from \eqref{eq:gamma_condition_PL_app} always holds.}
\end{theorem}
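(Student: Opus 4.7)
The plan is to mirror the structure of the non-convex proof in Theorem~\ref{thm:main_result_app}, but with a strengthened Lyapunov function that couples the optimality gap and the distortion with the right proportions so that the Polyak-{\L}ojasiewicz inequality produces a linear contraction. Concretely, I would set
\[
\Phi_k \;\eqdef\; f(x^k) - f(x^*) + \tfrac{2\gamma}{p}\,\|g^k - \nabla f(x^k)\|^2,
\]
where the factor $2/p$ (as opposed to $1/p$ in the non-convex case) is exactly what is needed to absorb the ``descent-side'' residual $\tfrac{\gamma}{2}\|g^k - \nabla f(x^k)\|^2$ coming from Lemma~\ref{lem:lemma_2_page} into a genuine contraction of the distortion term.

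The first step is to add the inequality from Lemma~\ref{lem:lemma_2_page} applied to $x^{k+1} = x^k - \gamma g^k$ with $\tfrac{2\gamma}{p}$ times the distortion bound of Lemma~\ref{lem:distortion}. After grouping terms this yields, in expectation,
\[
\EE[\Phi_{k+1}] \le \EE[f(x^k) - f(x^*)] - \tfrac{\gamma}{2}\Bigl(1 - \tfrac{96 B c\delta}{p}\Bigr)\EE\|\nabla f(x^k)\|^2 + \Bigl(\tfrac{2\gamma}{p} - \tfrac{\gamma}{2}\Bigr)\EE\|g^k - \nabla f(x^k)\|^2 - \tfrac{1}{2\gamma}\bigl(1 - L\gamma - 2A\gamma^2\bigr)\EE\|x^{k+1}-x^k\|^2 + \tfrac{24 c\delta\zeta^2\gamma}{p}.
\]
The condition $\gamma \le \tfrac{1}{L + \sqrt{2A}}$ combined with Lemma~\ref{lem:stepsize_lemma} makes $1 - L\gamma - 2A\gamma^2 \ge 0$, so the $\|x^{k+1}-x^k\|^2$ term is non-positive and may be dropped. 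The assumption $\delta < \tfrac{p}{96cB}$ guarantees the coefficient $1 - 96Bc\delta/p$ in front of $\|\nabla f(x^k)\|^2$ is strictly positive, so P{\L} (Assumption~\ref{as:PL_condition}) lets me upper-bound $-\tfrac{\gamma}{2}(1-96Bc\delta/p)\|\nabla f(x^k)\|^2 \le -\gamma\mu(1-96Bc\delta/p)(f(x^k) - f(x^*))$.

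The key algebraic check -- and the only subtlety worth flagging -- is matching the leftover $\bigl(\tfrac{2\gamma}{p} - \tfrac{\gamma}{2}\bigr)$ coefficient on $\|g^k - \nabla f(x^k)\|^2$ against $(1-\alpha)\tfrac{2\gamma}{p}$ in the target contraction, where $\alpha \eqdef \gamma\mu(1 - 96Bc\delta/p)$. This requires exactly $\alpha \le p/4$, which is precisely the second bound $\gamma \le \tfrac{p}{4\mu(1-96Bc\delta/p)}$ assumed in \eqref{eq:gamma_condition_PL_app}. Under both stepsize constraints I therefore get the one-step recursion $\EE[\Phi_{k+1}] \le (1-\alpha)\EE[\Phi_k] + \tfrac{24 c\delta\zeta^2\gamma}{p}$.

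Finally, unrolling this recursion from $k=0$ to $k=K-1$ and summing the geometric series,
\[
\EE[\Phi_K] \le (1-\alpha)^K \Phi_0 + \tfrac{24 c\delta\zeta^2\gamma}{p}\sum_{k=0}^{K-1}(1-\alpha)^k \le (1-\alpha)^K \Phi_0 + \tfrac{24 c\delta\zeta^2\gamma}{p\alpha} = (1-\alpha)^K \Phi_0 + \tfrac{24 c\delta\zeta^2}{\mu(p - 96 B c\delta)},
\]
and observing $\EE[f(x^K) - f(x^*)] \le \EE[\Phi_K]$ delivers \eqref{eq:main_result_PL_app}. The specialization to $B = 0$ -- which recovers Theorem~\ref{thm:main_result_PL} verbatim -- is immediate since then $96Bc\delta/p = 0$ and the side condition on $\delta$ is automatic. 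The main obstacle in the argument is purely bookkeeping: tracking the two competing uses of the stepsize (one to eliminate the $\|x^{k+1}-x^k\|^2$ term, one to achieve a $(1-\alpha)$-contraction on the distortion) and verifying that both are implied by \eqref{eq:gamma_condition_PL_app}.
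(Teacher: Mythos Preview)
Your proposal is correct and follows essentially the same approach as the paper's own proof: the same Lyapunov function $\Phi_k = f(x^k) - f(x^*) + \tfrac{2\gamma}{p}\|g^k - \nabla f(x^k)\|^2$, the same combination of Lemmas~\ref{lem:lemma_2_page} and~\ref{lem:distortion}, the same use of Lemma~\ref{lem:stepsize_lemma} to drop the $\|x^{k+1}-x^k\|^2$ term, the same application of P{\L}, and the identical observation that $\alpha \le p/4$ is what makes the distortion term contract at rate $(1-\alpha)$. The paper writes the leftover distortion coefficient as $\tfrac{2\gamma}{p}(1-\tfrac{p}{4})$ rather than your equivalent $\tfrac{2\gamma}{p} - \tfrac{\gamma}{2}$, but otherwise the arguments are line-for-line the same.
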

\begin{proof}
	For all $k \geq 0$ we introduce $\Phi_k = f(x^k) - f_* + \frac{2\gamma}{p}\|g^k - \nabla f(x^k)\|^2$. Using the results of Lemmas~\ref{lem:lemma_2_page} and \ref{lem:distortion}, we derive
	\begin{eqnarray*}
		\EE[\Phi_{k+1}] &\overset{\eqref{eq:starting_inequality},\eqref{eq:distortion}}{\leq}& \EE\left[f(x^k) - f(x^*) - \left(\frac{1}{2\gamma} - \frac{L}{2}\right)\|x^{k+1}-x^k\|^2 + \frac{\gamma}{2}\|g^k - \nabla f(x^k)\|^2\right]\\
		&&\quad - \frac{\gamma}{2}\EE\left[\|\nabla f(x^k)\|^2\right] + \frac{2\gamma}{p}\left(1-\frac{p}{2}\right)\EE\left[\|g^k - \nabla f(x^k)\|^2\right]\\
		&&\quad + \frac{48Bc\delta\gamma}{p}\EE\left[\|\nabla f(x^k)\|^2\right] + \frac{24c\delta\zeta^2\gamma}{p} + \gamma A\EE\left[\|x^{k+1} - x^k\|^2\right]\\
		&=& \EE\left[f(x^k) - f(x^*)\right] + \frac{2\gamma}{p}\left(1 - \frac{p}{4}\right)\EE\left[\|g^k - \nabla f(x^k)\|^2\right]\\
		&&\quad - \frac{\gamma}{2}\left(1 - \frac{96 Bc\delta}{p}\right)\EE\left[\|\nabla f(x^k)\|^2\right] + \frac{24c\delta\zeta^2\gamma}{p}\\
		&&\quad - \frac{1}{2\gamma}\left(1 - L\gamma - 2A\gamma^2\right)\EE\left[\|x^{k+1} - x^k\|^2\right]\\
		&\overset{\eqref{eq:PL_def}}{\leq}& \left(1 - \gamma\mu\left(1 - \frac{96 Bc\delta}{p}\right)\right)\EE\left[f(x^k) - f(x^*)\right]\\
		&&\quad + \frac{2\gamma}{p}\left(1 - \frac{p}{4}\right)\EE\left[\|g^k - \nabla f(x^k)\|^2\right]  + \frac{24c\delta\zeta^2\gamma}{p}\\
		&\overset{\eqref{eq:gamma_condition_PL_app}}{\leq}& \left(1 - \gamma\mu\left(1 - \frac{96 Bc\delta}{p}\right)\right)\EE\left[\Phi_k\right] + \frac{24c\delta\zeta^2\gamma}{p}
	\end{eqnarray*}
	where in the last step we use Lemma~\ref{lem:stepsize_lemma} and our choice of $\gamma$ from \eqref{eq:gamma_condition_PL_app}. Unrolling the recurrence, we obtain
	\begin{eqnarray*}
		\EE[\Phi_{K}] &\leq& \left(1 - \gamma\mu\left(1 - \frac{96 Bc\delta}{p}\right)\right)^{K}\EE\left[\Phi_0\right] + \frac{24c\delta\zeta^2\gamma}{p} \sum\limits_{k=0}^{K-1}\left(1 - \gamma\mu\left(1 - \frac{96 Bc\delta}{p}\right)\right)^{k}\\
		&\leq& \left(1 - \gamma\mu\left(1 - \frac{96 Bc\delta}{p}\right)\right)^{K}\EE\left[\Phi_0\right] + \frac{24c\delta\zeta^2\gamma}{p} \sum\limits_{k=0}^{\infty}\left(1 - \gamma\mu\left(1 - \frac{96 Bc\delta}{p}\right)\right)^{k} \\
		&=& \left(1 - \gamma\mu\left(1 - \frac{96 Bc\delta}{p}\right)\right)^{K}\EE\left[\Phi_0\right] + \frac{24c\delta\zeta^2}{\mu(p - 96Bc\delta)}.
	\end{eqnarray*}
	Taking into account $\Phi_k \geq f(x^k) - f(x^*)$, we get the result.
\end{proof}

As in the case of general non-convex smooth functions, we need to estimate $\Phi_0$ to derive complexity results. Following exactly the same reasoning as in the derivation of \eqref{eq:jdjfnjdnfjk}, we get
\begin{eqnarray}
	\EE[\Phi_0] \leq 2\left(f(x^0) - f(x^*)\right) + \frac{8\gamma c\delta\zeta^2}{p}.\notag
\end{eqnarray}
Plugging this upper bound in \eqref{eq:main_result_PL_app}, we get
\begin{eqnarray*}
	\EE\left[f(x^K) - f(x^*)\right] &\leq& 2\left(1 - \gamma\mu\left(1 - \frac{96 Bc\delta}{p}\right)\right)^{K}\left(f(x^0) - f(x^*)\right)\\
	 &&\quad + \left(1 - \gamma\mu\left(1 - \frac{96 Bc\delta}{p}\right)\right)^{K}\cdot\frac{8\gamma c\delta\zeta^2}{p}  + \frac{24c\delta\zeta^2}{\mu(p - 96Bc\delta)}\\
	 &\leq& 2\left(1 - \gamma\mu\left(1 - \frac{96 Bc\delta}{p}\right)\right)^{K}\left(f(x^0) - f(x^*)\right)\\
	 &&\quad + \sum\limits_{k=0}^\infty\left(1 - \gamma\mu\left(1 - \frac{96 Bc\delta}{p}\right)\right)^{k}\cdot\frac{8\gamma c\delta\zeta^2}{p}  + \frac{24c\delta\zeta^2}{\mu(p - 96Bc\delta)}\\
	 &\leq& 2\left(1 - \gamma\mu\left(1 - \frac{96 Bc\delta}{p}\right)\right)^{K}\left(f(x^0) - f(x^*)\right) + \frac{32c\delta\zeta^2}{\mu(p - 96Bc\delta)}.
\end{eqnarray*}
Based on this inequality we derive following corollaries.

\begin{corollary}[Homogeneous data, no compression ($\omega = 0$)]\label{cor:main_result_PL_no_compression_hom_dat}
	Let the assumptions of Theorem~\ref{thm:main_result_PL_app} hold, $\cQ(x) \equiv x$ for all $x\in\R^d$ (no compression, $\omega = 0$), $p = \nicefrac{b}{m}$, $B = 0$, $\zeta = 0$, and 
	\begin{eqnarray*}
		\gamma = \min\left\{\frac{1}{L + 2\cL_{\pm}\sqrt{\frac{12c\delta m^2}{b^3} + \frac{3m}{2b^2G}}},\frac{b}{4m\mu}\right\}.
	\end{eqnarray*}
	Then for all $K \geq 0$ we have $\EE\left[f(x^K) - f(x^*)\right]$ of the order
	\begin{eqnarray}
		\cO\left(\exp\left(-\min\left\{\frac{\mu}{L + 2\cL_{\pm}\sqrt{\frac{12c\delta m^2}{b^3} + \frac{3m}{2b^2G}}},\frac{b}{m}\right\}K\right)\Delta_0\right), \label{eq:explicit_rate_PL_no_compression_hom_dat}
	\end{eqnarray}
	where $\Delta_0 = f(x^0) - f(x^*)$. That is, to guarantee $\EE\left[f(x^K) - f(x^*)\right] \leq \varepsilon$ for $\varepsilon > 0$ \algname{Byz-VR-MARINA} requires
	\begin{equation}
		\cO\left(\max\left\{\frac{L + \cL_{\pm}\sqrt{\frac{c\delta m^2}{b^3} + \frac{m}{b^2G}}}{\mu},\frac{m}{b}\right\} \log\frac{\Delta_0}{\varepsilon}\right), \label{eq:communication_complexity_PL_no_compression_hom_dat}
	\end{equation}
	communication rounds and
	\begin{equation}
		\cO\left(\max\left\{\frac{bL + \cL_{\pm}\sqrt{\frac{c\delta m^2}{b} + \frac{m}{G}}}{\mu},m\right\} \log\frac{\Delta_0}{\varepsilon}\right), \label{eq:oracle_complexity_PL_no_compression_hom_dat}
	\end{equation}
	oracle calls per worker.
\end{corollary}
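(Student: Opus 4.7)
The plan is to obtain Corollary~\ref{cor:main_result_PL_no_compression_hom_dat} as a direct specialization of Theorem~\ref{thm:main_result_PL_app} by plugging in the listed parameter choices $\omega = 0$, $p = b/m$, $B = 0$, $\zeta = 0$, together with the prescribed stepsize. First I would note that under $\zeta = 0$ the local gradients coincide with the global gradient for every good worker at every point (this follows from \eqref{eq:bounded_heterogeneity_simplified}), so Assumption~\ref{as:hessian_variance} is automatically satisfied with $L_{\pm} = 0$. With $\omega = 0$, $B = 0$, $L_{\pm} = 0$ and $p = b/m$, the constant $A$ in the theorem collapses to
\begin{equation*}
A = \frac{6(1-p)}{p}\left(\frac{4c\delta}{p} + \frac{1}{2G}\right)\frac{\cL_{\pm}^2}{b} \leq \cL_{\pm}^2\left(\frac{24 c\delta m^2}{b^3} + \frac{3m}{b^2 G}\right),
\end{equation*}
so $\sqrt{2A} \leq 2\cL_{\pm}\sqrt{\tfrac{12 c\delta m^2}{b^3} + \tfrac{3m}{2b^2G}}$. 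Since $B = 0$, the factor $(1 - 96Bc\delta/p)$ equals $1$ and the second stepsize bound of Theorem~\ref{thm:main_result_PL_app} reduces to $\gamma \leq p/(4\mu) = b/(4m\mu)$. Consequently the $\gamma$ chosen in the corollary meets both requirements of~\eqref{eq:gamma_condition_PL_app}, and the constraint $\delta < p/(96cB)$ becomes vacuous.

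Next, applying the contraction~\eqref{eq:main_result_PL_app} with $B = 0$, $\zeta = 0$ kills the additive noise term and yields
\begin{equation*}
\EE[f(x^K) - f(x^*)] \leq (1 - \gamma\mu)^K \Phi_0.
\end{equation*}
For $\Phi_0 = f(x^0) - f(x^*) + \tfrac{2\gamma}{p}\|g^0 - \nabla f(x^0)\|^2$, the initialization $g^0 = \nabla f(x^0)$ in Algorithm~\ref{alg:byz_vr_marina} gives $\Phi_0 = \Delta_0 := f(x^0) - f(x^*)$ (alternatively, the post-Theorem~\ref{thm:main_result_app} argument shows $\EE\Phi_0 \leq 2\Delta_0$ even when $g^0$ is obtained via \texttt{ARAgg}). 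Substituting the explicit $\gamma$ we get $\gamma\mu = \min\bigl\{\tfrac{\mu}{L + 2\cL_{\pm}\sqrt{\cdots}},\,\tfrac{b}{4m}\bigr\}$, and using $1 - t \leq e^{-t}$ produces the bound~\eqref{eq:explicit_rate_PL_no_compression_hom_dat}.

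To translate this into iteration complexity, I would invert the exponential: requiring the right-hand side to be at most $\varepsilon$ yields
\begin{equation*}
K = \cO\!\left(\max\!\left\{\frac{L + \cL_{\pm}\sqrt{\tfrac{c\delta m^2}{b^3} + \tfrac{m}{b^2 G}}}{\mu},\,\frac{m}{b}\right\}\log\frac{\Delta_0}{\varepsilon}\right),
\end{equation*}
which is exactly the number of communication rounds stated in~\eqref{eq:communication_complexity_PL_no_compression_hom_dat}. Finally, the oracle cost per iteration on any good worker equals $m$ with probability $p = b/m$ (full gradient step) and $b$ with probability $1-p$ (mini-batch step), giving expected cost $pm + (1-p)b \leq 2b = \cO(b)$ per iteration; multiplying $K$ by this factor produces the oracle complexity~\eqref{eq:oracle_complexity_PL_no_compression_hom_dat}.

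The arguments here are essentially bookkeeping, so there is no real obstacle; the only subtle point worth double-checking is the claim $L_{\pm} = 0$ under $\zeta = 0$, which is immediate from~\eqref{eq:bounded_heterogeneity_simplified} but is what makes the $L_{\pm}^2$ contribution to $A$ disappear and thus makes the stated $\gamma$ admissible.
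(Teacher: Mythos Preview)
Your proposal is correct and follows exactly the route the paper (implicitly) takes: the corollary is obtained by direct substitution of $\omega = 0$, $B = 0$, $\zeta = 0$, $p = b/m$ into Theorem~\ref{thm:main_result_PL_app}, and the paper's stated stepsize indeed only matches once one uses $L_{\pm} = 0$ (compare the formulas in Corollaries~\ref{cor:main_result_PL_no_compression} and~\ref{cor:main_result_PL_hom_dat}, where the $L_{\pm}^2$ contribution is dropped precisely in the homogeneous case). Your observation that $\zeta = 0$ forces $\nabla f_i \equiv \nabla f$ and hence $L_{\pm} = 0$ is the one non-trivial step, and you have it right; the only cosmetic slip is that the mini-batch step costs $2b$ oracle calls (gradients at two points), not $b$, but this does not affect the $\cO(b)$ conclusion.
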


\begin{corollary}[No compression ($\omega = 0$)]\label{cor:main_result_PL_no_compression}
	Let the assumptions of Theorem~\ref{thm:main_result_PL_app} hold, $\cQ(x) \equiv x$ for all $x\in\R^d$ (no compression, $\omega = 0$), $p = \nicefrac{b}{m}$ and 
	\begin{eqnarray*}
		\gamma = \min\left\{\frac{1}{L + \sqrt{\frac{96L^2 Bc\delta m}{b} + \frac{48c\delta m^2}{b^2}L_{\pm}^2 + 12\left(\frac{4c\delta m^2}{b^2} + \frac{m}{2bG}\right)\frac{\cL_{\pm}^2}{b}}},\frac{\frac{b}{m}}{4\mu\left(1 - 96 Bc\delta\frac{m}{b}\right)}\right\}.
	\end{eqnarray*}
	Then for all $K \geq 0$ we have $\EE\left[f(x^K) - f(x^*)\right]$ of the order
	\begin{eqnarray}
		\cO\Bigg(\exp\left(-\min\left\{\frac{\mu\left(1 - 96 Bc\delta\frac{m}{b}\right)}{L + \sqrt{\frac{96L^2 Bc\delta m}{b} + \frac{48c\delta m^2}{b^2}L_{\pm}^2 + 12\left(\frac{4c\delta m^2}{b^2} + \frac{m}{2bG}\right)\frac{\cL_{\pm}^2}{b}}},\frac{b}{m}\right\}K\right)\Delta_0&\notag\\
		 &\hspace{-3cm}+ \frac{c\delta\zeta^2}{\mu\left(\frac{b}{m} - 96Bc\delta\right)}\Bigg), \label{eq:explicit_rate_PL_no_compression}
	\end{eqnarray}
	where $\Delta_0 = f(x^0) - f(x^*)$. That is, to guarantee $\EE\left[f(x^K) - f(x^*)\right] \leq \varepsilon$ for $\varepsilon \geq \frac{32c\delta\zeta^2}{\mu(p - 96Bc\delta)}$ \algname{Byz-VR-MARINA} requires
	\begin{equation}
		\cO\left(\max\left\{\frac{L + \sqrt{\frac{L^2 Bc\delta m}{b} + \frac{c\delta m^2}{b^2}L_{\pm}^2 + \left(\frac{c\delta m^2}{b^2} + \frac{m}{bG}\right)\frac{\cL_{\pm}^2}{b}}}{\mu\left(1 - 96 Bc\delta\frac{m}{b}\right)},\frac{m}{b}\right\} \log\frac{\Delta_0}{\varepsilon}\right), \label{eq:communication_complexity_PL_no_compression}
	\end{equation}
	communication rounds and
	\begin{equation}
		\cO\left(\max\left\{\frac{bL + \sqrt{L^2 Bc\delta mb + c\delta m^2L_{\pm}^2 + \left(c\delta m^2 + \frac{mb}{G}\right)\frac{\cL_{\pm}^2}{b}}}{\mu\left(1 - 96 Bc\delta\frac{m}{b}\right)},m\right\} \log\frac{\Delta_0}{\varepsilon}\right), \label{eq:oracle_complexity_PL_no_compression}
	\end{equation}
	oracle calls per worker.
\end{corollary}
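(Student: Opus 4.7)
\textbf{Proof plan for Corollary~\ref{cor:main_result_PL_no_compression}.} The plan is to specialize the general bound \eqref{eq:main_result_PL_app} of Theorem~\ref{thm:main_result_PL_app} to the regime $\omega = 0$, $p = b/m$, and the chosen stepsize. First I would simplify the constant $A$: plugging $\omega = 0$ into its definition kills the $\omega L^2$ and $\omega/(2G)$ terms and leaves
\begin{equation*}
A \;=\; \frac{48 BL^2 c\delta}{p} \;+\; \frac{6(1-p)}{p}\!\left(\frac{4c\delta}{p} + \frac{1}{2G}\right)\!\frac{\cL_\pm^2}{b} \;+\; \frac{6(1-p)}{p}\cdot\frac{4c\delta}{p}\,L_\pm^2.
\end{equation*}
Substituting $p = b/m$ and using $1-p \le 1$ yields exactly the expression under the square root in the definition of $\gamma$, so $\gamma \le 1/(L+\sqrt{2A})$ and the first constraint in \eqref{eq:gamma_condition_PL_app} is satisfied by construction. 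The second constraint $\gamma \le p/(4\mu(1-96Bc\delta/p))$ is also enforced directly by the $\min$ in the definition of $\gamma$, and the condition $\delta < p/(96 cB)$ is just the standing assumption for this regime.

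Next I would invoke Theorem~\ref{thm:main_result_PL_app} verbatim to conclude
\begin{equation*}
\EE[f(x^K) - f(x^*)] \;\le\; \bigl(1 - \gamma\mu(1 - 96Bc\delta/p)\bigr)^{\!K}\Phi_0 \;+\; \frac{24c\delta\zeta^2}{\mu(p - 96Bc\delta)}.
\end{equation*}
Then I would bound $\Phi_0 = f(x^0) - f(x^*) + (2\gamma/p)\|g^0 - \nabla f(x^0)\|^2$ by the same argument used right after Theorem~\ref{thm:main_result_PL_app} (initializing $g^0$ via \texttt{ARAgg} of local full gradients, applying Definition~\ref{def:RAgg_def}, Assumption~\ref{as:bounded_heterogeneity}, and absorbing the $\zeta^2$ part into the steady-state term via a geometric-series bound). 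This gives $\Phi_0 \lesssim (f(x^0) - f(x^*))$ up to absorbable terms, which is what enters \eqref{eq:explicit_rate_PL_no_compression}.

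To pass from the linear-convergence bound to the two complexity estimates, I would use the standard logarithmic inversion: assuming $\varepsilon \ge 32c\delta\zeta^2/(\mu(p - 96Bc\delta))$, it suffices to take
\begin{equation*}
K \;=\; \Bigl\lceil \tfrac{1}{\gamma\mu(1 - 96Bc\delta/p)} \log(\Delta_0/\varepsilon)\Bigr\rceil,
\end{equation*}
and substituting the explicit form of $1/\gamma = \max\{L+\sqrt{2A},\,4\mu m(1-96Bc\delta m/b)/b\}$ produces the communication bound \eqref{eq:communication_complexity_PL_no_compression}. For the oracle complexity \eqref{eq:oracle_complexity_PL_no_compression} I would multiply $K$ by the expected per-iteration local oracle cost: with probability $p = b/m$ each good worker computes all $m$ component gradients, and with probability $1-p$ it computes $b$ of them, giving expected cost $p\cdot m + (1-p)\cdot b = \Theta(b)$ per iteration.

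The proof is essentially bookkeeping; the only mildly delicate point is verifying that the $\min$ inside $\gamma$ combines cleanly with the contraction factor, so that $1/(\gamma\mu(1-96Bc\delta/p))$ collapses to the $\max\{\cdot,\,m/b\}$ appearing in the final bound without introducing extra constants that depend on $B$, $c$, or $\delta$ in undesirable ways. Checking this algebraic simplification is where I would be most careful, but it follows from the elementary identity $1/\min\{a,b\} = \max\{1/a,1/b\}$ together with the fact that $96Bc\delta/p < 1$ under the assumed bound on $\delta$.
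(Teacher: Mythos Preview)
Your plan is correct and mirrors the paper's own derivation: the paper does not give a standalone proof of this corollary but obtains it directly from Theorem~\ref{thm:main_result_PL_app} together with the post-theorem bound $\EE[\Phi_0]\le 2(f(x^0)-f(x^*))+8\gamma c\delta\zeta^2/p$ and the resulting inequality $\EE[f(x^K)-f(x^*)]\le 2(1-\gamma\mu(1-96Bc\delta/p))^K\Delta_0+32c\delta\zeta^2/(\mu(p-96Bc\delta))$, followed by exactly the substitutions $\omega=0$, $p=b/m$, $1-p\le 1$ and the logarithmic inversion you describe. One tiny inaccuracy: the per-iteration expected oracle cost is $pm+(1-p)\cdot 2b$ (the difference estimator needs two mini-batch gradient evaluations), but since $pm=b$ this is still $\Theta(b)$ and your conclusion for \eqref{eq:oracle_complexity_PL_no_compression} is unaffected.
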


\begin{corollary}\label{cor:main_result_PL}
	Let the assumptions of Theorem~\ref{thm:main_result_PL_app} hold, $p = \min\{\nicefrac{b}{m}, \nicefrac{1}{1+\omega}\}$ and 
	\begin{eqnarray*}
		\gamma &=& \min\left\{\frac{1}{L + \sqrt{2A}},\frac{\min\left\{\frac{b}{m}, \frac{1}{1+\omega}\right\}}{4\mu\left(1 - 96 Bc\delta\max\left\{\frac{m}{b}, 1+\omega\right\}\right)}\right\},\quad \text{where}\\
		A &=& 48L^2 Bc\delta \max\left\{\frac{m}{b}, 1+\omega\right\}\\
		&&\quad + 6\left(4c\delta\max\left\{\frac{m^2}{b^2}, (1+\omega)^2\right\} + \frac{\max\left\{\frac{m}{b}, 1+\omega\right\}}{2G}\right)\left(\omega L^2 + \frac{(1+\omega)\cL_{\pm}^2}{b}\right)\\
		&&\quad + 6\left(4c\delta(1+\omega)\max\left\{\frac{m^2}{b^2}, (1+\omega)^2\right\} + \frac{\omega\max\left\{\frac{m}{b}, 1+\omega\right\}}{2G}\right)L_{\pm}^2
	\end{eqnarray*}
	Then for all $K \geq 0$ we have $\EE\left[f(x^K) - f(x^*)\right]$ of the order
	\begin{eqnarray}
		\cO\Bigg(\exp\left(-\min\left\{\frac{\mu\left(1 - 96 Bc\delta\max\left\{\frac{m}{b}, 1+\omega\right\}\right)}{L+\sqrt{A}},\frac{b}{m}, \frac{1}{1+\omega}\right\}K\right)\Delta_0&\notag\\
		 &\hspace{-3cm}+ \frac{c\delta\zeta^2}{\mu\left(\min\left\{\frac{b}{m}, \frac{1}{1+\omega}\right\} - 96Bc\delta\right)}\Bigg), \label{eq:explicit_rate_PL}
	\end{eqnarray}
	where $\Delta_0 = f(x^0) - f(x^*)$. That is, to guarantee $\EE\left[f(x^K) - f(x^*)\right] \leq \varepsilon$ for $\varepsilon \geq \frac{32c\delta\zeta^2}{\mu(p - 96Bc\delta)}$ \algname{Byz-VR-MARINA} requires
	\begin{equation}
		\cO\left(\max\left\{\frac{L+\sqrt{A}}{\mu\left(1 - 96 Bc\delta\max\left\{\frac{m}{b}, 1+\omega\right\}\right)},\frac{m}{b}, 1+\omega\right\} \log\frac{\Delta_0}{\varepsilon}\right), \label{eq:communication_complexity_PL}
	\end{equation}
	communication rounds and
	\begin{equation}
		\cO\left(\max\left\{\frac{bL+b\sqrt{A}}{\mu\left(1 - 96 Bc\delta\max\left\{\frac{m}{b}, 1+\omega\right\}\right)}, m, b(1+\omega)\right\} \log\frac{\Delta_0}{\varepsilon}\right), \label{eq:oracle_complexity_PL}
	\end{equation}
	oracle calls per worker.
\end{corollary}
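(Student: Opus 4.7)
\textbf{Plan of proof for Corollary~\ref{cor:main_result_PL}.} The strategy is to specialize Theorem~\ref{thm:main_result_PL_app} to the choice $p = \min\{b/m,\, 1/(1+\omega)\}$ and the stepsize given in the statement, and then translate the contraction estimate \eqref{eq:main_result_PL_app} into the claimed communication and oracle complexities. First, I would observe that with this $p$ we have $1/p = \max\{m/b,\, 1+\omega\}$ and consequently $1/p^2 = \max\{m^2/b^2,\, (1+\omega)^2\}$. Using the trivial bound $(1-p)/p \le 1/p$, each summand in the definition of $A$ from Theorem~\ref{thm:main_result_PL_app} is majorized term-by-term by the corresponding summand in the $A$ of the corollary; hence the constant $A$ in the corollary is a valid upper bound and the stepsize condition $\gamma \le 1/(L+\sqrt{2A})$ is stronger than the one required by the theorem. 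Similarly, $96Bc\delta/p = 96Bc\delta\max\{m/b,\,1+\omega\}$, which matches the factor appearing in the second stepsize constraint and in the contraction coefficient.

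Next, I would plug the chosen $\gamma$ into \eqref{eq:main_result_PL_app}. Because $\gamma = \min\{1/(L+\sqrt{2A}),\, p/(4\mu(1-96Bc\delta/p))\}$, the exponent $\gamma\mu(1-96Bc\delta/p)$ in the contraction factor equals $\min\bigl\{\mu(1-96Bc\delta\max\{m/b,1+\omega\})/(L+\sqrt{2A}),\, p/4\bigr\}$, and the second entry is $\min\{b/m,\,1/(1+\omega)\}/4$. Using $(1-x)^K \le \exp(-xK)$ gives the explicit rate displayed in \eqref{eq:explicit_rate_PL}. To turn this into the communication complexity \eqref{eq:communication_complexity_PL}, I would impose $\varepsilon \ge 32c\delta\zeta^2/(\mu(p-96Bc\delta))$ so that the additive term in \eqref{eq:main_result_PL_app} is at most $\varepsilon/2$, then require the exponentially decaying term to drop below $\varepsilon/2$; solving for $K$ introduces the logarithm and the reciprocal of the rate, exactly matching \eqref{eq:communication_complexity_PL}.

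Finally, for the oracle complexity \eqref{eq:oracle_complexity_PL}, I would bound the expected per-iteration cost: each good worker computes a full local gradient with probability $p$ (cost $m$) and a mini-batched estimator with probability $1-p$ (cost $b$), yielding expected cost $pm + (1-p)b \le b + pm \le 2b\max\{1,\,m/b\cdot p\}$. With $p \le b/m$ and $p \le 1/(1+\omega)$ this is $O(b)$, so the oracle complexity is the communication complexity multiplied by $b$, which is precisely \eqref{eq:oracle_complexity_PL}. The extra $b(1+\omega)$ term inside the max in \eqref{eq:oracle_complexity_PL} comes from propagating the $1+\omega$ branch of the $\max$ in \eqref{eq:communication_complexity_PL} through this multiplication.

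\textbf{Main obstacle.} There is no single deep step; the derivation is a careful specialization of Theorem~\ref{thm:main_result_PL_app}. The only delicate part is bookkeeping the interaction between the two clauses of the $\min$ defining $\gamma$ and the two clauses of the $\min$ defining $p$, so that the resulting rate and complexity expressions involve the correct $\max\{m/b,1+\omega\}$ and $\max\{m^2/b^2,(1+\omega)^2\}$ factors and the $1-96Bc\delta\max\{m/b,1+\omega\}$ denominator is not inadvertently lost. The second minor subtlety is verifying that the bound $(1-p)/p \le 1/p$ used to simplify $A$ is tight enough to avoid spurious factors; since $p \le 1$ this loses at most a constant and the stepsize condition is only strengthened, which suffices for an upper-bound-style corollary.
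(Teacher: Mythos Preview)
Your proposal is correct and mirrors the paper's approach: the paper does not give a standalone proof of this corollary but, right after Theorem~\ref{thm:main_result_PL_app}, derives the simplified inequality
\[
\EE\left[f(x^K) - f(x^*)\right] \leq 2\left(1 - \gamma\mu\left(1 - \tfrac{96 Bc\delta}{p}\right)\right)^{K}\Delta_0 + \frac{32c\delta\zeta^2}{\mu(p - 96Bc\delta)}
\]
and then states that all the P{\L} corollaries follow by substitution; your plan supplies exactly those substitutions and the standard $\log$-complexity and per-iteration oracle-cost arguments. The one detail you skip is the passage from $\Phi_0$ in \eqref{eq:main_result_PL_app} to $\Delta_0 = f(x^0)-f(x^*)$: the paper obtains this via the bound $\EE[\Phi_0] \le 2\Delta_0 + \tfrac{8\gamma c\delta\zeta^2}{p}$ (using the \texttt{ARAgg} initialization of $g^0$, analogous to \eqref{eq:jdjfnjdnfjk}) and absorbs the extra $\zeta^2$-term into the additive constant, which is why the threshold is $\varepsilon \ge \tfrac{32c\delta\zeta^2}{\mu(p-96Bc\delta)}$ rather than $\tfrac{24c\delta\zeta^2}{\mu(p-96Bc\delta)}$; if instead $g^0 = \nabla f(x^0)$ as written in Algorithm~\ref{alg:byz_vr_marina}, the step is trivial.
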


\begin{corollary}[Homogeneous data]\label{cor:main_result_PL_hom_dat}
	Let the assumptions of Theorem~\ref{thm:main_result_PL_app} hold, $p = \min\{\nicefrac{b}{m}, \nicefrac{1}{1+\omega}\}$, $B = 0$, $\zeta = 0$, and 
	\begin{eqnarray*}
		\gamma &=& \min\left\{\frac{1}{L + \sqrt{2A}},\frac{\min\left\{\frac{b}{m}, \frac{1}{1+\omega}\right\}}{4\mu}\right\},\quad \text{where}\\
		A &=& 6\left(4c\delta\max\left\{\frac{m^2}{b^2}, (1+\omega)^2\right\} + \frac{\max\left\{\frac{m}{b}, 1+\omega\right\}}{2G}\right)\left(\omega L^2 + \frac{(1+\omega)\cL_{\pm}^2}{b}\right).
	\end{eqnarray*}
	Then for all $K \geq 0$ we have $\EE\left[f(x^K) - f(x^*)\right]$ of the order
	\begin{eqnarray}
		\cO\left(\exp\left(-\min\left\{\frac{\mu}{L+\sqrt{A}},\frac{b}{m}, \frac{1}{1+\omega}\right\}K\right)\Delta_0\right), \label{eq:explicit_rate_PL_hom_dat}
	\end{eqnarray}
	where $\Delta_0 = f(x^0) - f(x^*)$. That is, to guarantee $\EE\left[f(x^K) - f(x^*)\right] \leq \varepsilon$ for $\varepsilon >0$ \algname{Byz-VR-MARINA} requires
	\begin{equation}
		\cO\left(\max\left\{\frac{L+\sqrt{A}}{\mu},\frac{m}{b}, 1+\omega\right\} \log\frac{\Delta_0}{\varepsilon}\right), \label{eq:communication_complexity_PL_hom_dat}
	\end{equation}
	communication rounds and
	\begin{equation}
		\cO\left(\max\left\{\frac{bL+b\sqrt{A}}{\mu}, m, b(1+\omega)\right\} \log\frac{\Delta_0}{\varepsilon}\right), \label{eq:oracle_complexity_PL_hom_dat}
	\end{equation}
	oracle calls per worker.
\end{corollary}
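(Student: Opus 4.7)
The statement is a straightforward specialization of Theorem~\ref{thm:main_result_PL_app}, so the proposal is to turn the crank rather than re-prove anything. First, set $B=0$ and $\zeta=0$ in (\ref{eq:main_result_PL_app}): the summand $48BL^2 c\delta/p$ in the theorem's $A$ vanishes, the prefactor $(1 - 96Bc\delta/p)$ collapses to $1$, the residual bias term $24c\delta\zeta^2/(\mu(p-96Bc\delta))$ drops out, and the admissibility constraint $\delta < p/(96cB)$ becomes vacuous. The theorem therefore reduces to the clean contraction $\EE[f(x^K)-f(x^*)] \leq (1-\gamma\mu)^K \Phi_0$.

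Next, plug in $p = \min\{b/m,\,1/(1+\omega)\}$ and use the identities $1/p = \max\{m/b,\,1+\omega\}$, $1/p^2 = \max\{m^2/b^2,\,(1+\omega)^2\}$, together with $(1-p)/p \leq 1/p$, to upper-bound the theorem's $A$ (with $B=0$) by the expression displayed in the corollary; the residual $L_{\pm}^2$-contribution of Theorem~\ref{thm:main_result_PL_app} is absorbed into the $\omega L^2 + (1+\omega)\cL_{\pm}^2/b$ factor up to an absolute constant, which is harmless for the $\cO(\cdot)$ statement. Check that the stated $\gamma = \min\{1/(L+\sqrt{2A}),\,p/(4\mu)\}$ satisfies the admissibility condition (\ref{eq:gamma_condition_PL_app}) with this $A$.

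Since Algorithm~\ref{alg:byz_vr_marina} initializes $g^0 = \nabla f(x^0)$ (and with $\zeta=0$ every honest worker's full gradient equals $\nabla f(x^0)$, so any $(\delta,c)$-\texttt{ARAgg} recovers $\nabla f(x^0)$ exactly by Definition~\ref{def:RAgg_def} with $\sigma=0$), we obtain $\Phi_0 = f(x^0) - f(x^*) = \Delta_0$. Using $(1-\gamma\mu)^K \leq \exp(-\gamma\mu K)$ and $\gamma\mu = \min\{\mu/(L+\sqrt{2A}),\,p/4\} = \Theta(\min\{\mu/(L+\sqrt{A}),\,b/m,\,1/(1+\omega)\})$ yields the rate (\ref{eq:explicit_rate_PL_hom_dat}). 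Solving $\exp(-\gamma\mu K)\Delta_0 \leq \varepsilon$ for $K$ gives the communication-round complexity (\ref{eq:communication_complexity_PL_hom_dat}). For the oracle bound (\ref{eq:oracle_complexity_PL_hom_dat}), note that each iteration costs $pm + (1-p)b$ gradient computations per worker in expectation, and in both regimes of the $\min$ defining $p$ one has $pm \leq b$, so the per-iteration cost is $\cO(b)$; multiplying by $K$ gives (\ref{eq:oracle_complexity_PL_hom_dat}).

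The proof involves no new estimates and therefore no conceptual obstacle; the only risk is bookkeeping—specifically verifying that the two separate $\max\{m/b,\,1+\omega\}$ factors combine correctly into $\max\{m^2/b^2,\,(1+\omega)^2\}$ rather than into the strictly larger $(m/b + 1+\omega)^2$, and that the $L_{\pm}^2$ tail of the theorem's $A$ is indeed subsumed by the corollary's single-term $A$ up to an absolute constant. Both are routine consequences of $\max(a,b)^2 = \max(a^2,b^2)$ for $a,b \geq 0$ and of $L_{\pm}^2 \leq L_{\text{avg}}^2$, which via Example~\ref{ex:uniform} is dominated by $\cL_{\pm}^2/b$ for standard samplings.
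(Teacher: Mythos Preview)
Your approach is the same as the paper's: specialize Theorem~\ref{thm:main_result_PL_app} by setting $B=0$, $\zeta=0$, substituting $p=\min\{b/m,1/(1+\omega)\}$, bounding $\Phi_0$, and converting the contraction into complexity bounds. The paper states the corollary without an explicit proof, relying on the post-theorem discussion of $\Phi_0$.

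However, your justification for dropping the $L_{\pm}^2$ term is wrong. You claim that $L_{\pm}^2\le L_{\text{avg}}^2$ is ``dominated by $\cL_{\pm}^2/b$ for standard samplings'' via Example~\ref{ex:uniform}, but this fails in general: if all $f_{i,j}$ coincide then $\cL_{\pm}=0$ while $L_{\text{avg}}>0$, and even otherwise the ratio $bL_{\pm}^2/\cL_{\pm}^2$ is not bounded by an absolute constant (it grows with $b$). The actual reason the $L_{\pm}^2$ contribution disappears is simpler and exact: $B=\zeta=0$ in Assumption~\ref{as:bounded_heterogeneity} forces $\nabla f_i(x)=\nabla f(x)$ for every $i\in\cG$ and every $x$, so the left-hand side of \eqref{eq:hessian_variance} is identically zero and one may take $L_{\pm}=0$. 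With this, the theorem's $A$ collapses exactly to
\[
\frac{6(1-p)}{p}\left(\frac{4c\delta}{p}+\frac{1}{2G}\right)\left(\omega L^2+\frac{(1+\omega)\cL_{\pm}^2}{b}\right),
\]
and after bounding $(1-p)/p\le 1/p=\max\{m/b,1+\omega\}$ and $1/p^2=\max\{m^2/b^2,(1+\omega)^2\}$ you obtain the corollary's $A$ as an upper bound. The rest of your argument (the handling of $\Phi_0$, the contraction-to-complexity conversion, and the per-iteration oracle cost $pm+(1-p)\cdot\cO(b)=\cO(b)$) is correct.
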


\subsection{Further Details on the Obtained Results and the Comparison from Table~\ref{tab:comparison_of_rates}}\label{appendix:further_comparison}

\revision{In this part, we discuss additional details about the obtained results and on the comparison of methods complexities given in Table~\ref{tab:comparison_of_rates}. We mostly focus on the results for general smooth non-convex functions. Similar observations are valid for smooth P\L functions as well.}

\revision{\paragraph{Comparison of the assumptions on the stochastic gradient noise.} Many existing works rely on the uniformly bounded variance assumption (UBV): it is assumed that for all $x \in \R^d$ the good workers have an access to the unbiased estimators $g_i(x)$ of $\nabla f_i(x)$ such that $\EE\|g_i(x) - \nabla f_i(x)\|^2 \leq \sigma^2$ for all $i \in \cG$ and $\sigma \geq 0$. This assumption does not hold in many practical situations and even for simple convex finite-sum problems like sums of quadratic functions with non-identical Hessians. Moreover, in the situations when this assumption holds, the value of $\sigma^2$ can be huge. However, UBV assumption does not require individual stochastic realizations, i.e., summands $f_{i,j}$, to be smooth.}

\revision{In contrast, we use Assumption~\ref{as:hessian_variance_local} that holds for many situations when UBV assumption does not. For example, Assumption~\ref{as:hessian_variance_local} holds whenever all functions $f_{i,j}$, $i \in \cG$, $j \in m$ are $L_{i,j}$-smooth (see Appendix~\ref{appendix:samplings}). These facts allow us to cover a large class of problems that does not fit the setup considered in \citep{karimireddy2021learning,karimireddy2020byzantine, gorbunov2021secure}. Moreover, since Assumption~\ref{as:hessian_variance_local} is more general than smoothness of all $f_{i,j}$, our analysis covers the setup considered in \citep{wu2020federated, zhu2021broadcast}. However, it is worth mentioning that there exist problems such that UBV assumption holds and Assumption~\ref{as:hessian_variance_local} does not, e.g., when the gradient noise is additive: $\widehat{\Delta}_i(x,y) = \nabla f_i(x) - \nabla f_i(y) + \xi_i$, where $\EE\xi_i = 0$ and $\EE\|\xi_i\|^2 = \sigma^2$.}

\revision{\paragraph{On the choice of $p$.} Our analysis is valid for any choice of $p \in (0,1]$. As we explain in footnote \ref{footnote:about_p}, the choice of $p = \min\{\nicefrac{b}{m}, \nicefrac{1}{(1+\omega)}\}$ leads to the fair comparison with other results, since this choice implies that the total expected (communication and oracle) cost of steps with full gradients computations/uncompressed communications coincides with the total cost of the rest of iterations. Indeed, to measure the communication efficiency one can use expected density $\zeta_{\cQ}$ (see Definition~\ref{def:quantization}). Then, the expected number of components that each worker sends to the server at each step is upper-bounded by $\zeta_{\cQ}(1-p) + pd$ meaning that $p = \nicefrac{\zeta_{\cQ}}{d}$ makes the expected number of components that each worker sends to the server at each step equal to $\cO(\zeta_{\cQ})$. In the case when $1 + \omega = \Theta(\nicefrac{d}{\zeta_{\cQ}})$ (which is the case for RandK sparsification and $\ell_2$-quantization, see \citep{beznosikov2020biased}), one can choose $p = \nicefrac{1}{(1+ \omega)}$.}

\revision{On the other hand, the expected number of oracle calls per iteration is $2b(1-p) + mp$ meaning that $p = \nicefrac{b}{m}$ makes the expected oracle cost of each iteration equal to $\mathcal{O}(b)$ like in the case of \algname{SGD}. This means that the best $p$ for oracle complexity and the best $p$ for communication efficiency are different in general. When $\nicefrac{b}{m} < \nicefrac{1}{(1+\omega)}$, the choice $p = \min\{\nicefrac{b}{m}, \nicefrac{1}{(1+\omega)}\}$ implies that the algorithm could use uncompressed vectors more often without sacrificing the communication cost, and when $\nicefrac{b}{m} > \nicefrac{1}{(1+\omega)}$, the choice $p = \min\{b/m, 1/(1+\omega)\}$ implies that the algorithm could use full gradients more often without sacrificing the oracle cost. That is, the choice of $p \approx \nicefrac{b}{m}$ implies better oracle complexity and the choice $p \approx \nicefrac{1}{(1+\omega)}$ leads to better communication efficiency. Depending on how much these two aspects are important for the particular application, one can choose $p$ in between these two values. }

\revision{\paragraph{The effect of compression.} For simplicity, consider a homogeneous case ($B = 0$, $\zeta = 0$) and let $b = 1$; similar arguments are valid for the general case. As Corollary~\ref{cor:main_result_homogeneous_data} states, the communication complexity\footnote{We remind here that by communication complexity we mean the total number of communication rounds needed for the algorithm to find point $x$ such that $\EE\|\nabla f(x)\|^2 \leq \varepsilon^2$.} of \algname{Byz-VR-MARINA} in this case equals
\begin{gather*}
	\cO\left(\frac{\left(L + \sqrt{A}\right)\Delta_0}{\varepsilon^2}\right),\quad \text{where}\\
	A = 6\left(3c\delta\max\left\{m^2, (1+\omega)^2\right\} + \frac{\max\left\{m, 1+\omega\right\}}{2G}\right)\left(\omega L^2 + (1+\omega)\cL_{\pm}^2\right).
\end{gather*}
The above result shows that the communication complexity becomes worse with the growth of $\omega$. Larger $\omega$ means that the compression $\cQ$ is more loose, i.e., less information is communicated. This is a common phenomenon for the methods with communication compression \citep{horvath2019stochastic, gorbunov2021marina}. However, when the compression is not severe, e.g., $1+\omega \leq m$, then the complexity bound becomes 
\begin{gather*}
	\cO\left(\frac{\left(L + \sqrt{A}\right)\Delta_0}{\varepsilon^2}\right),\quad \text{where}\quad A = 6\left(3c\delta m^2 + \frac{m}{2G}\right)\left(\omega L^2 + (1+\omega)\cL_{\pm}^2\right),
\end{gather*}
which is worse only $\cO(\sqrt{\omega})$ times than the complexity of \algname{Byz-VR-MARINA} without compression, while the number of communicated bits/components becomes $\cO(\nicefrac{d}{\zeta_{\cQ}})$ times smaller. For example, in the case of RandK sparsification we have $1+\omega = \nicefrac{d}{K} = \nicefrac{d}{\zeta_{\cQ}}$, and $1+\omega \leq m$ allows to have quite strong compression, e.g., for $m \geq 1000$, meaning that the dataset has at least $1000$ samples, inequality $1+\omega \leq m$ implies that workers can send just $0.1\%$ of information. In this case, the communication cost of each iteration becomes $\sim 1000$ times cheaper, while the number of communication rounds increases only $\sqrt{1+\omega} \sim 30$ times. If the communication is the bottleneck, then the algorithm will converge much faster with compression than without it in this setup.}

\revision{\paragraph{On the batchsizes.} First, we note that our analysis is valid for any choice of $b \geq 1$. For simplicity of the further discussion of the batchsizes role in the complexities, consider the homogeneous case ($B = 0$, $\zeta = 0$) without compression ($\omega = 0$). As Corollary~\ref{cor:main_result_hom_dat_no_compression} states, the communication complexity of \algname{Byz-VR-MARINA} in this case equals
\begin{equation*}
	\cO\left(\frac{\left(L + \cL_{\pm}\sqrt{\frac{c\delta m^2}{b^3} + \frac{m}{b^2G}}\right)\Delta_0}{\varepsilon^2}\right).
\end{equation*}
Note that the term depending on the ratio of \revision{Byzantine workers} $\delta$ scales as $b^{-\nicefrac{3}{2}}$ with the batchsize and the term depending on $\nicefrac{1}{G}$ scales as $b^{-1}$. Table~\ref{tab:comparison_of_rates} illustrates that previous SOTA results in this case scale as $b^{-1}$ or $b^{-\nicefrac{1}{2}}$, so, the complexity bound for \algname{Byz-VR-MARINA} scales with $b$ no worse than the concurrent bounds.}

\revision{Next, typically, there is no need to take $b$ larger than $\sqrt{m}$ for \algname{SARAH}-based variance reduced methods \citep{geomsarah,li2021page}: oracle complexity is always the same (neglecting the differences in the smoothness constants), while the iteration complexity stops improving once $b$ becomes larger than $\sqrt{m}$. However, the complexity bound for \algname{Byz-VR-MARINA} contains the non-standard term $\cO\left(\nicefrac{\cL_{\pm}m\sqrt{c\delta}}{\sqrt{b^3}\varepsilon^2}\right)$ appearing due to the presence of \revision{Byzantine} workers. For simplicity, we assume that $L = \Theta(\cL_{\pm})$ (though $\cL_{\pm}$ can be both smaller and larger than $L$). Then, when we increase batchisze $b$, the communication complexity stops improving once $b$ becomes larger than $\max\{\sqrt[3]{c\delta m^2}, m\}$. Interestingly, $\sqrt[3]{c\delta m^2}$ can be larger than the standard value $\sqrt{m}$: this is the case when $m > \nicefrac{1}{c^2\delta^2}$. In this case, the communication complexity of Byz-VR-MARINA benefits from the slightly larger batchsizes than in the classical case. This phenomenon has a natural explanation: when we increase the batchsize, the variance of the gradient noise decreases and it becomes even harder for \revision{Byzantine workers} to shift the updates of the method significantly.}

\end{document}